\def\eqref#1{equation~\ref{#1}}
\def\1{\bm{1}}
\DeclareMathAlphabet{\mathsfit}{\encodingdefault}{\sfdefault}{m}{sl}
\SetMathAlphabet{\mathsfit}{bold}{\encodingdefault}{\sfdefault}{bx}{n}
\DeclareMathOperator*{\argmin}{arg\,min}
\newtheorem{definition}{Definition}
\newcommand{\ppm}{\,\tiny$\pm$}
\newcommand{\cmark}{\ding{51}}%
\newcommand{\xmark}{\ding{55}}%
\title{Selective Task Group Updates for Multi-Task Optimization}
\author{Wooseong Jeong \& Kuk-Jin Yoon\\
Korea Advanced Institute of Science and Technology\\
\texttt{\{stk14570, kjyoon\}@kaist.ac.kr} \\}
\begin{document}
\maketitle

\begin{abstract}
Multi-task learning enables the acquisition of task-generic knowledge by training multiple tasks within a unified architecture. However, training all tasks together in a single architecture can lead to performance degradation, known as negative transfer, which is a main concern in multi-task learning. Previous works have addressed this issue by optimizing the multi-task network through gradient manipulation or weighted loss adjustments. However, their optimization strategy focuses on addressing task imbalance in shared parameters, neglecting the learning of task-specific parameters. As a result, they show limitations in mitigating negative transfer, since the learning of shared space and task-specific information influences each other during optimization. To address this, we propose a different approach to enhance multi-task performance by selectively grouping tasks and updating them for each batch during optimization. We introduce an algorithm that adaptively determines how to effectively group tasks and update them during the learning process. To track inter-task relations and optimize multi-task networks simultaneously, we propose proximal inter-task affinity, which can be measured during the optimization process. We provide a theoretical analysis on how dividing tasks into multiple groups and updating them sequentially significantly affects multi-task performance by enhancing the learning of task-specific parameters. Our methods substantially outperform previous multi-task optimization approaches and are scalable to different architectures and various numbers of tasks. Our implementation is publicly available at \url{https://github.com/wooseong97/sel-update-mtl}
\end{abstract}

\section{Introduction}
Multi-task learning (MTL) stands out as a key approach for crafting efficient and robust deep learning models that can adeptly manage numerous tasks within a unified architecture \citep{caruana1997multitask}. By training related tasks within a single network, MTL facilitates the acquisition of universal knowledge spanning multiple tasks, thereby enhancing generalization and accelerating convergence. Additionally, MTL reduces the need for expensive computing and storage resources by employing a shared network across tasks, making it a favorable choice for future generalized networks across many applications.

The goal of MTL is to mitigate negative transfer \citep{crawshaw2020multi} between tasks. Since each task possesses its own distinct objective function, improving performance in one task can potentially impede the performance of others. This phenomenon, characterized by a trade-off among tasks' performances, is referred to as negative transfer. To mitigate negative transfer during optimization, task-specific gradients are manipulated \citep{RN19, RN36, RN20, RN18, liu2021towards, navon2022multi, senushkin2023independent}, tasks' losses are adaptively weighted \citep{RN23, RN25, RN26, liu2024famo}, or a combination of both approaches is used \citep{liu2021towards}. Previous research focused on balancing the influence of different tasks by considering unbalance in gradients of shared parameters. However, these analyses overlooked the role of task-specific parameters. Since the learning of shared space and task-specific information influence each other during optimization, a balanced shared space across tasks cannot be achieved without learning task-specific information in the task-specific parameters.

Therefore, we take a fundamentally different approach to mitigate negative transfer in MTL (see \Cref{fig:overview}). Our experiments reveal a notable difference in multi-task performance depending on whether task losses are updated collectively or sequentially. By grouping tasks and updating them sequentially, the network can focus on specific task groups in turn, facilitating the learning of task-specific parameters. Thus, our main objective is to identify optimal strategies for grouping and updating tasks during optimization. Inter-task affinity \citep{fifty2021efficiently}, which evaluates the loss change of one task when updating another task's gradients in the shared parameters, is a useful metric for understanding task relations. However, directly incorporating inter-task affinity into optimization presents several challenges: (i) It only considers updates to shared parameters, ignoring the influence of task-specific parameter updates. (ii) Inter-task affinity varies significantly throughout optimization, so relying on the average across time steps may not fully capture task relations. (iii) Measuring inter-task affinity is computationally intensive because it involves recursive gradient updates, tracking changes in loss, and reverting to previous parameter states. To address these issues, we introduce proximal inter-task affinity (\Cref{sec:problem_def}), which concurrently explains the updates of shared and task-specific parameters. This metric can be tracked over time to approximate inter-task relations during optimization without significant computational overhead.

\begin{figure}[t]
    \centering
    \includegraphics[width=0.99\textwidth]{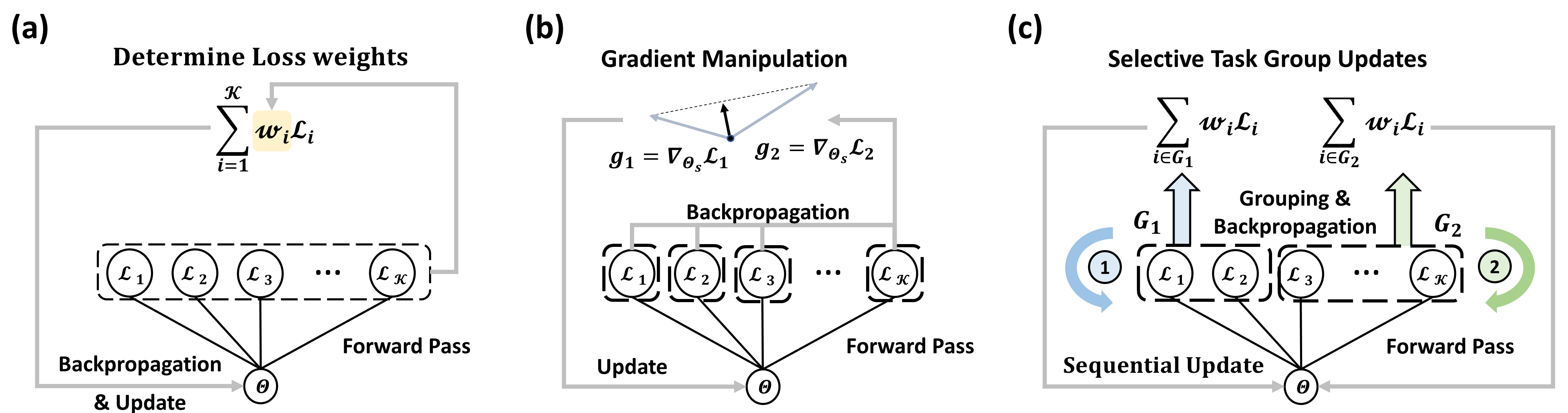}
    \caption{Comparison of multi-task optimization methods. $\Theta$ represents the network parameters, and $\{\mathcal{L}\}_{i=1}^{\mathcal{K}}$ denotes the task-specific losses for $\mathcal{K}$ tasks. (a) Loss-based approaches balance the loss by adjusting the weights $\{w_i\}_{i=1}^{\mathcal{K}}$ during optimization. (b) Gradient-based approaches modify the task-specific gradients $\{g_i\}_{i=1}^{\mathcal{K}}$ with respect to $\Theta$. (c) Our method divides the tasks into $\mathcal{M}$ groups (in this case, $\mathcal{M}=2$) and updates them sequentially for each batch during optimization.}
    \vspace{-5pt}
    \label{fig:overview}
\end{figure}

By capturing task relations, we introduce an algorithm that dynamically updates task groups using proximal inter-task affinity to effectively mitigate negative transfer between tasks (\Cref{sec:track_prox}). Additionally, we present practical methods for inferring proximal inter-task affinity during task group updates. In the theoretical analysis (\Cref{sec:theoretical_analysis}), we explain how this sequential update strategy can improve multi-task performance from an optimization standpoint. Previous approaches have demonstrated convergence to Pareto-stationary points, where the sum of all task-specific gradients equals zero \citep{RN19, RN36, RN20, RN18, liu2021towards, navon2022multi, senushkin2023independent}. However, conventional convergence analysis cannot explain why sequential updates yield better multi-task performance with comparable stability, as they handle the learning of shared and task-specific parameters separately. Instead, we provide a theoretical explanation of the relations between the updates of shared and task-specific parameters and how the update sequence affects this. This perspective is not addressed in traditional multi-task optimization, which typically deals with the learning of shared and task-specific parameters independently. As a result, our approach facilitates the learning of task-specific parameters with stability similar to other optimization methods, while achieving faster convergence compared to previous gradient-based approaches. Experimental results demonstrate that adaptively partitioning the task set and sequentially updating them during the optimization process significantly enhances multi-task performance compared to previous works. Our main contributions are threefold:
\begin{itemize}[leftmargin=*]
\item We propose an algorithm that dynamically groups tasks for updating each batch during multi-task optimization. To do this, we introduce proximal inter-task affinity, which we can track throughout the optimization process to group tasks accordingly. 
\item  We provide a theoretical explanation of the advantages of sequentially updating task groups based on proximal inter-task affinity. We discuss how sequential updates can enhance the learning of task-specific parameters and improve multi-task performance, a result that traditional multi-task optimization analyses do not explain.
\item Our methods demonstrate superior performance across various benchmarks when compared to previous multi-task optimization techniques, including both loss-based and gradient-based approaches.
\end{itemize}

\section{Related Work}
Optimization methods in MTL can be broadly categorized into those that manipulate task-specific gradients \citep{RN24, RN21, RN22, RN19, RN36, RN20, RN18, liu2021towards, phan2022improving, navon2022multi, senushkin2023independent, jeong2024quantifying} and those that adjust the weighting of task-specific losses \citep{RN23, RN25, RN26, liu2024famo}. Addressing the imbalance in task influence, normalized gradients are utilized \citep{RN24} to prevent task spillover. Introducing stochasticity to the network's parameters based on gradient consistency, GradDrop \citep{RN21} drops gradients. RotoGrad \citep{RN22} rotates the network's feature space to align tasks. MGDA \citep{RN36} treats MTL as a multi-objective problem, minimizing the norm point in the convex hull. CAGrad \citep{RN18} minimizes multiple loss functions and regulates trajectory using worst local improvements of individual tasks. Aligned-MTL \citep{senushkin2023independent} stabilizes optimization by aligning principal components of gradient matrix . The weighting of losses significantly impacts multi-task performance as tasks with higher losses can dominate training. Addressing this, tasks' losses are weighted based on task-dependent uncertainty \citep{RN23}. Prioritizing tasks according to difficulty by evaluating validation results \citep{RN25}, or balancing multi-task loss reflecting loss descent rates \citep{RN26}, are also proposed strategies. Previous works on multi-task optimization that directly modify task-specific gradients \cite{RN24, RN21, RN22, RN19, RN36, RN20, RN18, liu2021towards, phan2022improving, navon2022multi, senushkin2023independent, jeong2024quantifying} focus on learning the shared parameters of the network without considering their interaction with task-specific parameters during the optimization step. In contrast, we propose fundamentally different strategies to mitigate negative transfer by sequentially updating task groups which lead to better multi-task performance, a phenomenon not explained by conventional multi-task optimization analysis.
\section{Selective Task Group Updates for Multi-Task Optimization}
\subsection{The Goal of Multi-Task Learning and Inter-Task Affinity}
\label{sec:problem_def}
In the context of MTL, we deal with multiple tasks with its own objective function denoted as $\{\mathcal{L}_i\}_{i=1}^{\mathcal{K}}$, where $\mathcal{K}$ represents the number of tasks. The network parameters $\Theta$ are partitioned into $\{\Theta_s, \Theta_1, \Theta_2, \ldots, \Theta_\mathcal{K}\}$, where $\Theta_s$ refers to the shared parameters while $\Theta_i$ represents the task-specific parameters for the task $i$. To simplify the discussion, we assumed that $\Theta_s$ is shared across all tasks. However, the following discussion can still apply even when it is only partially shared for specific task sets. The goal of MTL is to find Pareto-optimal parameters \citep{RN36}, denoted as $\Theta^*$, that minimize the weighted sum of task-specific losses. This can be expressed as $\Theta^* = \argmin_{\Theta}\sum_{i=1}^{\mathcal{K}} w_i\mathcal{L}_i(\Theta_s, \Theta_i)$, where $w_i$ represents the weight for the loss of task $i$.

The concept of inter-task affinity \citep{fifty2021efficiently} aims to identify tasks that enhance each other's performance when learned together on shared networks. It measures the extent to which the gradient of a specific task with respect to the shared parameters $\Theta_s$, affects the loss of other tasks.

\begin{definition}[Inter-Task Affinity] Consider a multi-task network shared by tasks $i$ and $k$. For a data sample $z^t$ and a learning rate $\eta$, the task-specific gradients from $\mathcal{L}_i$ are applied to update the shared parameters of the network as follows: $\Theta_{s|i}^{t+1} = \Theta_s^t -\eta \nabla_{\Theta_s^t} \mathcal{L}_i (z^t, \Theta_s^t, \Theta_i^t)$. The inter-task affinity from task $i$ to task $k$ at time step $t$ is then defined as:
\begin{align}
    \mathcal{A}^t_{i\rightarrow k} = 1- \frac{\mathcal{L}_k(z^t, \Theta_{s|i}^{t+1}, \Theta_k^t)}{\mathcal{L}_k(z^t, \Theta_{s}^{t}, \Theta_k^t)}
    \label{definition:inter_task_affinity}
\end{align}
\end{definition}

In the context of task grouping \citep{fifty2021efficiently}, task affinity is computed and utilized in two stages. Initially, task affinity within a shared network is assessed by monitoring how the loss for each task changes when updating the loss of the other task at each step. Subsequently, using the averaged task affinity, task groups are organized, and each group is trained in separate networks. However, directly incorporating inter-task affinity for multi-task optimization presents several challenges. First, it only tracks the learning of shared parameters, $\Theta_s$, and their influence on task losses, without considering task-specific parameters, $\{\Theta_i\}_{i=1}^{\mathcal{K}}$. In practice, both shared and task-specific parameters need to be optimized simultaneously. Second, since inter-task affinity fluctuates significantly during optimization, it must be tracked continuously to capture evolving task relations. This process is computationally expensive, requiring recursive gradient updates, monitoring loss changes, and reverting to previous parameter states.

\subsection{Proximal Inter-Task Affinity for Multi-Task Optimization}
\label{sec:track_prox}
To address the issues mentioned above, we propose practical methods based on the concept of proximal inter-task affinity to track task relations and use this information for concurrent multi-task optimization. Firstly, we integrate the learning of task-specific parameters into the estimation of inter-task affinity to incorporate this affinity into ongoing optimization. Secondly, we extend the concept of inter-task affinity to elucidate relations between task groups rather than individual pairs of tasks. This allows us to monitor these affinities when a set of tasks is collectively backpropagated. Practically, proximal inter-task affinity serves as an approximation of inter-task affinity; however, it also plays a crucial role in theoretical analysis, as discussed in \Cref{sec:theoretical_analysis}, explaining why updating task groups sequentially yields distinct multi-task performance. To be specific, we lay out guidelines for updating proximal inter-task affinity by distinguishing between inter-group relations and intra-group relations. We then incorporate the concept of proximal inter-task affinity to explore the task update strategy, grouping tasks based on their affinity, which has a significant impact on multi-task performance.

Before delving into the algorithm, let's establish some notations building upon those introduced in \Cref{sec:problem_def}. Our objective is to partition tasks into several groups, denoted as $\{G_1,G_2,...,G_{\mathcal{M}}\}$, where $\mathcal{M}$ is the number of task groups. Each group set contains tasks' indices as their components, for example, $G_k=\{i,j\}$. When selecting tasks to form these groups, we require the task groups to be mutually exclusive, meaning that $G_i \cap G_j = \emptyset$ for any $i$ and $j$ ($i \neq j$).

Tracking inter-task affinity during optimization, as defined in \cref{definition:inter_task_affinity}, is impractical. This is because it is difficult to assess the influence of individual tasks on shared network parameters when multiple tasks are updated simultaneously. Additionally, task-specific parameters $\{\Theta_i\}_{i=1}^{\mathcal{K}}$ must also be updated concurrently, which is essential for MTL settings. Therefore, we practically approximate inter-task affinity by utilizing proximal inter-task affinity, taking into account updates of multiple tasks with task-specific parameters. The difference between inter-task affinity and proximal inter-task affinity is difficult to detect, so we provide a more detailed discussion in \Cref{Append:differeence_affinity}.

\begin{definition}[\textbf{Proximal Inter-Task Affinity}] Consider a multi-task network shared by the task set $G$, with their respective losses defined as $\mathcal{L}_G$. For a data sample $z^t$ and a learning rate $\eta$, the gradients of task set $G$ are updated to the parameters of the network as follows: $\Theta_{s|G}^{t+1} = \Theta_s^t -\eta \nabla_{\Theta_s^t} \mathcal{L}_G (z^t, \Theta_s^t, \Theta_G^t)$ and $\Theta_k^{t+1} = \Theta_k^t -\eta \nabla_{\Theta_k^t} \mathcal{L}_k (z^t, \Theta_s^t, \Theta_k^t)$ for $k \in G$. Then, the proximal inter-task affinity from the task set $G$ to $\tau_k$ at time step $t$ is defined as:
\begin{align}
    \mathcal{B}^t_{G\rightarrow k} = 1- \frac{\mathcal{L}_k(z^t, \Theta_{s|G}^{t+1}, \Theta_k^{t+1})}{\mathcal{L}_k(z^t, \Theta_{s}^{t}, \Theta_k^t)}
\end{align}
\end{definition}

As it's not feasible to simultaneously track inter-task affinity and optimize networks, we instead track the proximal inter-task affinity and utilize it for multi-task optimization. We explain the benefits of integrating inter-task affinity into optimization in the view of conventional multi-task optimization. \Cref{theorem1} suggests that grouping tasks with high inter-task affinity leads to better alignment of their gradients compared to grouping tasks with lower affinity. In analysis, we use the extended version of $\mathcal{A}$ for multiple tasks (e.g., $\mathcal{A}_{G\rightarrow k}$) for ease of notation.

\begin{restatable}[]{theorem}{theomone}
\label{theorem1}
Let $g_k$ denote the task-specific gradients backpropagated from the loss function $\mathcal{L}_k$ with respect to the parameters $\Theta_s^t$. At a given time step $t$, if the inter-task affinity from task group $\{i, k\}$ to task $k$ is greater than or equal to the inter-task affinity from group $\{j, k\}$ to task $k$, denoted as $A_{i,k \rightarrow k}^t \geq A_{j,k \rightarrow k}^t$. Then for a sufficiently small learning rate $\eta \ll 1$, it follows that $g_i \cdot g_k \geq g_j \cdot g_k$.
\end{restatable}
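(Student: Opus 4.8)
The plan is to Taylor-expand, to first order in the learning rate $\eta$, the losses that appear in the definition of the (extended) inter-task affinity, and to read off the claim from the leading-order term. Write $g_k = \nabla_{\Theta_s^t}\mathcal{L}_k(z^t,\Theta_s^t,\Theta_k^t)$, and similarly $g_i,g_j$. Recall that for a group $G=\{i,k\}$ the shared-parameter update is driven by $\mathcal{L}_G = \mathcal{L}_i+\mathcal{L}_k$, so $\Theta_{s|\{i,k\}}^{t+1} = \Theta_s^t - \eta(g_i+g_k)$, while the task-specific parameters $\Theta_k^t$ are left unchanged in the quantity $\mathcal{A}^t_{i,k\rightarrow k}$. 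Assuming $\mathcal{L}_k(z^t,\cdot,\Theta_k^t)$ is differentiable with (locally) Lipschitz gradient near $\Theta_s^t$, a first-order expansion gives
\begin{align}
\mathcal{L}_k\!\left(z^t,\Theta_{s|\{i,k\}}^{t+1},\Theta_k^t\right)
= \mathcal{L}_k\!\left(z^t,\Theta_s^t,\Theta_k^t\right) - \eta\,(g_i+g_k)\cdot g_k + O(\eta^2),
\end{align}
since $\nabla_{\Theta_s^t}\mathcal{L}_k = g_k$, and likewise with $i$ replaced by $j$.

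\textbf{Extracting the ordering.} Substituting into the definition of $\mathcal{A}$ and using $\mathcal{L}_k(z^t,\Theta_s^t,\Theta_k^t)>0$ so that the ratio is well defined,
\begin{align}
\mathcal{A}^t_{i,k\rightarrow k}
= \frac{\eta\,(g_i+g_k)\cdot g_k}{\mathcal{L}_k(z^t,\Theta_s^t,\Theta_k^t)} + O(\eta^2),
\qquad
\mathcal{A}^t_{j,k\rightarrow k}
= \frac{\eta\,(g_j+g_k)\cdot g_k}{\mathcal{L}_k(z^t,\Theta_s^t,\Theta_k^t)} + O(\eta^2).
\end{align}
Subtracting, the $\|g_k\|^2$ terms cancel, leaving
\begin{align}
\mathcal{A}^t_{i,k\rightarrow k}-\mathcal{A}^t_{j,k\rightarrow k}
= \frac{\eta}{\mathcal{L}_k(z^t,\Theta_s^t,\Theta_k^t)}\,(g_i-g_j)\cdot g_k + O(\eta^2).
\end{align}
By hypothesis the left-hand side is nonnegative; dividing by $\eta/\mathcal{L}_k(z^t,\Theta_s^t,\Theta_k^t)>0$ and letting $\eta\to 0$ (this is the role of "$\eta\ll 1$") removes the $O(\eta^2)$ remainder and yields $(g_i-g_j)\cdot g_k \ge 0$, i.e.\ $g_i\cdot g_k \ge g_j\cdot g_k$.

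\textbf{Main obstacle.} The only delicate point is making the remainder control rigorous: one needs a smoothness hypothesis on $\mathcal{L}_k$ (a bounded Hessian, or a Lipschitz gradient, on a neighbourhood of $\Theta_s^t$) so that the discarded term is genuinely $O(\eta^2)$, and one needs $\mathcal{L}_k(z^t,\Theta_s^t,\Theta_k^t)$ bounded away from $0$ so that the normalization defining $\mathcal{A}$ neither blows up nor changes sign. With these standard assumptions the whole argument reduces to the first-order expansion above. A minor subtlety worth stating explicitly is that the hypothesis and conclusion should be read in the small-$\eta$ regime: the affinity inequality and the gradient-inner-product inequality coincide only up to the $O(\eta)$ corrections that vanish as $\eta\to 0$.
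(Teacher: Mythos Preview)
Your proposal is correct and follows essentially the same approach as the paper's own proof: a first-order Taylor expansion of $\mathcal{L}_k$ in the shared parameters yields $\mathcal{A}^t_{i,k\rightarrow k}\simeq \eta(g_i+g_k)\cdot g_k/\mathcal{L}_k$, and comparing the two affinities with the $O(\eta^2)$ remainder dropped gives $g_i\cdot g_k\ge g_j\cdot g_k$. Your version is in fact slightly more explicit than the paper's about the subtraction step and about the smoothness and positivity assumptions that make the $O(\eta^2)$ control legitimate.
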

The results also apply to proximal inter-task affinity $\mathcal{B}$ instead of $\mathcal{A}$, as the only difference is the inclusion of task-specific parameters for task $k$.

We analyze how this alignment in task-specific gradients reduces the loss of task $k$ in \Cref{theorem2}.
\begin{restatable}[]{theorem}{theomtwo}
\label{theorem2}
Let $g_k$ denote the task-specific gradients backpropagated from the loss function $\mathcal{L}_k$ with respect to the parameters $\Theta_s^t$. Let $\Theta_{s|k}^{t+1}$ represent the updated parameters after applying the gradients. Assume that for tasks $i$, $j$, and $k$, the inequality $g_i \cdot g_k \geq g_j \cdot g_k$ holds. Then, for a sufficiently small learning rate $\eta \ll 1$, the inequality  $\mathcal{L}_k (z^t, \Theta_{s|i,k}^{t+1}, \Theta_k^t) \leq \mathcal{L}_k (z^t, \Theta_{s|j,k}^{t+1}, \Theta_k^t)$ holds.
\end{restatable}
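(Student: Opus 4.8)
The plan is to linearize each quantity in the learning rate $\eta$ and compare the first-order terms. Both sides of the target inequality involve evaluating $\mathcal{L}_k$ at parameters obtained by a single gradient step on the shared parameters $\Theta_s^t$ using the combined gradient of a two-task group. First I would write out the update directions explicitly: for the group $\{i,k\}$, the shared parameters move along $-\eta(g_i + g_k)$ (since the combined loss $\mathcal{L}_i + \mathcal{L}_k$ has gradient $g_i + g_k$ with respect to $\Theta_s^t$), and similarly for $\{j,k\}$ along $-\eta(g_j + g_k)$. Applying a first-order Taylor expansion of $\mathcal{L}_k$ around $\Theta_s^t$, I get
\begin{align}
\mathcal{L}_k(z^t, \Theta_{s|i,k}^{t+1}, \Theta_k^t) &= \mathcal{L}_k(z^t, \Theta_s^t, \Theta_k^t) - \eta\, g_k \cdot (g_i + g_k) + O(\eta^2),\\
\mathcal{L}_k(z^t, \Theta_{s|j,k}^{t+1}, \Theta_k^t) &= \mathcal{L}_k(z^t, \Theta_s^t, \Theta_k^t) - \eta\, g_k \cdot (g_j + g_k) + O(\eta^2).
\end{align}

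Subtracting the two expansions, the zeroth-order terms and the common $-\eta\, g_k \cdot g_k$ term cancel, leaving
\begin{align}
\mathcal{L}_k(z^t, \Theta_{s|i,k}^{t+1}, \Theta_k^t) - \mathcal{L}_k(z^t, \Theta_{s|j,k}^{t+1}, \Theta_k^t) = -\eta\,(g_i \cdot g_k - g_j \cdot g_k) + O(\eta^2).
\end{align}
By the hypothesis $g_i \cdot g_k \geq g_j \cdot g_k$, the leading-order term is non-positive, so for $\eta$ sufficiently small the sign of the whole expression is governed by this term (assuming the strict case; in the degenerate case where $g_i \cdot g_k = g_j \cdot g_k$ one would need to note the difference is $O(\eta^2)$ and argue the inequality holds in the limiting/non-strict sense, or invoke a higher-order term). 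This yields the claimed inequality. I would also make explicit the smoothness assumption on $\mathcal{L}_k$ (continuous differentiability, or a bounded Hessian on the relevant neighborhood) that legitimizes the Taylor remainder bound, and note that $\Theta_k^t$ is held fixed on both sides so only the $\Theta_s$-argument expansion is needed.

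The main obstacle is handling the $O(\eta^2)$ remainder rigorously and, relatedly, the non-strict ("$\geq$") form of both hypothesis and conclusion: when $g_i \cdot g_k = g_j \cdot g_k$ exactly, the first-order terms agree and one cannot conclude the inequality from linear analysis alone. The cleanest fix is to interpret "for a sufficiently small learning rate $\eta \ll 1$" as asserting the inequality up to the first-order approximation (consistent with how \Cref{theorem1} is framed), or to state the conclusion as $\mathcal{L}_k(z^t, \Theta_{s|i,k}^{t+1}, \Theta_k^t) \leq \mathcal{L}_k(z^t, \Theta_{s|j,k}^{t+1}, \Theta_k^t) + o(\eta)$. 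A secondary point worth checking is that the combined-group gradient with respect to $\Theta_s^t$ is indeed $g_i + g_k$ under the definition of $\mathcal{L}_G$ used in the paper (i.e., that the group loss is the sum, or a uniformly weighted sum, of member task losses); if weights $w_i$ are present the statement carries through with $w_i g_i + w_k g_k$, which should be noted. Everything else is a routine first-order expansion.
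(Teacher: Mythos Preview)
Your proposal is correct and follows essentially the same approach as the paper: both expand $\mathcal{L}_k$ to first order in $\eta$ around $\Theta_s^t$, subtract the two resulting expressions to obtain $-\eta(g_i-g_j)\cdot g_k + O(\eta^2)$, and invoke the hypothesis $g_i\cdot g_k \geq g_j\cdot g_k$. Your discussion of the $O(\eta^2)$ remainder and the equality edge case is actually more careful than the paper's own proof, which simply drops the remainder under the ``sufficiently small $\eta$'' caveat without further comment.
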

The result indicates that when the gradients $g_i$ align better with $g_k$ than with $g_j$, the loss on the reference task $k$ using the updated gradients $g_i + g_k$ is lower than that using the updated gradients $g_j + g_k$. \Cref{theorem1,theorem2} suggest that updating tasks jointly with high proximal inter-task affinity at each optimization step is a reasonable approach.

\subsection{Tracking Proximal Inter-Task Affinity in Selective Group Updates}
From now on, we'll explain how we track proximal inter-task affinity in the sequential updates of task groups. For ease of explanation, we start with an already clustered (or initialized) task set $\{G_i\}_{i=1}^{\mathcal{M}}$ obtained from the tracked proximal inter-task affinity in the previous time step. $\mathcal{M}$ is the number of task groups. We'll provide more explanation at the end of this section on how the tracked proximal inter-task affinity is utilized in clustering the task set.

\textbf{Inter-Group Relations.} We begin by forwarding the sample $z^t$ to the shared multi-task network, from which we obtain the multi-task loss $\{\mathcal{L}_i (z^t, \Theta_s^t, \Theta_i^t)\}_{i=1}^{\mathcal{K}}$. Based on the task grouping $\{G_i\}_{i=1}^{\mathcal{M}}$, we divide each time step into $\mathcal{M}$ steps to update each task set. For simplicity of discussion, we update the first group $G_1$ in steps from $t$ to $t+1/\mathcal{M}$, but the following discussion can be extended to all sequential updates between $t$ and $t+1$.
\begin{align}
    \Theta_{s|G_1}^{t+1/\mathcal{M}} = \Theta_s^t-\eta \sum_{i \in G_1} w_i \nabla_{\Theta_s^t} \mathcal{L}_i(z^t, \Theta_s^t, \Theta_i^t)
\end{align}
$\Theta_{s|G_1}^{t+1/\mathcal{M}}$ represents the resulting parameter after updating the gradient from task group $G_1$ to the parameter $\Theta_s^t$. Upon forwarding $z^t$ once more, we can access $\{\mathcal{L}_i (z^t, \Theta_{s|G_1}^{t+1/\mathcal{M}}, \Theta_i^{t+1/\mathcal{M}})\}_{i=1}^{\mathcal{K}}$. For tasks that do not belong to $G_1$, their task-specific parameters remain unchanged, satisfying $\Theta_i^{t+1/\mathcal{M}} = \Theta_i^t$. Consequently, we can assess the proximal inter-task affinity from group $G_1$ to other tasks. In case where target task $\tau_j$ does not belong to $G_1$, we refer to this as inter-group relations as follows:
\begin{align}
    \mathcal{B}^t_{G_1\rightarrow j} \simeq \mathcal{B}^t_{i\rightarrow j} = \mathcal{A}^t_{i\rightarrow j} = 1- \frac{\mathcal{L}_j(z^t, \Theta_{s|G_1}^{t+1/\mathcal{M}}, \Theta_j^t)}{\mathcal{L}_j(z^t, \Theta_{s}^{t}, \Theta_j^t)} \hspace{20pt}\text{where}\hspace{5pt} \tau_i\in G_1 \hspace{5pt}\text{and}\hspace{5pt} \tau_j \notin G_1
\label{eq:inter_group}
\end{align}
To be precise, \cref{eq:inter_group} represents the proximal inter-task affinity from $G_1$ to $\tau_j$. However, since $\tau_i$ belongs to $G_1$, which is a group of tasks similar to $\tau_i$, we approximate the relations between $\tau_i$ and $\tau_j$ using $G_1$ and $\tau_j$. We explain how approximation $\mathcal{B}^t_{G_1\rightarrow j} \simeq \mathcal{B}^t_{i\rightarrow j}$ can be justified in task group updates from the results of \Cref{theorem3}. We can repeat this process iteratively $\mathcal{M}$ times, with each step updating each task group in $\{G_i\}_{i=1}^{\mathcal{M}}$.

\textbf{Intra-Group Relations.} Inter-group relation addresses inter-task relations where the source task $\tau_i$ and the target task $\tau_j$ do not belong to the same group. Conversely, intra-group relation refers to the influence between tasks within the same task group, and it follows a slightly different process. Since both the source and target tasks are updated simultaneously, we must infer inter-task relations based on how the tasks' losses change. Let's suppose we're updating the task set $G_1$. The proximal inter-task affinity between tasks within the same group can be calculated as follows: 
\begin{align}
    \mathcal{B}^t_{G_1\rightarrow j} \simeq \mathcal{B}^t_{i\rightarrow j} = 1- \frac{\mathcal{L}_j(z^t, \Theta_{s|G_1}^{t+1/\mathcal{M}}, \Theta_j^{t+1/\mathcal{M}})}{\mathcal{L}_j(z^t, \Theta_{s}^{t}, \Theta_j^t)} \hspace{20pt}\text{where}\hspace{5pt} \tau_i\in G_1 \hspace{5pt}\text{and}\hspace{5pt} \tau_j \in G_1
\end{align}
Similarly to the inter-group case, we approximate the relations from $\tau_i$ to $\tau_j$ using the relations from $G_1$ to $\tau_j$ and this also can be justified by \Cref{theorem3} when we use it to cluster task sets. Since the target task $\tau_j$ is included in $G_1$, we infer the inter-task relations by dividing cases into two. If both $\mathcal{B}^t_{i\rightarrow j}$ and $\mathcal{B}^t_{j\rightarrow i}$ have positive signs, we infer that the inter-task affinity between $\tau_i$ and $\tau_j$ is positive. Otherwise, if at least one of $\mathcal{B}^t_{i\rightarrow j}$ or $\mathcal{B}^t_{j\rightarrow i}$ has a negative sign, the inter-task affinity between $\tau_i$ and $\tau_j$ is negative. This inference is intuitive, as an increase in one task's loss during the decrease of the other task's loss implies that they transfer negative influences. However, it results in faster convergence compared to previous gradient-based optimization methods, which require a number of backpropagations equal to the number of tasks, along with gradient manipulation. This is further analyzed in \Cref{sec:exp}. In the initial phase of learning, we initialize task groups by assigning each task to a different group to facilitate the learning of proximal inter-task affinity. For stable tracking, we employ a decay rate $\beta$. Using the tracked proximal inter-task affinity, the next task groups $\{G_i\}_{i=1}^{\mathcal{M}'}$ are formed by grouping tasks with positive affinity, where $\mathcal{M}'$ is the number of task sets for the next step. During optimization, the number of clustered task sets $\mathcal{M}'$ continuously fluctuates along with their composition. The entire process is outlined in Algorithm \ref{alg:group}.

\begin{figure}[t]
\vspace{-10pt}
\begin{algorithm}[H]
\DontPrintSemicolon
\caption{Tracking Proximal Inter-Task Affinity for Task Group Updates}\label{alg:group}
\SetKwInput{KwRequire}{Require}
\SetKwInput{KwReturn}{return}
\SetKwInput{KwInitialize}{Initialize}

\KwRequire{Loss function $\{\mathcal{L}_i\}^\mathcal{K}_{i=1}$, Dataset $\{z_i\}_{i=1}^n$, Total iteration $T$, \\
Proximal inter-task affinity (w/ decay) $\mathcal{B}^t$ ($\tilde{\mathcal{B}}^t$), Task group set $\{G_i\}_{i=1}^{\mathcal{M}}$, Decay rate $\beta \in (0,1)$}

\KwInitialize{
Proximal inter-task affinity $\tilde{\mathcal{B}}^0 = 0_{\mathcal{K}\times \mathcal{K}}$, \\ 
Task group $G_i = \{\tau_i\}$ for $i=1,...,\mathcal{M}$ ($\mathcal{M}=\mathcal{K}$ for initialization)}

\For{$t=1,\ldots,T$}{
    Randomly mix the sequence of task group $\{G_i\}_{i=1}^{\mathcal{M}}$ and calculate $\{\mathcal{L}_i(z^t, 
        \Theta_s^t, \Theta_i^t)\}_{i=1}^{\mathcal{K}}$ \\
    \For{$k=1,\ldots,\mathcal{M}$}{
        Update gradients $\sum_{\tau_i \in G_k} w_i \nabla \mathcal{L}_i$ backpropagated from losses in $G_k$ \\
        Forward $z^t$ and calculate $\mathcal{L}_i(z^t, \Theta_{s|G_k}^{t+k/\mathcal{M}}, \Theta_i^{t+k/\mathcal{M}})$ for all $i=1,...,\mathcal{K}$ \\
        Calculate $\mathcal{B}^{t+(k-1)/\mathcal{M}}_{i \rightarrow j}$ and $\mathcal{B}^{t+(k-1)/\mathcal{M}}_{j \rightarrow i}$ where $\tau_i \in G_k$ for all $j=1,...,\mathcal{K}$\\
        \uIf{$\tau_j \notin G_k$ \textbf{or} $\mathcal{B}^{t+(k-1)/\mathcal{M}}_{i\rightarrow j} \cdot \mathcal{B}^{t+(k-1)/\mathcal{M}}_{j\rightarrow i} \geq 0$}
            {$\tilde{\mathcal{B}}^{t+(k-1)/\mathcal{M}}_{i\rightarrow j} = (1-\beta) \tilde{\mathcal{B}}^{t+(k-2)/\mathcal{M}}_{i\rightarrow j} + \beta \mathcal{B}^{t+(k-1)/\mathcal{M}}_{i\rightarrow j}$}
        \uElse{$\tilde{\mathcal{B}}^{t+(k-1)/\mathcal{M}}_{i\rightarrow j} = (1-\beta) \tilde{\mathcal{B}}^{t+(k-2)/\mathcal{M}}_{i\rightarrow j} - \beta \max(|\mathcal{B}^{t+(k-1)/\mathcal{M}}_{i\rightarrow j}|, |\mathcal{B}^{t+(k-1)/\mathcal{M}}_{j\rightarrow i}|)$}
    }
    Divide task groups $\{G_i\}_{i=1}^{\mathcal{M}'}$ based on proximal inter-task affinity $\tilde{\mathcal{B}}^{(t+\mathcal{M}-1)/\mathcal{M}}$
}
\end{algorithm}
\vspace{-10pt}
\end{figure}

We explain how it effectively approximates inter-task affinity in selective task group updates in \Cref{theorem3}. To be specific, we compare the inter-task affinity when the target task is included in the source task group versus when it is not.
\begin{restatable}[]{theorem}{theomthree}
\label{theorem3}
The affinity between $\{i, k\}\rightarrow k$ and $i \rightarrow k$ satisfies $\mathcal{A}_{i,k \rightarrow k}^t \geq \mathcal{A}_{i \rightarrow k}^t$.
\end{restatable}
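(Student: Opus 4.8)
The plan is to reduce the claim to a first-order comparison of the two post-update losses of task $k$. Both quantities have the form $1-\mathcal{L}_k(z^t,\cdot\,,\Theta_k^t)/\mathcal{L}_k(z^t,\Theta_s^t,\Theta_k^t)$ with the \emph{same} strictly positive denominator $\mathcal{L}_k(z^t,\Theta_s^t,\Theta_k^t)$ (positivity of the loss is implicit in defining affinity as a relative loss change). Hence I would first record the equivalence
\[
\mathcal{A}_{i,k\rightarrow k}^t \ \geq\ \mathcal{A}_{i\rightarrow k}^t
\quad\Longleftrightarrow\quad
\mathcal{L}_k(z^t,\Theta_{s|i,k}^{t+1},\Theta_k^t)\ \leq\ \mathcal{L}_k(z^t,\Theta_{s|i}^{t+1},\Theta_k^t),
\]
where $\Theta_{s|i}^{t+1}=\Theta_s^t-\eta g_i$ and $\Theta_{s|i,k}^{t+1}=\Theta_s^t-\eta(g_i+g_k)$, with $g_i=\nabla_{\Theta_s^t}\mathcal{L}_i$ and $g_k=\nabla_{\Theta_s^t}\mathcal{L}_k$ evaluated at the current iterate.

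Next I would set $f(\theta):=\mathcal{L}_k(z^t,\theta,\Theta_k^t)$ and note that the two updated shared-parameter vectors differ only by the extra step $-\eta g_k$ along the negative gradient of $\mathcal{L}_k$ itself, with $\nabla f(\Theta_s^t)=g_k$. A first-order Taylor expansion of $f$ about $\Theta_s^t$ then yields
\begin{align*}
f(\Theta_{s|i}^{t+1}) &= f(\Theta_s^t) - \eta\,(g_i\cdot g_k) + O(\eta^2), \\
f(\Theta_{s|i,k}^{t+1}) &= f(\Theta_s^t) - \eta\,(g_i\cdot g_k) - \eta\,\|g_k\|^2 + O(\eta^2),
\end{align*}
so that $f(\Theta_{s|i,k}^{t+1}) - f(\Theta_{s|i}^{t+1}) = -\eta\,\|g_k\|^2 + O(\eta^2) \leq 0$ for a sufficiently small learning rate $\eta\ll 1$, which is the same regime used in \Cref{theorem1,theorem2}. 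Dividing by the positive denominator and using the equivalence above gives $\mathcal{A}_{i,k\rightarrow k}^t \geq \mathcal{A}_{i\rightarrow k}^t$.

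Finally I would record the routine extensions: if the group update carries positive loss weights $w_i,w_k$, the extra first-order term becomes $-\eta\,w_k\,\|g_k\|^2\leq 0$ and nothing else changes; and for the proximal variant $\mathcal{B}$, having $k$ in the source group additionally steps $\Theta_k$ along $-\eta\nabla_{\Theta_k^t}\mathcal{L}_k$, contributing a further non-positive first-order term $-\eta\,\|\nabla_{\Theta_k^t}\mathcal{L}_k\|^2$ that only reinforces the inequality. The one genuine obstacle is controlling the $O(\eta^2)$ Taylor remainder so it cannot overturn the sign of the leading $-\eta\,\|g_k\|^2$ term; this is precisely why the statement is read in the small-step regime $\eta\ll 1$ (equivalently, one may argue at the level of the first-order approximation used throughout the paper's analysis), and it needs only local smoothness of $\mathcal{L}_k$ in $\Theta_s$ near $\Theta_s^t$ (with the degenerate case $g_k=\vzero$ giving equality).
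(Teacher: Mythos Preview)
Your proposal is correct and follows essentially the same approach as the paper: both reduce the comparison to first-order Taylor expansions of $\mathcal{L}_k$ about $\Theta_s^t$, obtain the difference $-\eta\|g_k\|^2+O(\eta^2)$, and conclude under the small-step regime $\eta\ll 1$. The paper computes the affinity difference directly as $\eta\|g_k\|^2/\mathcal{L}_k(z^t,\Theta_s^t,\Theta_k^t)\geq 0$, while you first pass through the equivalent loss comparison, but the argument is the same.
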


When task $i$ and $k$ are within the same task group, we can only access $\mathcal{B}_{i,k \rightarrow k}^t$ rather than $\mathcal{B}_{i \rightarrow k}^t$ during the optimization process. If $\mathcal{B}_{i,k \rightarrow k}^t \leq 0$, the inter-task affinity also satisfies $\mathcal{A}_{i,k \rightarrow k}^t \leq 0$ as $\mathcal{A}_{i,k \rightarrow k}^t \leq \mathcal{B}_{i,k \rightarrow k}^t$ (see the end of \cref{Append:differeence_affinity}). According to \Cref{theorem3}, this condition implies $\mathcal{A}_{i\rightarrow k}^t\leq 0$. The proposed algorithm separates these tasks into different groups when $\mathcal{B}_{i,k \rightarrow k}^t \leq 0$ which justifies our grouping rules.

Conversely, when tasks $\tau_i$ and $\tau_j$ belong to separate task groups, we only have access to $\mathcal{B}_{i \rightarrow k}^t$ instead of $\mathcal{B}_{i,k \rightarrow k}^t$. In this scenario, the proposed algorithm merges these tasks into the same group if $\mathcal{B}_{i \rightarrow k}^t \geq 0$. This inequality also implies $\mathcal{A}_{i\rightarrow k}^t\geq 0$ as $\mathcal{B}_{i \rightarrow k}^t = \mathcal{A}_{i \rightarrow k}^t$ (see the end of \cref{Append:differeence_affinity}), justifying the merging of tasks $\tau_i$ and $\tau_k$ based on $\mathcal{B}_{i \rightarrow k}^t$ during optimization. 
\section{Theoretical Analysis}
\label{sec:theoretical_analysis}
We outline the benefits of grouping tasks with positive inter-task affinity and updating them in multi-task optimization in \Cref{theorem1} and \Cref{theorem2}. We validate the feasibility of tracking proximal inter-task affinity and forming task groups based on this in our grouping updates strategy in \Cref{theorem3}. In this section, we clarify why, from the viewpoint of standard convergence analysis in \Cref{theorem4}, it's difficult to demonstrate that updating task groups sequentially leads to better performance than joint learning. We then explore how sequentially updating grouped task sets can improve multi-task performance, as discussed in \Cref{theorem5}.

\subsection{Convergence Analysis}
We perform conventional convergence analysis to determine the convergence points of the sequential updates of task groups, as outlined in \Cref{theorem4}. Conventional analyses treat the learning of shared and task-specific parameters separately, which fails to explain why sequential updates result in better multi-task performance while maintaining comparable stability.

\begin{restatable}[Convergence Analysis]{theorem}{theomfour}
\label{theorem4}
Given a set of differentiable losses $\{\mathcal{L}_i\}_{i=1}^{\mathcal{K}}$ and Lipschitz continuous gradients with a constant $H>0$, $||\nabla \mathcal{L}_i (x) - \nabla \mathcal{L}_i (y)|| \leq H||x-y||$ for all $i=1,2,...,\mathcal{K}$. If we sequentially update task groups $\{G_i\}_{i=1}^{\mathcal{M}}$ based on inter-task affinity, employing a sufficiently small step size $\eta \leq \min(\frac{2}{H\mathcal{K}}, \frac{1}{H|G_m|})$ where $|G_m|$ is the number of tasks in $G_m$, then the following inequalities are satisfied for any task group $G_m$.
\begin{align}
    \sum_{k=1}^{\mathcal{M}} \mathcal{L}_{G_k}(z^t, &\Theta_{s|G_m}^{t+m/\mathcal{M}}, \Theta_{G_k}^{t+1}) 
    \leq \sum_{k=1}^{\mathcal{M}} \mathcal{L}_{G_k}(z^t, \Theta_{s|G_m}^{t+(m-1)/\mathcal{M}}, \Theta_{G_k}^t)\\
    &-\eta g_{s, G_m}^{t+(m-1)/\mathcal{M}} \cdot (\sum_{k=1}^{\mathcal{M}} g_{s, G_k}^{t+(m-1)/\mathcal{M}} - g_{s, G_m}^{t+(m-1)/\mathcal{M}}) - \frac{\eta}{2}||g_{ts, G_m}^t||^2
\end{align}
\end{restatable}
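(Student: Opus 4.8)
The plan is to prove \Cref{theorem4} by a standard descent-lemma argument applied to the sum of losses, but carefully tracking the decomposition into shared and task-specific parameter updates within a single sub-step of the sequential scheme. Fix a time step $t$ and a group index $m$, and consider the sub-step that updates group $G_m$, i.e.\ the transition from the state $(\Theta_{s|G_m}^{t+(m-1)/\mathcal{M}}, \Theta^t_{G_k})$ to $(\Theta_{s|G_m}^{t+m/\mathcal{M}}, \Theta^{t+1}_{G_k})$. In this sub-step only the shared gradient $g_{s,G_m}^{t+(m-1)/\mathcal{M}} := \sum_{i\in G_m} w_i \nabla_{\Theta_s}\mathcal{L}_i$ and the task-specific gradients $g_{ts,G_m}^t$ (those for the tasks in $G_m$) are applied; the task-specific parameters of tasks outside $G_m$ are frozen. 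First I would write $\sum_{k=1}^{\mathcal{M}}\mathcal{L}_{G_k} = \mathcal{L}_{G_m} + \sum_{k\neq m}\mathcal{L}_{G_k}$ and apply the descent lemma (quadratic upper bound from $H$-Lipschitz gradients) separately to each piece, viewed as a function of the concatenated parameter vector that actually moves in this sub-step.

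The key steps, in order: (1) For the ``other-group'' sum $\sum_{k\neq m}\mathcal{L}_{G_k}$, only the shared parameters move (by $-\eta g_{s,G_m}$), so the descent lemma gives $\sum_{k\neq m}\mathcal{L}_{G_k}(\text{new}) \le \sum_{k\neq m}\mathcal{L}_{G_k}(\text{old}) - \eta\, g_{s,G_m}\cdot\big(\sum_{k\neq m} g_{s,G_k}\big) + \frac{H\eta^2}{2}|G\setminus G_m|\,\|g_{s,G_m}\|^2$, where I sum the per-task Lipschitz bounds and use that there are at most $\mathcal{K}-|G_m|$ such tasks; note $\sum_{k\neq m} g_{s,G_k} = \sum_{k=1}^{\mathcal{M}} g_{s,G_k} - g_{s,G_m}$, which is exactly the cross-term in the claimed bound. (2) For $\mathcal{L}_{G_m}$ itself, both the shared parameters ($-\eta g_{s,G_m}$) and the relevant task-specific parameters ($-\eta g_{ts,G_m}$) move, so the descent lemma yields $\mathcal{L}_{G_m}(\text{new}) \le \mathcal{L}_{G_m}(\text{old}) - \eta\|g_{s,G_m}\|^2 - \eta\|g_{ts,G_m}\|^2 + \frac{H\eta^2}{2}|G_m|\big(\|g_{s,G_m}\|^2 + \|g_{ts,G_m}\|^2\big)$, using that the joint gradient norm squared splits as the sum of the two block norms. (3) Add the two bounds. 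The shared self-terms combine to $-\eta\|g_{s,G_m}\|^2 + \frac{H\eta^2}{2}\mathcal{K}\|g_{s,G_m}\|^2 \le -\tfrac{\eta}{2}\|g_{s,G_m}\|^2 \le 0$ provided $\eta \le \tfrac{2}{H\mathcal{K}}$ — wait, more precisely one wants this combined shared term to be $\le 0$, which the condition $\eta\le 2/(H\mathcal{K})$ delivers; and the task-specific terms combine to $-\eta\|g_{ts,G_m}\|^2 + \frac{H\eta^2}{2}|G_m|\|g_{ts,G_m}\|^2 \le -\tfrac{\eta}{2}\|g_{ts,G_m}\|^2$ provided $\eta \le \tfrac{1}{H|G_m|}$. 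Dropping the non-positive residual shared term and keeping $-\tfrac{\eta}{2}\|g_{ts,G_m}\|^2$ gives exactly the stated inequality (the only negative cross-term retained being $-\eta\, g_{s,G_m}\cdot(\sum_k g_{s,G_k} - g_{s,G_m})$).

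I would present this cleanly by first stating the descent lemma (quadratic bound) as a preliminary fact, then doing the two-block computation once, being explicit that the gradients are all evaluated at the relevant state $\Theta_{s|G_m}^{t+(m-1)/\mathcal{M}}$, and that $\Theta_{G_k}^{t+1}=\Theta_{G_k}^t$ for $k\neq m$ during this sub-step so the notation in the theorem statement is consistent. The main obstacle I anticipate is bookkeeping rather than mathematical depth: getting the Lipschitz-constant accounting exactly right (summing $H$ over the correct set of tasks so that the $\frac{H\eta^2}{2}\mathcal{K}$ and $\frac{H\eta^2}{2}|G_m|$ coefficients emerge, which dictates the two-part step-size condition), and justifying the clean block-diagonal split of the quadratic term $\|(\Delta\Theta_s,\Delta\Theta_{G_m})\|^2 = \|\Delta\Theta_s\|^2 + \|\Delta\Theta_{G_m}\|^2$ together with the fact that $\|\nabla\mathcal{L}_{G_m}\|^2$ over the moving coordinates equals $\|g_{s,G_m}\|^2 + \|g_{ts,G_m}\|^2$. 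A minor subtlety is that $\mathcal{L}_{G_k}$ for $k\neq m$ may depend on shared parameters only (its task-specific parameters are frozen), so its gradient restricted to the moving coordinates is just $g_{s,G_k}$ evaluated at the current point — this is where the cross-term inner product $g_{s,G_m}\cdot g_{s,G_k}$ legitimately appears and does not get a negative self-contribution.
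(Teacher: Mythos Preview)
Your proposal is correct and follows essentially the same route as the paper's proof: apply the descent lemma separately to $\mathcal{L}_{G_m}$ (where both the shared block and the task-specific block move) and to each $\mathcal{L}_{G_i}$ with $i\neq m$ (where only the shared block moves), then sum and use the two step-size constraints to absorb the quadratic error terms, leaving exactly the cross-term $-\eta\,g_{s,G_m}\cdot\big(\sum_k g_{s,G_k}-g_{s,G_m}\big)$ and $-\tfrac{\eta}{2}\|g_{ts,G_m}\|^2$. The paper phrases the shared-parameter cancellation as bounding $\tfrac{H\eta^2\mathcal{K}}{2}\|g_{s,G_m}\|^2\le \eta\|g_{s,G_m}\|^2$ (which then cancels against the linear self-term) rather than as ``drop a non-positive residual,'' but this is the same inequality under $\eta\le 2/(H\mathcal{K})$; your block-diagonal treatment of the quadratic term via $\|\Delta\Theta_s\|^2+\|\Delta\Theta_{G_m}\|^2$ is in fact a cleaner justification than the paper's separate second-order expansions.
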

where $g_{s,G_m}$ and $g_{ts,G_m}$ represent the gradients of the shared parameters and task-specific parameters, respectively, for group $G_m$. Previous approaches, which handle updates of shared and task-specific parameters independently, failing to capture their interdependence during optimization.
The term, $g_{s, G_m}^{t+(m-1)/\mathcal{M}} \cdot (\sum_{k=1}^{\mathcal{M}} g_{s, G_k}^{t+(m-1)/\mathcal{M}})$, fluctuates during optimization. When the gradients of group $G_m$ align well with the gradients of the other groups $\{G_k\}_{i=1, i\neq m}^{\mathcal{M}}$, their dot product yields a positive value, leading to a decrease in multi-task losses. In practice, the sequential update strategy demonstrates a similar level of stability in optimization, which appears to contradict the conventional results. Thus, we assume a correlation between the learning of shared parameters and task-specific parameters, where the learning of task-specific parameters reduces gradient conflicts in shared parameters. Under this assumption, the sequential update strategy can guarantee convergence to Pareto-stationary points. This assumption is reasonable, as task-specific parameters capture task-specific information, thereby reducing conflicts in the shared parameters across tasks.

\subsection{Advantages of Selective Task Group Updates}
Nonetheless, sequentially updating task groups facilitates the learning of task-specific parameters, a concept not covered by conventional analysis. We directly compare the loss of joint learning with the loss of the sequential group updates strategy. To do this, we introduce two-step proximal inter-task affinity, an expanded concept of proximal inter-task affinity over multiple steps, as defined in \Cref{Append:two_step_proximal_inter_task_affinity}. We analyze the benefits of sequential updates using this two-step proximal inter-task affinity based on the plausible assumptions delineated in \Cref{theorem5}.
\begin{restatable}[]{theorem}{theomfive}
\label{theorem5}
Consider three tasks $\{i, j, k\}$, where task groups are formed with positive inter-task affinity as $\{i, k\}$ and $\{j\}$. Assume all losses are convex and differentiable, and the change in affinity during a single step from $t+(m-1)/\mathcal{M}$ to $t+m/\mathcal{M}$ is negligible. The affinity, which learns all tasks jointly, is denoted as $\mathcal{B}_{i,j,k \rightarrow k}^{t+(m-1)/\mathcal{M}}$. The affinity for the updating sequence $(\{i, k\}$, $\{j\})$ is represented as $\mathcal{B}_{i,k; j \rightarrow k}^{t+(m-1)/\mathcal{M}}$, while for the sequence $(\{j\}$, $\{i, k\})$, it is represented as $\mathcal{B}_{j; i,k \rightarrow k}^{t+(m-1)/\mathcal{M}}$. Then, for a sufficiently small learning rate $\eta \ll 1$, the following holds:
\begin{align}
    \mathcal{B}_{i,j,k \rightarrow k}^{t+(m-1)/\mathcal{M}} &\simeq \mathcal{B}_{i,k; j \rightarrow k}^{t+(m-1)/\mathcal{M}} & \mathcal{B}_{i,j,k \rightarrow k}^{t+(m-1)/\mathcal{M}} &\leq \mathcal{B}_{j; i,k \rightarrow k}^{t+(m-1)/\mathcal{M}}
\end{align}
\end{restatable}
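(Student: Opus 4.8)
The goal is to compare the loss of task $k$ after a joint update of all three tasks against the loss after a two-stage sequential update, for the two possible orderings of the groups $\{i,k\}$ and $\{j\}$. Since $\mathcal{B}_{\cdot\rightarrow k}$ is (up to a fixed denominator $\mathcal{L}_k(z^t,\Theta_s^t,\Theta_k^t)$) just the negative of the loss change on task $k$, the whole statement reduces to comparing three values of $\mathcal{L}_k$ at three different updated parameter points. The plan is to Taylor-expand $\mathcal{L}_k$ to first order in $\eta$ around the current parameters $(\Theta_s^t,\Theta_k^t)$, evaluate the shared-parameter displacement produced by each update scheme, and read off the inequalities by comparing the resulting linear-in-$\eta$ terms; the assumption that the affinity change over a single sub-step is negligible is exactly what lets me treat all the gradients $g_{s,G_\ell}$ as evaluated at time $t+(m-1)/\mathcal{M}$ rather than re-evaluating them after each intermediate update.

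**Key steps.** First I would write out the shared-parameter update for each scheme. Joint: $\Theta_{s}\mapsto\Theta_s^t-\eta(g_i+g_j+g_k)$ where $g_\ell=\nabla_{\Theta_s^t}\mathcal{L}_\ell$. Sequence $(\{i,k\},\{j\})$: first $\Theta_s^t-\eta(g_i+g_k)$, then subtract $\eta$ times $\nabla\mathcal{L}_j$ re-evaluated at the intermediate point — but by the negligible-change assumption this is $g_j+O(\eta)$, so to first order the net displacement is again $-\eta(g_i+g_j+g_k)$; hence $\mathcal{B}_{i,j,k\rightarrow k}\simeq\mathcal{B}_{i,k;j\rightarrow k}$, which is the first (approximate-equality) claim. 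Second, for the inequality I would use that in the sequence $(\{j\},\{i,k\})$ the task-specific parameter $\Theta_k$ is only updated in the \emph{second} sub-step, whereas in the joint scheme $\Theta_k$ is updated once; but more to the point, the reference task $k$ has its own gradient applied in the $\{i,k\}$ stage after $\{j\}$ has already moved the shared parameters, so I would expand $\mathcal{L}_k$ around the post-$\{j\}$ point and invoke convexity: $\mathcal{L}_k(\Theta_s^t-\eta g_j-\eta(g_i+g_k))\le\mathcal{L}_k(\Theta_s^t-\eta g_j)-\eta\langle \nabla\mathcal{L}_k(\Theta_s^t-\eta g_j),\,g_i+g_k\rangle+O(\eta^2)$, and compare term by term against the joint expansion, where the extra descent on $k$ comes from the $g_k\cdot g_k$ self-alignment term being applied to a configuration where $g_j$'s (possibly conflicting) contribution has already been partially absorbed. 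Convexity lets me turn the first-order comparison into a genuine inequality on the losses rather than just on linearizations.

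**Main obstacle.** The delicate point is making the "sequential update lets $k$ descend more" argument rigorous rather than heuristic: naively, to first order in $\eta$ all three schemes displace $\Theta_s$ by the same $-\eta(g_i+g_j+g_k)$, so the $O(\eta)$ terms coincide and the strict inequality must come from the $O(\eta^2)$ cross-terms — specifically from the fact that in the order $(\{j\},\{i,k\})$ the gradient $g_i+g_k$ is evaluated \emph{after} $\{j\}$'s update, producing a second-order term like $+\eta^2 H_k g_j\cdot(g_i+g_k)$ (with $H_k$ the Hessian of $\mathcal{L}_k$) whose sign, under the positive-affinity grouping of $\{i,k\}$ and convexity of $\mathcal{L}_k$, works in the claimed direction. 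So I expect the real work to be a careful second-order Taylor expansion keeping the $\eta^2$ terms, using convexity ($H_k\succeq 0$) and the hypothesis $\mathcal{A}_{i,k\rightarrow k}\ge 0$ (equivalently $g_i\cdot g_k\ge 0$ via \Cref{theorem1}) to pin down the sign of the discriminating term, together with the negligible-affinity-change assumption to justify freezing the gradients across sub-steps. I would then translate the loss inequality back into the $\mathcal{B}$ notation by dividing through by the common positive denominator $\mathcal{L}_k(z^t,\Theta_s^t,\Theta_k^t)$, which flips the inequality direction appropriately to yield $\mathcal{B}_{i,j,k\rightarrow k}\le\mathcal{B}_{j;i,k\rightarrow k}$.
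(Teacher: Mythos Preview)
Your approach to the first claim ($\mathcal{B}_{i,j,k\rightarrow k}\simeq\mathcal{B}_{i,k;j\rightarrow k}$) matches the paper's: expand to first order, note that the net shared-parameter displacement is the same $-\eta(g_i+g_j+g_k)$ in both schemes, and invoke the negligible-affinity-change assumption to kill the residual $\eta\bigl(g_{s,j}^{t+(m-1)/\mathcal{M}}\!\cdot g_{s,k}^{t+(m-1)/\mathcal{M}}-g_{s,j}^{t+m/\mathcal{M}}\!\cdot g_{s,k}^{t+m/\mathcal{M}}\bigr)$.

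For the inequality, however, you locate the discriminating term in the wrong place. You propose that the sign must come from a shared-parameter Hessian cross-term of the form $\eta^2\, g_j^\top H_k(g_i+g_k)$ and hope to control it using $H_k\succeq 0$ together with $g_i\cdot g_k\geq 0$. But positive semidefiniteness of $H_k$ gives no sign on an off-diagonal bilinear form $g_j^\top H_k v$ unless $g_j$ and $v$ are aligned; the hypotheses do not force this, so your sign argument cannot close.

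The paper's mechanism is exactly the one you mention in passing and then set aside (``but more to the point\dots''): the \emph{task-specific} parameter $\Theta_k$. In the order $(\{j\},\{i,k\})$ the task-specific gradient $g_{ts,k}$ is evaluated \emph{after} the $\{j\}$ step has already moved $\Theta_s$, whereas in the joint scheme it is evaluated at the original $\Theta_s^t$. The paper expands both final losses to first order, uses the negligible-affinity-change assumption to cancel \emph{all} shared-gradient dot products at the two sub-steps, and is left with
\[
-\eta\,\|g_{ts,k}^{t+m/\mathcal{M}}\|^2+\eta\,\|g_{ts,k}^{t+(m-1)/\mathcal{M}}\|^2.
\]
Convexity together with the fact that $j$ has negative affinity with $k$ (so the $\{j\}$ update pushes $\Theta_s$ away from the minimizer of $\mathcal{L}_k$) is then used to argue $\|g_{ts,k}^{t+m/\mathcal{M}}\|\geq\|g_{ts,k}^{t+(m-1)/\mathcal{M}}\|$, which gives the inequality. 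So the effect is not a second-order Hessian cross-term on the shared parameters at all; it is that the task-specific descent step becomes strictly more productive when taken after a conflicting shared update has enlarged $\|g_{ts,k}\|$. Keep $\Theta_k$ and its gradient $g_{ts,k}$ in your expansion and track how that gradient changes across sub-steps, rather than trying to extract the sign from $H_k$.
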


This suggests that grouping tasks with proximal inter-task affinity and subsequently updating these groups sequentially result in lower multi-task loss compared to jointly backpropagating all tasks. This disparity arises because the network can discern superior task-specific parameters to accommodate task-specific information during sequential learning. Notably, the order in which tasks are updated impacts multi-task outcomes within a single batch. However, as the optimization progresses, this influence diminishes, as demonstrated in the following experimental results.
\section{Experiments}
\label{sec:exp}
\textbf{Experimental settings.} We assess the proposed techniques using three datasets: NYUD-V2 for indoor vision tasks \citep{RN15}, PASCAL-Context for outdoor scenarios \citep{mottaghi2014role}, and Taskonomy \citep{zamir2018taskonomy} for large number of tasks. Multi-task performance is compared using the metric introduced by \citep{RN2}. This metric calculates the per-task performance by averaging it relative to the single-task baseline $b$: $\triangle_m = (1/\mathcal{K})\sum_{i=1}^{\mathcal{K}}(-1)^{l_i}(M_{m,i}-M_{b,i})/M_{b,i}$ where $l_i=1$ if a lower value of measure $M_i$ means indicates better performance for task $\tau_i$, and 0 otherwise. More details are introduced in \Cref{append:experimental_details}.

\begin{table*}[t]
\caption{Comparison of time complexity and memory consumption between our optimization methods and other multi-task optimization approaches, including loss-based and gradient-based methods.}
\vspace{-5pt}
\centering
\renewcommand\arraystretch{1.00}
\resizebox{\textwidth}{!}{
\scriptsize
\begin{tabular}{l|ccccc|c}
\hline
Method                       & Forward Pass & Backpropagation & Gradient Manipulation & Optimizer Step & Affinity Update & Memory \\ \hline
Loss-based Methods           & $\mathcal{O}(1)$             & $\mathcal{O}(1)$  & \xmark           & $\mathcal{O}(1)$      & \xmark & $\mathcal{O}(1)$ \\ 
Gradient-based Methods       & $\mathcal{O}(1)$             & $\mathcal{O}(\mathcal{K})$  & \cmark           & $\mathcal{O}(1)$      & \xmark & $\mathcal{O}(\mathcal{K})$ \\ 
Ours & $\mathcal{O}(\mathcal{M})$   & $\mathcal{O}(\mathcal{M})$  & \xmark           & $\mathcal{O}(\mathcal{M})$      & \cmark & $\mathcal{O}(1)$  \\ \hline
\end{tabular}}
\label{tab:comp_cost}
\end{table*}
\begin{figure}[t]
    \vspace{-10pt}
    \centering
    \includegraphics[width=0.99\textwidth]{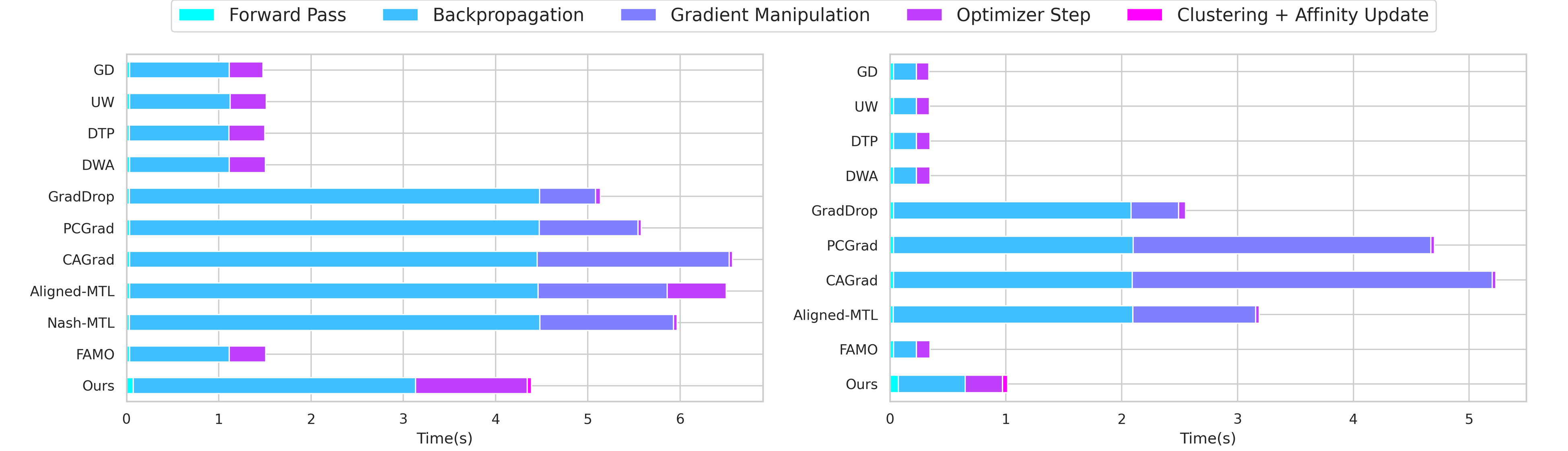}
    \vspace{-5pt}
    \caption{Comparison of the average time required by each optimization process to handle a single batch for 5 tasks on PASCAL-Context (left) and 11 tasks on Taskonomy (right).}
    \vspace{-10pt}
\label{fig:comp_cost}
\end{figure}

\textbf{Baselines.} We selected conventional multi-task optimization methods as our baselines: 1) Single-task learning, where each task is trained independently; 2) GD, where all task gradients are updated jointly without manipulation; 3) Gradient-based optimization methods that include gradient manipulation, such as GradDrop \citep{RN21}, MGDA \citep{RN36}, PCGrad \citep{RN20}, CAGrad \citep{RN18}, Aligned-MTL \citep{senushkin2023independent}, and Nash-MTL \citep{navon2022multi}; 4) Loss-based optimization methods, including UW \citep{RN23}, DTP \citep{RN25}, DWA \citep{RN26}, and FAMO \citep{liu2024famo}; and 5) A hybrid approach combining gradient and loss-based methods, specifically IMTL \citep{liu2021towards}. Each experiment was conducted three times with different random seeds to ensure a fair comparison.

\textbf{Architectures.} We use the most common multi-task architecture, employing a shared encoder and multiple decoders, each dedicated to a specific task. For our encoder, we mainly use ViT \cite{vit}, coupled with a single convolutional layer as the decoder.

\begin{table*}[t]
\vspace{-12pt}
\caption{Experimental results on the Taskonomy dataset using ViT-L. The best results in each column are shown in bold, while convergence failures are indicated with a dash.}
\vspace{-5pt}
\centering
\renewcommand\arraystretch{1.00}
\resizebox{0.99\textwidth}{!}{
\begin{tabular}{l|ccccccccccc|c}
\midrule[0.5pt]
 & DE & DZ & EO & ET & K2  & K3 & N   & C & R & S2  & S2.5 &  \\ \cmidrule[0.5pt]{2-12}
\multirow{-2}{*}{Task} & L1 Dist. $\downarrow$  & L1 Dist. $\downarrow$ & L1 Dist. $\downarrow$ & L1 Dist. $\downarrow$ & L1 Dist. $\downarrow$ & L1 Dist. $\downarrow$ & L1 Dist. $\downarrow$ & RMSE $\downarrow$    & L1 Dist. $\downarrow$ & L1 Dist. $\downarrow$ & L1 Dist. $\downarrow$  & \multirow{-2}{*}{$\triangle_m$ ($\uparrow$)} \\ \midrule[0.5pt]
Single Task     &0.0155&0.0160&0.1012&0.1713&0.1620&0.082&0.2169&0.7103&0.1357&0.1700&0.1435&-    \\ \midrule[0.5pt]
GD              &0.0163&0.0167&0.1211&0.1742&0.1715&0.093&0.2333&0.7527&0.1625&0.1837&0.1487&-8.65\ppm0.229     \\
GradDrop        &0.0168&0.0172&0.1229&0.1744&0.1727&0.091&0.2562&0.7615&0.1656&0.1862&0.1511&-10.81\ppm0.377      \\
MGDA            &-&-&-&-&-&-&-&-&-&-&-&-     \\
UW              &0.0167&0.0151&0.1212&0.1728&0.1712&0.089&0.2360&0.7471&0.1607&0.1829&0.1538&-7.65\ppm 0.087    \\
DTP             &0.0169&0.0153&0.1213&\textbf{0.1720}&0.1707&0.089&0.2517&0.7481&0.1603&0.1814&0.1503&-8.16\ppm 0.081    \\
DWA             &0.0147&0.0155&0.1209&0.1725&0.1711&0.089&0.2619&0.7486&0.1613&0.1845&0.1543&-7.92\ppm 0.077    \\
PCGrad          &0.0161&0.0165&0.1206&0.1735&0.1696&0.090&0.2301&0.7540&0.1625&0.1830&0.1483&-7.72\ppm0.206     \\
CAGrad          &0.0162&0.0166&0.1202&0.1769&0.1651&0.091&0.2565&0.7653&0.1661&0.1861&0.1571&-10.05\ppm0.346    \\
IMTL            &0.0162&0.0165&0.1206&0.1741&0.1710&0.090&0.2268&0.7497&0.1617&0.1832&0.1543&-8.03\ppm0.179     \\
Aligned-MTL     &0.0150&0.0155&\textbf{0.1135}&0.1725&\textbf{0.1630}&\textbf{0.086}&0.2513&0.8039&0.1646&0.1800&\textbf{0.1438}&-6.22\ppm0.285     \\
Nash-MTL        &-&-&-&-&-&-&-&-&-&-&-&-     \\
FAMO            &0.0157&0.0155&0.1211&0.1730&0.1702&0.090&0.2433&0.7479&0.1610&0.1823&0.1527&-7.58\ppm 0.211    \\
Ours            &\textbf{0.0140}&\textbf{0.0145}&0.1136&0.1735&0.1679&0.087&\textbf{0.2029}&\textbf{0.7166}&\textbf{0.1500}&\textbf{0.1769}&0.1469&\textbf{-1.42}\ppm 0.208    \\  \midrule[0.5pt]
\end{tabular}}
\label{tab:tab_exp_taskonomy_vitL}
\end{table*}
\begin{figure}[t]
    \vspace{-10pt}
    \centering
    \begin{subfigure}{0.24\textwidth}
        \includegraphics[width=0.99\textwidth]{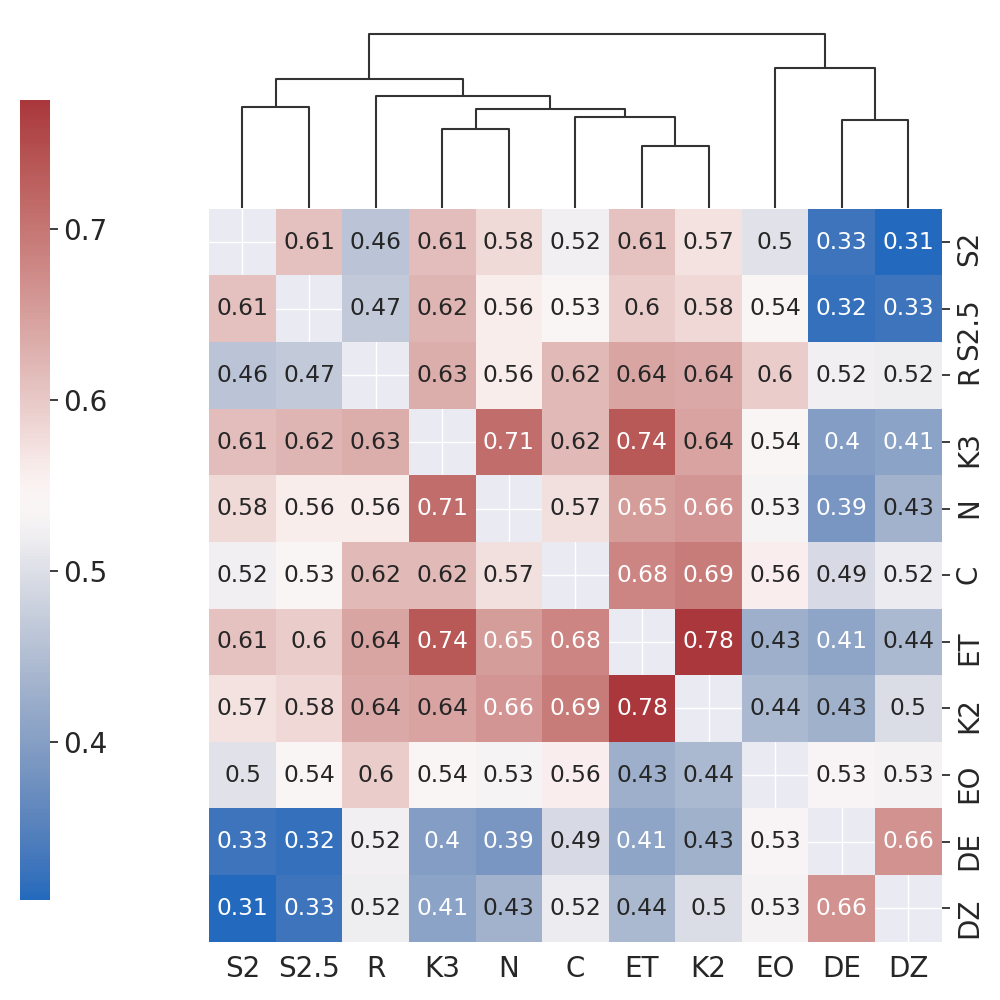}
        \vspace*{-15pt}
        \caption{$\{G\}_{i=1}^{\mathcal{M}}$ with ViT-L}
        \label{fig:group_viz_T}
    \end{subfigure}
    \begin{subfigure}{0.24\textwidth}
        \includegraphics[width=0.99\textwidth]{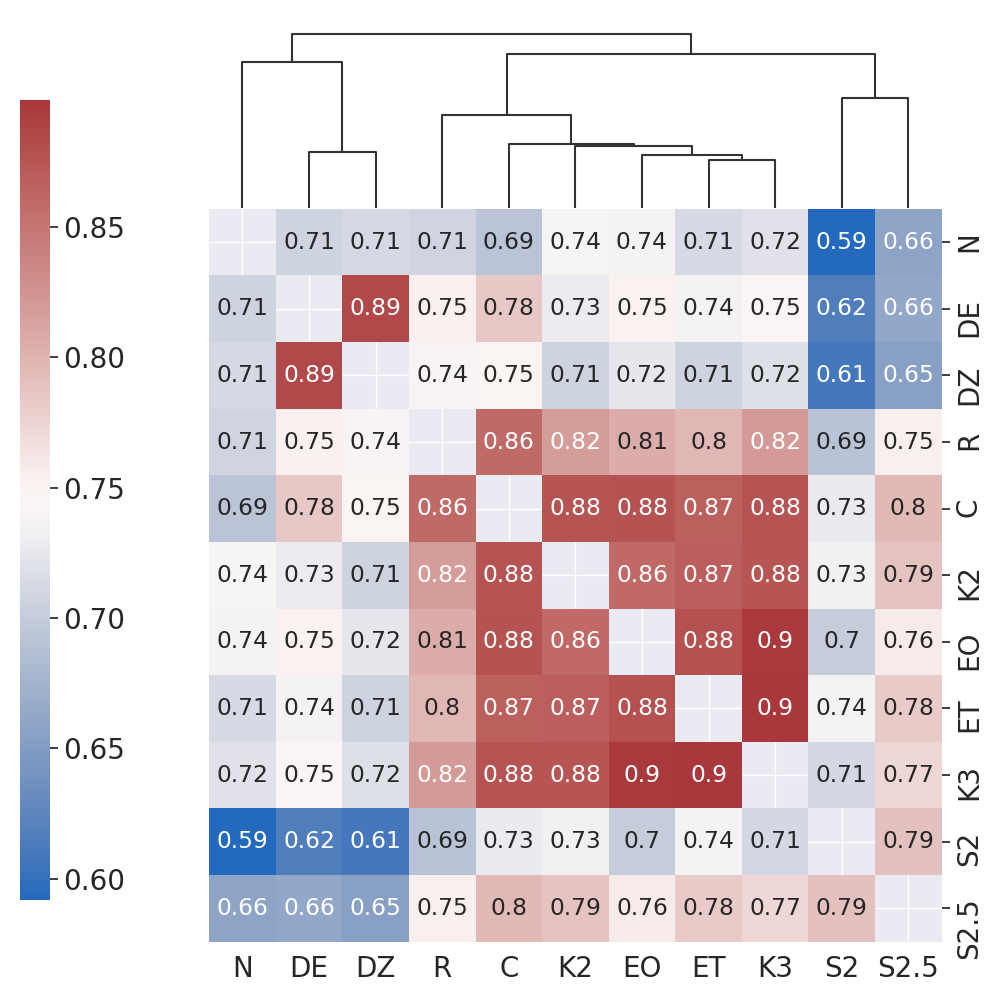}
        \vspace*{-15pt}
        \caption{$\{G\}_{i=1}^{\mathcal{M}}$ with ViT-T}
        \label{fig:group_viz_L}
    \end{subfigure}
        \begin{subfigure}{0.24\textwidth}
        \includegraphics[width=0.99\textwidth]{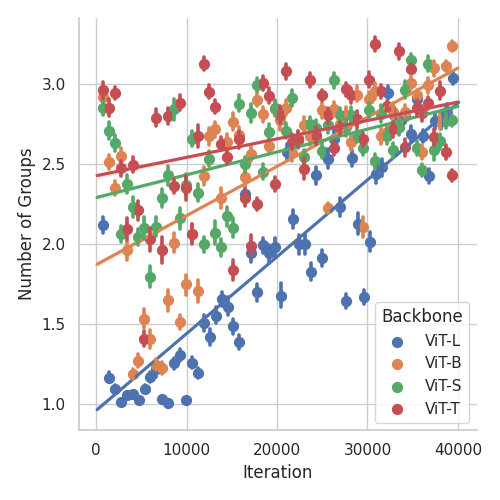}
        \vspace*{-15pt}
        \caption{Change of $\mathcal{M}$}
        \label{fig:num_group}
    \end{subfigure}
        \begin{subfigure}{0.24\textwidth}
        \includegraphics[width=0.99\textwidth]{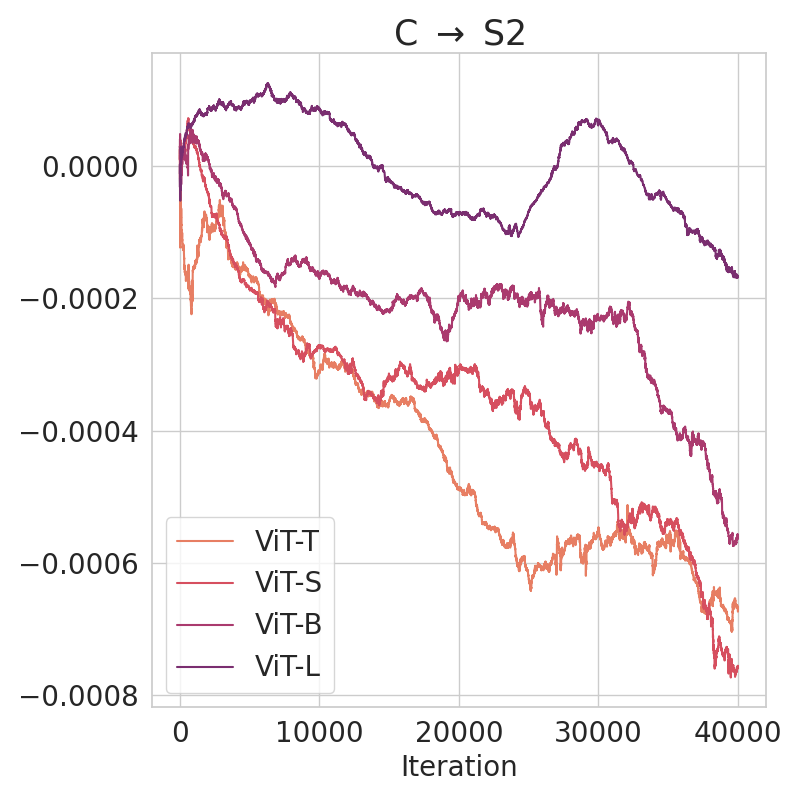}
        \vspace*{-15pt}
        \caption{Change of $\mathcal{B}^t$}
        \label{fig:viz_prox}
    \end{subfigure}
    \vspace{-5pt}
    \caption{The averaged grouping results $\{G\}_{i=1}^{\mathcal{M}}$ on the Taskonomy benchmark are shown for ViT-L in (a) and for ViT-T in (b). (c) illustrates how the number of task groups, $\mathcal{M}$, changes during optimization. (d) shows the change in proximal inter-task affinity from DE to C.}
    \label{fig:vis}
    \vspace{-8pt}
\end{figure}

We conduct experimental analysis to address several key statements and questions. For additional and comprehensive experimental results, please refer to \Cref{append:additional_experimental_results}.

\textbf{Computational cost and memory consumption.} In \Cref{tab:comp_cost}, we compare the time complexity and memory consumption of previous approaches with our methods. Additionally, \Cref{fig:comp_cost} presents the average time required by each optimization process to handle a single batch. Loss-based approaches, such as FAMO, DWA, DTP, and UW, exhibit the lowest computational cost and memory usage because they do not require multiple forward passes or backpropagation. In contrast, gradient-based approaches, including Nash-MTL, Aligned-MTL, CAGrad, PCGrad, and GradDrop, incur significantly higher computational costs due to the iterative backpropagation required for each task ($\mathcal{K}$), even though they have the potential for better performance. These approaches also need to store task-specific gradients, which demands $\mathcal{K}$ times more memory. Our methods, however, require $\mathcal{M}$ (number of groups) forward passes and backward passes, where $\mathcal{M}<\mathcal{K}$. As shown in \Cref{fig:num_group}, the number of groups changes during optimization on Taskonomy, with our methods maintaining an average $\mathcal{M}=1.8$ with ViT-L, much lower than $\mathcal{K}=11$ for the Taskonomy benchmark. Despite the need for multiple forward passes, our methods remain computationally competitive because the majority of the computational load is concentrated on backpropagation and gradient manipulation. This trend becomes more advantageous as the number of tasks increases. Our methods achieve a computational cost that falls between loss-based and gradient-based methods, while maintaining similar memory consumption to loss-based methods.

\textbf{Optimization results comparison.} We compare the results of multi-task optimization on Taskonomy in \Cref{tab:tab_exp_taskonomy_vitL} and on NYUD-v2 and PASCAL-Context in \Cref{tab:tab_exp_nyud_pascal}. Our methods achieve superior multi-task performance across all benchmarks, demonstrating their practical effectiveness. One key observation is that the tasks showing the greatest performance improvements differ between our methods and previous approaches, reflecting the distinct motivations behind each optimization design. Recent methods like Nash-MTL, IMTL, and Aligned-MTL focus on improving edge detection, often at the expense of other tasks. This is due to their emphasis on balancing task losses or gradients, which are affected by unbalanced loss scales. Since edge detection shows the lowest loss scale across the optimization process, these methods prioritize enhancing its performance. Loss-based approaches such as FAMO, DWA, DTP, and UW share a similar motivation but exhibit limited performance compared to gradient-based methods. In contrast, our methods enhance performance across all tasks compared to GD by optimizing task-specific parameters in a grouping update, which benefits all tasks. Furthermore, our methods achieve stable convergence with many tasks, unlike MGDA or Nash-MTL, which struggle to converge in such scenarios on Taskonomy.

\textbf{Grouping results of selective task group updates.} In \Cref{fig:vis}, we illustrate how the grouping strategy evolves during the optimization process. Specifically, in \Cref{fig:group_viz_L,fig:group_viz_T}, we show the frequency with which different tasks are grouped together throughout the entire optimization process. Similar tasks, such as depth euclidean (DE) and depth zbuffer (DZ), tend to be grouped more often, whereas depth-related tasks are less frequently grouped with tasks of other types. Overall, the trends in task grouping are consistent across different backbone network sizes. However, one notable observation is that the overall level of grouping increases as the backbone size grows. This suggests that larger models are better at extracting generalizable features across multiple tasks, leading to higher inter-task affinity. In \Cref{fig:num_group}, we present the average number of task groups $\mathcal{M}$ throughout the optimization process. During optimization, $\mathcal{M}$ tends to increase as proximal inter-task affinity decreases, suggesting that task competition grows more intense over time. Moreover, $\mathcal{M}$ grows as the backbone network's capacity increases, which aligns with the grouping patterns observed for different backbone sizes in \Cref{fig:group_viz_L,fig:group_viz_T}. We provide an example of proximal inter-task affinity in \Cref{fig:viz_prox}, with all task pairs shown in \Cref{fig:proximal_vit_taskonomy}.

\textbf{Stability of optimization.} In \Cref{tab:tab_exp_taskonomy_vitL,tab:tab_exp_nyud_pascal}, we also report the variance in multi-task performance. Our methods demonstrate a similar level of variance compared to previous optimization approaches. Additionally, we assess how the order of task group updates and the type of grouping affect performance and optimization stability in \Cref{tab:opt_stability}. Using the task groups identified from tracked proximal inter-task affinity, we compare performance when updating in forward order, backward order, and randomly selected order. The results show that the order of task group updates does not significantly impact the final multi-task performance or algorithm stability. When comparing our method to gradient descent (GD), where all tasks are grouped together ($\mathcal{M}=1$), and a scenario where each task is placed in a separate group ($\mathcal{M}=\mathcal{K}$), our approach demonstrates better performance and greater stability by grouping tasks with high proximal inter-task affinity.

\textbf{The effect of the decay rate on tracking affinity.} The affinity decay rate $\beta$ influences the tracking of proximal inter-task affinity as shown in Figure \ref{fig:proximal_vit_beta_sample}. A larger $\beta$ results in more dynamic fluctuations in affinity, resulting in more frequent changes in task groupings. In contrast, a smaller $\beta$ leads to slower adjustments in task groupings. We observe that the affinity decay rate $\beta$ does not significantly impact multi-task performance, which enhances the method's applicability. In our experiments, we set $\beta$ to $0.001$. We anticipate that $\beta$ can be easily adjusted by monitoring the variation of proximal inter-task affinity. The overall results are shown in \Cref{fig:proximal_vit_beta}.

\begin{table*}[t]
\vspace{-10pt}
\caption{We assess the proposed method alongside previous multi-task optimization approaches on both NYUD-v2 and PASCAL-Context. The best results are highlighted in bold.}
\vspace{-5pt}
\centering
\renewcommand\arraystretch{0.95}
\resizebox{\textwidth}{!}{
\tiny
\begin{tabular}{l|ccccc|cccccc}
\midrule[0.5pt]
\multicolumn{1}{c|}{}  & \multicolumn{5}{c|}{NYUD-v2}  & \multicolumn{6}{c}{PASCAL-Context}  \\ \cmidrule[0.5pt]{2-12} 
\multicolumn{1}{c|}{} & Semseg  & Depth  & Normal  & Edge & $\triangle_m$ & Semseg  & Parsing  & Saliency  & Normal  & Edge & $\triangle_m$\\
\multicolumn{1}{c|}{\multirow{-3}{*}{Method}} & mIoU $\uparrow$  & RMSE $\downarrow$  & mErr $\downarrow$  & odsF $\uparrow$  & $\% \uparrow$  & mIoU $\uparrow$  & mIoU $\uparrow$  & maxF $\uparrow$  & mErr $\downarrow$  & odsF $\uparrow$ & $\% \uparrow$ \\ \midrule[0.5pt]
Single Task  &39.35  &0.661  &22.14  &59.7  &-               &67.96  &58.90  &83.76  &15.65  &47.7  &-  \\ \midrule[0.5pt]
GD           &38.51  &0.641  &25.64  &53.0  &-6.54\ppm0.171  &67.48  &55.46  &81.36  &18.41  &39.0  &-9.06\ppm0.095  \\
GradDrop     &38.42  &0.638  &25.75  &53.0  &-6.60\ppm0.264  &67.18  &55.35  &81.32  &18.53  &39.0  &-9.35\ppm0.092  \\
MGDA         &20.93  &0.870  &36.66  &59.8  &-35.96\ppm0.117 &-  &-  &-  &-  &-  &-  \\
UW           &38.20  &0.631  &25.32  &53.0 &-5.99\ppm0.196  &67.11  &55.22  &81.33  &18.44  &38.6  &-9.46\ppm0.113  \\
DTP          &38.52  &0.633  &25.60  &52.8  &-6.26\ppm0.214  &67.09  &55.23  &81.35  &18.40  &38.6  &-9.41\ppm0.108  \\
DWA          &38.56  &0.634  &25.62  &52.8  &-6.30\ppm0.220  &67.11  &55.23  &81.34  &18.44  &38.6  &-9.46\ppm0.116  \\
PCGrad       &38.26  &0.633  &25.40  &54.0  &-5.70\ppm0.083  &66.62  &55.11  &81.82  &18.16  &40.0  &-8.58\ppm0.101  \\
CAGrad       &38.31  &0.641  &26.11  &58.0  &-5.10\ppm0.243  &66.31  &54.96  &81.28  &18.68  &44.9  &-7.46\ppm0.067  \\
Aligned-MTL  &36.20  &0.655  &23.77  &\textbf{58.5}  &-4.12\ppm0.121  &60.78  &54.42  &82.29  &17.44  &\textbf{45.5}  &-7.20\ppm0.075  \\
IMTL         &37.19  &0.652  &\textbf{23.45}  &57.8  &-3.31\ppm0.213  &63.91  &55.23  &82.23  &17.83  &44.2  &-7.07\ppm0.104  \\
Nash-MTL     &36.94  &0.641  &23.60  &58.0  &-3.14\ppm0.115  &\textbf{68.50}  &10.32  &75.63  &22.25  &35.3  &-31.91\ppm0.166  \\
FAMO         &38.57  &0.636  &25.61  &53.1  &-6.23\ppm0.166  &67.12  &55.23  &81.34  &18.44  &38.6  &-9.46\ppm0.115  \\
Ours         &\textbf{40.02}  &\textbf{0.618}  &24.09  &53.9  &\textbf{-2.58}\ppm0.205  &68.14  &\textbf{57.15}  &\textbf{82.52}  &\textbf{17.19}  &39.5  &\textbf{-6.24}\ppm0.192  \\ \midrule[0.5pt]
\end{tabular}}
\label{tab:tab_exp_nyud_pascal}
\end{table*}

\begin{table*}[t]
    \vspace{-5pt}
    \centering
    \begin{minipage}{0.75\textwidth}
    \begin{subfigure}{0.32\textwidth}
        \includegraphics[width=0.99\textwidth]{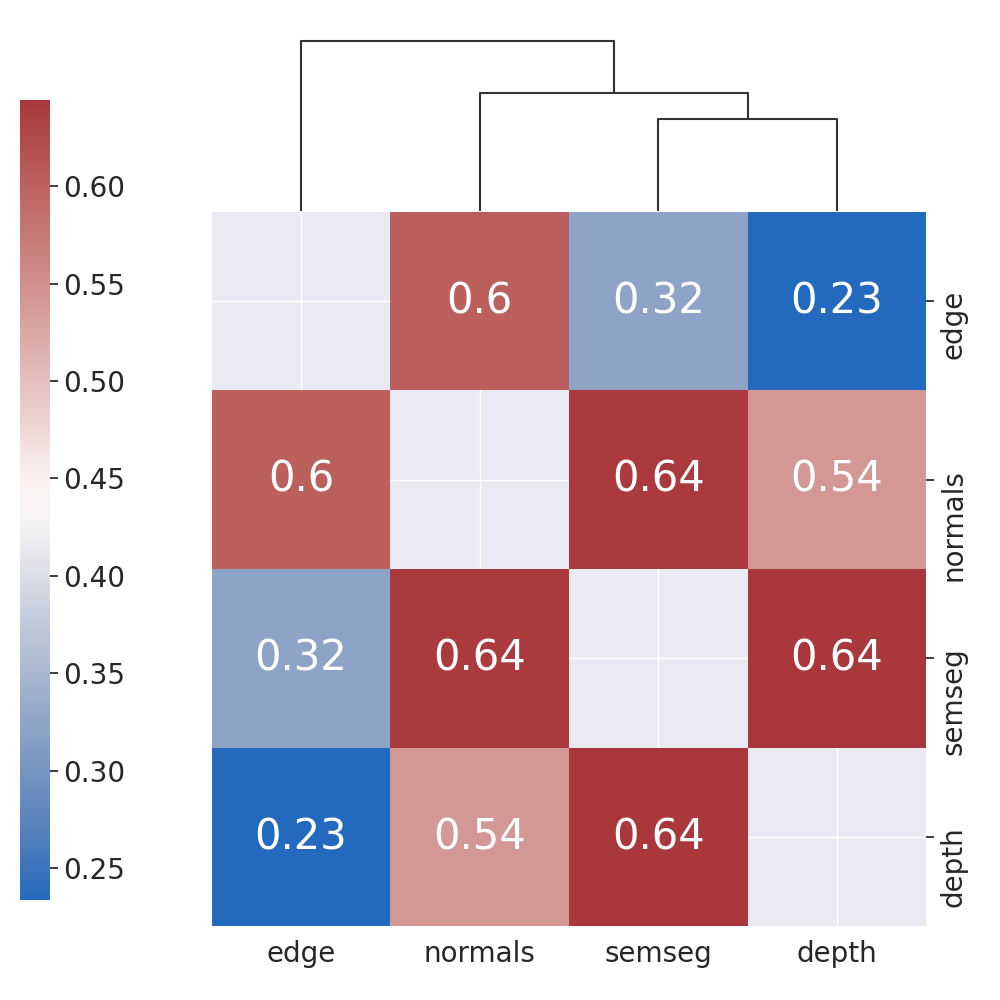}
        \vspace*{-15pt}
        \caption{}
        \label{fig:group_viz_nyud}
    \end{subfigure}
    \begin{subfigure}{0.32\textwidth}
        \includegraphics[width=0.99\textwidth]{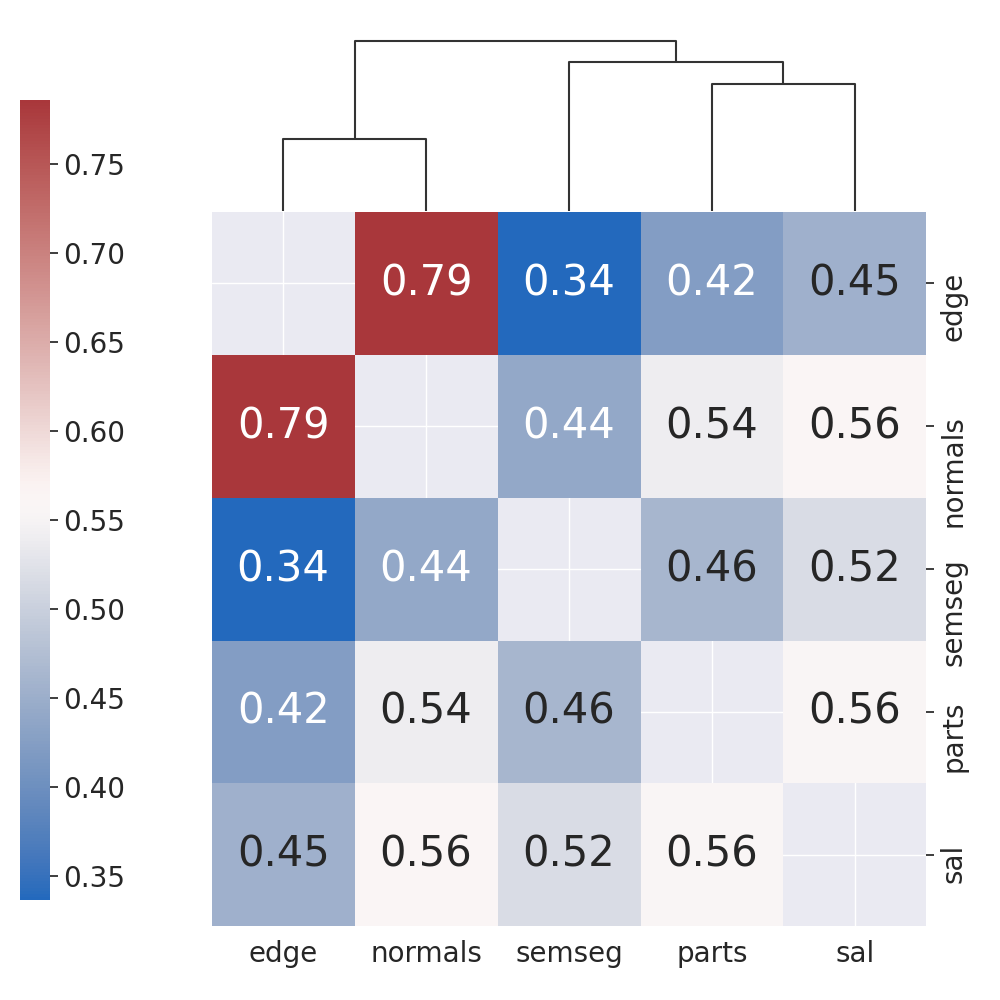}
        \vspace*{-15pt}
        \caption{}
        \label{fig:group_viz_pascal}
    \end{subfigure}
    \begin{subfigure}{0.32\textwidth}
        \includegraphics[width=0.99\textwidth]{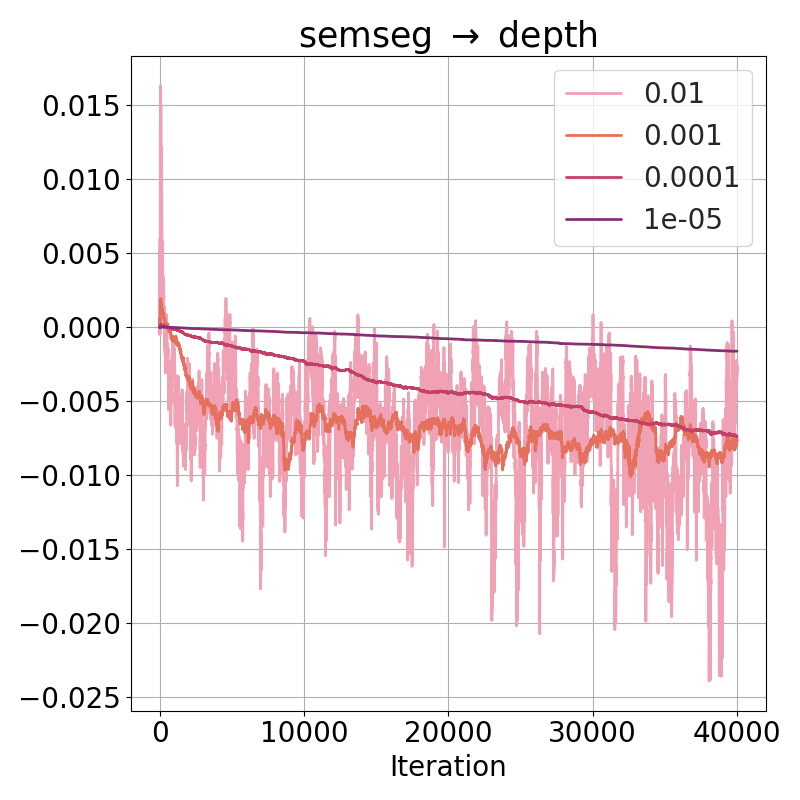}
        \vspace*{-15pt}
        \caption{}
        \label{fig:proximal_vit_beta_sample}
    \end{subfigure}
    \vspace{-5pt}
    \captionof{figure}{The averaged grouping results $\{G\}_{i=1}^{\mathcal{M}}$ are shown for NYUD-v2 in (a) and PASCAL-Context in (b). (c) illustrates how the decaying factor $\beta$ influences the stable tracking of proximal inter-task affinity.}
    \end{minipage}
    \begin{minipage}{0.24\textwidth}
    \vspace{-5pt}
    \caption{Ablation studies on Taskonomy exploring the impact of group order and type of grouping.}
    \vspace{-10pt}
    \tiny
    \begin{tabular}{l|c}\midrule[0.5pt]
    Method           &  $\triangle_m$ ($\uparrow$)\\\midrule[0.5pt]
    Order of Group        \\
    Forward          &-1.41\ppm0.211  \\
    Backward         &-1.44\ppm0.209  \\\midrule[0.5pt]
    Grouping Methods   \\
    Joint (GD)          &-8.65\ppm0.229  \\
    Separate            &-2.48\ppm0.651  \\\midrule[0.5pt]
    Ours             &-1.42\ppm0.208
    \end{tabular}
    \label{tab:opt_stability}
    \end{minipage}
    \vspace{-8pt}
\end{table*}
\section{Conclusion}
We propose a novel approach to mitigate negative transfer by dynamically dividing task groups during optimization. Through empirical experimentation, we observed significant differences in multi-task performance depending on whether task losses are backpropagated collectively or updated sequentially. Building on this insight, we introduce an algorithm that adaptively separates task sets and updates them within a single shared architecture during optimization. To facilitate simultaneous tracking of inter-task relations and network optimization, we introduce proximal inter-task affinity, which can be measured throughout the optimization process. Our analysis highlights the profound impact of sequential updates on the learning of task-specific representations. Ultimately, our methods substantially enhance multi-task performance compared to previous multi-task optimization.

\newpage
\bibliography{main}
\bibliographystyle{iclr2025_conference}

\newpage
\clearpage
\newpage
\appendix
\renewcommand{\thetable}{\thesection.\arabic{table}}
\renewcommand\thefigure{\thesection.\arabic{figure}}    

\section{Proofs of Theoretical Analysis}
\label{Append:proof}
Before diving into the theoretical analysis proof, we'll briefly introduce the basic concepts used in our proof to ensure the paper is self-contained.

\begin{definition}[Lipschitz continuity] $f$ is $\lambda$-Lipschitz if for any two points $u,v$ in the domain of $f$, we have following inequality:
\begin{align}
    |f(u)-f(v)|\leq \lambda ||u-v||
\end{align}
\end{definition}
\subsection{Difference between inter-task affinity and proximal inter-task affinity}
\label{Append:differeence_affinity}
Firstly, let's reiterate the definitions of inter-task affinity and proximal inter-task affinity from the main paper.

In a typical SGD process for task $i$ at time step $t$ with input $z^t$, the update rule for $\Theta_s$ is as follows: $\Theta_{s|i}^{t+1} = \Theta_s^t-\eta w_i \nabla_{\Theta_s^t} \mathcal{L}_i(z^t, \Theta_s^t, \Theta_i^t)$ where $z^t$ represents the input data and $\eta$ is the learning rate, $\Theta_{s|i}^{t+1}$ is the updated shared parameters with loss $\mathcal{L}_i$. Then the affinity from task $i$ to $k$ at time step $t$, denoted as $\mathcal{A}^t_{i\rightarrow k}$, is:
\begin{align}
    \mathcal{A}^t_{\textcolor{red}{i\rightarrow k}} &= 1- \frac{\mathcal{L}_k(z^t, \Theta_{s|i}^{t+1}, \Theta_k^{\textcolor{red}{t}})}{\mathcal{L}_k(z^t, \Theta_{s}^{t}, \Theta_k^t)}
    \label{append:definition:inter_task_affinity}
\end{align}

For proximal inter-task affinity, let's consider a multi-task network shared by the task set $G$, with their respective losses defined as $\mathcal{L}_G$. For a data sample $z^t$ and a learning rate $\eta$, the gradients of task set $G$ are updated to the parameters of the network as follows: $\Theta_{s|G}^{t+1} = \Theta_s^t -\eta \nabla_{\Theta_s^t} \mathcal{L}_G (z^t, \Theta_s^t, \Theta_G^t)$ and $\Theta_k^{t+1} = \Theta_k^t -\eta \nabla_{\Theta_k^t} \mathcal{L}_k (z^t, \Theta_s^t, \Theta_k^t)$ for $k \in G$. Then, the proximal inter-task affinity from group $G$ to task $k$ at time step $t$ is defined as:
\begin{align}
    \mathcal{B}^t_{\textcolor{red}{G\rightarrow k}} = 1- \frac{\mathcal{L}_k(z^t, \Theta_{s|\textcolor{red}{G}}^{t+1}, \Theta_k^{\textcolor{red}{t+1}})}{\mathcal{L}_k(z^t, \Theta_{s}^{t}, \Theta_k^t)}
    \label{append:definition:proximal_inter_task_affinity}
\end{align}

The primary distinction between the two affinities lies in the incorporation of the task set and the update of task-specific parameters (indicated by \textcolor{red}{red letters}). Proximal inter-task affinity is an expanded concept that integrates the task set rather than individual tasks as the source task. This difference is evident from the notation, where $i \rightarrow k$ in \Cref{append:definition:inter_task_affinity} and $G \rightarrow k$ in \Cref{append:definition:proximal_inter_task_affinity}.

The second main difference lies in the update of task-specific parameters. In the inter-task affinity in \Cref{append:definition:inter_task_affinity}, the denominator includes $\Theta_k^t$, while in the proximal inter-task affinity in \Cref{append:definition:proximal_inter_task_affinity}, it includes $\Theta_k^{t+1}$, which is a subtle distinction that may not be noticed by readers.
These two modifications allow us to track proximal inter-task affinity while simultaneously optimizing multi-task networks.

When measuring affinity under the assumption of a convex objective, the proximal inter-task affinity is equal to or greater than the inter-task affinity. This also aligns well with real-world scenarios, as proximal inter-task affinity reflects updates to task-specific parameters, as shown below.
\begin{align}
    \mathcal{A}^t_{i\rightarrow k} &= 1- \frac{\mathcal{L}_k(z^t, \Theta_{s|i}^{t+1}, \Theta_k^t)}{\mathcal{L}_k(z^t, \Theta_{s}^{t}, \Theta_k^t)} \leq 1- \frac{\mathcal{L}_k(z^t, \Theta_{s|i}^{t+1}, \Theta_k^{t+1})}{\mathcal{L}_k(z^t, \Theta_{s}^{t}, \Theta_k^t)} = \mathcal{B}^t_{i\rightarrow k}
\end{align}

This inequality can also be applied to an expanded setting that incorporates task sets. If we expand the concept of inter-task affinity from individual task to task set as $\mathcal{A}^t_{G \rightarrow k}$, then the inequality $\mathcal{A}^t_{G \rightarrow k} \leq \mathcal{B}^t_{G \rightarrow k}$ is satisfied. If $k \notin G$ then, $\mathcal{A}^t_{G \rightarrow k} = \mathcal{B}^t_{G \rightarrow k}$ holds.

For ease of notation, we use the expanded version of the affinity for multiple tasks throughout the proof, as follows:
\begin{align}
    \mathcal{A}^t_{G\rightarrow k} &= 1- \frac{\mathcal{L}_k(z^t, \Theta_{s|G}^{t+1}, \Theta_k^t)}{\mathcal{L}_k(z^t, \Theta_{s}^{t}, \Theta_k^t)}
    \label{Append:expanded_affinity}
\end{align}
This differs from proximal inter-task affinity, as it does not consider the update of task-specific parameters.

\subsection{Proof of \Cref{theorem1}}
\label{Append:theorem1}

\theomone*
\begin{proof}
Let's consider a scenario where we update the network parameters $\Theta$ with task-specific losses $\mathcal{L}_i$ and $\mathcal{L}_k$ simultaneously at time step $t$ with input $z^t$. Applying the Taylor expansion, we obtain the following:
\begin{align}
    \mathcal{L}_k(z^t, \Theta_{s|i,k}^{t+1}, \Theta_k^{t})
    &\simeq \mathcal{L}_k (z^t, \Theta_s^t, \Theta_k^t) + (\Theta_{s|i,k}^{t+1} - \Theta_s^t) \nabla_{\Theta_s^t} \mathcal{L}_k (z^t, \Theta_s^t, \Theta_k^t) + O(\eta^2) \\
    &= \mathcal{L}_k (z^t, \Theta_{s}^{t}, \Theta_k^{t})-\eta g_k\cdot(g_i+g_k) + O(\eta^2)
\end{align}
where $g_i$ and $g_k$ represent the gradients backpropagated from the losses $\mathcal{L}_i$ and $\mathcal{L}_k$, respectively, with respect to the shared parameters $\Theta_s^t$. For instance, $g_i = \nabla_{\Theta_s^t} \mathcal{L}_i (z^t, \Theta_s^t, \Theta_i)$.

Reorganizing the inequality to align with the format of inter-task affinity, we obtain:  
\begin{align}
    \mathcal{A}_{i,k\rightarrow k}^t = 1-\frac{\mathcal{L}_k(z^t, \Theta_{s|i,k}^{t+1}, \Theta_k^{t})}{\mathcal{L}_k (z^t, \Theta_{s}^t, \Theta_k^{t})} \simeq \frac{1}{\mathcal{L}_k (z^t, \Theta_{s}^t, \Theta_k^{t})}\biggl(\eta g_k\cdot(g_i+g_k) + O(\eta^2)\biggr)
\end{align}
Similar results can be obtained for $A_{j,k\rightarrow k}^t$.
\begin{align}
    \mathcal{A}_{j,k\rightarrow k}^t = 1-\frac{\mathcal{L}_k(z^t, \Theta_{s|j,k}^{t+1}, \Theta_k^{t})}{\mathcal{L}_k (z^t, \Theta_{s}^t, \Theta_k^{t})} \simeq \frac{1}{\mathcal{L}_k (z^t, \Theta_{s}^t, \Theta_k^{t})}\biggl(\eta g_k\cdot(g_j+g_k) + O(\eta^2)\biggr)
\end{align}
From $\mathcal{A}_{i,k \rightarrow k}^t \geq \mathcal{A}_{j,k \rightarrow k}^t$ and by ignoring the $O(\eta^2)$ term with a sufficiently small learning rate $\eta \ll 1$, we can derive the result:
\begin{align}
    g_i \cdot g_k \geq g_j \cdot g_k
\end{align}
\end{proof}
The findings indicate that grouping tasks with positive inter-task affinity exhibits better alignment in task-specific gradients compared to grouping tasks with negative inter-task affinity, thereby validating the grouping strategies employed by our algorithm. Furthermore, we analyze how this alignment in task-specific gradients contributes to reducing the loss of task $k$ in \Cref{theorem2}.

\subsection{Proof of \Cref{theorem2}}
\label{Append:theorem2}

\theomtwo*
\begin{proof}
Let's consider a scenario where we update the network parameters $\Theta_s^t$ with task-specific losses $\mathcal{L}_i$ and $\mathcal{L}_k$ simultaneously at time step $t$ with input $z^t$. Let $g_i$ denote the gradients backpropagated from the loss $\mathcal{L}_i$ with respect to the shared parameters $\Theta_s^t$, expressed as $g_i = \nabla{\Theta_s^t} \mathcal{L}_i (z^t, \Theta_s^t, \Theta_i)$.

Using the first-order Taylor approximation of $\mathcal{L}_k$ for $\Theta_s^t$, we obtain:
\begin{align}
    \mathcal{L}_k (z^t, \Theta_{s|i,k}^{t+1}, \Theta_k^t) &= \mathcal{L}_k (z^t, \Theta_s^t, \Theta_k^t) + (\Theta_{s|i,k}^{t+1} - \Theta_s^t) \nabla_{\Theta_s^t} \mathcal{L}_k (z^t, \Theta_s^t, \Theta_k^t) + O(\eta^2)\\
    &= \mathcal{L}_k (z^t, \Theta_s^t, \Theta_k^t) - \eta (g_i + g_k)\cdot g_k + O(\eta^2)
    \label{eq:theo2_pre1}
\end{align}

For task $j$, we can follow a similar process as follows:
\begin{align}
    \mathcal{L}_k (z^t, \Theta_{s|j,k}^{t+1}, \Theta_k^t) = \mathcal{L}_k (z^t, \Theta_s^t, \Theta_k^t) - \eta (g_j + g_k)\cdot g_k + O(\eta^2)
    \label{eq:theo2_pre2}
\end{align}

With a sufficiently small learning rate $\eta \ll 1$, subtract \cref{eq:theo2_pre2} from \cref{eq:theo2_pre1}, then:
\begin{align}
    \mathcal{L}_k (z^t, \Theta_{s|i,k}^{t+1}, \Theta_k^t) - \mathcal{L}_k (z^t, \Theta_{s|j,k}^{t+1}, \Theta_k^t) &= - \eta (g_i + g_k)\cdot g_k + \eta (g_j + g_k)\cdot g_k \\
    &= - \eta(g_i-g_j)\cdot g_k \leq 0
    \label{eq:theo2_result}
\end{align}
which proves the results.
\end{proof}

The result indicates that when the gradients $g_i$ from task $i$ align better with those of the reference task $k$ compared to task $j$, the loss on the reference task $k$ tends to be lower with updated gradients $g_i + g_k$ compared to $g_j + g_k$, especially for sufficiently small learning rates $\eta$.

\subsection{Proof of \Cref{theorem3}}
\label{Append:theorem3}

\theomthree*
\begin{proof}
Let's begin with the definition of inter-task affinity between $\{i, j\}\rightarrow k$ and $i \rightarrow k$ as follows:
\begin{align}
    \mathcal{A}_{i,k \rightarrow k}^t &= 1-\frac{\mathcal{L}_k(z^t, \Theta_{s|i,k}^{t+1}, \Theta_k^{t})}{\mathcal{L}_k(z^t, \Theta_{s}^t, \Theta_k^{t})} &
    \mathcal{A}_{i\rightarrow k}^t &= 1-\frac{\mathcal{L}_k(z^t, \Theta_{s|i}^{t+1}, \Theta_k^{t})}{\mathcal{L}_k(z^t, \Theta_{s}^t, \Theta_k^{t})}
    \label{eq:theo5_affin}
\end{align}

When updating $i$ and $k$ simultaneously, we can derive the first-order Taylor approximation of $\mathcal{L}_k$ for $\Theta_s^t$ as follows:
\begin{align}
    \mathcal{L}_k (z^t, \Theta_{s|i,k}^{t+1}, \Theta_k^t) &= \mathcal{L}_k (z^t, \Theta_s^t, \Theta_k^t) + (\Theta_{s|i,k}^{t+1} - \Theta_s^t) \nabla_{\Theta_s^t} \mathcal{L}_k (z^t, \Theta_s^t, \Theta_k^t) + O(\eta^2)\\
    &= \mathcal{L}_k (z^t, \Theta_s^t, \Theta_k^t) - \eta (g_i + g_k)\cdot g_k + O(\eta^2)
\end{align}

Similarly, when updating $i$ alone, the first-order Taylor approximation of $\mathcal{L}_k$ for $\Theta_s^t$ is as follows:
\begin{align}
    \mathcal{L}_k (z^t, \Theta_{s|i}^{t+1}, \Theta_k^t) &= \mathcal{L}_k (z^t, \Theta_s^t, \Theta_k^t) + (\Theta_{s|i}^{t+1} - \Theta_s^t) \nabla_{\Theta_s^t} \mathcal{L}_k (z^t, \Theta_s^t, \Theta_k^t) + O(\eta^2)\\
    &= \mathcal{L}_k (z^t, \Theta_s^t, \Theta_k^t) - \eta g_i\cdot g_k + O(\eta^2)
\end{align}

With a sufficiently small learning rate $\eta$, the difference between the two inter-task affinities in \cref{eq:theo5_affin} can be expressed as follows:
\begin{align}
    \mathcal{A}_{i,k \rightarrow k}^t - \mathcal{A}_{i\rightarrow k}^t &= 1-\frac{\mathcal{L}_k(z^t, \Theta_{s|i,k}^{t+1}, \Theta_k^{t})}{\mathcal{L}_k(z^t, \Theta_{s}^t, \Theta_k^{t})} - \biggr(1-\frac{\mathcal{L}_k(z^t, \Theta_{s|i}^{t+1}, \Theta_k^{t})}{\mathcal{L}_k(z^t, \Theta_{s}^t, \Theta_k^{t})}\biggr) \\
    &= \frac{\mathcal{L}_k(z^t, \Theta_{s|i}^{t+1}, \Theta_k^{t}) - \mathcal{L}_k(z^t, \Theta_{s|i,k}^{t+1}, \Theta_k^{t})}{\mathcal{L}_k(z^t, \Theta_{s}^t, \Theta_k^{t})}\\
    &= \frac{\mathcal{L}_k (z^t, \Theta_s^t, \Theta_k^t) - \eta g_i\cdot g_k - (\mathcal{L}_k (z^t, \Theta_s^t, \Theta_k^t) - \eta (g_i + g_k)\cdot g_k)}{\mathcal{L}_k(z^t, \Theta_{s}^t, \Theta_k^{t})}\\
    &= \frac{\eta||g_k||^2}{\mathcal{L}_k(z^t, \Theta_{s}^t, \Theta_k^{t})} \\
    &\geq 0
    \label{eq:theo5_result}
\end{align}
The inequality in \cref{eq:theo5_result} proves that $\mathcal{A}_{i,k \rightarrow k}^t \geq \mathcal{A}_{i \rightarrow k}^t$.
\end{proof}

When tasks $i$ and $k$ are within the same task group, we can access $\mathcal{B}_{i,k \rightarrow k}^t$ during the optimization process. If $\mathcal{B}_{i,k \rightarrow k}^t \leq 0$, the inter-task affinity also satisfies $\mathcal{A}_{i,k \rightarrow k}^t \leq 0$ as $\mathcal{A}_{i,k \rightarrow k}^t \leq \mathcal{B}_{i,k \rightarrow k}^t$. According to \Cref{theorem3}, this condition implies $\mathcal{A}_{i\rightarrow k}^t\leq 0$. The proposed algorithm separates these tasks into different groups when $\mathcal{B}_{i,k \rightarrow k}^t \leq 0$ which justifies our grouping rules.

Conversely, when tasks $i$ and $j$ belong to separate task groups, we only have access to $\mathcal{B}_{i \rightarrow k}^t$ instead of $\mathcal{B}_{i,k \rightarrow k}^t$. In this scenario, the proposed algorithm merges these tasks into the same group if $\mathcal{B}_{i \rightarrow k}^t = \mathcal{A}_{i \rightarrow k}^t \geq 0$. This inequality also implies $\mathcal{A}_{i,k\rightarrow k}^t\geq 0$, justifying the merging of tasks $i$ and $k$ based on $\mathcal{B}_{i \rightarrow k}^t$ during optimization.

\subsection{Proof of \Cref{theorem4}}
\label{Append:theorem4}

\theomfour*

Let's represent the sum of losses of tasks included in $G_m$ as $\mathcal{L}_{G_m}$, defined as follows:
\begin{align}
    \mathcal{L}_{G_m}(z^t, \Theta_{s|G_m}^{t+m/\mathcal{M}}, \Theta_{G_m}^{t+m/\mathcal{M}}) = \sum_{k \in G_m} \mathcal{L}_k(z^t, \Theta_{s|G_m}^{t+m/\mathcal{M}}, \Theta_{k}^{t+m/\mathcal{M}})
\end{align}
where $\Theta_{s|G_m}^{t+m/\mathcal{M}}$ denotes the shared parameters, while $\Theta_{G_m}^{t+m/\mathcal{M}}$  represents the set of task-specific parameters within $G_m$ after updating tasks in $G_m$.

We begin by expanding the task-specific loss $\mathcal{L}_{G_m}$ in terms of the shared parameter $\Theta_{s|G_m}^{t+m/\mathcal{M}}$ and the task-specific parameters $\Theta_{G_m}^{t+m/\mathcal{M}}$ using a quadratic expansion. During this process, the task-specific parameters $\Theta_{G_m}^{t+(m-1)/\mathcal{M}}=\Theta_{G_m}^t$ and $\Theta_{G_m}^{t+m/\mathcal{M}}=\Theta_{G_m}^{t+1}$, since the task-specific parameters in $G_m$ are updated only once from $\Theta_{G_m}^{t+(m-1)/\mathcal{M}}$ to $\Theta_{G_m}^{t+m/\mathcal{M}}$.
\begin{align}
    \mathcal{L}_{G_m} (z^t, \Theta_{s|G_m}^{t+m/\mathcal{M}},& \Theta_{G_m}^{t+1}) \leq \mathcal{L}_{G_m} (z^t, \Theta_{s|G_{m-1}}^{t+(m-1)/\mathcal{M}}, \Theta_{G_m}^t) \label{eq:theo4_in0}\\
    &+\nabla_{\Theta_{s|G_{m-1}}^{t+(m-1)/\mathcal{M}}}\mathcal{L}_{G_m}(z^t, \Theta_{s|G_{m-1}}^{t+(m-1)/\mathcal{M}}, \Theta_{G_m}^t)(\Theta_{s|G_m}^{t+m/\mathcal{M}}-\Theta_{s|G_{m-1}}^{t+(m-1)/\mathcal{M}})\\
    &+\frac{1}{2}\nabla_{\Theta_{s|G_{m-1}}^{t+(m-1)/\mathcal{M}}}^{2}\mathcal{L}_{G_m}(z^t, \Theta_{s|G_{m-1}}^{t+(m-1)/\mathcal{M}}, \Theta_{G_m}^t)(\Theta_{s|G_m}^{t+m/\mathcal{M}}-\Theta_{s|G_m}^{t+(m-1)/\mathcal{M}})^{2}\\
    &+\nabla_{\Theta_{G_m}^t}\mathcal{L}_{G_m}(z^t, \Theta_{s|G_{m-1}}^{t+(m-1)/\mathcal{M}}, \Theta_{G_m}^t)(\Theta_{G_m}^{t+1}-\Theta_{G_m}^t)\\
    &+\frac{1}{2}\nabla_{\Theta_{G_m}^t}^2 \mathcal{L}_{G_m}(z^t, \Theta_{s|G_{m-1}}^{t+(m-1)/\mathcal{M}}, \Theta_{G_m}^t)(\Theta_{G_m}^{t+1}-\Theta_{G_m}^t)^{2}\\
    \leq &\mathcal{L}_{G_m} (z^t, \Theta_{s|G_m}^{t+(m-1)/\mathcal{M}}, \Theta_{G_m}^t)\\
    &+\nabla_{\Theta_{s|G_{m-1}}^{t+(m-1)/\mathcal{M}}}\mathcal{L}_k(z^t, \Theta_{s|G_{m-1}}^{t+(m-1)/\mathcal{M}}, \Theta_{G_m}^t)(\Theta_{s|G_m}^{t+m/\mathcal{M}}-\Theta_{s|G_{m-1}}^{t+(m-1)/\mathcal{M}})\\
    &+\frac{1}{2} H|G_m| (\Theta_{s|G_m}^{t+m/\mathcal{M}}-\Theta_{s|G_{m-1}}^{t+(m-1)/\mathcal{M}})^{2}\\
    &+\nabla_{\Theta_{G_m}^t}\mathcal{L}_{G_m}(z^t, \Theta_{s|G_{m-1}}^{t+(m-1)/\mathcal{M}}, \Theta_{G_m}^t)(\Theta_{G_m}^{t+1}-\Theta_{G_m}^t)\\
    &+\frac{1}{2} H|G_m| (\Theta_{G_m}^{t+1}-\Theta_{G_m}^t)^{2}
\end{align}
where $|G_m|$ represents the number of tasks in $G_m$. The inequality holds with the Lipschitz continuity of $\nabla \mathcal{L}$ with a constant $H$.

For the shared parameters of the network, the update rule is as follows:
\begin{align}
    \Theta_{s|G_m}^{t+m/\mathcal{M}} &= \Theta_{s|G_{m-1}}^{t+(m-1)/\mathcal{M}} - \eta \nabla_{\Theta_{s|G_{m-1}}^{t+(m-1)/\mathcal{M}}} \mathcal{L}_{G_m}(z^t, \Theta_{s|G_{m-1}}^{t+(m-1)/\mathcal{M}}, \Theta_{G_m}^t)\\
    &= \Theta_{s|G_{m-1}}^{t+(m-1)/\mathcal{M}} - \eta g_{s, G_m}^{t+(m-1)/\mathcal{M}}
    \label{eq:theo4_in1}
\end{align}
where $g_{s, G_m}^{t+(m-1)/\mathcal{M}}$ is the gradients of the shared parameters with respect to loss of tasks in $G_m$.

Similarly, the task-specific parameters of the network, the update rule is as follows:
\begin{align}
    \Theta_{G_m}^{t+1} &= \Theta_{G_m}^{t} - \eta \nabla_{\Theta_{G_m}^{t}} \mathcal{L}_{G_m}(z^t, \Theta_{s|G_{m-1}}^{t+(m-1)/\mathcal{M}}, \Theta_{G_m}^t) = \Theta_{G_m}^t - \eta g_{ts, G_m}^{t}
    \label{eq:theo4_in2}
\end{align}
where $g_{ts, G_m}^t$ is the gradients of the task-specific parameters with respect to the loss of tasks in $G_m$.

If we substitute \cref{eq:theo4_in1} and \cref {eq:theo4_in2} into the result of \cref{eq:theo4_in0}, it become as follows:
\begin{align}
    \mathcal{L}_{G_m} (z^t, \Theta_{s|G_m}^{t+m/\mathcal{M}}, \Theta_{G_m}^{t+1}) \leq& \mathcal{L}_{G_m} (z^t, \Theta_{s|G_{m-1}}^{t+(m-1)/\mathcal{M}}, \Theta_{G_m}^t)\\
    &- \eta ||g_{s, G_m}^{t+(m-1)/\mathcal{M}}||^2  + \frac{\eta^2 H|G_m|}{2}||g_{s, G_m}^{t+(m-1)/\mathcal{M}}||^2 \\
    &- \eta ||g_{ts, G_m}^t||^2  + \frac{\eta^2 H|G_m|}{2}||g_{ts, G_m}^t||^2
\end{align}

We can derive similar results for the loss of task group $G_i$, where the index $i$ is not the same as the updating group sequence $m$ ($i \neq m$). This process follows similarly to the one described above. For the step from $t+(m-1)/\mathcal{M}$ to $t+m/\mathcal{M}$, the task-specific parameters in $G_i$ remain unchanged.
\begin{align}
    \mathcal{L}_{G_i} (z^t, \Theta_{s|G_m}^{t+m/\mathcal{M}},& \Theta_{G_i}^t) \leq \mathcal{L}_{G_i} (z^t, \Theta_{s|G_{m-1}}^{t+(m-1)/\mathcal{M}}, \Theta_{G_i}^t)\\
    &+\nabla_{\Theta_{s|G_{m-1}}^{t+(m-1)/\mathcal{M}}}\mathcal{L}_{G_i}(z^t, \Theta_{s|G_{m-1}}^{t+(m-1)/\mathcal{M}}, \Theta_{G_i}^t)(\Theta_{s|G_m}^{t+m/\mathcal{M}}-\Theta_{s|G_{m-1}}^{t+(m-1)/\mathcal{M}})\\
    &+\frac{1}{2}\nabla_{\Theta_{s|G_{m-1}}^{t+(m-1)/\mathcal{M}}}^{2}\mathcal{L}_{G_i}(z^t, \Theta_{s|G_{m-1}}^{t+(m-1)/\mathcal{M}}, \Theta_{G_i}^t)(\Theta_{s|G_m}^{t+m/\mathcal{M}}-\Theta_{s|G_m}^{t+(m-1)/\mathcal{M}})^{2}\\
    \leq &\mathcal{L}_{G_i} (z^t, \Theta_{s|G_m}^{t+(m-1)/\mathcal{M}}, \Theta_{G_i}^t)\\
    &+\nabla_{\Theta_{s|G_{m-1}}^{t+(m-1)/\mathcal{M}}}\mathcal{L}_{G_i}(z^t, \Theta_{s|G_{m-1}}^{t+(m-1)/\mathcal{M}}, \Theta_{G_i}^t)(\Theta_{s|G_m}^{t+m/\mathcal{M}}-\Theta_{s|G_{m-1}}^{t+(m-1)/\mathcal{M}})\\
    &+\frac{1}{2} H|G_i| (\Theta_{s|G_m}^{t+m/\mathcal{M}}-\Theta_{s|G_{m-1}}^{t+(m-1)/\mathcal{M}})^{2}\\
    \leq & \mathcal{L}_{G_i} (z^t, \Theta_{s|G_m}^{t+(m-1)/\mathcal{M}}, \Theta_{G_i}^t)\\
    &- \eta g_{s, G_i}^{t+(m-1)/\mathcal{M}} \cdot g_{s, G_m}^{t+(m-1)/\mathcal{M}} + \frac{\eta^2 H|G_i|}{2}||g_{s, G_m}^{t+(m-1)/\mathcal{M}}||^2\\
\end{align}

Then the total loss of multiple task groups can be expressed as follows:
\begin{align}
    \sum_{k=1}^{\mathcal{M}} \mathcal{L}_{G_k}(z^t, &\Theta_{s|G_m}^{t+m/\mathcal{M}}, \Theta_{G_k}^{t+1}) \leq \sum_{k=1}^{\mathcal{M}} \mathcal{L}_{G_k}(z^t, \Theta_{s|G_m}^{t+(m-1)/\mathcal{M}}, \Theta_{G_k}^t) \\
    &-\eta\sum_{k=1}^{\mathcal{M}} g_{s, G_k}^{t+(m-1)/\mathcal{M}} \cdot g_{s, G_m}^{t+(m-1)/\mathcal{M}} + \frac{\eta^2 H}{2} ||g_{s, G_m}^{t+(m-1)/\mathcal{M}}||^2 \sum_{k=1}^{\mathcal{M}} |G_k| \label{eq:theo4_lip1}\\ 
    &- \eta ||g_{ts, G_m}^t||^2  + \frac{\eta^2 H|G_m|}{2}||g_{ts, G_m}^t||^2 \label{eq:theo4_lip2}\\
    \leq&\sum_{k=1}^{\mathcal{M}} \mathcal{L}_{G_k}(z^t, \Theta_{s|G_m}^{t+(m-1)/\mathcal{M}}, \Theta_{G_k}^t) \label{eq:theo4_lip_out_0}\\
    &-\eta\sum_{k=1}^{\mathcal{M}} g_{s, G_k}^{t+(m-1)/\mathcal{M}} \cdot g_{s, G_m}^{t+(m-1)/\mathcal{M}} + \eta ||g_{s, G_m}^{t+(m-1)/\mathcal{M}}||^2 - \frac{\eta}{2}||g_{ts, G_m}^t||^2 \label{eq:theo4_lip_out}\\
    =& \sum_{k=1}^{\mathcal{M}} \mathcal{L}_{G_k}(z^t, \Theta_{s|G_m}^{t+(m-1)/\mathcal{M}}, \Theta_{G_k}^t)\\
    &-\eta g_{s, G_m}^{t+(m-1)/\mathcal{M}} \cdot (\sum_{k=1}^{\mathcal{M}} g_{s, G_k}^{t+(m-1)/\mathcal{M}} - g_{s, G_m}^{t+(m-1)/\mathcal{M}}) - \frac{\eta}{2}||g_{ts, G_m}^t||^2
    \label{eq:theo4_result}
\end{align}

The inequality between \cref{eq:theo4_lip1} and the first term in \cref{eq:theo4_lip_out} requires $\eta\leq \frac{2}{H\cdot \sum_{k=1}^{\mathcal{M}} |G_k|} = \frac{2}{H \mathcal{K}}$, while the inequality between \cref{eq:theo4_lip2} and the second term in \cref{eq:theo4_lip_out} requires $\eta\leq \frac{1}{H|G_M|}$. Therefore, the inequality in \cref{eq:theo4_lip_out_0} holds when $\eta \leq \min(\frac{2}{H\mathcal{K}}, \frac{1}{H|G_M|})$. Previous approaches, which handle updates of shared and task-specific parameters independently, failing to capture their interdependence during optimization.
The term, $g_{s, G_m}^{t+(m-1)/\mathcal{M}} \cdot (\sum_{k=1}^{\mathcal{M}} g_{s, G_k}^{t+(m-1)/\mathcal{M}})$, fluctuates during optimization. When the gradients of group $G_m$ align well with the gradients of the other groups $\{G_k\}_{i=1, i\neq m}^{\mathcal{M}}$, their dot product yields a positive value, leading to a decrease in multi-task losses. However, in practice, the sequential update strategy demonstrates a similar level of stability in optimization, which appears to contradict the conventional results. Thus, we assume a correlation between the learning of shared parameters and task-specific parameters, where the learning of task-specific parameters reduces gradient conflicts in shared parameters. Under this assumption, the sequential update strategy can guarantee convergence to Pareto-stationary points. This assumption is reasonable, as task-specific parameters capture task-specific information, thereby reducing conflicts in the shared parameters across tasks.

\subsection{Two-Step Proximal Inter-task Affinity}
\label{Append:two_step_proximal_inter_task_affinity}

Before delving into the proof of Theorem 5, let's introduce the concept of two-step proximal inter-task affinity, which extends the notion of proximal inter-task affinity over two update steps.
\begin{definition}[Two-Step Proximal Inter-Task Affinity] Consider a multi-task network shared by the tasks $i, j, k$, with their respective losses denoted as $\mathcal{L}_i, \mathcal{L}_j, \mathcal{L}_k$. Sequential updates of $(\{j\}, \{i, k\})$ result in parameters being updated from $\Theta_s^{t} \rightarrow \Theta_{s|j}^{t+1} \rightarrow \Theta_{s|i,k}^{t+2}$ and $\Theta_k^{t} \rightarrow \Theta_k^{t+1} \rightarrow \Theta_k^{t+2}$. Then, the two-step proximal inter-task affinity from sequential update $(\{j\}, \{i, k\})$ to $k$ at time step $t$ is defined as follows:
\begin{align}
    \mathcal{B}^t_{j; i,k\rightarrow k} &= 1-(1-\mathcal{B}^t_{j\rightarrow k})(1-\mathcal{B}^{t+1}_{i,k\rightarrow k}) \\
    &= 1-\frac{\mathcal{L}_k(z^t, \Theta_{s|j}^{t+1}, \Theta_k^{t+1})}{\mathcal{L}_k(z^t, \Theta_s^{t}, \Theta_k^{t})} \cdot \frac{\mathcal{L}_k(z^t, \Theta_{s|i,k}^{t+2}, \Theta_k^{t+2})}{\mathcal{L}_k(z^t, \Theta_{s|j}^{t+1}, \Theta_k^{t+1})} = 1-\frac{\mathcal{L}_k(z^t, \Theta_{s|i,k}^{t+2}, \Theta_k^{t+2})}{\mathcal{L}_k(z^t, \Theta_s^{t}, \Theta_k^{t})}
\end{align}
\end{definition}

\subsection{Proof of \Cref{theorem5}}
\label{Append:theorem5}

\theomfive*
\begin{proof}
We compare the loss after jointly updating three tasks $\{i, j, k\}$ with the loss after sequentially updating the task sets $\{i, k\}$ and $\{j\}$. To assess the impact of the updating order of task sets, we also conduct the analysis on the reverse order of task set $\{j\}, \{i, k\}$.

(i) Let's begin with the definition of proximal inter-task affinity between $\{i, k\}\rightarrow k$ and $j \rightarrow k$, taking into account the updates of task-specific parameters as follows:
\begin{align}
    \mathcal{B}_{i,j,k \rightarrow k}^{t+(m-1)/\mathcal{M}} &= 1-\frac{\mathcal{L}_k(z^t, \Theta_{s|i,j,k}^{t+m/\mathcal{M}}, \Theta_k^{t+m/\mathcal{M}})}{\mathcal{L}_k(z^t, \Theta_{s}^{t+(m-1)/\mathcal{M}}, \Theta_k^{t+(m-1)/\mathcal{M}})} \label{eq:theo5_joint}
\end{align}

\begin{align}
    \mathcal{B}_{i,k \rightarrow k}^{t+(m-1)/\mathcal{M}} &= 1-\frac{\mathcal{L}_k(z^t, \Theta_{s|i,k}^{t+m/\mathcal{M}}, \hat{\Theta}_k^{t+m/\mathcal{M}})}{\mathcal{L}_k(z^t, \Theta_{s}^{t+(m-1)/\mathcal{M}}, \Theta_k^{t+(m-1)/\mathcal{M}})}
\end{align}
\begin{align}
    \mathcal{B}_{j\rightarrow k}^{t+m/\mathcal{M}} &= 1-\frac{\mathcal{L}_k(z^t, \Theta_{s|j}^{t+(m+1)/\mathcal{M}}, \Theta_k^{t+(m+1)/\mathcal{M}})}{\mathcal{L}_k(z^t, \Theta_{s|i,k}^{t+m/\mathcal{M}}, \hat{\Theta}_k^{t+m/\mathcal{M}})}\\
    &= 1-\frac{\mathcal{L}_k(z^t, \Theta_{s|j}^{t+(m+1)/\mathcal{M}}, \hat{\Theta}_k^{t+m/\mathcal{M}})}{\mathcal{L}_k(z^t, \Theta_{s|i,k}^{t+m/\mathcal{M}}, \hat{\Theta}_k^{t+m/\mathcal{M}})}
\end{align}

where $\hat{\Theta}_k^{t+m/\mathcal{M}}$ represents the resulting task-specific parameters of $k$ immediately after updating the task set $\{i, j\}$. This notation is used to differentiate it from the task-specific parameter $\Theta_k^{t+m/\mathcal{M}}$ obtained after jointly updating all tasks.

The two-step proximal inter-task affinity with the sequence $\{i, k\}$ and $\{j\}$ can be represented as follows:
\begin{align}
    \mathcal{B}_{i,k; j \rightarrow k}^{t+(m-1)/\mathcal{M}} &= 1-(1-\mathcal{B}_{i,k \rightarrow k}^{t+(m-1)/\mathcal{M}}) \cdot (1-\mathcal{B}_{j\rightarrow k}^{t+m/\mathcal{M}})\\ 
    &= 1-\frac{\mathcal{L}_k(z^t, \Theta_{s|i,k}^{t+m/\mathcal{M}}, \hat{\Theta}_k^{t+m/\mathcal{M}})}{\mathcal{L}_k(z^t, \Theta_{s}^{t+(m-1)/\mathcal{M}}, \Theta_k^{t+(m-1)/\mathcal{M}})} \cdot \frac{\mathcal{L}_k(z^t, \Theta_{s|j}^{t+(m+1)/\mathcal{M}}, \hat{\Theta}_k^{t+m/\mathcal{M}})}{\mathcal{L}_k(z^t, \Theta_{s|i,k}^{t+m/\mathcal{M}}, \hat{\Theta}_k^{t+m/\mathcal{M}})}\\
    &= 1-\frac{\mathcal{L}_k(z^t, \Theta_{s|j}^{t+(m+1)/\mathcal{M}}, \hat{\Theta}_k^{t+m/\mathcal{M}})}{\mathcal{L}_k(z^t, \Theta_{s}^{t+(m-1)/\mathcal{M}}, \Theta_k^{t+(m-1)/\mathcal{M}})} \label{eq:theo5_prox}
\end{align}

Our objective is to compare $\mathcal{B}_{i,j,k \rightarrow k}^{t+(m-1)/\mathcal{M}}$ from \cref{eq:theo5_joint} with $\mathcal{B}_{i,k; j \rightarrow k}^{t+(m-1)/\mathcal{M}}$ from \cref{eq:theo5_prox} to assess each update's effect on the final loss. Since both equations share a common denominator, we only need to compare the numerators of each equation. Using the first-order Taylor approximation of $\mathcal{L}_k(z^t, \Theta_{s|j}^{t+(m+1)/\mathcal{M}}, \hat{\Theta}_k^{t+m/\mathcal{M}})$ in \cref{eq:theo5_prox}, we have:

\begin{align}
    \mathcal{L}_k(z^t, &\Theta_{s|j}^{t+(m+1)/\mathcal{M}}, \hat{\Theta}_k^{t+m/\mathcal{M}}) = \mathcal{L}_k (z^t, \Theta_{s|i,k}^{t+m/\mathcal{M}}, \hat{\Theta}_k^{t+m/\mathcal{M}}) \label{eq:theo5_middle1}\\
    &+(\Theta_{s|j}^{t+(m+1)/\mathcal{M}} - \Theta_{s|i,k}^{t+m/\mathcal{M}}) \nabla_{\Theta_{s|i,k}^{t+m/\mathcal{M}}} \mathcal{L}_k (z^t, \Theta_{s|i,k}^{t+m/\mathcal{M}}, \hat{\Theta}_k^{t+m/\mathcal{M}}) + O(\eta^2)\\
    =& \mathcal{L}_k (z^t, \Theta_{s|i,k}^{t+m/\mathcal{M}}, \hat{\Theta}_k^{t+m/\mathcal{M}}) - \eta g_{s;j}^{t+m/\mathcal{M}}\cdot g_{s;k}^{t+m/\mathcal{M}} + O(\eta^2)
\end{align}

The subscript $s$ in gradients indicates that it represents the gradients of the shared parameters of the network. Conversely, we will use the subscript $ts$ for gradients of the task-specific network in the following derivation. Similarly, $\mathcal{L}_k (z^t, \Theta_{s|i,k}^{t+m/\mathcal{M}}, \hat{\Theta}_k^{t+m/\mathcal{M}})$ in \cref{eq:theo5_middle1} can also be further expanded using Taylor expansion as follows:
\begin{align}
    \mathcal{L}_k &(z^t, \Theta_{s|i,k}^{t+m/\mathcal{M}}, \hat{\Theta}_k^{t+m/\mathcal{M}}) = \mathcal{L}_k (z^t, \Theta_s^{t+(m-1)/\mathcal{M}}, \Theta_k^{t+(m-1)/\mathcal{M}}) \\
    &+(\Theta_{s|i,k}^{t+m/\mathcal{M}} - \Theta_s^{t+(m-1)/\mathcal{M}}) \nabla_{\Theta_s^{t+(m-1)/\mathcal{M}}} \mathcal{L}_k (z^t, \Theta_s^{t+(m-1)/\mathcal{M}}, \Theta_k^{t+(m-1)/\mathcal{M}})\\
    &+(\hat{\Theta}_k^{t+m/\mathcal{M}}-\Theta_k^{t+(m-1)/\mathcal{M}}) \nabla_{\Theta_k^{t+(m-1)/\mathcal{M}}} \mathcal{L}_k (z^t, \Theta_s^{t+(m-1)/\mathcal{M}}, \Theta_k^{t+(m-1)/\mathcal{M}}) + O(\eta^2)\\
    =& \mathcal{L}_k (z^t, \Theta_s^{t+(m-1)/\mathcal{M}}, \Theta_k^{t+(m-1)/\mathcal{M}}) - \eta (g_{s;i}^{t+(m-1)/\mathcal{M}} + g_{s;k}^{t+(m-1)/\mathcal{M}})\cdot g_{s;k}^{t+(m-1)/\mathcal{M}} \label{eq:theo5_middle2}\\
    &- \eta g_{ts;k}^{t+(m-1)/\mathcal{M}}\cdot g_{ts;k}^{t+(m-1)/\mathcal{M}} + O(\eta^2) \label{eq:theo5_middle3}
\end{align}

By substituting \cref{eq:theo5_middle1} with the results of \cref{eq:theo5_middle2} and \cref{eq:theo5_middle3}, we can obtain the following results:
\begin{align}
    \mathcal{L}_k(z^t, &\Theta_{s|j}^{t+(m+1)/\mathcal{M}}, \hat{\Theta}_k^{t+m/\mathcal{M}}) = \mathcal{L}_k (z^t, \Theta_s^{t+(m-1)/\mathcal{M}}, \Theta_k^{t+(m-1)/\mathcal{M}})\\
    &- \eta g_{s;j}^{t+m/\mathcal{M}}\cdot g_{s;k}^{t+m/\mathcal{M}} - \eta (g_{s;i}^{t+(m-1)/\mathcal{M}} + g_{s;k}^{t+(m-1)/\mathcal{M}})\cdot g_{s;k}^{t+(m-1)/\mathcal{M}} \\
    &- \eta g_{ts;k}^{t+(m-1)/\mathcal{M}}\cdot g_{ts;k}^{t+(m-1)/\mathcal{M}}+ O(\eta^2)
\end{align}

For the scenario where all tasks $\{i, j, k\}$ are jointly updated, the numerator of \cref{eq:theo5_joint} can also be expanded as follows:
\begin{align}
    \mathcal{L}_k(z^t,& \Theta_{s|i,j,k}^{t+m/\mathcal{M}}, \Theta_k^{t+m/\mathcal{M}}) = \mathcal{L}_k (z^t, \Theta_s^{t+(m-1)/\mathcal{M}}, \Theta_k^{t+(m-1)/\mathcal{M}})\\
    &+(\Theta_{s|i,j,k}^{t+m/\mathcal{M}} - \Theta_s^{t+(m-1)/\mathcal{M}}) \nabla_{\Theta_s^{t+(m-1)/\mathcal{M}}} \mathcal{L}_k (z^t, \Theta_s^{t+(m-1)/\mathcal{M}}, \Theta_k^{t+(m-1)/\mathcal{M}})\\
    &+(\Theta_k^{t+m/\mathcal{M}} - \Theta_k^{t+(m-1)/\mathcal{M}}) \nabla_{\Theta_k^{t+(m-1)/\mathcal{M}}} \mathcal{L}_k (z^t, \Theta_s^{t+(m-1)/\mathcal{M}}, \Theta_k^{t+(m-1)/\mathcal{M}}) + O(\eta^2)\\
    =& \mathcal{L}_k (z^t, \Theta_s^{t+(m-1)/\mathcal{M}}, \Theta_k^{t+(m-1)/\mathcal{M}}) \\
    &- \eta (g_{s,i}^{t+(m-1)/\mathcal{M}} + g_{s,j}^{t+(m-1)/\mathcal{M}} + g_{s,k}^{t+(m-1)/\mathcal{M}})\cdot g_{s,k}^{t+(m-1)/\mathcal{M}} \\
    &- \eta g_{ts,k}^{t+(m-1)/\mathcal{M}}\cdot g_{ts,k}^{t+(m-1)/\mathcal{M}} + O(\eta^2)
\end{align}

Finally, we can compare $\mathcal{B}_{i,j,k \rightarrow k}^{t+(m-1)/\mathcal{M}}$ with $\mathcal{B}_{i,k; j \rightarrow k}^{t+(m-1)/\mathcal{M}}$ by comparing the losses we obtained: $\mathcal{L}_k(z^t, \Theta_{s|j}^{t+(m+1)/\mathcal{M}}, \Theta_k^{t+(m-1)/\mathcal{M}})$ with $\mathcal{L}_k(z^t, \Theta_{s|i,j,k}^{t+m/\mathcal{M}}, \Theta_k^{t+(m-1)/\mathcal{M}})$. We assume a sufficiently small learning rate $\eta$ that allows us to ignore terms larger than order two with $\eta$.

\begin{align}
    \mathcal{L}_k(z^t, & \Theta_{s|j}^{t+(m+1)/\mathcal{M}}, \hat{\Theta}_k^{t+m/\mathcal{M}}) - \mathcal{L}_k(z^t, \Theta_{s|i,j,k}^{t+m/\mathcal{M}}, \Theta_k^{t+m/\mathcal{M}})\\
    =& - \eta g_{s,j}^{t+m/\mathcal{M}}\cdot g_{s,k}^{t+m/\mathcal{M}} - \eta (g_{s,i}^{t+(m-1)/\mathcal{M}} + g_{s,k}^{t+(m-1)/\mathcal{M}})\cdot g_{s,k}^{t+(m-1)/\mathcal{M}} \\
    & - \eta g_{ts,k}^{t+(m-1)/\mathcal{M}}\cdot g_{ts,k}^{t+(m-1)/\mathcal{M}}\\
    &+ \eta (g_{s,i}^{t+(m-1)/\mathcal{M}} + g_{s,j}^{t+(m-1)/\mathcal{M}} + g_{s,k}^{t+(m-1)/\mathcal{M}})\cdot g_{s,k}^{t+(m-1)/\mathcal{M}}\\
    & + \eta g_{ts,k}^{t+(m-1)/\mathcal{M}}\cdot g_{ts,k}^{t+(m-1)/\mathcal{M}}\\
    =& \eta(g_{s,j}^{t+(m-1)/\mathcal{M}} \cdot g_{s,k}^{t+(m-1)/\mathcal{M}} - g_{s,j}^{t+m/\mathcal{M}} \cdot g_{s,k}^{t+m/\mathcal{M}})\\
    \simeq& 0 \label{eq:theo5_last1}
\end{align}

The approximation in \cref{eq:theo5_last1} holds as we assume inter-task affinity change during a single time step from $t+(m-1)/\mathcal{M}$ to $t+m/\mathcal{M}$ is negligible.

(ii) In case we update task groups in reverse order the results would differ with (i). Similarly, we begin with the definition of proximal inter-task affinity with reverse order between $j \rightarrow k$ and $\{i, j\}\rightarrow k$ as follows:

\begin{align}
    \mathcal{B}_{j\rightarrow k}^{t+(m-1)/\mathcal{M}} &= 1-\frac{\mathcal{L}_k(z^t, \Theta_{s|j}^{t+m/\mathcal{M}}, \Theta_k^{t+m/\mathcal{M}})}{\mathcal{L}_k(z^t, \Theta_{s|i,k}^{t+(m-1)/\mathcal{M}}, \Theta_k^{t+(m-1)/\mathcal{M}})}\\
    &= 1-\frac{\mathcal{L}_k(z^t, \Theta_{s|j}^{t+m/\mathcal{M}}, \Theta_k^{t+(m-1)/\mathcal{M}})}{\mathcal{L}_k(z^t, \Theta_s^{t+(m-1)/\mathcal{M}}, \Theta_k^{t+(m-1)/\mathcal{M}})}
\end{align}

\begin{align}
    \mathcal{B}_{i,k \rightarrow k}^{t+m/\mathcal{M}} &= 1-\frac{\mathcal{L}_k(z^t, \Theta_{s|i,k}^{t+(m+1)/\mathcal{M}}, \hat{\Theta}_k^{t+(m+1)/\mathcal{M}})}{\mathcal{L}_k(z^t, \Theta_{s|j}^{t+m/\mathcal{M}}, \Theta_k^{t+m/\mathcal{M}})} \\
    &= 1-\frac{\mathcal{L}_k(z^t, \Theta_{s|i,k}^{t+(m+1)/\mathcal{M}}, \hat{\Theta}_k^{t+(m+1)/\mathcal{M}})}{\mathcal{L}_k(z^t, \Theta_{s|j}^{t+m/\mathcal{M}}, \Theta_k^{t+(m-1)/\mathcal{M}})}
\end{align}

The two-step proximal inter-task affinity with the sequence $\{j\}$ and $\{i, k\}$ can be represented as follows:
\begin{align}
    \mathcal{B}_{j; i,k \rightarrow k}^{t+(m-1)/\mathcal{M}} &= 1- (1-\mathcal{B}_{j\rightarrow k}^{t+(m-1)/\mathcal{M}}) \cdot (1-\mathcal{B}_{i,k \rightarrow k}^{t+m/\mathcal{M}})\\ 
    &= 1-\frac{\mathcal{L}_k(z^t, \Theta_{s|j}^{t+m/\mathcal{M}}, \Theta_k^{t+(m-1)/\mathcal{M}})}{\mathcal{L}_k(z^t, \Theta_s^{t+(m-1)/\mathcal{M}}, \Theta_k^{t+(m-1)/\mathcal{M}})} \cdot \frac{\mathcal{L}_k(z^t, \Theta_{s|i,k}^{t+(m+1)/\mathcal{M}}, \hat{\Theta}_k^{t+(m+1)/\mathcal{M}})}{\mathcal{L}_k(z^t, \Theta_{s|j}^{t+m/\mathcal{M}}, \Theta_k^{t+(m-1)/\mathcal{M}})}\\
    &= 1-\frac{\mathcal{L}_k(z^t, \Theta_{s|i,k}^{t+(m+1)/\mathcal{M}}, \hat{\Theta}_k^{t+(m+1)/\mathcal{M}})}{\mathcal{L}_k(z^t, \Theta_s^{t+(m-1)/\mathcal{M}}, \Theta_k^{t+(m-1)/\mathcal{M}})} \label{eq:theo5_prox2}
\end{align}

Our objective is to compare $\mathcal{B}_{i,j,k \rightarrow k}^{t+(m-1)/\mathcal{M}}$ from \cref{eq:theo5_joint} with $\mathcal{B}_{j; i,k \rightarrow k}^{t+(m-1)/\mathcal{M}}$ from \cref{eq:theo5_prox2} to assess each update's effect on the final loss. Since both equations share a common denominator, we only need to compare the numerators of each equation. Using the first-order Taylor approximation of $\mathcal{L}_k(z^t, \Theta_{s|i,k}^{t+(m+1)/\mathcal{M}}, \hat{\Theta}_k^{t+(m+1)/\mathcal{M}})$ in \cref{eq:theo5_prox2}, we have:

\begin{align}
    \mathcal{L}_k(z^t, &\Theta_{s|i,k}^{t+(m+1)/\mathcal{M}}, \hat{\Theta}_k^{t+(m+1)/\mathcal{M}}) = \mathcal{L}_k (z^t, \Theta_{s|j}^{t+m/\mathcal{M}}, \Theta_k^{t+m/\mathcal{M}}) \label{eq:theo5_middle1_}\\
    &+(\Theta_{s|i,k}^{t+(m+1)/\mathcal{M}} - \Theta_{s|j}^{t+m/\mathcal{M}}) \nabla_{\Theta_{s|j}^{t+m/\mathcal{M}}} \mathcal{L}_k (z^t, \Theta_{s|j}^{t+m/\mathcal{M}}, \Theta_k^{t+m/\mathcal{M}})\\
    &+(\hat{\Theta}_k^{t+(m+1)/\mathcal{M}} - \Theta_k^{t+m/\mathcal{M}}) \nabla_{\Theta_k^{t+m/\mathcal{M}}} \mathcal{L}_k (z^t, \Theta_{s|j}^{t+m/\mathcal{M}}, \Theta_k^{t+m/\mathcal{M}}) + O(\eta^2)\\
    =& \mathcal{L}_k (z^t, \Theta_{s|j}^{t+m/\mathcal{M}}, \Theta_k^{t+m/\mathcal{M}}) - \eta (g_{s;i}^{t+m/\mathcal{M}}+g_{s;k}^{t+m/\mathcal{M}})\cdot g_{s;k}^{t+m/\mathcal{M}} \\
    &- \eta g_{ts;k}^{t+m/\mathcal{M}}\cdot g_{ts;k}^{t+m/\mathcal{M}} + O(\eta^2)
\end{align}

Similarly, $\mathcal{L}_k (z^t, \Theta_{s|i,k}^{t+m/\mathcal{M}}, \Theta_k^{t+m/\mathcal{M}})$ in \cref{eq:theo5_middle1_} can also be further expanded using Taylor expansion as follows:
\begin{align}
    \mathcal{L}_k(z^t, &\Theta_{s|j}^{t+m/\mathcal{M}}, \Theta_k^{t+m/\mathcal{M}}) = \mathcal{L}_k(z^t, \Theta_{s|j}^{t+m/\mathcal{M}}, \Theta_k^{t+(m-1)/\mathcal{M}}) \\
    =& \mathcal{L}_k (z^t, \Theta_s^{t+(m-1)/\mathcal{M}}, \Theta_k^{t+(m-1)/\mathcal{M}})\\
    &+(\Theta_{s|j}^{t+m/\mathcal{M}} - \Theta_s^{t+(m-1)/\mathcal{M}}) \nabla_{\Theta_s^{t+(m-1)/\mathcal{M}}} \mathcal{L}_k (z^t, \Theta_{s|j}^{t+m/\mathcal{M}}, \Theta_k^{t+(m-1)/\mathcal{M}}) \\
    =& \mathcal{L}_k (z^t, \Theta_s^{t+(m-1)/\mathcal{M}}, \Theta_k^{t+(m-1)/\mathcal{M}}) \label{eq:theo5_middle2_}\\
    &- \eta g_{s;j}^{t+(m-1)/\mathcal{M}}\cdot g_{s;k}^{t+(m-1)/\mathcal{M}}+ O(\eta^2)
    \label{eq:theo5_middle3_}
\end{align}

By substituting \cref{eq:theo5_middle1_} with the results of \cref{eq:theo5_middle2_} and \cref{eq:theo5_middle3_}, we can obtain the following results:
\begin{align}
    \mathcal{L}_k(z^t, &\Theta_{s|i,k}^{t+(m+1)/\mathcal{M}}, \hat{\Theta}_k^{t+(m+1)/\mathcal{M}}) = \mathcal{L}_k (z^t, \Theta_s^{t+(m-1)/\mathcal{M}}, \Theta_k^{t+(m-1)/\mathcal{M}})\\
    &- \eta (g_{s;i}^{t+m/\mathcal{M}}+g_{s;k}^{t+m/\mathcal{M}})\cdot g_{s;k}^{t+m/\mathcal{M}} - \eta g_{ts;k}^{t+m/\mathcal{M}}\cdot g_{ts;k}^{t+m/\mathcal{M}} \\
    &- \eta g_{s;j}^{t+(m-1)/\mathcal{M}}\cdot g_{s;k}^{t+(m-1)/\mathcal{M}}+ O(\eta^2)
\end{align}

Finally, we can compare $\mathcal{B}_{i,j,k \rightarrow k}^{t+(m-1)/\mathcal{M}}$ with $\mathcal{B}_{j; i,k \rightarrow k}^{t+(m-1)/\mathcal{M}}$ by comparing the losses we obtained: $\mathcal{L}_k(z^t, \Theta_{s|j}^{t+(m+1)/\mathcal{M}}, \Theta_k^{t+(m-1)/\mathcal{M}})$ with $\mathcal{L}_k(z^t, \Theta_{s|i,j,k}^{t+m/\mathcal{M}}, \Theta_k^{t+(m-1)/\mathcal{M}})$. We assume a sufficiently small learning rate $\eta$ that allows us to ignore terms larger than order two with $\eta$.

\begin{align}
    \mathcal{L}_k(z^t, &\Theta_{s|i,k}^{t+(m+1)/\mathcal{M}}, \hat{\Theta}_k^{t+(m+1)/\mathcal{M}}) - \mathcal{L}_k(z^t, \Theta_{s|i,j,k}^{t+m/\mathcal{M}}, \Theta_k^{t+m/\mathcal{M}})\\
    =& - \eta (g_{s;i}^{t+m/\mathcal{M}}+g_{s;k}^{t+m/\mathcal{M}})\cdot g_{s;k}^{t+m/\mathcal{M}} - \eta g_{ts;k}^{t+m/\mathcal{M}}\cdot g_{ts;k}^{t+m/\mathcal{M}} \\
    &- \eta g_{s;j}^{t+(m-1)/\mathcal{M}}\cdot g_{s;k}^{t+(m-1)/\mathcal{M}}\\
    &+ \eta (g_{s,i}^{t+(m-1)/\mathcal{M}} + g_{s,j}^{t+(m-1)/\mathcal{M}} + g_{s,k}^{t+(m-1)/\mathcal{M}})\cdot g_{s,k}^{t+(m-1)/\mathcal{M}}\\
    &+ \eta g_{ts,k}^{t+(m-1)/\mathcal{M}}\cdot g_{ts,k}^{t+(m-1)/\mathcal{M}}\\
    =& - \eta (g_{s;i}^{t+m/\mathcal{M}}+g_{s;k}^{t+m/\mathcal{M}})\cdot g_{s;k}^{t+m/\mathcal{M}} + \eta (g_{s,i}^{t+(m-1)/\mathcal{M}} + g_{s,k}^{t+(m-1)/\mathcal{M}})\cdot g_{s,k}^{t+(m-1)/\mathcal{M}} \\
    & - \eta g_{ts;k}^{t+m/\mathcal{M}}\cdot g_{ts;k}^{t+m/\mathcal{M}} + \eta g_{ts,k}^{t+(m-1)/\mathcal{M}}\cdot g_{ts,k}^{t+(m-1)/\mathcal{M}} \\
    \simeq& - \eta g_{ts;k}^{t+m/\mathcal{M}}\cdot g_{ts;k}^{t+m/\mathcal{M}} + \eta g_{ts,k}^{t+(m-1)/\mathcal{M}}\cdot g_{ts,k}^{t+(m-1)/\mathcal{M}} \label{eq:theo5_last2}\\
    \leq& 0 \label{eq:theo5_final}
\end{align}

The approximation in \cref{eq:theo5_last2} holds under the assumption that the change in inter-task affinity during a single time step from $t+(m-1)/\mathcal{M}$ to $t+m/\mathcal{M}$ is negligible. Since we assume convex loss functions, the magnitude of task-specific gradients $g_{ts;k}$ would increase after updating the loss of $j$, which exhibits negative inter-task affinity with $k$ ($\mathcal{A}_{j \rightarrow k}<0$). Therefore, the inequality in \cref{eq:theo5_final} holds.
\end{proof}

This suggest that grouping tasks with proximal inter-task affinity and subsequently updating these groups sequentially result in lower multi-task loss compared to jointly backpropagating all tasks. This disparity arises because the network can discern superior task-specific parameters to accommodate task-specific information during sequential learning.

\clearpage
\section{Additional Related Works}
\textbf{Task Grouping.} Early Multi-Task Learning research is founded on the belief that simultaneous learning of similar tasks within a multi-task framework can enhance overall performance. Kang et al. \cite{kang2011learning} identify tasks that contribute to improved multi-task performance through the clustering of related tasks with online stochastic gradient descent. This strategy challenges the prevailing assumption that all tasks are inherently interrelated. In parallel, Kumar et al. \cite{kumar2012learning} present a framework for MTL designed to enable selective sharing of information across tasks. They suggests that each task parameter vector can be expressed as a linear combination of a finite number of underlying basis tasks. However, these initial methodologies face limitations in their applicability and analysis, particularly in scaling to deep neural networks.
Finding out related tasks is more dynamically explored in the transfer learning domain \cite{achille2019task2vec, achille2021information}. They find related tasks by measuring task similarity which can be comparing the similarity of features extracted from the same depth of the independent task's network or directly measuring the transfer performance between tasks. Recent research has concentrated on identifying related tasks by directly assessing the relations among them within shared networks. This focus stems from the recognition that the measured inter-task relations in transfer learning fail to fully elucidate the dynamics within the MTL domain \cite{standley2020tasks, fifty2021efficiently}.

\textbf{Multi-Task Architectures.} Multi-task architectures can be classified based on how much the parameters or features are shared across tasks in the network. The most commonly used structure is a shared trunk which consists of a common encoder shared by multiple tasks and a dedicated decoder for each task \cite{RN51, RN52, RN49, RN50}. A tree-like architecture, featuring multiple division points for each task group, offers a more generalized structure \cite{treelike1, treelike2, treelike3, treelike4}. The cross-talk architecture employs separate symmetrical networks for each task, facilitating feature exchange between layers at the same depth for information sharing between tasks \cite{RN43, RN29}. The prediction distillation model \cite{RN9, RN29, RN32, pap} incorporates cross-task interactions at the end of the shared encoder, while the task switching network \cite{RN30, RN40, RN42, RN2} adjusts network parameters depending on the task.

\noindent\textbf{MTL in Vision Transformers.} Recent advancements in multi-task architecture have explored the integration of Vision Transformer \cite{vit, swin, pvt, focal, segformer, crossformer} into MTL. MTFormer \cite{mtformer} adopts a shared transformer encoder and decoder with a cross-task attention mechanism. MulT \cite{mult} leverages a shared attention mechanism to capture task dependencies, inspired by the Swin transformer. InvPT \cite{invpt} emphasizes global spatial position and multi-task context for dense prediction tasks through multi-scale feature aggregation. The Mixture of Experts (MoE) divides the model into predefined expert groups, dynamically shared or dedicated to specific tasks during the learning phase \cite{riquelme2021scaling, zhang2022mixture, fan2022m3vit, mustafa2022multimodal, chen2023mod, ye2023taskexpert}. Task prompter \cite{xu2023multi, xu2023demt, ye2022taskprompter} employs task-specific tokens to encapsulate task-specific information and utilizes cross-task interactions to enhance multi-task performance. 

\noindent\textbf{Multi-Task Domain Generalization.} Task grouping based on their relations has also been explored in the field of domain adaptation. In particular, \citep{wei2024task} proposes grouping heterogeneous tasks to regularize them, thereby promoting the learning of more generalized features across domain shifts. \citep{smith2021origin} explores generalization strategies at the mini-batch level. \citep{li2020sequential} addresses diverse domain shift scenarios by incorporating all possible sequential domain learning paths to generalize features for unseen domains. \citep{shi2021gradient} focuses on generalization to unseen domains by reducing dependence on specific domains through inter-domain gradient matching. Additionally, \citep{hu2022improving} analyzes the problem of spurious correlations in MTL and proposes regularization methods to mitigate this issue. The effect of gradient conflicts, which are considered the primary cause of negative transfer between tasks, is thoroughly examined in \citep{jiang2024forkmerge}. This work also proposes combining task distributions to identify better network parameters from a generalization perspective. The objectives of conventional multi-task optimization and domain generalization differ fundamentally. Conventional multi-task optimization typically assumes that the source and target domains share similar data distributions. In contrast, domain generalization focuses on scenarios involving significant domain shifts. This distinction leads to different approaches in leveraging task relations to achieve their respective goals. In multi-task optimization, simultaneously updating heterogeneous tasks with conflicting gradients results in suboptimal optimization. On the other hand, domain generalization leverages task sets as a tool to extract generalized features applicable to various unseen domains. Overfitting to similar tasks can harm performance on unseen domains, making it advantageous to use heterogeneous tasks as a form of regularization.

\section{Experimental Details}
\label{append:experimental_details}
\setcounter{table}{0}
\setcounter{figure}{0}
We implement our experiments on top of publically available code from \cite{ye2022invpt}. We run our experiments on A6000 GPUs.

\textbf{Datasets.} We assess our method on multi-task datasets: NYUD-v2 \cite{RN15}, PASCAL-Context \cite{mottaghi2014role}, and Taskonomy \cite{zamir2018taskonomy}. These datasets encompass various vision tasks. NYUD-v2 comprises 4 vision tasks: depth estimation, semantic segmentation, surface normal prediction, and edge detection. Meanwhile, PASCAL-Context includes 5 tasks: semantic segmentation, human parts estimation, saliency estimation, surface normal prediction, and edge detection. In Taskonomy, we use 11 vision tasks: Depth Euclidean (DE), Depth Zbuffer (DZ), Edge Texture (ET),  Keypoints 2D (K2), Keypoints 3D (K3), Normal (N), Principal Curvature (C), Reshading (R), Segment Unsup2d (S2), and Segment Unsup2.5D (S2.5).

\textbf{Metrics. } To assess task performance, we employed widely used metrics across different tasks. For semantic segmentation, we utilized mean Intersection over Union (mIoU). The performance of surface normal prediction was gauged by computing the mean angle distances between the predicted output and ground truth. Depth estimation task performance was evaluated using Root Mean Squared Error (RMSE). For saliency estimation and human part segmentation, we utilized mean Intersection over Union (mIoU). Edge detection performance was assessed using optimal-dataset-scale-F-measure (odsF). For the Taskonomy benchmark, curvature was evaluated using RMSE, while the other tasks were evaluated using L1 distance, following the settings in \cite{chen2023mod}. 

To evaluate multi-task performance, we adopted the metric proposed in \cite{RN2}. This metric measures per-task performance by averaging it with respect to the single-task baseline $b$, as shown in the equation: $\triangle_m = (1/T)\sum_{i=1}^{T}(-1)^{l_i}(M_{m,i}-M_{b,i})/M_{b,i}$ where $l_i=1$ if a lower value of measure $M_i$ means better performance for task $i$, and 0 otherwise.

\begin{table*}[h]
\caption{Hyperparameters for experiments.}
\centering
\renewcommand\arraystretch{1.20}
\begin{tabular}{lc}
\hline
Hyperparameter                  &  Value \\ \hline
Optimizer                       &  Adam \cite{kingma2014adam}\\
Scheduler                       &  Polynomial Decay\\
Minibatch size                  &  8\\
Number of iterations            &  40000\\
Backbone (Transformer)                        &  ViT \cite{vit} \\
\hspace{10pt}$\llcorner$ Learning rate                   &  0.00002\\
\hspace{10pt}$\llcorner$ Weight Decay                    &  0.000001\\
\hspace{10pt}$\llcorner$ Affinity decay factor $\beta$   &  0.001\\
\hline
\end{tabular}
\label{Implementation_details}
\end{table*}

\textbf{Implementation Details.} For experiments, we adopt ViT \cite{vit} pre-trained on ImageNet-22K \cite{deng2009imagenet} as the multi-task encoder.
Task-specific decoders merge the multi-scale features extracted by the encoder to generate the outputs for each task. The models are trained for 40,000 iterations on both NYUD \cite{RN15} and PASCAL \cite{RN12} datasets with batch size 8. We used Adam optimizer with learning rate $2\times$$10^{-5}$ and $1\times$$10^{-6}$ of a weight decay with a polynomial learning rate schedule. The cross-entropy loss was used for semantic segmentation, human parts estimation, and saliency, edge detection. Surface normal prediction and depth estimation used L1 loss. The tasks are weighted equally to ensure a fair comparison. For the Taskonomy Benchmark \cite{zamir2018taskonomy}, we use the dataloader from the open-access code provided by \cite{chen2023mod}, while maintaining experimental settings identical to those used for NYUD-v2 and PASCAL-Context. We use the same experimental setup for the other hyperparameters as in previous works \cite{invpt, ye2022taskprompter}, as detailed in \Cref{Implementation_details}.

\section{Additional Experimental Results}
\label{append:additional_experimental_results}
\setcounter{table}{0}
\setcounter{figure}{0}

\textbf{Comparison of optimization results with different backbone sizes.} We compare the results of multi-task optimization on Taskonomy across various sizes of vision transformers, as shown in \Cref{tab:tab_exp_taskonomy_vitB,tab:tab_exp_taskonomy_vitS,tab:tab_exp_taskonomy_vitT}. Our method consistently achieves superior performance across all backbone sizes. Unlike previous approaches that focus on learning shared parameters, our optimization strategy enhances the learning of task-specific parameters. This leads to significant performance improvements, especially with smaller backbones, where competition between tasks is more intense due to the limited number of shared parameters. How tasks are grouped, as visualized in \cref{append:fig:vis_grouping}, depends on the backbone size.

\textbf{Visualization of Proximal Inter-Task Affinity.} In \Cref{fig:proximal_vit_taskonomy}, we present the tracked proximal inter-task affinity for each pair of tasks in Taskonomy. The changes in proximal inter-task affinity depend on the nature of the task pair, but as the backbone size increases, the affinity tends to become more positive. This trend is more noticeable in NYUD-v2 and PASCAL-Context, where there are fewer tasks, as shown in \Cref{fig:proximal_vit_nyud,fig:proximal_vit_pascal}. This pattern also aligns with the number of clustered tasks in \Cref{fig:num_group}, where the number of groups increases as the backbone size decreases.

\textbf{Effect of the Decay Rate with Visualization.} In \Cref{fig:proximal_vit_beta}, we visualize the proximal inter-task affinity tracked during optimization with various decay rates $\beta$, ranging from 0.01 to 1e-5 on a logarithmic scale. The decay rate $\beta$ helps stabilize the tracking of proximal inter-task affinity as it fluctuates during optimization. Additionally, it aids in understanding inter-task relations over time, independent of input data. For vision transformers, a decay rate of $\beta=0.001$ demonstrates stable tracking. In real-world applications, multi-task performance is not highly sensitive to the decay rate $\beta$. In \Cref{tab:tab_exp_beta_perf}, we evaluate how $\beta$ impacts multi-task performance on the Taskonomy benchmark. The results demonstrate that the proposed optimization method consistently improves performance across various $\beta$ values, minimizing the need for extensive hyperparameter tuning in practical scenarios.

\noindent\textbf{The Influence of Task Grouping Strategy.}  
In \Cref{tab:tab_exp_grouping_strategy}, we present results comparing different task grouping strategies. These include randomly grouping tasks with a predefined number, grouping heterogeneous tasks, and grouping homogeneous tasks (our approach). The results clearly demonstrate that grouping homogeneous task sets yields superior performance under the proposed settings. This contrasts with the multi-task domain generalization approach, which groups heterogeneous tasks as a form of regularization to enhance generalization to unseen domains. 
This difference arises from the fundamentally distinct objectives of conventional multi-task optimization and domain generalization. Conventional multi-task optimization typically assumes that the source and target domains share similar data distributions, while domain generalization addresses scenarios with significant domain shifts. Consequently, the approaches to leveraging task relations differ to meet these distinct goals. As demonstrated in Theorems 1 and 2 of our work, in multi-task optimization, simultaneously updating heterogeneous tasks with low task affinity leads to suboptimal optimization and higher losses compared to updating similar task sets with high task affinity. This observation aligns with findings from previous multi-task optimization studies referenced in the related works section.

\noindent\textbf{Influence of Batch Size.}  
In \Cref{tab:tab_exp_batch}, we compare our method with single-gradient descent (GD) to evaluate its robustness in improving multi-task performance across varying batch sizes. The proposed optimization method consistently demonstrates performance improvements (\(\triangle_m\) (\% \(\uparrow\))) of 5.27\%, 5.71\%, and 6.13\% across different batch sizes. These results highlight the robustness and adaptability of the proposed algorithm across diverse scenarios.

\begin{figure}[h]
    \vspace{-10pt}
    \centering
    \begin{subfigure}{0.24\textwidth}
        \includegraphics[width=0.99\textwidth]{figure/group_viz_3.png}
        \vspace*{-15pt}
        \caption{$\{G\}_{i=1}^{\mathcal{M}}$ with ViT-L}
    \end{subfigure}
    \begin{subfigure}{0.24\textwidth}
        \includegraphics[width=0.99\textwidth]{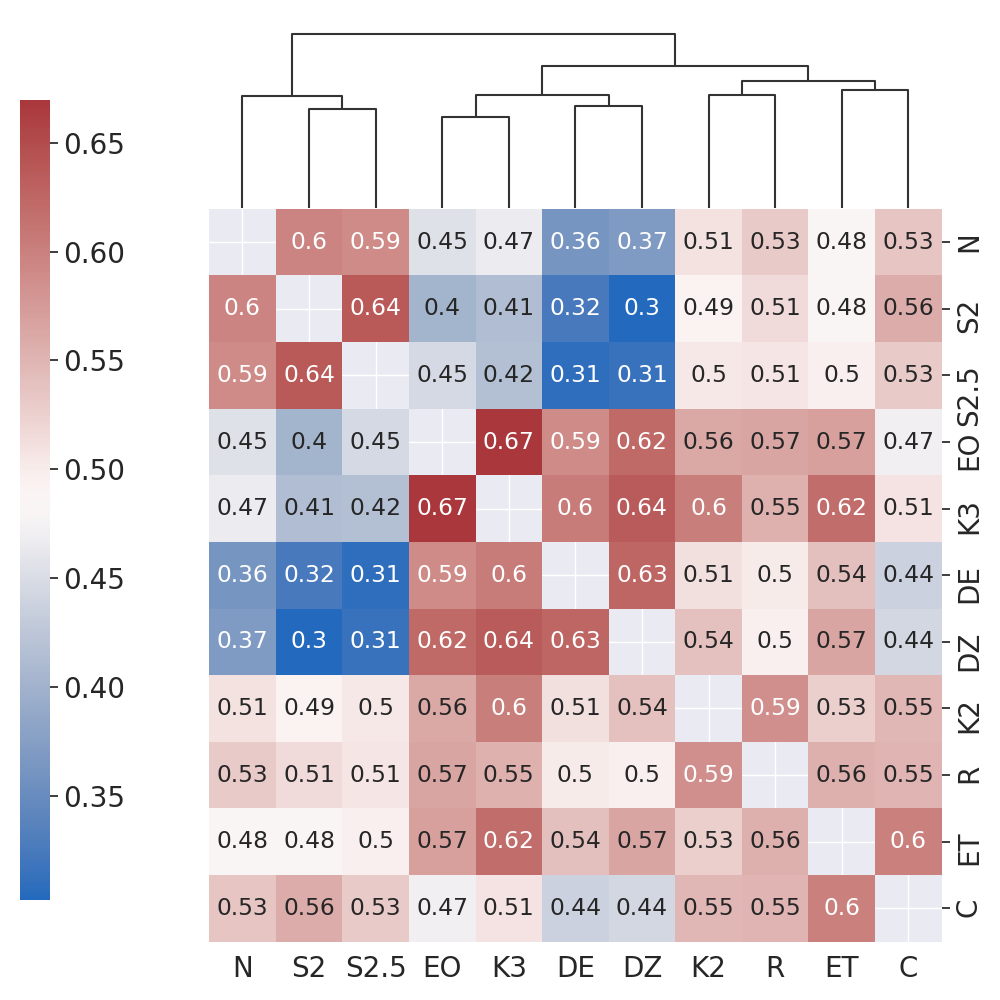}
        \vspace*{-15pt}
        \caption{$\{G\}_{i=1}^{\mathcal{M}}$ with ViT-B}
    \end{subfigure}
    \begin{subfigure}{0.24\textwidth}
        \includegraphics[width=0.99\textwidth]{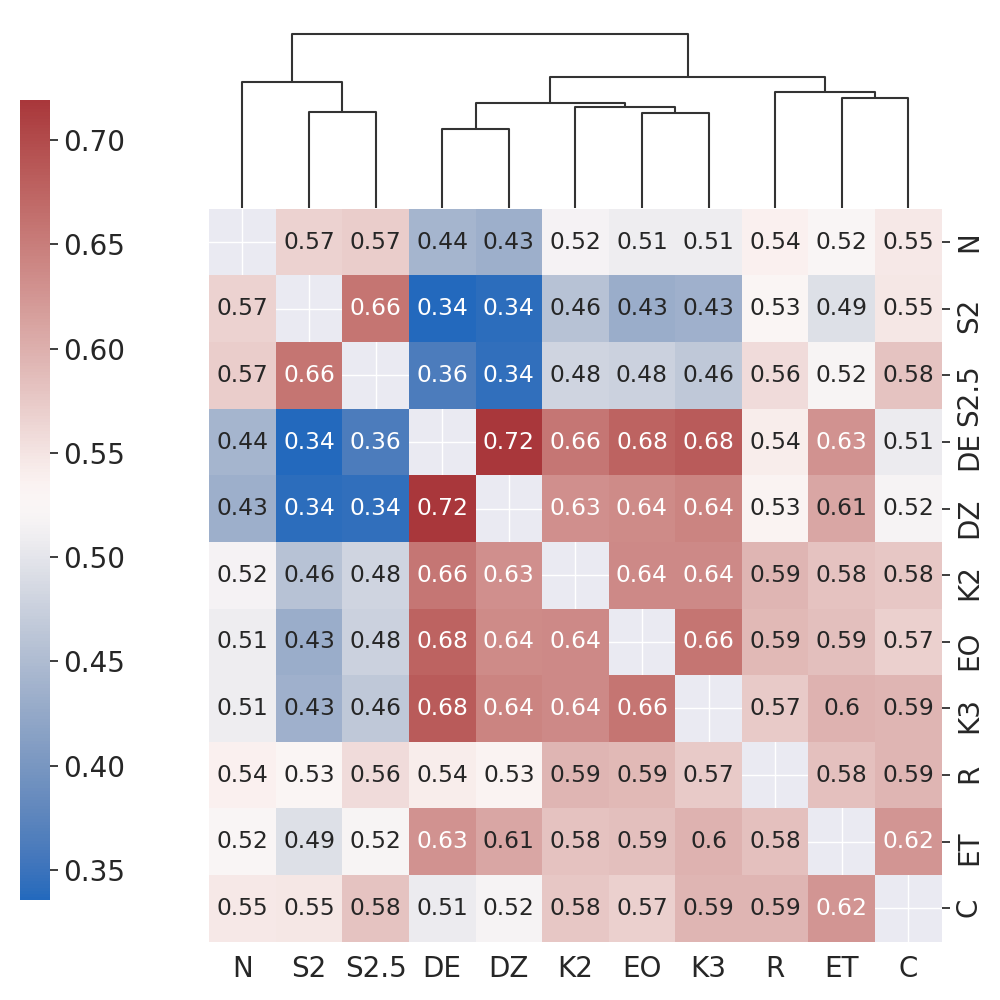}
        \vspace*{-15pt}
        \caption{$\{G\}_{i=1}^{\mathcal{M}}$ with ViT-S}
    \end{subfigure}
    \begin{subfigure}{0.24\textwidth}
        \includegraphics[width=0.99\textwidth]{figure/group_viz_0.png}
        \vspace*{-15pt}
        \caption{$\{G\}_{i=1}^{\mathcal{M}}$ with ViT-T}
    \end{subfigure}
    \caption{The averaged grouping results $\{G\}_{i=1}^{\mathcal{M}}$ on the Taskonomy benchmark are shown for (a) ViT-L, (b) ViT-B, (c) ViT-S, and (d) ViT-T, respectively.}
    \label{append:fig:vis_grouping}
\end{figure}

\begin{table*}[t]
\caption{Comparison with previous multi-task optimization approaches on Taskonomy with ViT-B.}
\vspace{-5pt}
\centering
\renewcommand\arraystretch{1.00}
\resizebox{0.99\textwidth}{!}{
\begin{tabular}{l|ccccccccccc|c}
\midrule[1.0pt]
 & DE & DZ & EO & ET & K2  & K3 & N   & C & R & S2  & S2.5 &  \\ \cmidrule[0.5pt]{2-12}
\multirow{-2}{*}{Task} & L1 Dist. $\downarrow$  & L1 Dist. $\downarrow$ & L1 Dist. $\downarrow$ & L1 Dist. $\downarrow$ & L1 Dist. $\downarrow$ & L1 Dist. $\downarrow$ & L1 Dist. $\downarrow$ & RMSE $\downarrow$    & L1 Dist. $\downarrow$ & L1 Dist. $\downarrow$ & L1 Dist. $\downarrow$  & \multirow{-2}{*}{$\triangle_m$ ($\uparrow$)} \\ \midrule[1.0pt]

Single Task     &0.0183&0.0186&0.1089&0.1713&0.1630&0.0863&0.2953&0.7522&0.1504&0.1738&0.1530&-    \\ \midrule[0.5pt]
GD              &0.0188&0.0197&0.1283&0.1745&0.1718&0.0933&0.2599&0.7911&0.1799&0.1885&0.1631&-6.35     \\
GradDrop        &0.0195&0.0206&0.1318&0.1748&0.1735&0.0945&0.3018&0.8060&0.1866&0.1920&0.1607&-9.54     \\
MGDA            &-&-&-&-&-&-&-&-&-&-&-&-     \\
UW              &0.0188&0.0198&0.1285&0.1745&0.1719&0.0933&0.2535&0.7915&0.1800&0.1883&0.1629&-6.19     \\
DTP             &0.0187&0.0198&0.1283&0.1745&0.1720&0.0933&0.2558&0.7912&0.1804&0.1884&0.1634&-6.25     \\
DWA             &0.0188&0.0197&0.1287&0.1745&0.1719&0.0933&0.2570&0.7927&0.1806&0.1887&0.1632&-6.33     \\
PCGrad          &0.0185&0.0188&0.1285&0.1738&0.1703&0.0928&0.2557&0.7964&0.1810&0.1882&0.1569&-5.22     \\
CAGrad          &0.0192&0.0196&0.1306&0.1733&0.1654&0.0939&0.2871&0.8147&0.1901&0.1906&0.1659&-8.34    \\
IMTL            &0.0189&0.0200&0.1287&0.1745&0.1720&0.0934&0.2618&0.7928&0.1811&0.1888&0.1635&-6.75     \\
Aligned-MTL     &0.0191&0.0202&0.1263&0.1729&0.1663&0.0944&0.3061&0.8560&0.1936&0.1872&0.1585&-8.93     \\
Nash-MTL        &0.0175&0.0182&0.1208&0.1730&0.1663&0.0901&0.2686&0.7958&0.1707&0.1839&0.1577&-2.79     \\
FAMO            &0.0189&0.0200&0.1285&0.1745&0.1720&0.0934&0.2715&0.7929&0.1807&0.1891&0.1640&-7.21     \\
Ours            &0.0167&0.0169&0.1228&0.1739&0.1695&0.0910&0.2344&0.7600&0.1691&0.1836&0.1571&-0.64     \\  \midrule[1.0pt]
\end{tabular}}
\label{tab:tab_exp_taskonomy_vitB}
\end{table*}
\begin{table*}[t]
\caption{Comparison with previous multi-task optimization approaches on Taskonomy with ViT-S.}
\vspace{-5pt}
\centering
\renewcommand\arraystretch{1.00}
\resizebox{0.99\textwidth}{!}{
\begin{tabular}{l|ccccccccccc|c}
\midrule[1.0pt]
 & DE & DZ & EO & ET & K2  & K3 & N   & C & R & S2  & S2.5 &  \\ \cmidrule[0.5pt]{2-12}
\multirow{-2}{*}{Task} & L1 Dist. $\downarrow$  & L1 Dist. $\downarrow$ & L1 Dist. $\downarrow$ & L1 Dist. $\downarrow$ & L1 Dist. $\downarrow$ & L1 Dist. $\downarrow$ & L1 Dist. $\downarrow$ & RMSE $\downarrow$    & L1 Dist. $\downarrow$ & L1 Dist. $\downarrow$ & L1 Dist. $\downarrow$  & \multirow{-2}{*}{$\triangle_m$ ($\uparrow$)} \\ \midrule[1.0pt]
Single Task     &0.0264&0.0259&0.1348&0.1740&0.1667&0.0973&0.3481&0.8598&0.1905&0.1857&0.1691&-    \\ \midrule[0.5pt]
GD              &0.0264&0.0272&0.1574&0.1775&0.1838&0.1038&0.4370&0.9237&0.2475&0.2076&0.1858&-11.39     \\
GradDrop        &0.0274&0.0280&0.1609&0.1779&0.1856&0.1042&0.4472&0.9366&0.2549&0.2106&0.1821&-13.14     \\
MGDA            &-&-&-&-&-&-&-&-&-&-&-&-     \\
UW              &0.0263&0.0269&0.1570&0.1775&0.1832&0.1037&0.4362&0.9202&0.2465&0.2075&0.1856&-11.11     \\
DTP             &0.0262&0.0273&0.1568&0.1778&0.1831&0.1037&0.4884&0.9207&0.2466&0.2073&0.1849&-12.52     \\
DWA             &0.0264&0.0271&0.1572&0.1776&0.1834&0.1038&0.4336&0.9215&0.2469&0.2075&0.1856&-11.20     \\
PCGrad          &0.0271&0.0274&0.1570&0.1766&0.1784&0.1034&0.4522&0.9343&0.2525&0.2071&0.1811&-11.78     \\
CAGrad          &0.0289&0.0282&0.1611&0.1769&0.1706&0.1062&0.4723&0.9557&0.2689&0.2122&0.1902&-15.09     \\
IMTL-L          &0.0255&0.0258&0.1510&0.1744&0.1716&0.1005&0.4339&0.9459&0.2466&0.2036&0.1825&-8.90     \\
Aligned-MTL     &0.0286&0.0290&0.1603&0.1744&0.1711&0.1033&0.4596&1.0022&0.2783&0.2090&0.1854&-15.06     \\
Nash-MTL        &0.0255&0.0258&0.1510&0.1744&0.1716&0.1005&0.4339&0.9459&0.2466&0.2036&0.1825&-8.79     \\
FAMO            &0.0263&0.0272&0.1573&0.1774&0.1835&0.1035&0.4326&0.9208&0.2464&0.2077&0.1858&-11.23     \\
Ours            &0.0225&0.0229&0.1444&0.1762&0.1775&0.0995&0.3983&0.8620&0.2156&0.1997&0.1774&-2.83     \\  \midrule[1.0pt]
\end{tabular}}
\label{tab:tab_exp_taskonomy_vitS}
\end{table*}
\begin{table*}[t]
\caption{Comparison with previous multi-task optimization approaches on Taskonomy with ViT-T.}
\vspace{-5pt}
\centering
\renewcommand\arraystretch{1.00}
\resizebox{0.99\textwidth}{!}{
\begin{tabular}{l|ccccccccccc|c}
\midrule[1.0pt]
 & DE & DZ & EO & ET & K2  & K3 & N   & C & R & S2  & S2.5 &  \\ \cmidrule[0.5pt]{2-12}
\multirow{-2}{*}{Task} & L1 Dist. $\downarrow$  & L1 Dist. $\downarrow$ & L1 Dist. $\downarrow$ & L1 Dist. $\downarrow$ & L1 Dist. $\downarrow$ & L1 Dist. $\downarrow$ & L1 Dist. $\downarrow$ & RMSE $\downarrow$    & L1 Dist. $\downarrow$ & L1 Dist. $\downarrow$ & L1 Dist. $\downarrow$  & \multirow{-2}{*}{$\triangle_m$ ($\uparrow$)} \\ \midrule[1.0pt]
Single Task     &0.0289&0.0290&0.1405&0.1774&0.1682&0.0970&0.3837&0.8968&0.2096&0.1904&0.1729&-    \\ \midrule[0.5pt]
GD              &0.0279&0.0285&0.1604&0.1789&0.1860&0.1043&0.4704&0.9488&0.2613&0.2086&0.1914&-9.21     \\
GradDrop        &0.0287&0.0292&0.1630&0.1795&0.1868&0.1052&0.4795&0.9621&0.2697&0.2118&0.1878&-10.68     \\
MGDA            &-&-&-&-&-&-&-&-&-&-&-&-     \\
UW              &0.0279&0.0285&0.1604&0.1789&0.1859&0.1043&0.4699&0.9488&0.2613&0.2085&0.1914&-9.21     \\
DTP             &0.0278&0.0288&0.1603&0.1790&0.1859&0.1042&0.4697&0.9488&0.2614&0.2088&0.1915&-9.27     \\
DWA             &0.0279&0.0285&0.1604&0.1789&0.1859&0.1043&0.4693&0.9489&0.2613&0.2086&0.1913&-9.20     \\
PCGrad          &0.0283&0.0290&0.1604&0.1769&0.1803&0.1036&0.4720&0.9645&0.2683&0.2090&0.1866&-9.28     \\
CAGrad          &0.0300&0.0304&0.1644&0.1743&0.1721&0.1055&0.4838&0.9818&0.2847&0.2143&0.1974&-12.12     \\
IMTL            &0.0276&0.0282&0.1553&0.1754&0.1743&0.1018&0.4621&0.9809&0.2623&0.2051&0.1878&-7.55     \\
Aligned-MTL     &0.0296&0.0318&0.1633&0.1765&0.1757&0.1150&0.4806&1.0270&0.2935&0.2109&0.1887&-13.70     \\
Nash-MTL        &0.0276&0.0282&0.1553&0.1754&0.1743&0.1018&0.4621&0.9809&0.2623&0.2051&0.1878&-7.46    \\
FAMO            &0.0279&0.0285&0.1604&0.1789&0.1859&0.1043&0.4718&0.9488&0.2612&0.2085&0.1913&-9.33     \\
Ours            &0.0252&0.0257&0.1526&0.1774&0.1827&0.1019&0.4337&0.9100&0.2402&0.2026&0.1845&-3.67     \\  \midrule[1.0pt]
\end{tabular}}
\label{tab:tab_exp_taskonomy_vitT}
\end{table*}

\def\figlength{0.14}
\begin{figure}[h]
\centering  
\begin{subfigure}{\figlength\textwidth}
\includegraphics[width=0.99\textwidth]{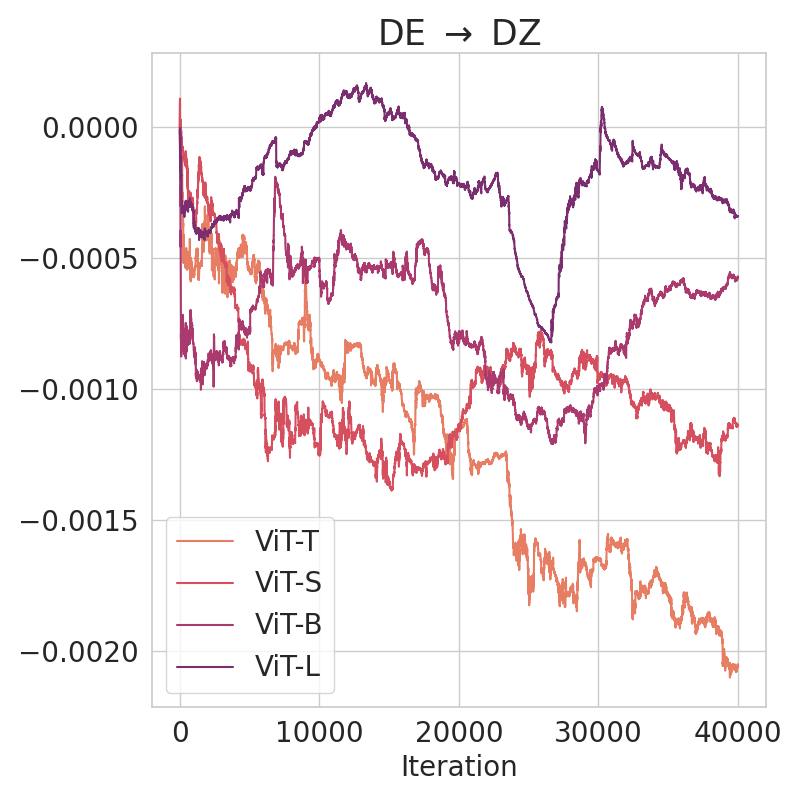}
\end{subfigure}
\begin{subfigure}{\figlength\textwidth}
\includegraphics[width=0.99\textwidth]{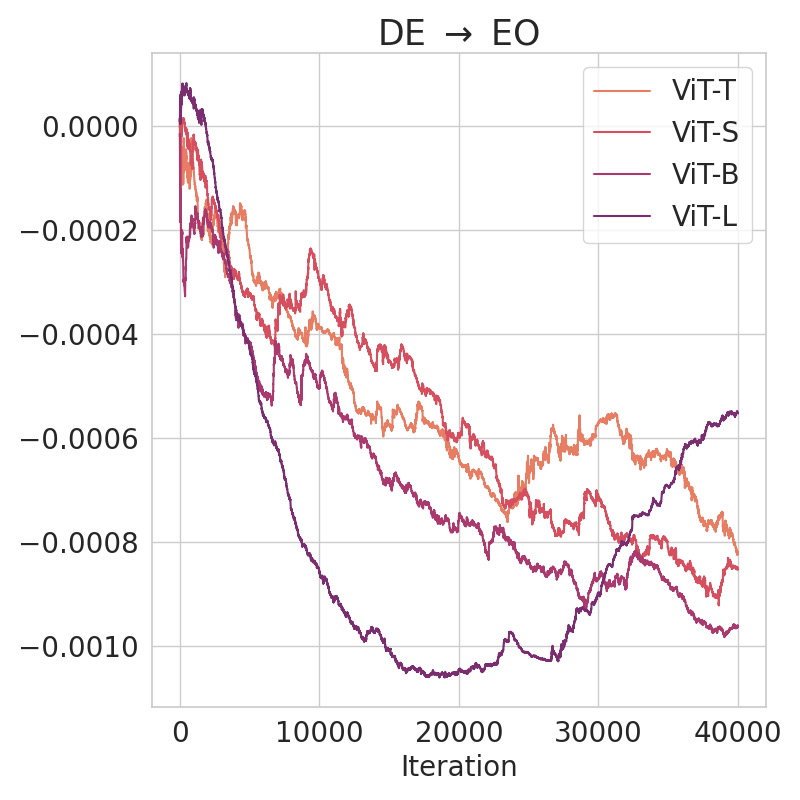}
\end{subfigure}
\begin{subfigure}{\figlength\textwidth}
\includegraphics[width=0.99\textwidth]{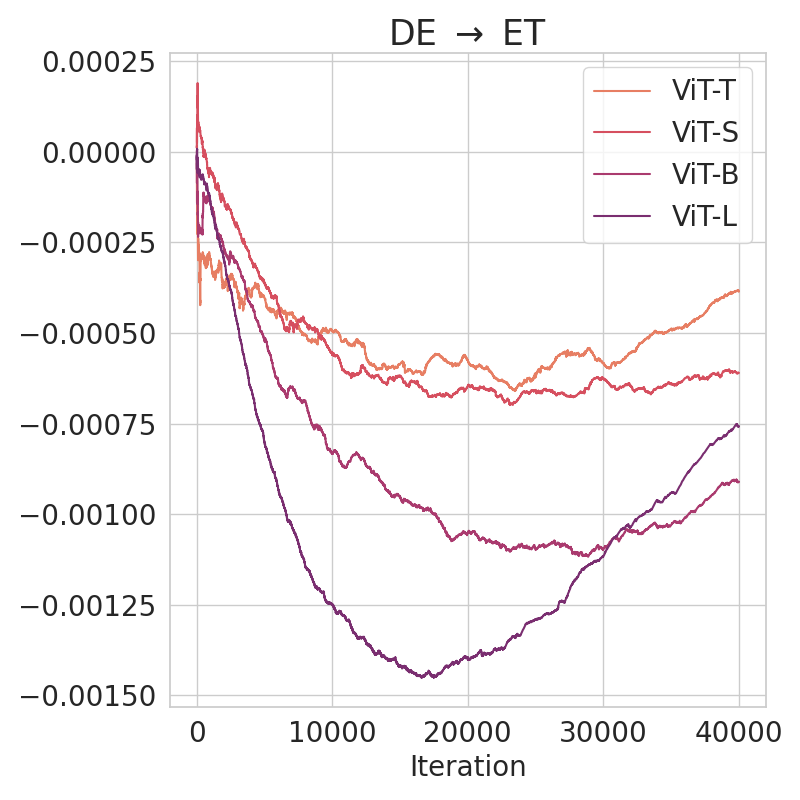}
\end{subfigure}
\begin{subfigure}{\figlength\textwidth}
\includegraphics[width=0.99\textwidth]{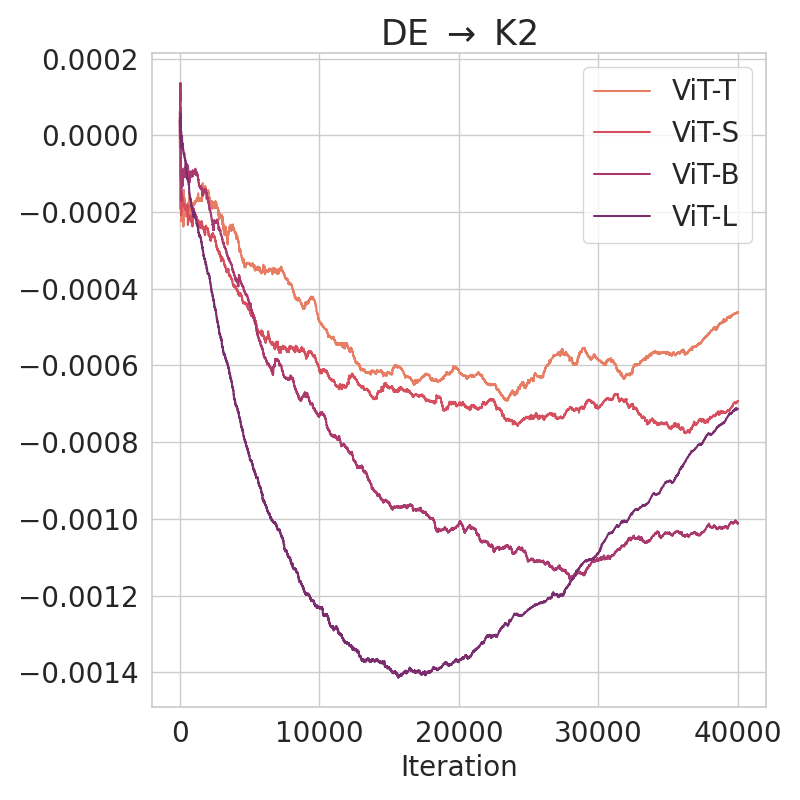}
\end{subfigure}
\begin{subfigure}{\figlength\textwidth}
\includegraphics[width=0.99\textwidth]{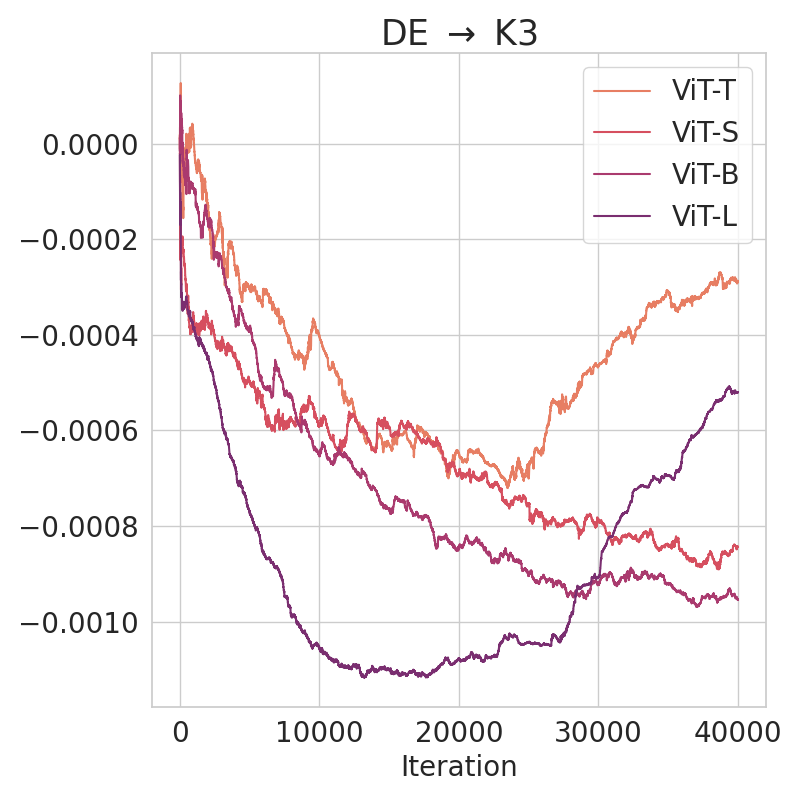}
\end{subfigure}
\begin{subfigure}{\figlength\textwidth}
\includegraphics[width=0.99\textwidth]{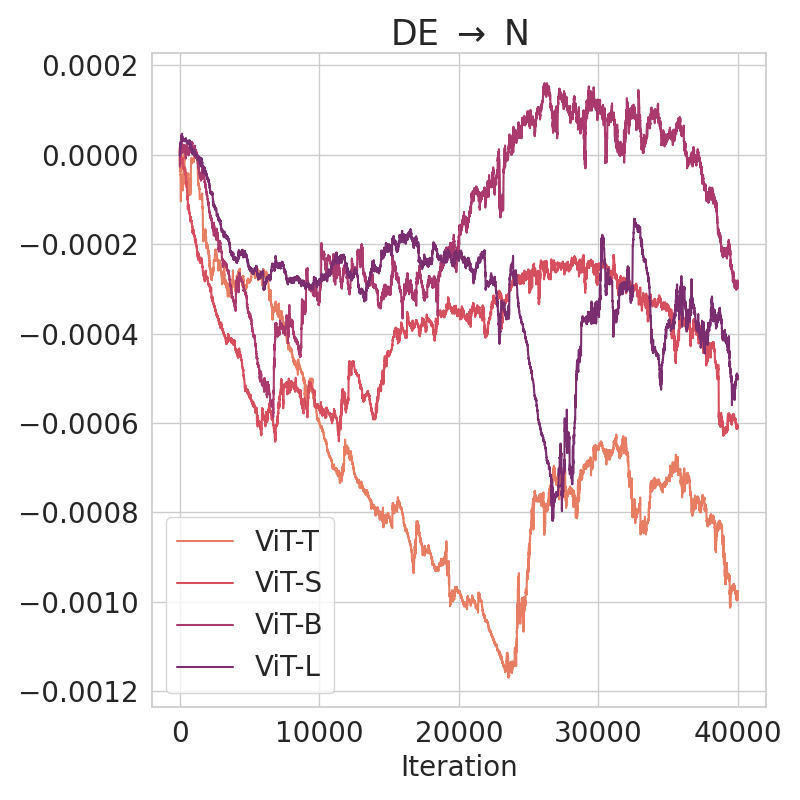}
\end{subfigure}
\begin{subfigure}{\figlength\textwidth}
\includegraphics[width=0.99\textwidth]{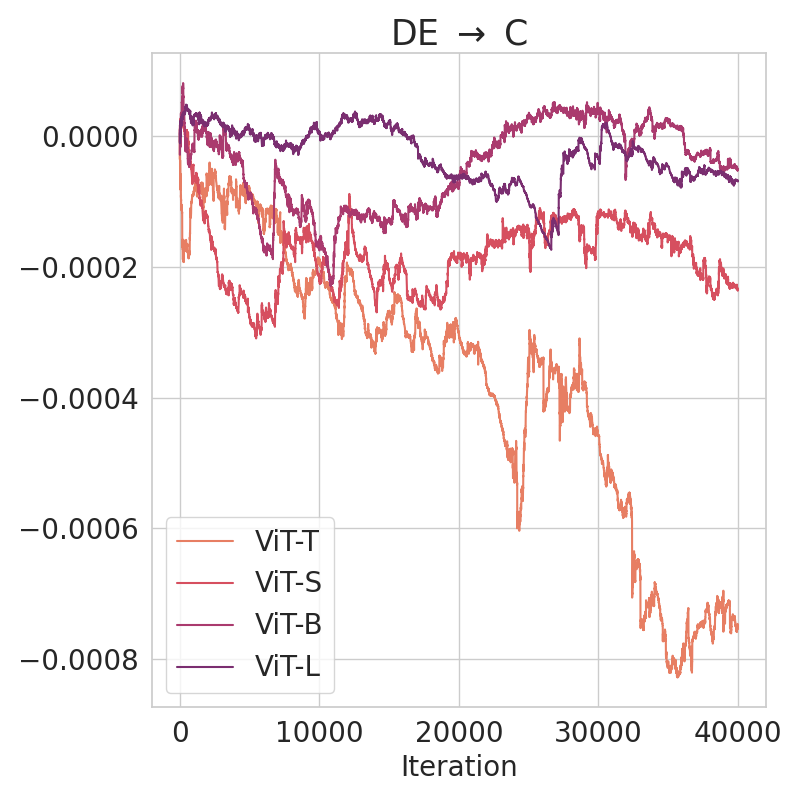}
\end{subfigure}
\begin{subfigure}{\figlength\textwidth}
\includegraphics[width=0.99\textwidth]{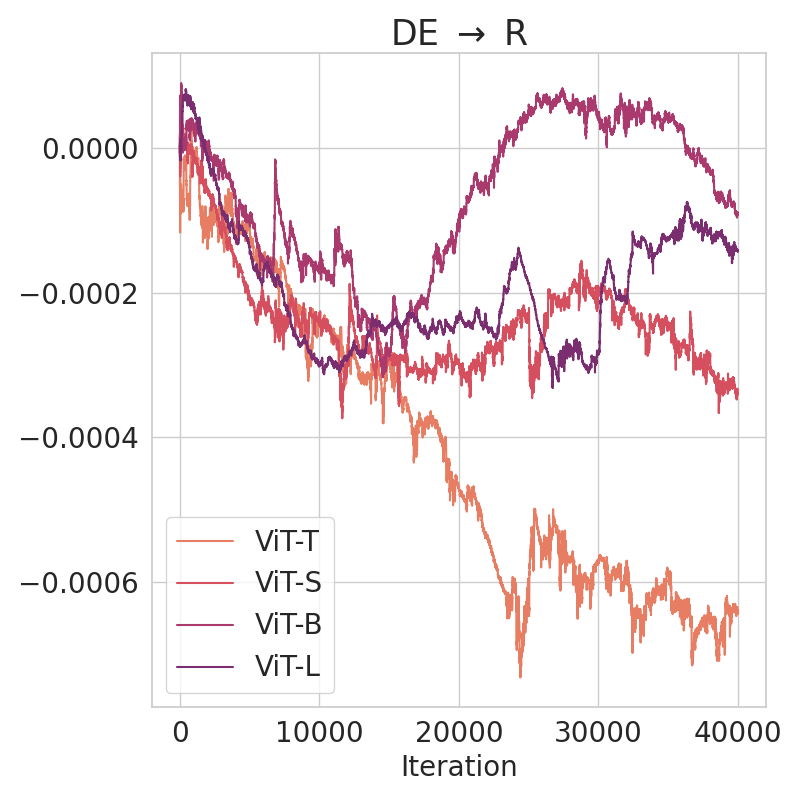}
\end{subfigure}
\begin{subfigure}{\figlength\textwidth}
\includegraphics[width=0.99\textwidth]{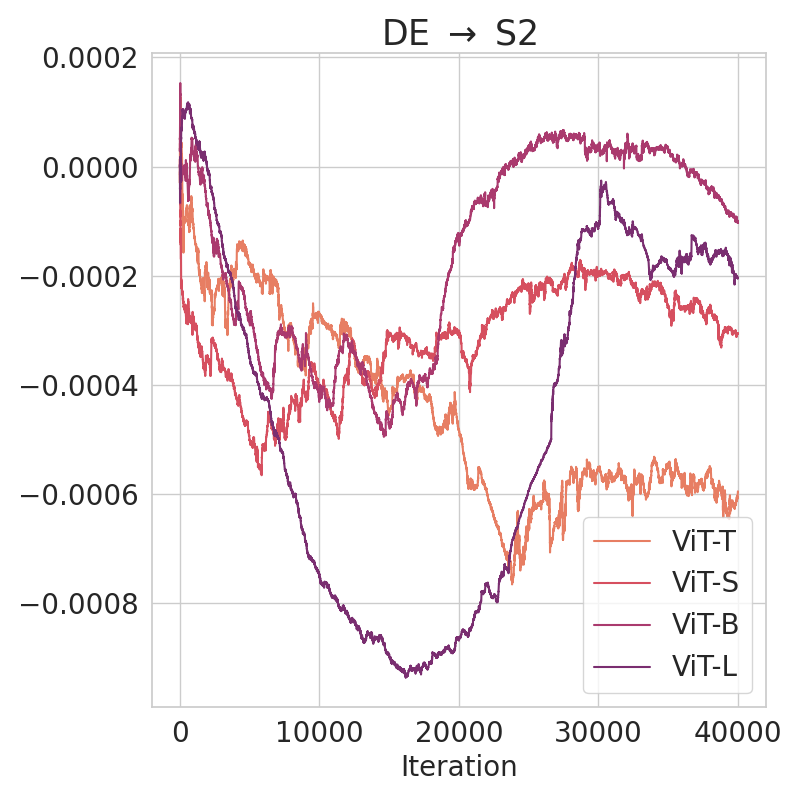}
\end{subfigure}
\begin{subfigure}{\figlength\textwidth}
\includegraphics[width=0.99\textwidth]{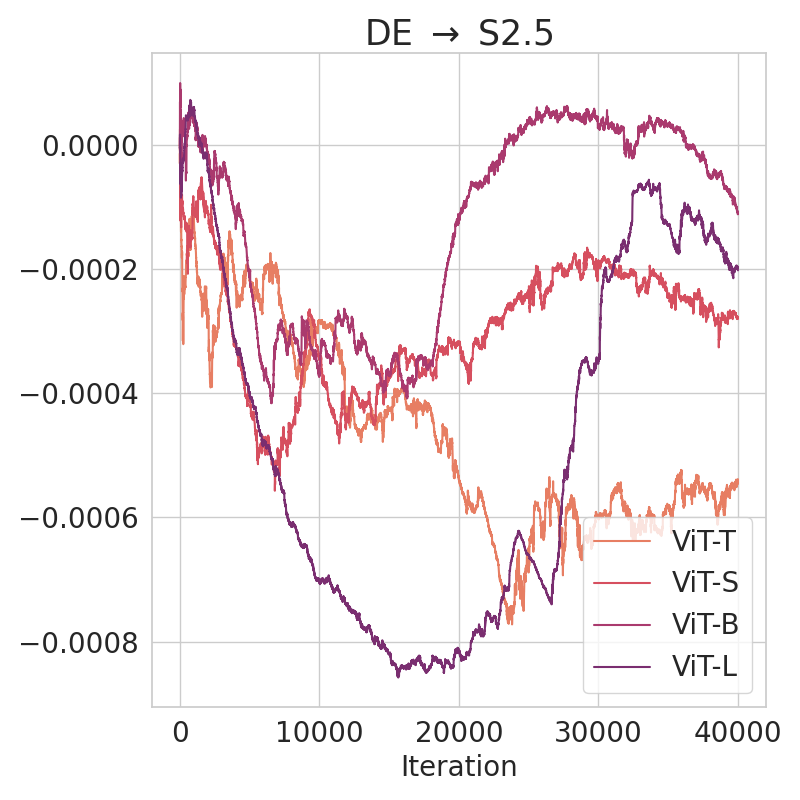}
\end{subfigure}
\begin{subfigure}{\figlength\textwidth}
\includegraphics[width=0.99\textwidth]{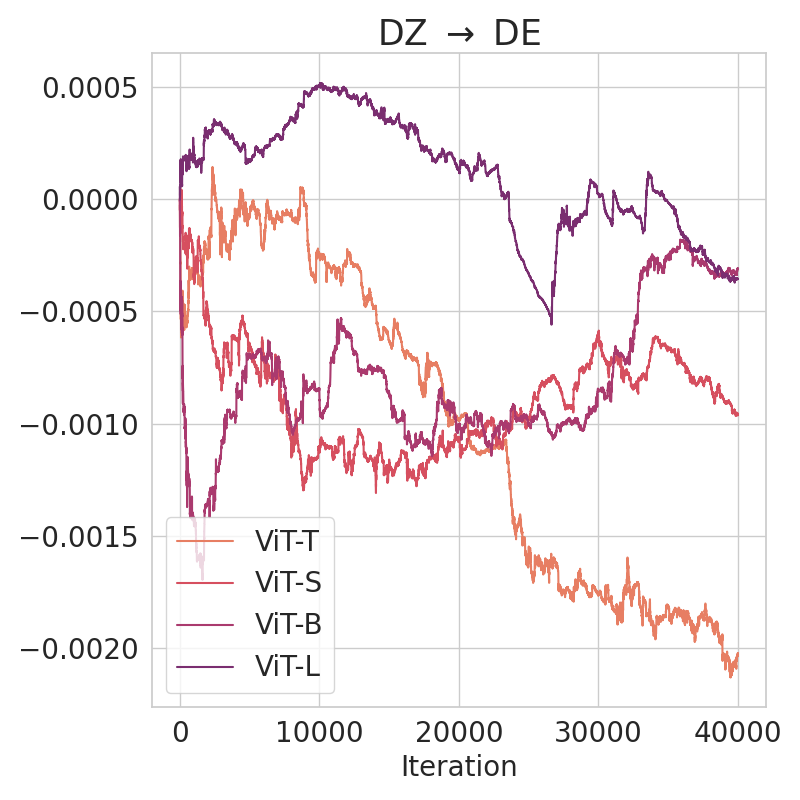}
\end{subfigure}
\begin{subfigure}{\figlength\textwidth}
\includegraphics[width=0.99\textwidth]{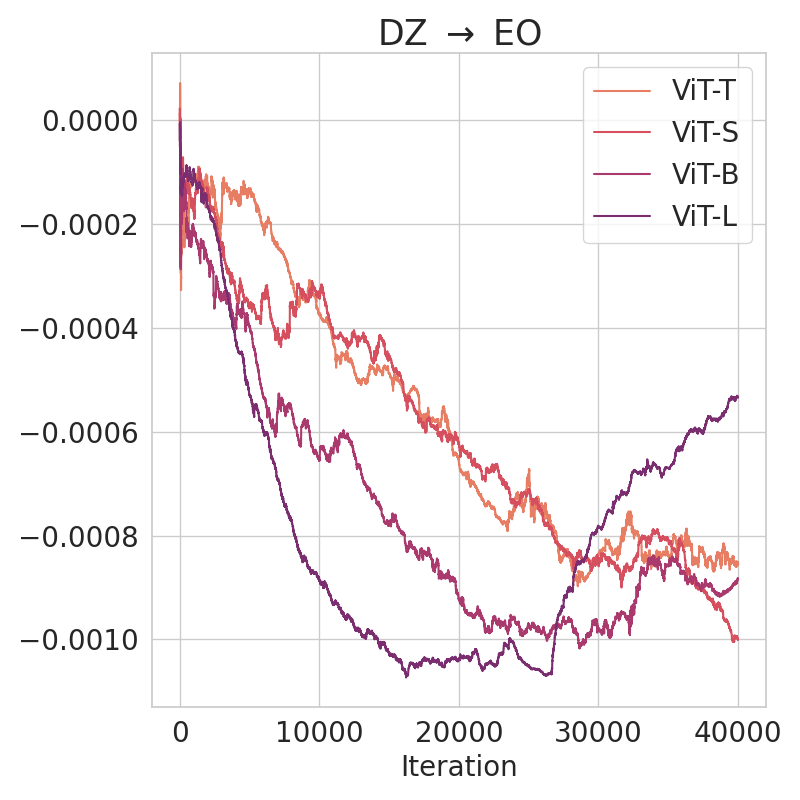}
\end{subfigure}
\begin{subfigure}{\figlength\textwidth}
\includegraphics[width=0.99\textwidth]{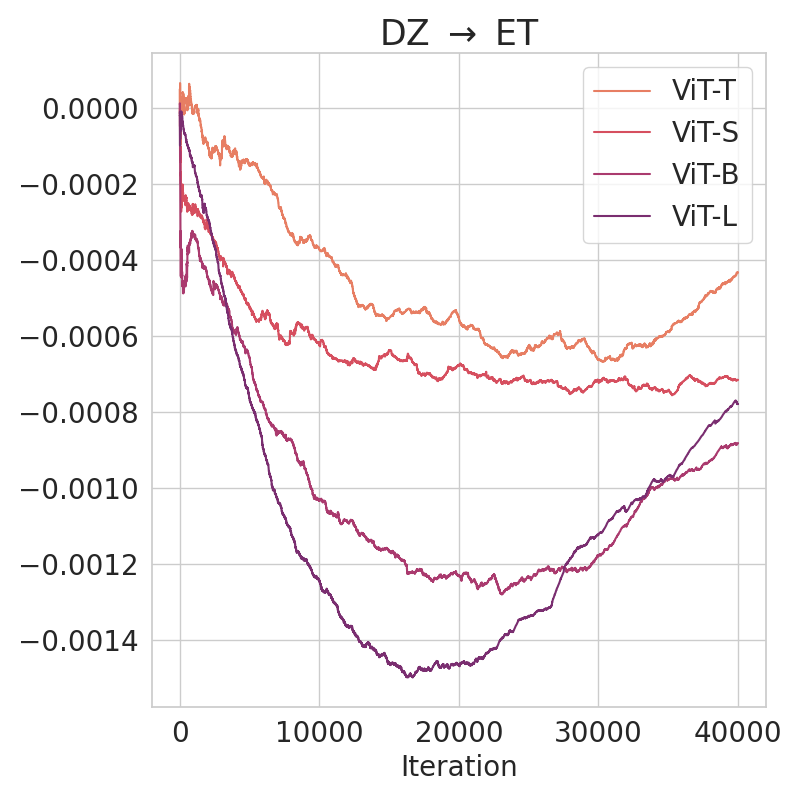}
\end{subfigure}
\begin{subfigure}{\figlength\textwidth}
\includegraphics[width=0.99\textwidth]{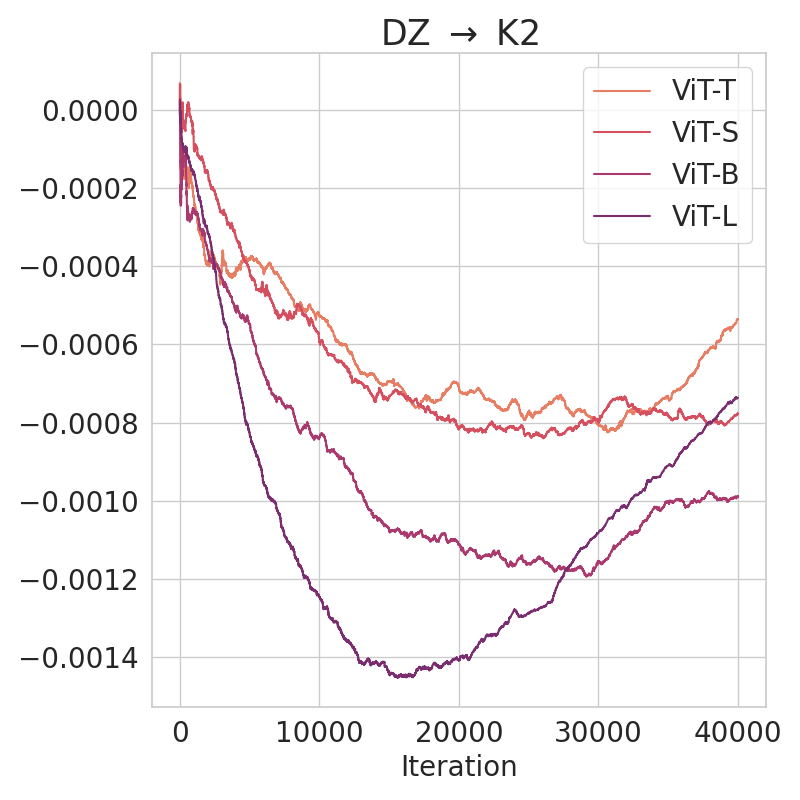}
\end{subfigure}
\begin{subfigure}{\figlength\textwidth}
\includegraphics[width=0.99\textwidth]{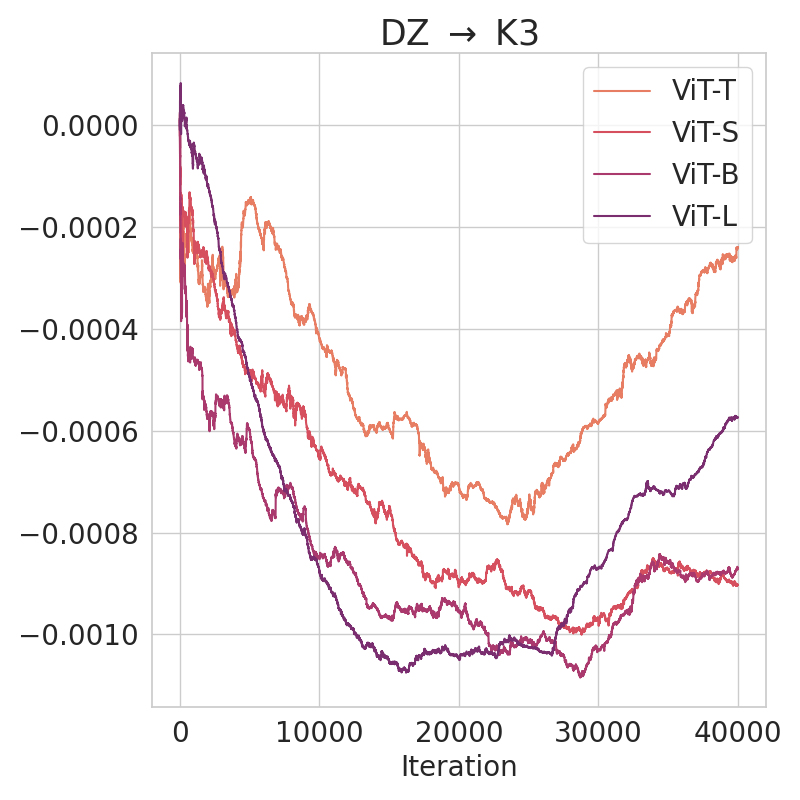}
\end{subfigure}
\begin{subfigure}{\figlength\textwidth}
\includegraphics[width=0.99\textwidth]{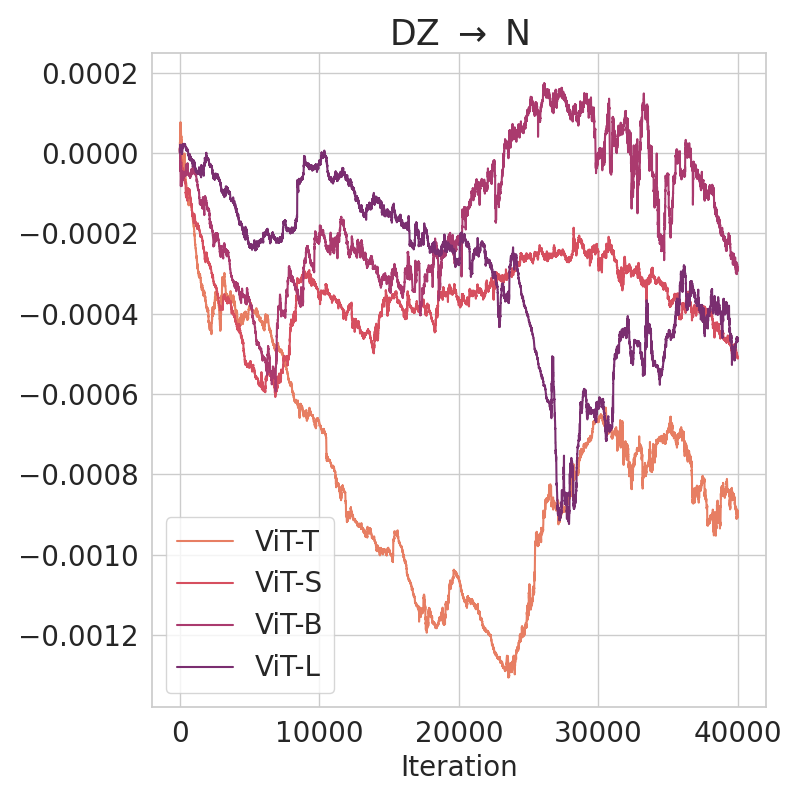}
\end{subfigure}
\begin{subfigure}{\figlength\textwidth}
\includegraphics[width=0.99\textwidth]{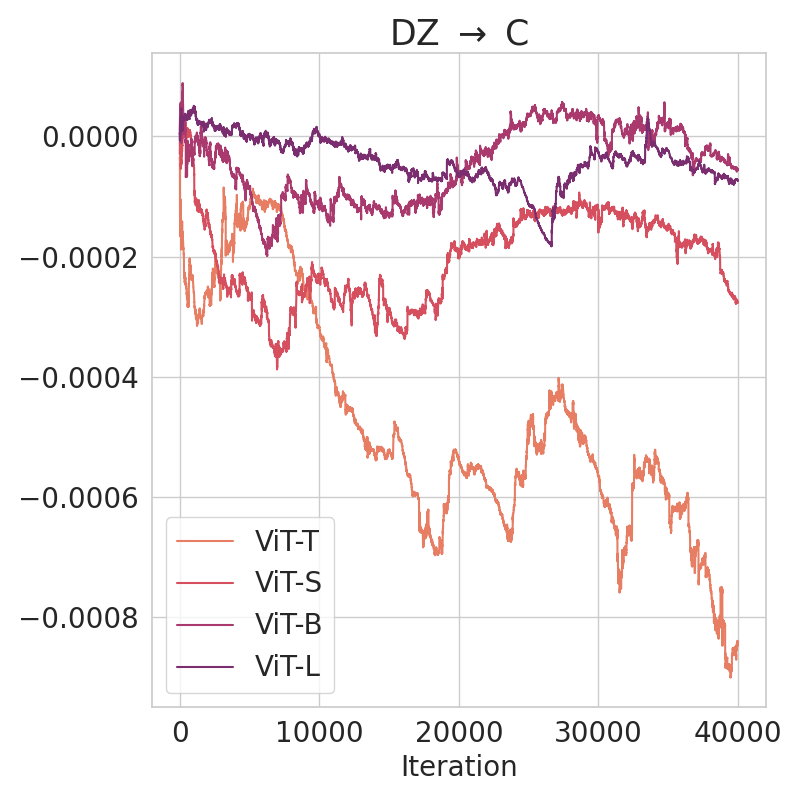}
\end{subfigure}
\begin{subfigure}{\figlength\textwidth}
\includegraphics[width=0.99\textwidth]{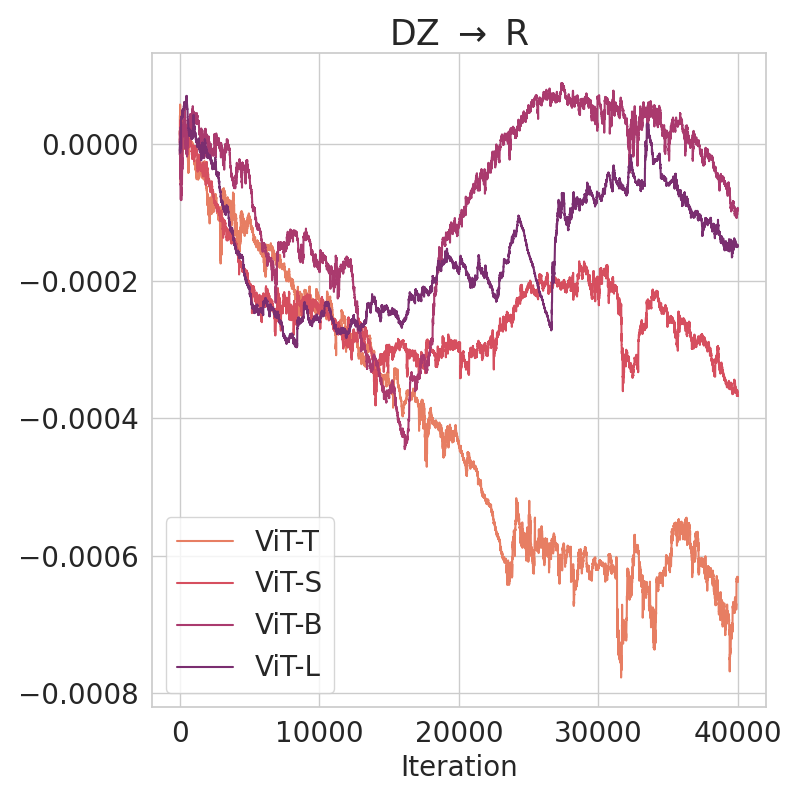}
\end{subfigure}
\begin{subfigure}{\figlength\textwidth}
\includegraphics[width=0.99\textwidth]{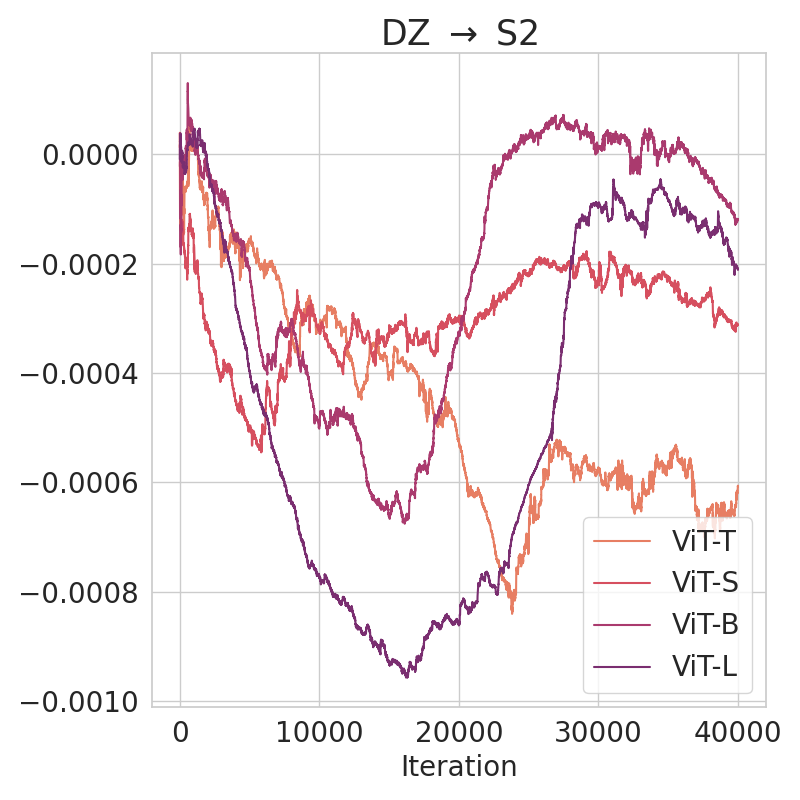}
\end{subfigure}
\begin{subfigure}{\figlength\textwidth}
\includegraphics[width=0.99\textwidth]{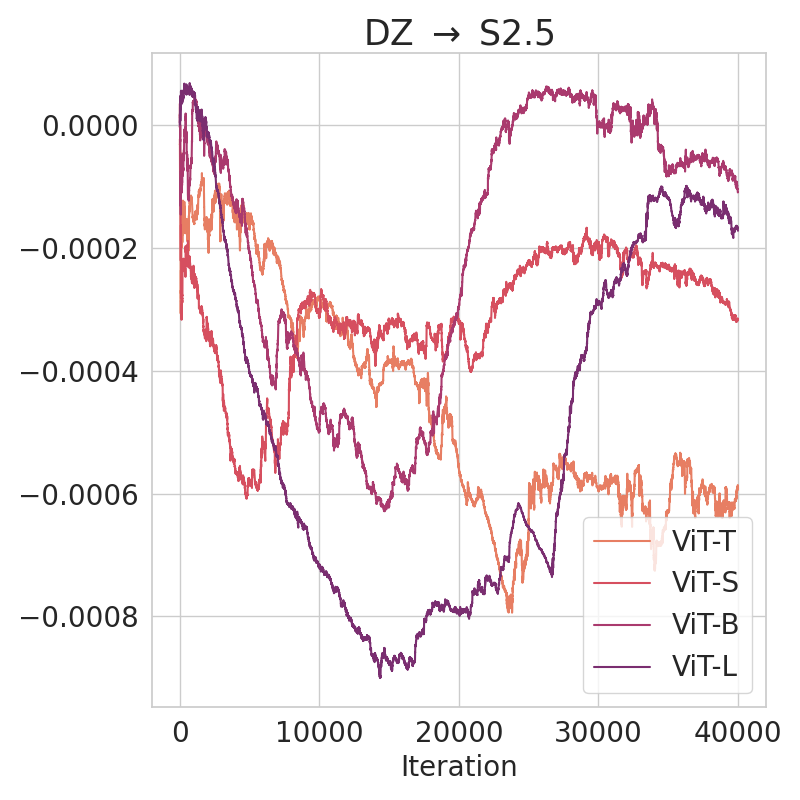}
\end{subfigure}
\begin{subfigure}{\figlength\textwidth}
\includegraphics[width=0.99\textwidth]{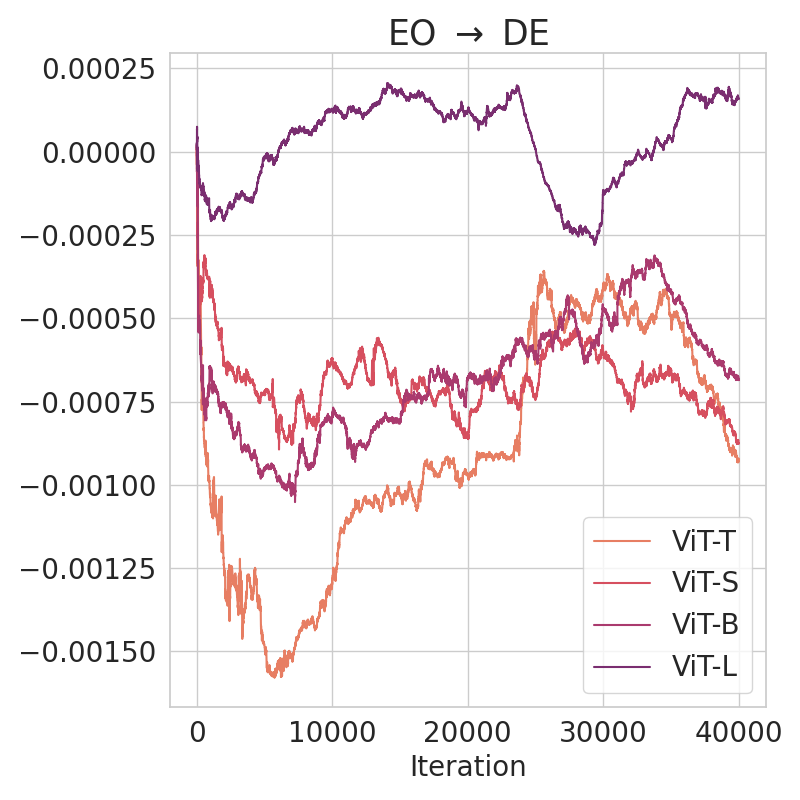}
\end{subfigure}
\begin{subfigure}{\figlength\textwidth}
\includegraphics[width=0.99\textwidth]{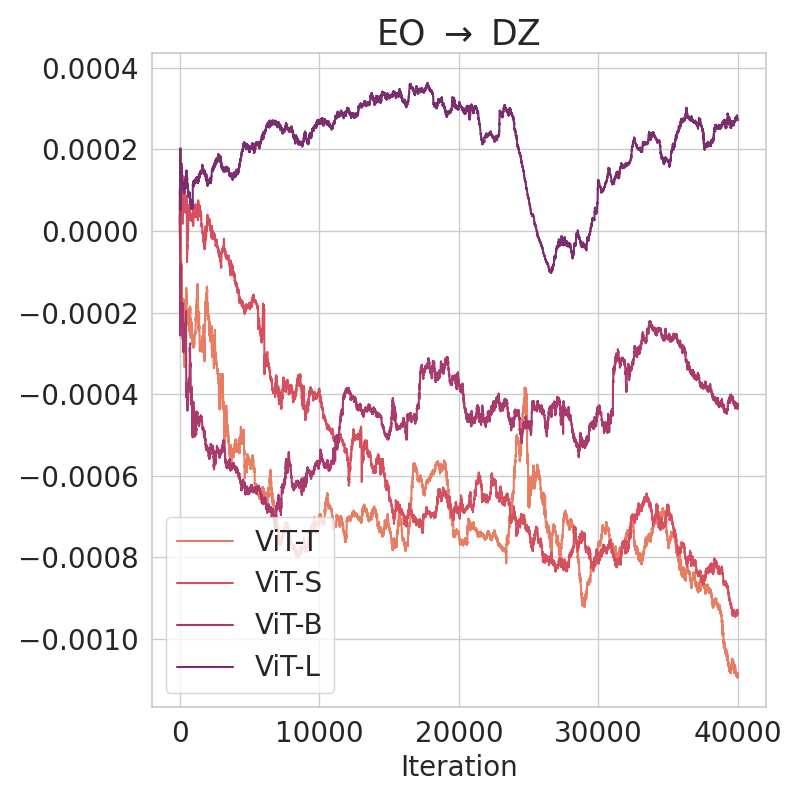}
\end{subfigure}
\begin{subfigure}{\figlength\textwidth}
\includegraphics[width=0.99\textwidth]{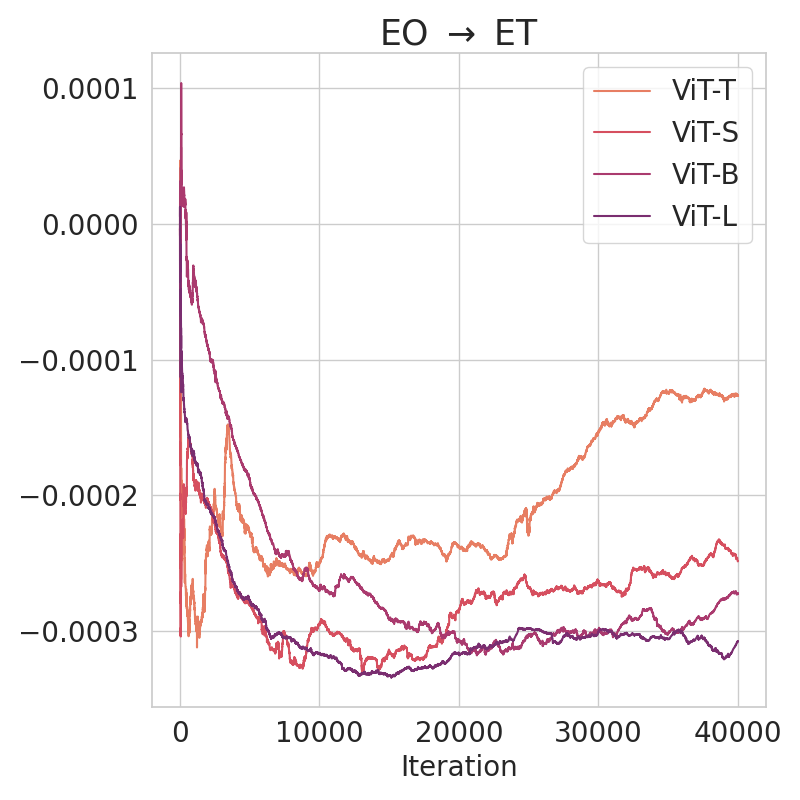}
\end{subfigure}
\begin{subfigure}{\figlength\textwidth}
\includegraphics[width=0.99\textwidth]{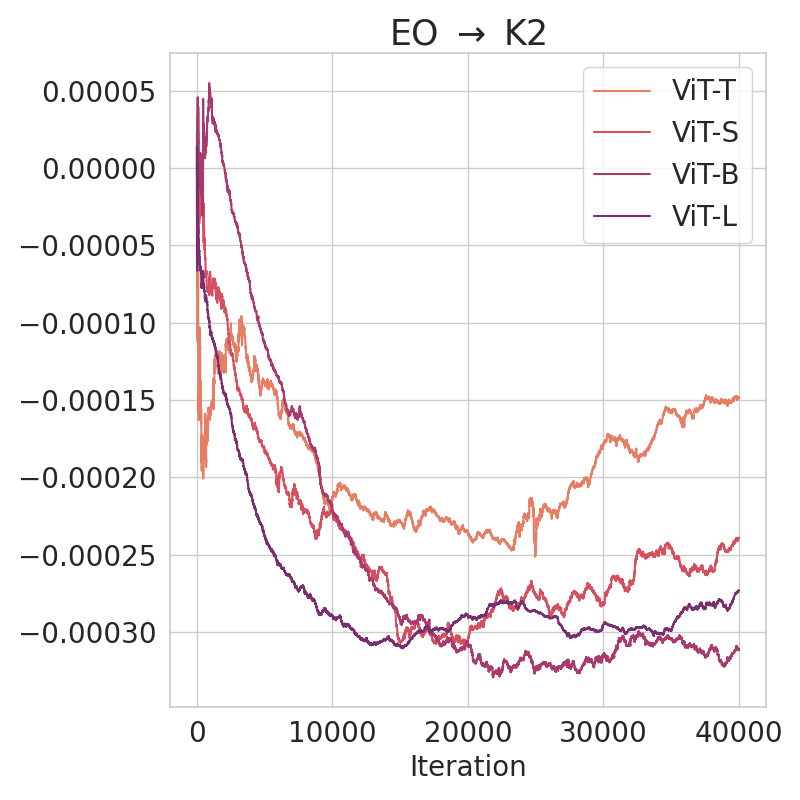}
\end{subfigure}
\begin{subfigure}{\figlength\textwidth}
\includegraphics[width=0.99\textwidth]{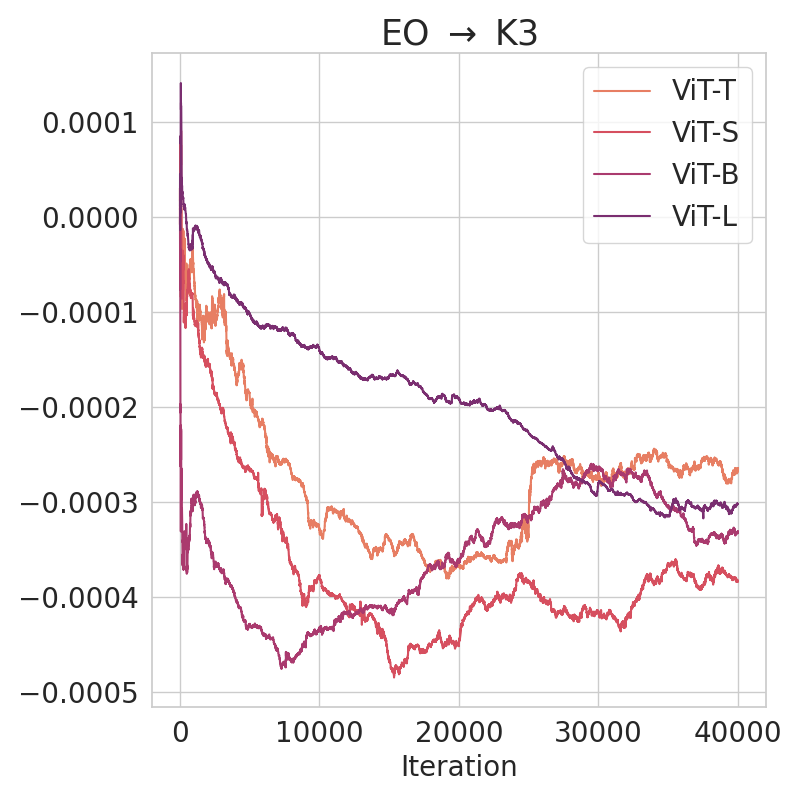}
\end{subfigure}
\begin{subfigure}{\figlength\textwidth}
\includegraphics[width=0.99\textwidth]{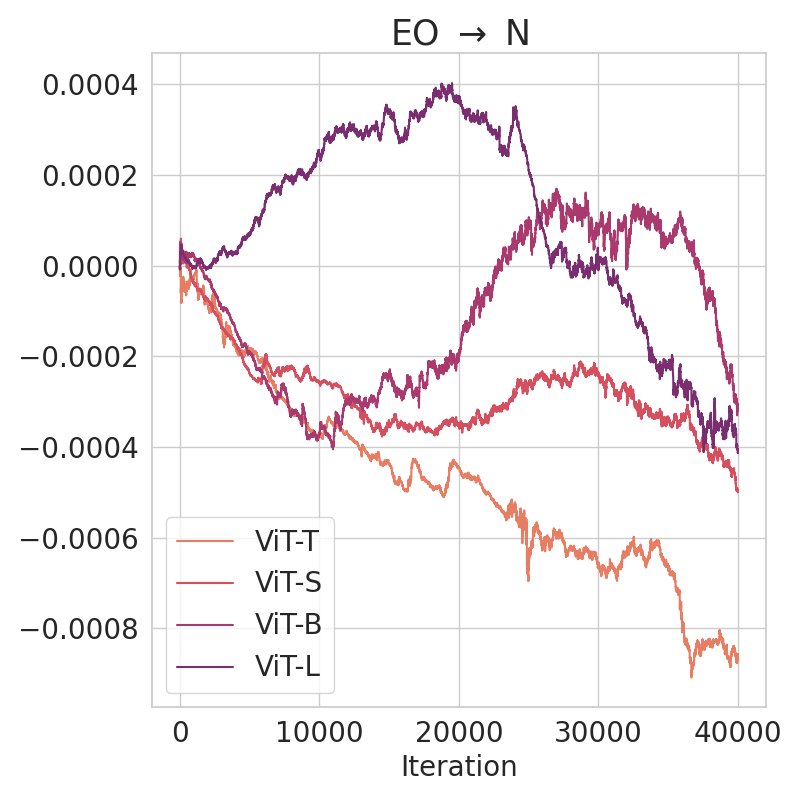}
\end{subfigure}
\begin{subfigure}{\figlength\textwidth}
\includegraphics[width=0.99\textwidth]{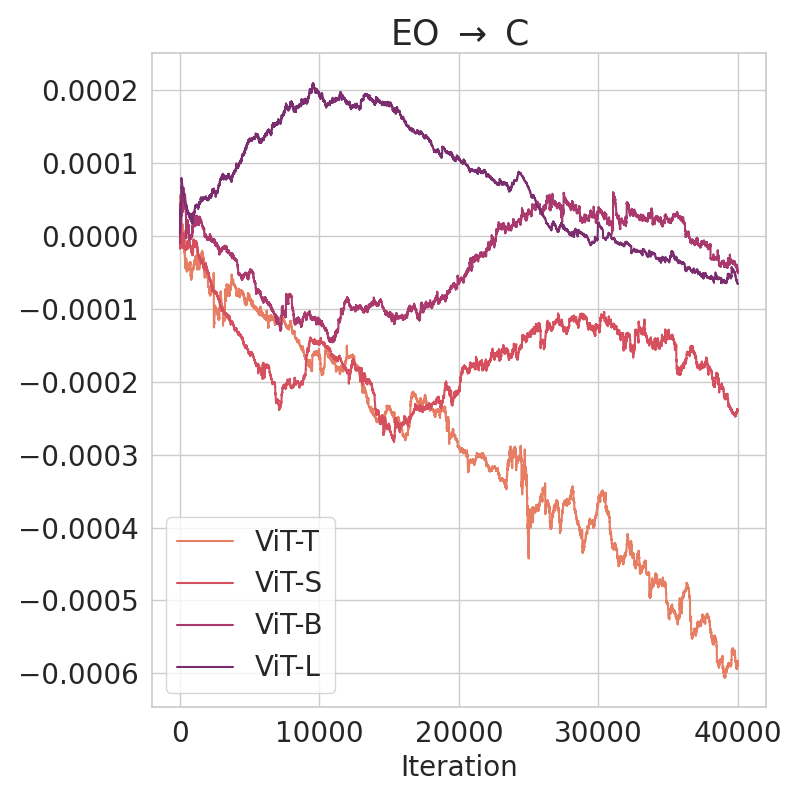}
\end{subfigure}
\begin{subfigure}{\figlength\textwidth}
\includegraphics[width=0.99\textwidth]{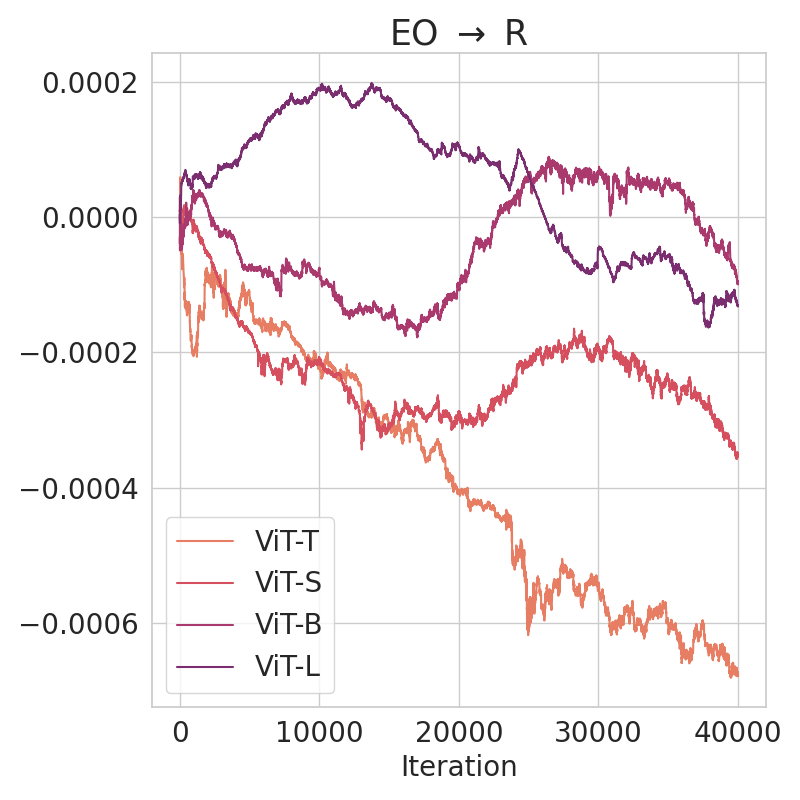}
\end{subfigure}
\begin{subfigure}{\figlength\textwidth}
\includegraphics[width=0.99\textwidth]{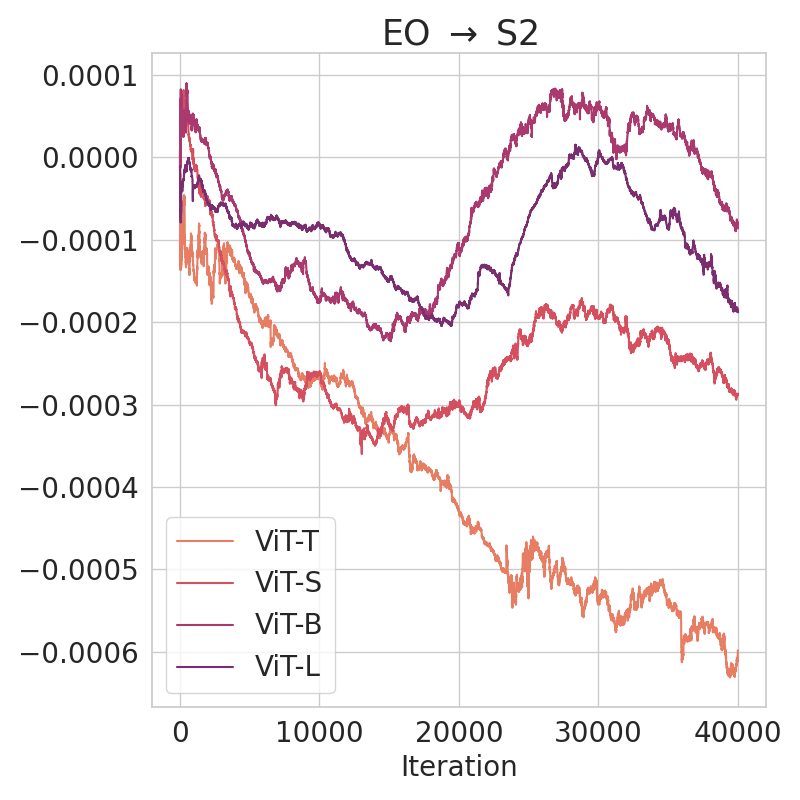}
\end{subfigure}
\begin{subfigure}{\figlength\textwidth}
\includegraphics[width=0.99\textwidth]{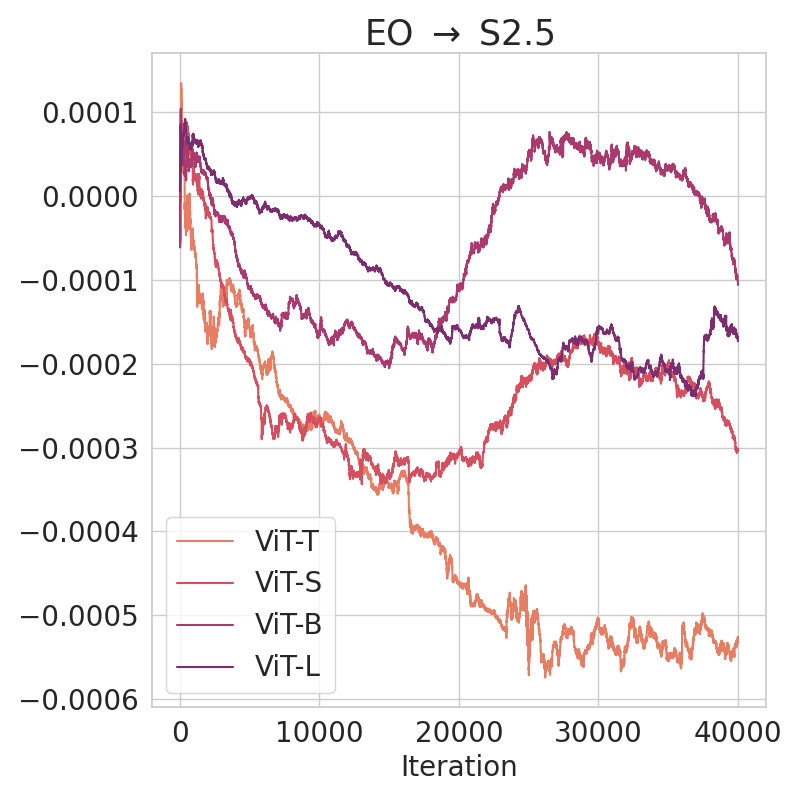}
\end{subfigure}
\begin{subfigure}{\figlength\textwidth}
\includegraphics[width=0.99\textwidth]{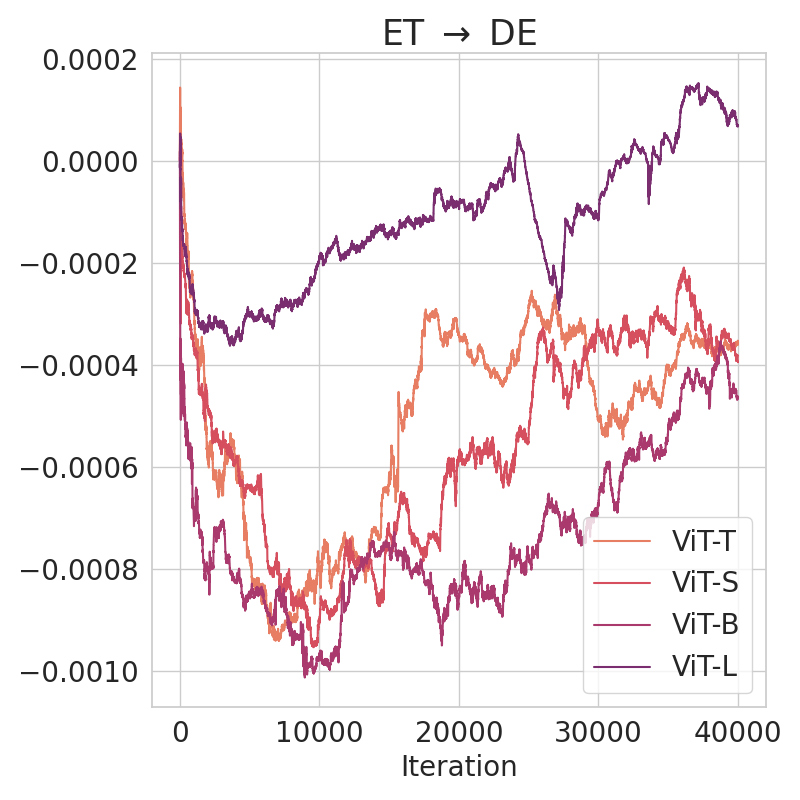}
\end{subfigure}
\begin{subfigure}{\figlength\textwidth}
\includegraphics[width=0.99\textwidth]{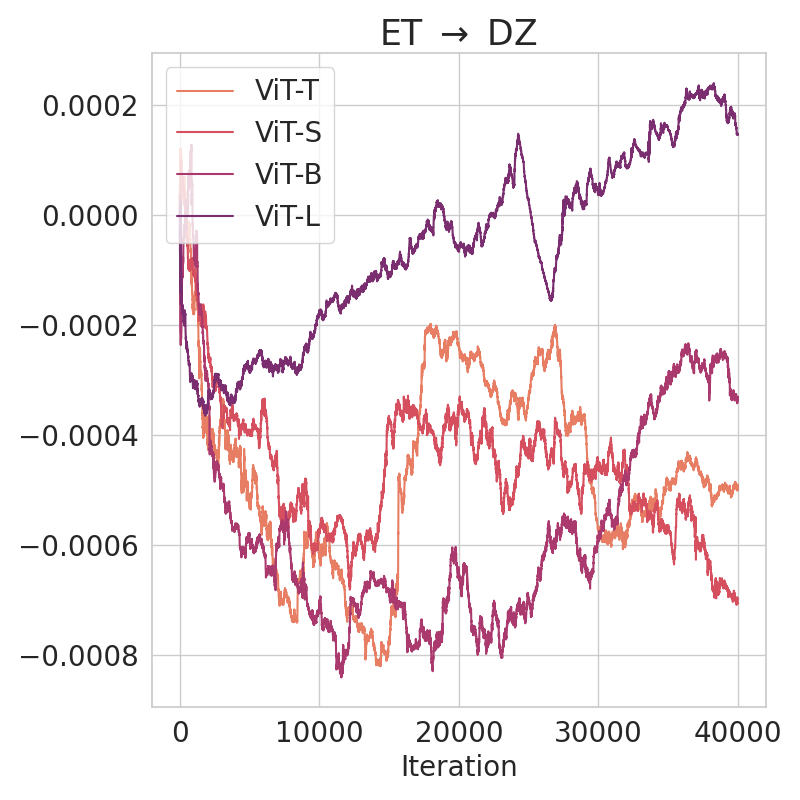}
\end{subfigure}
\begin{subfigure}{\figlength\textwidth}
\includegraphics[width=0.99\textwidth]{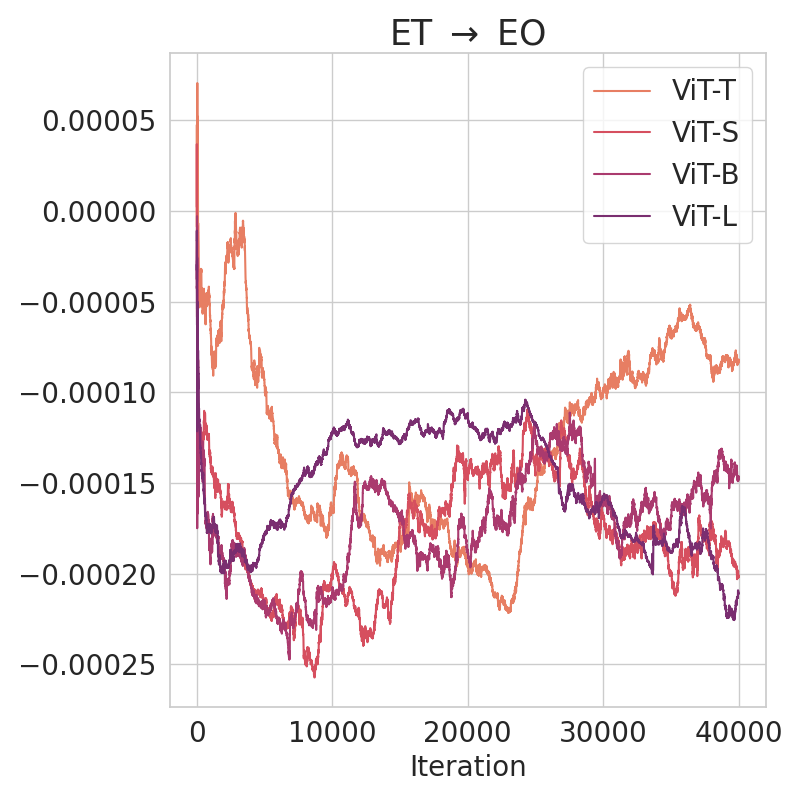}
\end{subfigure}
\begin{subfigure}{\figlength\textwidth}
\includegraphics[width=0.99\textwidth]{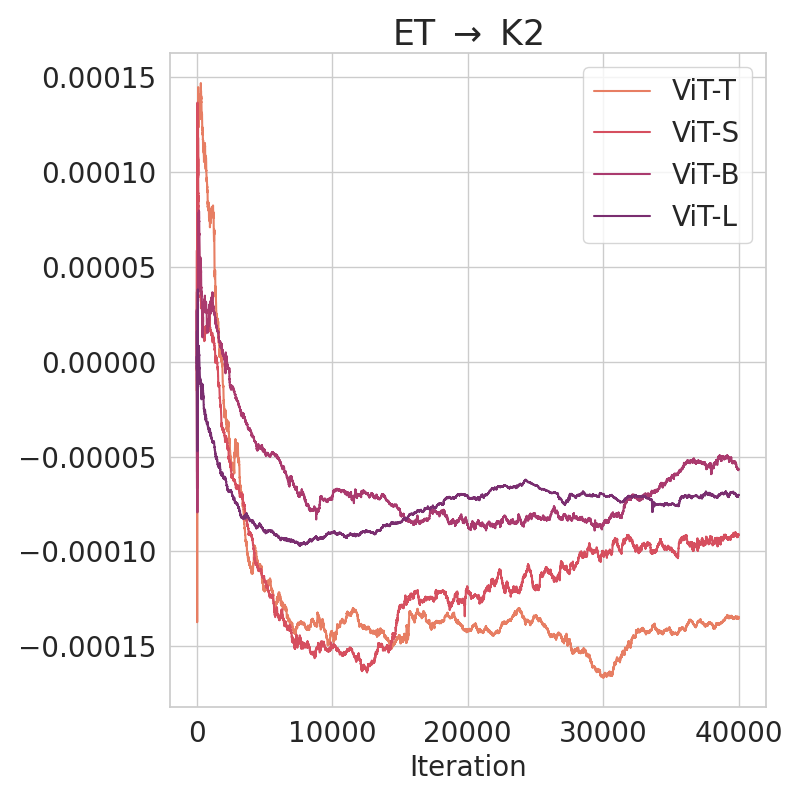}
\end{subfigure}
\begin{subfigure}{\figlength\textwidth}
\includegraphics[width=0.99\textwidth]{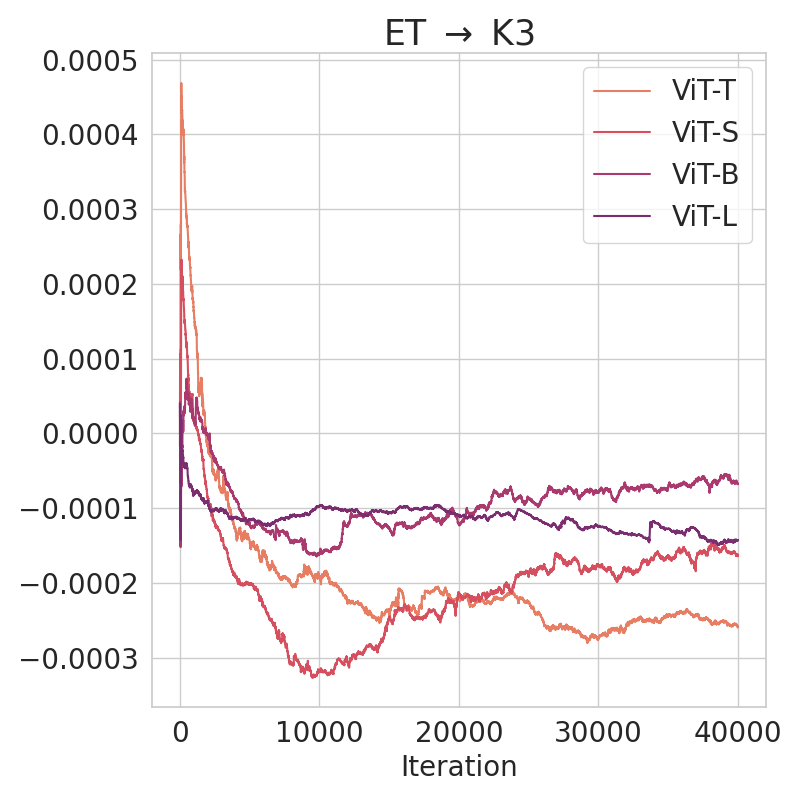}
\end{subfigure}
\begin{subfigure}{\figlength\textwidth}
\includegraphics[width=0.99\textwidth]{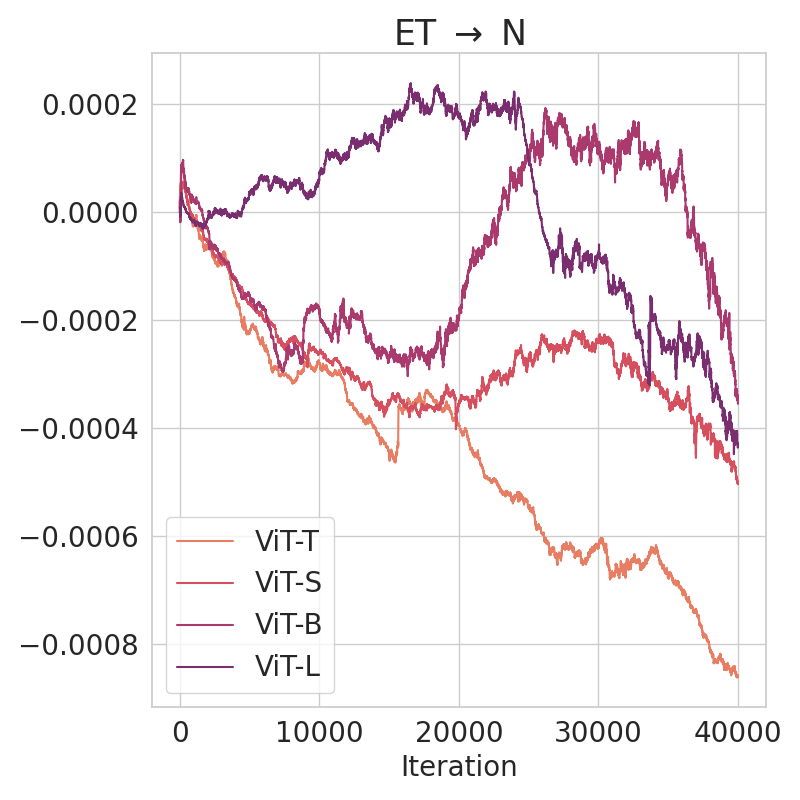}
\end{subfigure}
\begin{subfigure}{\figlength\textwidth}
\includegraphics[width=0.99\textwidth]{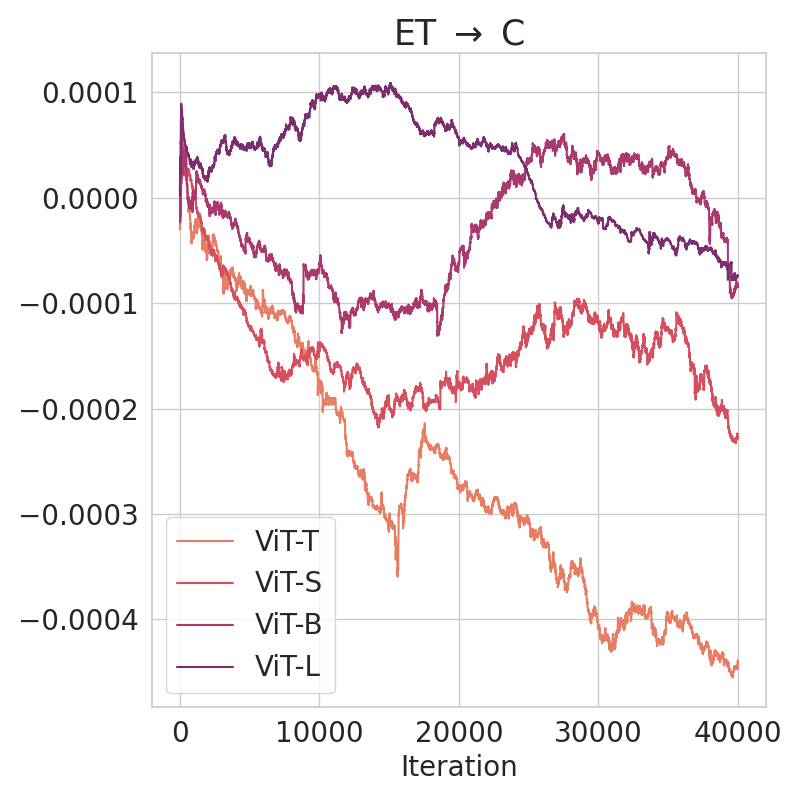}
\end{subfigure}
\begin{subfigure}{\figlength\textwidth}
\includegraphics[width=0.99\textwidth]{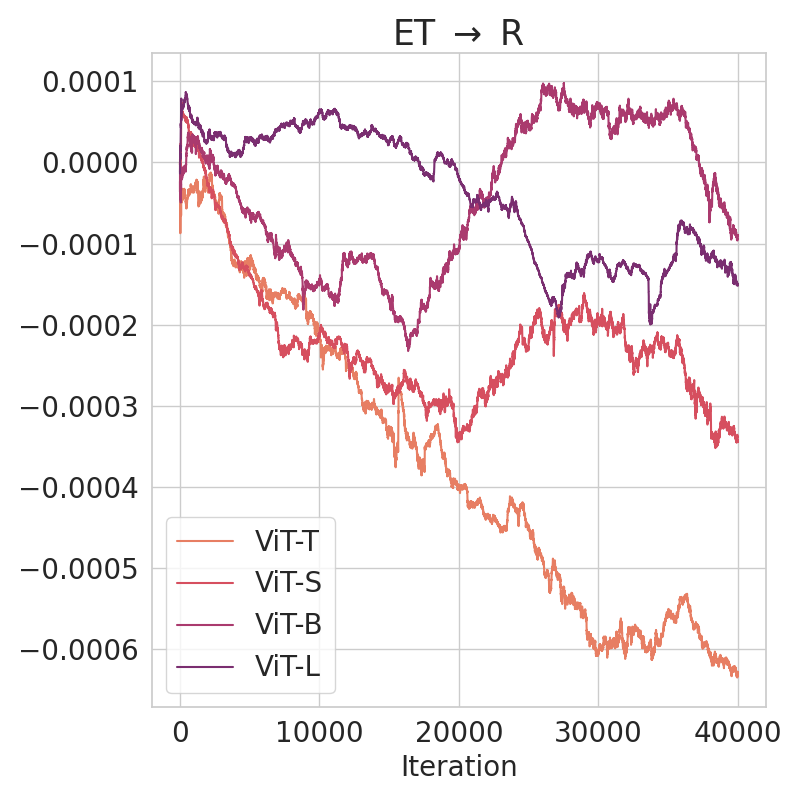}
\end{subfigure}
\begin{subfigure}{\figlength\textwidth}
\includegraphics[width=0.99\textwidth]{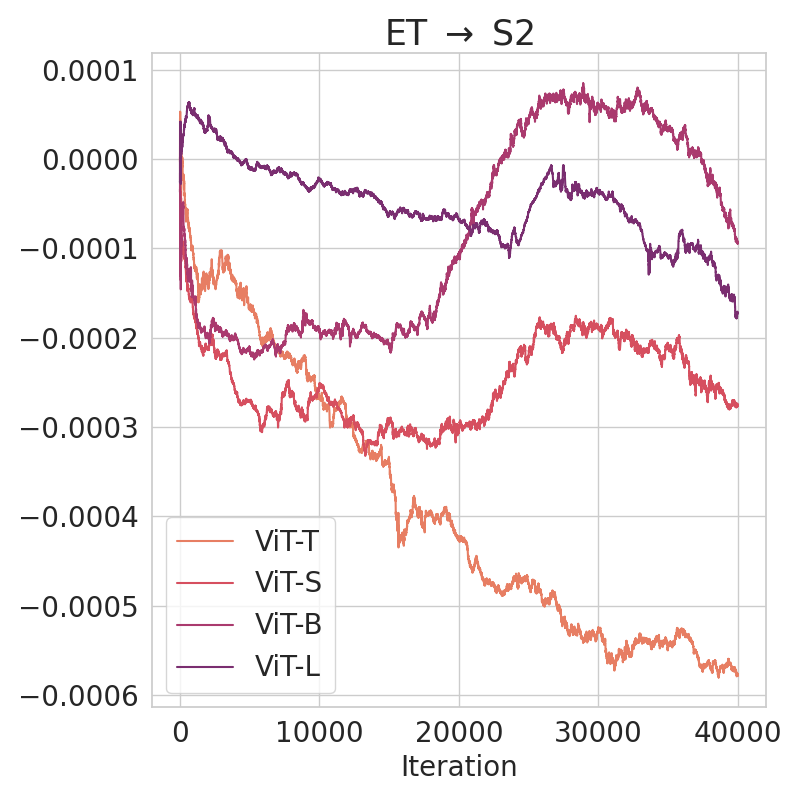}
\end{subfigure}
\begin{subfigure}{\figlength\textwidth}
\includegraphics[width=0.99\textwidth]{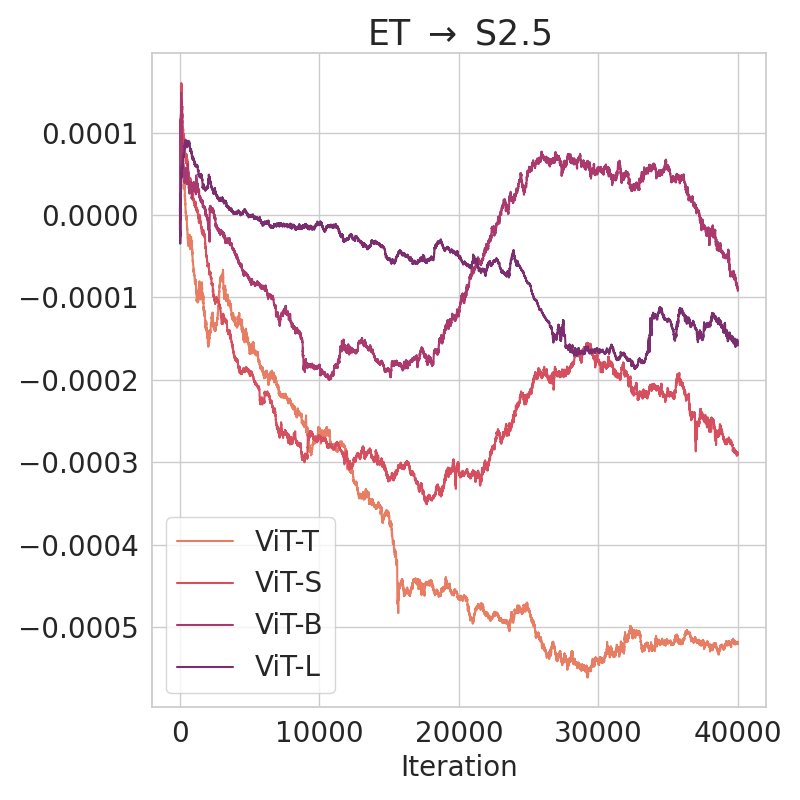}
\end{subfigure}
\begin{subfigure}{\figlength\textwidth}
\includegraphics[width=0.99\textwidth]{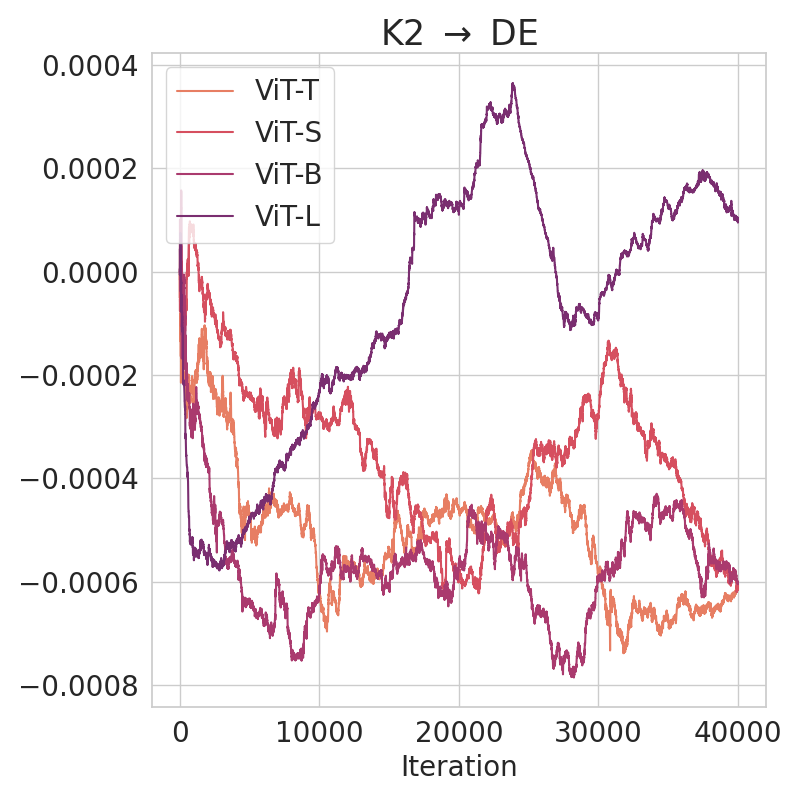}
\end{subfigure}
\begin{subfigure}{\figlength\textwidth}
\includegraphics[width=0.99\textwidth]{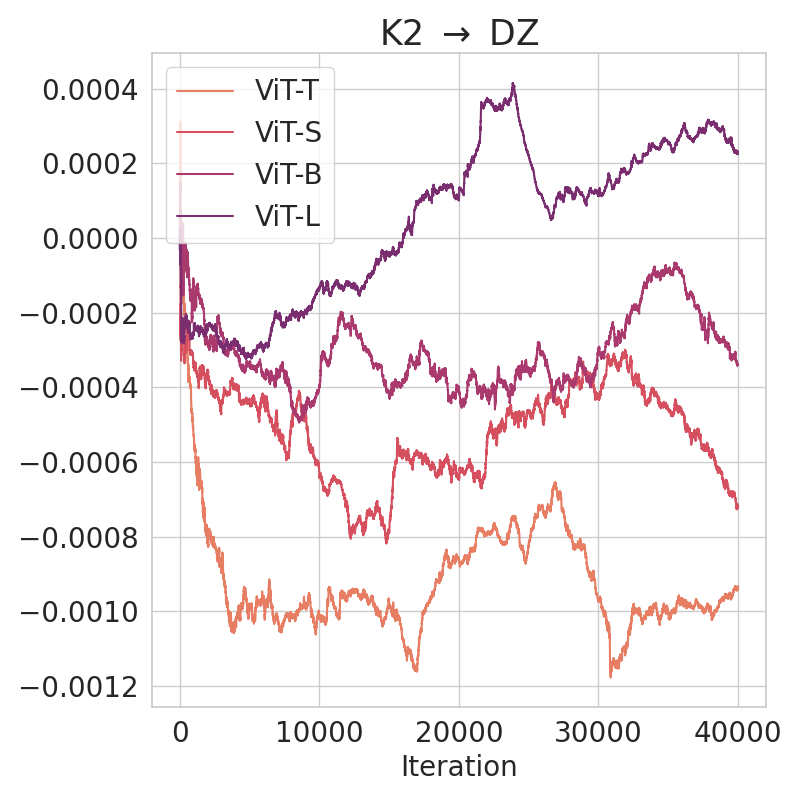}
\end{subfigure}
\begin{subfigure}{\figlength\textwidth}
\includegraphics[width=0.99\textwidth]{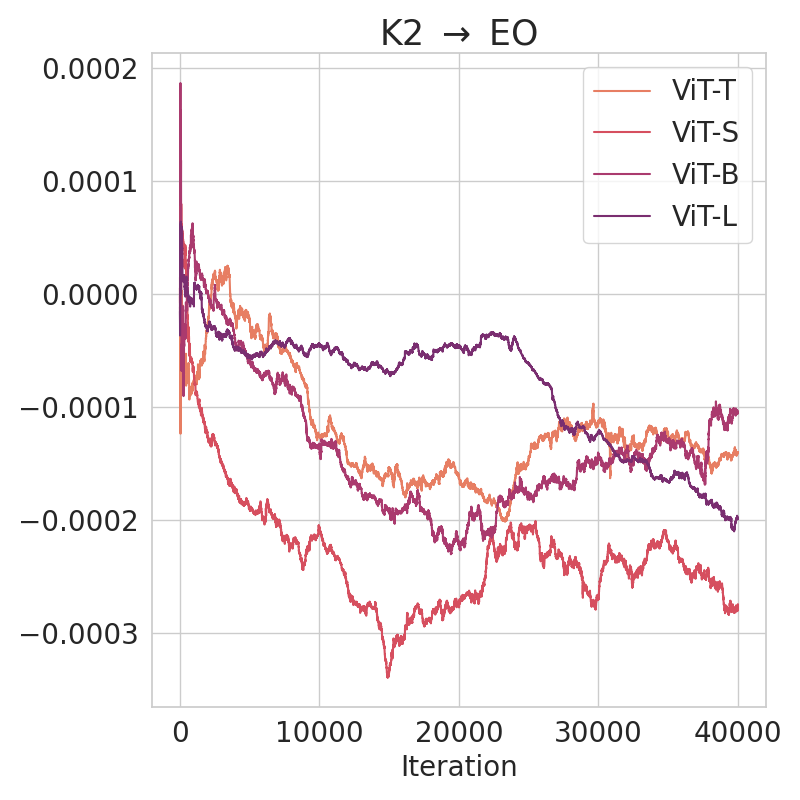}
\end{subfigure}
\begin{subfigure}{\figlength\textwidth}
\includegraphics[width=0.99\textwidth]{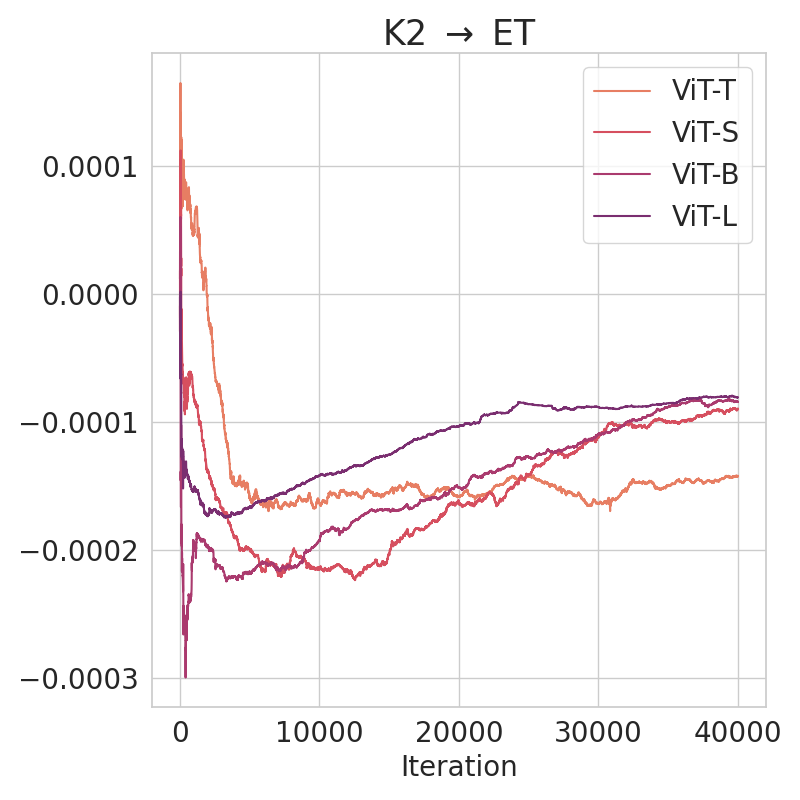}
\end{subfigure}
\begin{subfigure}{\figlength\textwidth}
\includegraphics[width=0.99\textwidth]{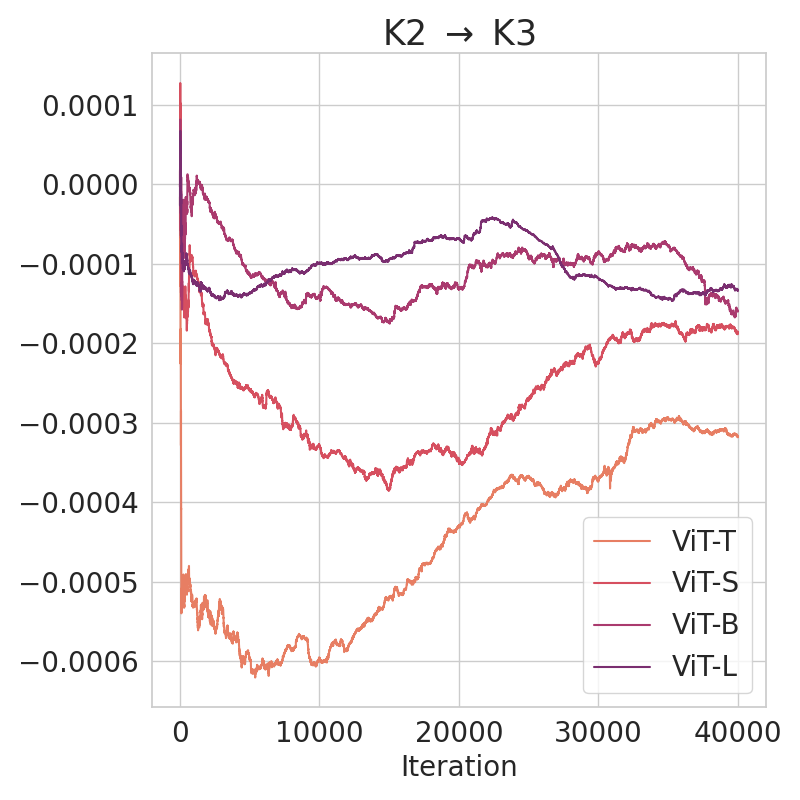}
\end{subfigure}
\begin{subfigure}{\figlength\textwidth}
\includegraphics[width=0.99\textwidth]{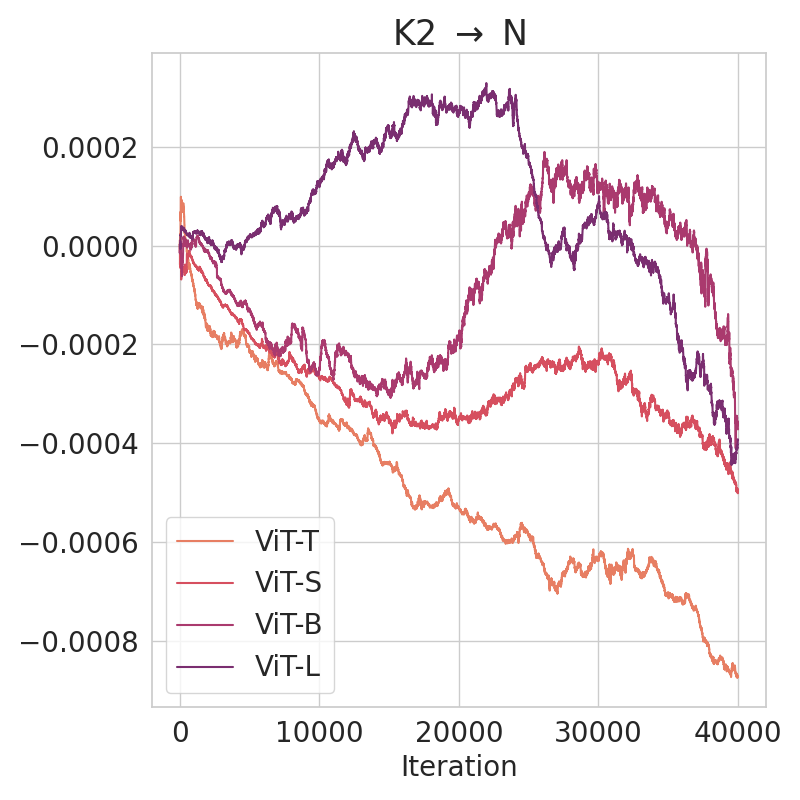}
\end{subfigure}
\begin{subfigure}{\figlength\textwidth}
\includegraphics[width=0.99\textwidth]{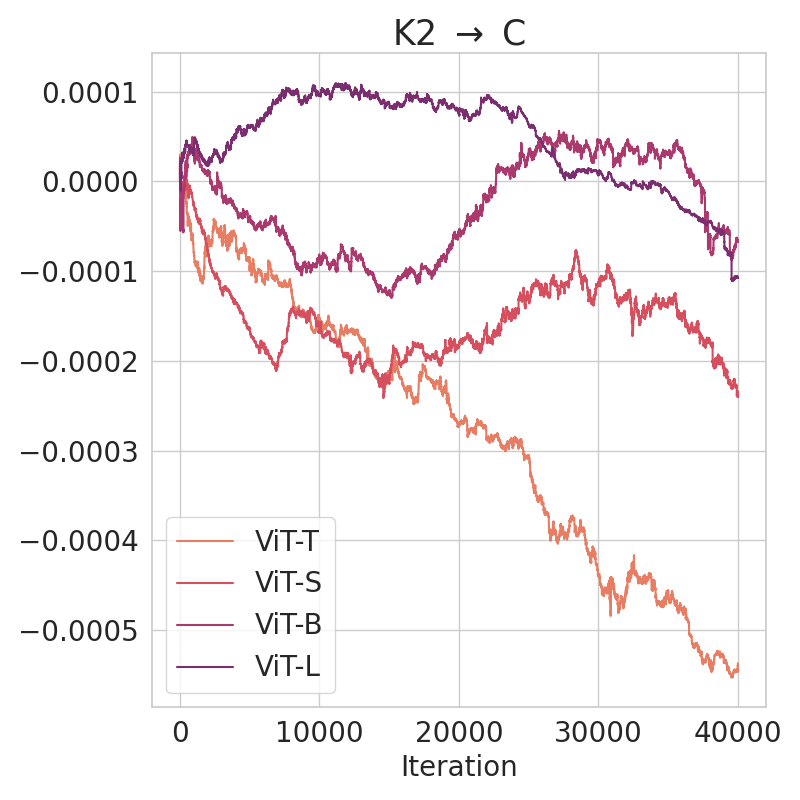}
\end{subfigure}
\begin{subfigure}{\figlength\textwidth}
\includegraphics[width=0.99\textwidth]{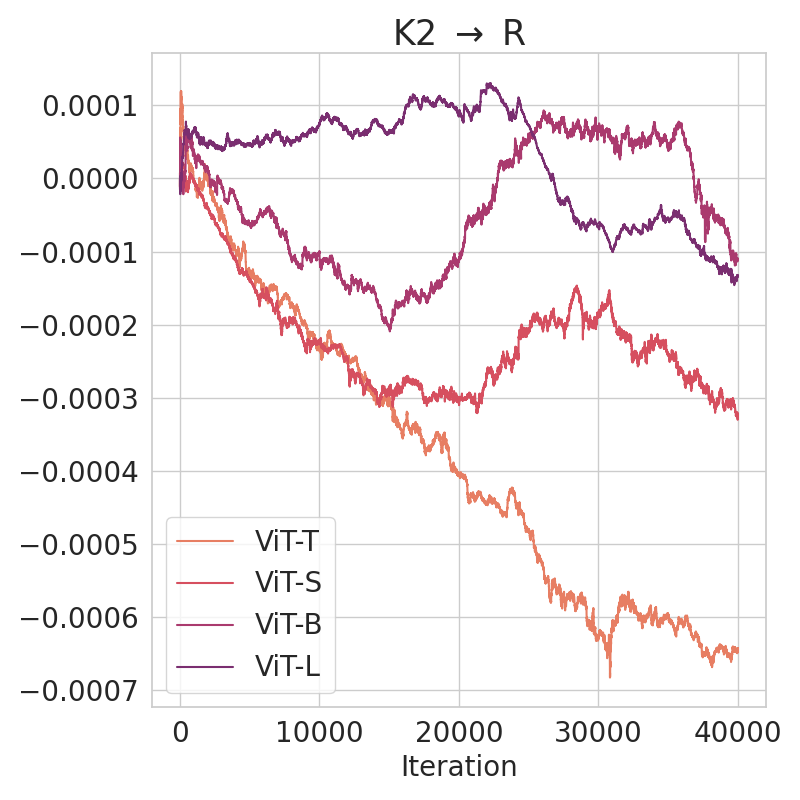}
\end{subfigure}
\begin{subfigure}{\figlength\textwidth}
\includegraphics[width=0.99\textwidth]{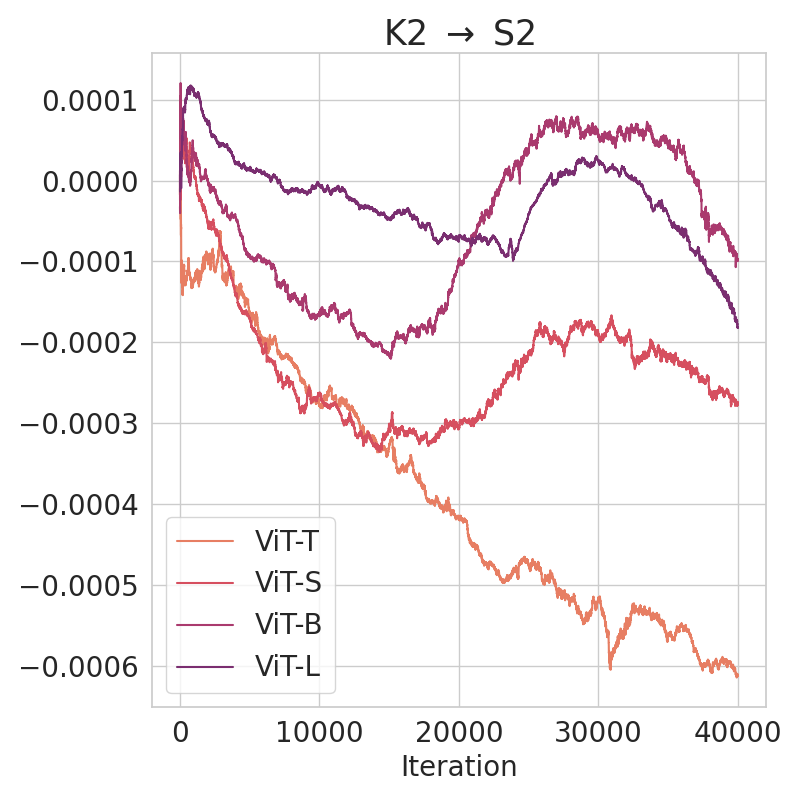}
\end{subfigure}
\begin{subfigure}{\figlength\textwidth}
\includegraphics[width=0.99\textwidth]{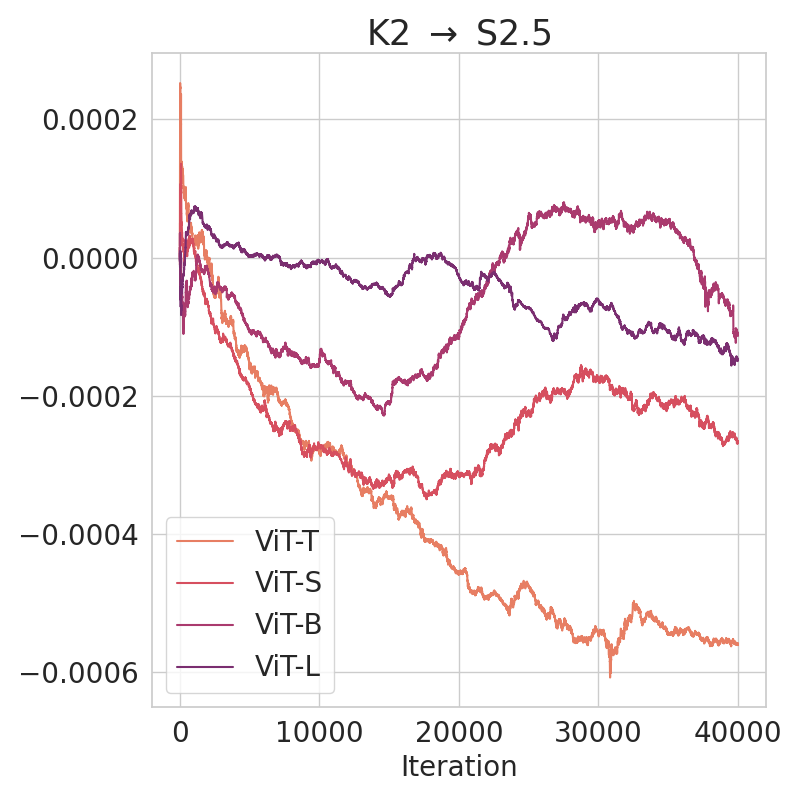}
\end{subfigure}
\begin{subfigure}{\figlength\textwidth}
\includegraphics[width=0.99\textwidth]{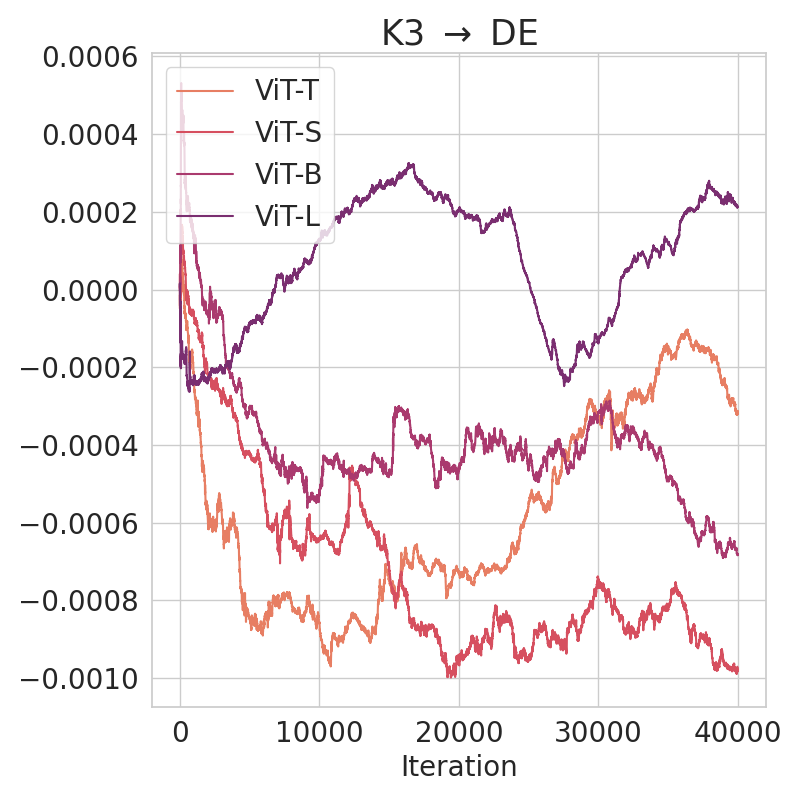}
\end{subfigure}
\begin{subfigure}{\figlength\textwidth}
\includegraphics[width=0.99\textwidth]{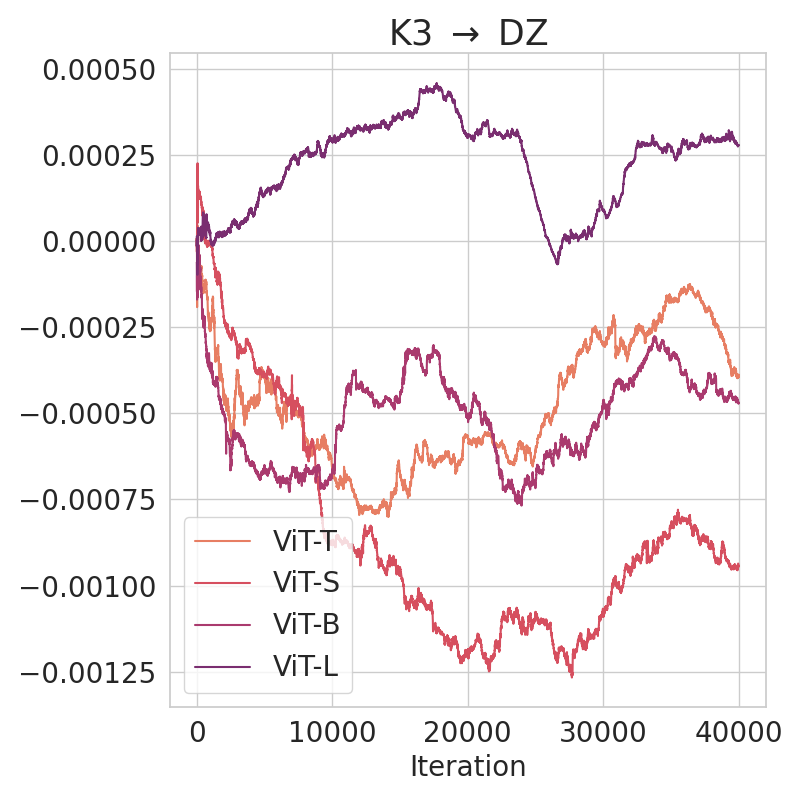}
\end{subfigure}
\begin{subfigure}{\figlength\textwidth}
\includegraphics[width=0.99\textwidth]{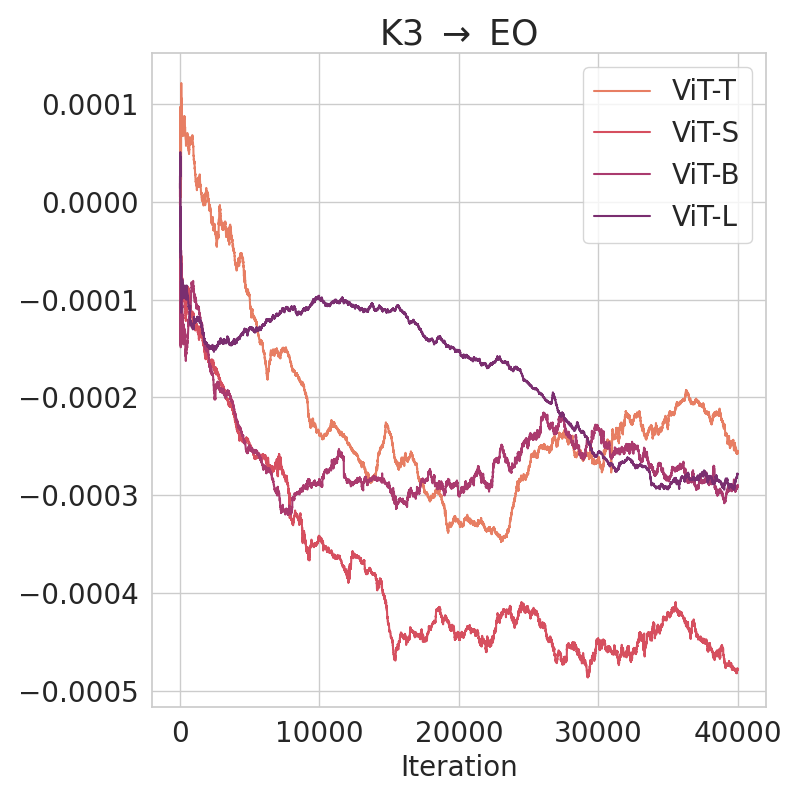}
\end{subfigure}
\begin{subfigure}{\figlength\textwidth}
\includegraphics[width=0.99\textwidth]{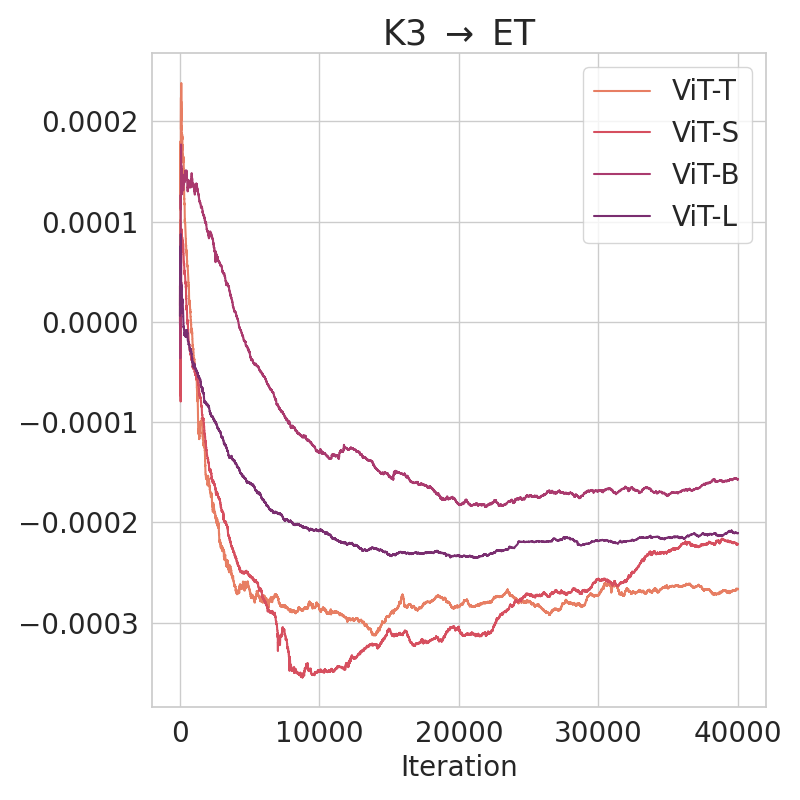}
\end{subfigure}
\begin{subfigure}{\figlength\textwidth}
\includegraphics[width=0.99\textwidth]{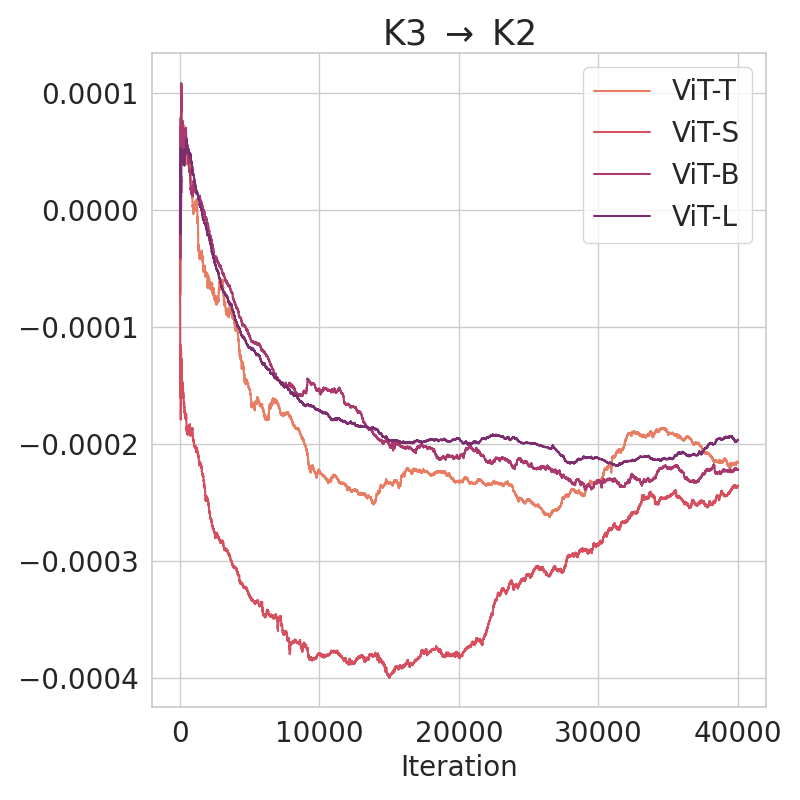}
\end{subfigure}
\begin{subfigure}{\figlength\textwidth}
\includegraphics[width=0.99\textwidth]{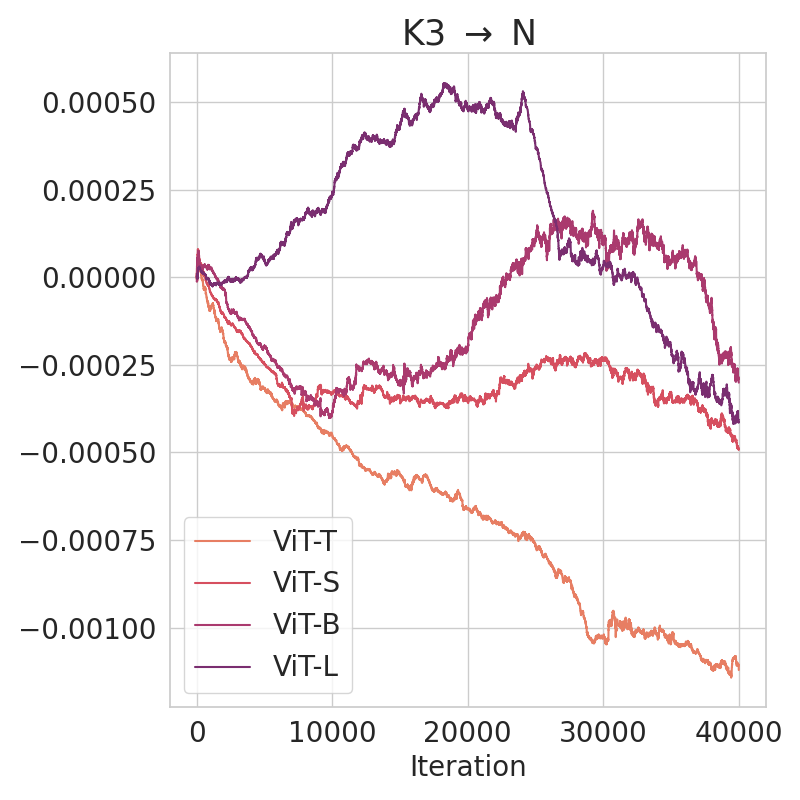}
\end{subfigure}
\begin{subfigure}{\figlength\textwidth}
\includegraphics[width=0.99\textwidth]{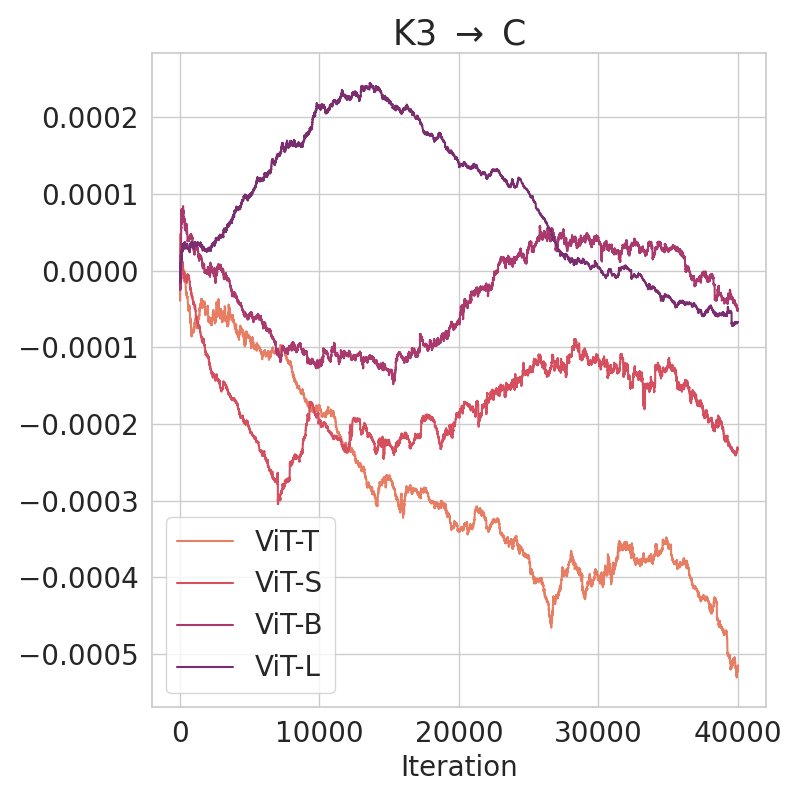}
\end{subfigure}
\begin{subfigure}{\figlength\textwidth}
\includegraphics[width=0.99\textwidth]{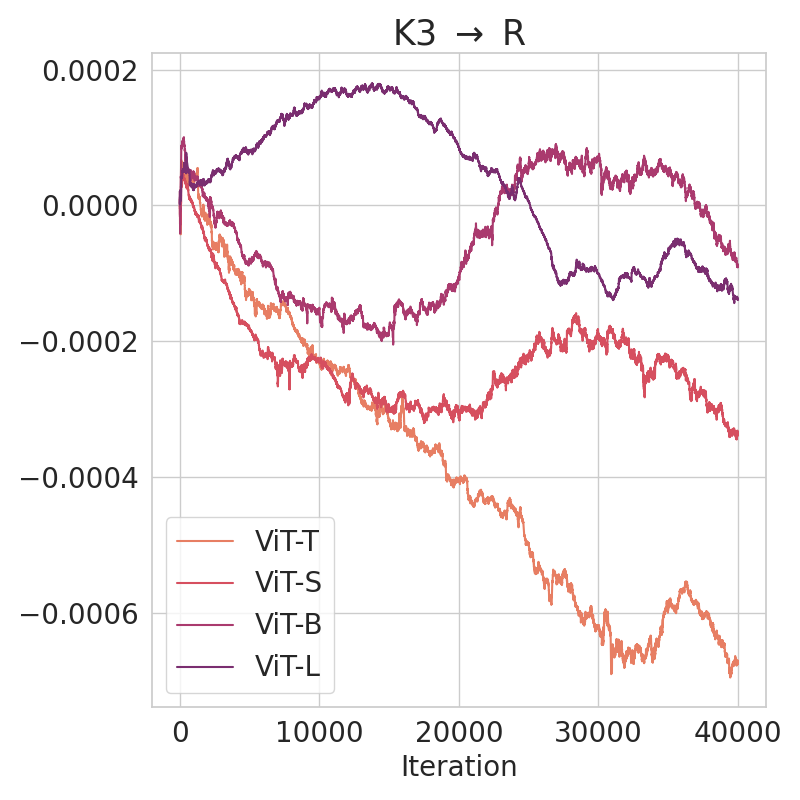}
\end{subfigure}
\begin{subfigure}{\figlength\textwidth}
\includegraphics[width=0.99\textwidth]{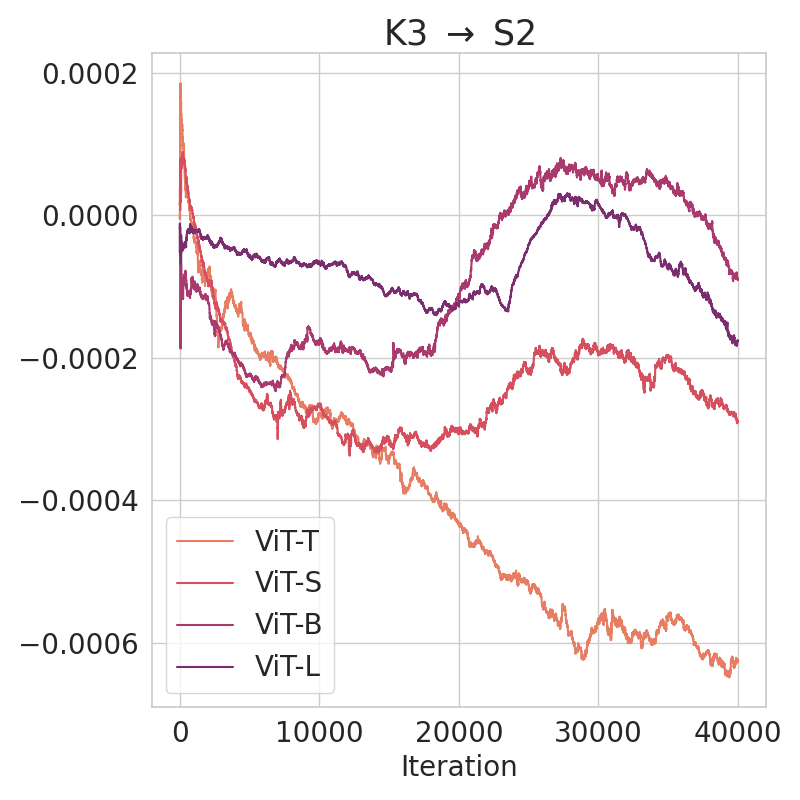}
\end{subfigure}
\begin{subfigure}{\figlength\textwidth}
\includegraphics[width=0.99\textwidth]{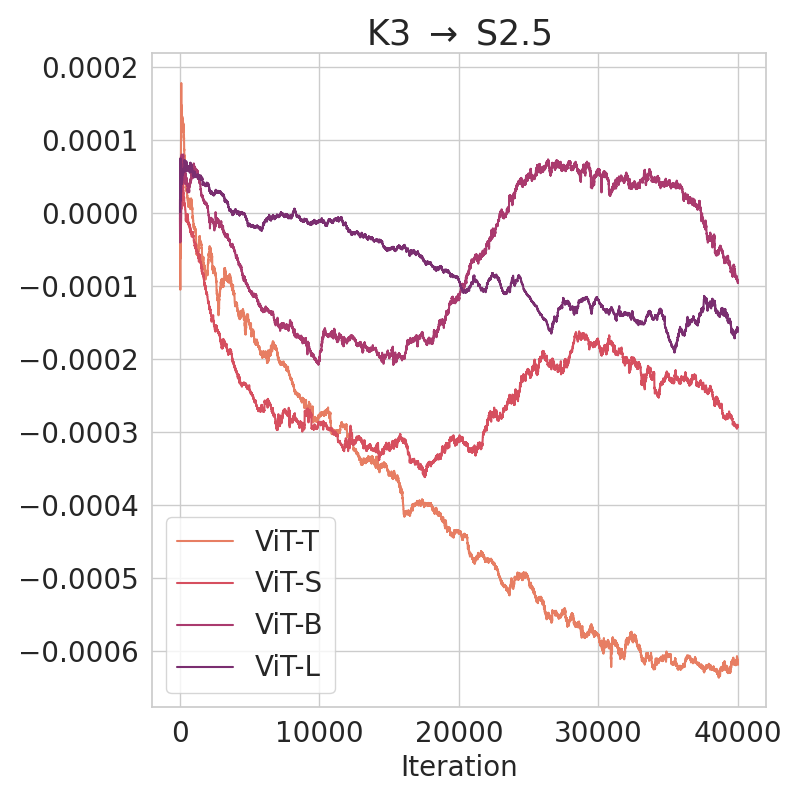}
\end{subfigure}
\begin{subfigure}{\figlength\textwidth}
\includegraphics[width=0.99\textwidth]{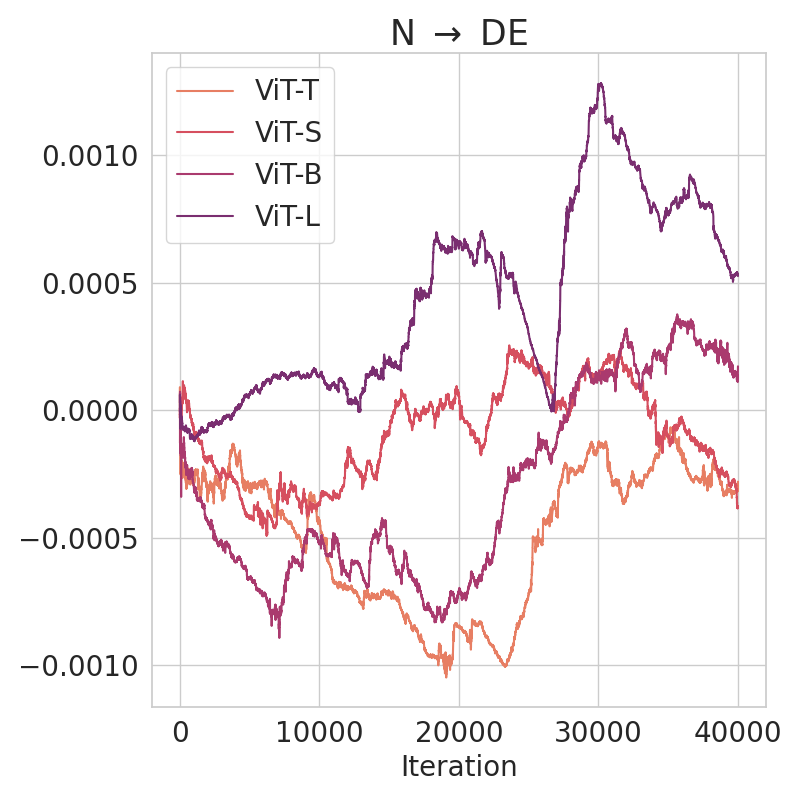}
\end{subfigure}
\begin{subfigure}{\figlength\textwidth}
\includegraphics[width=0.99\textwidth]{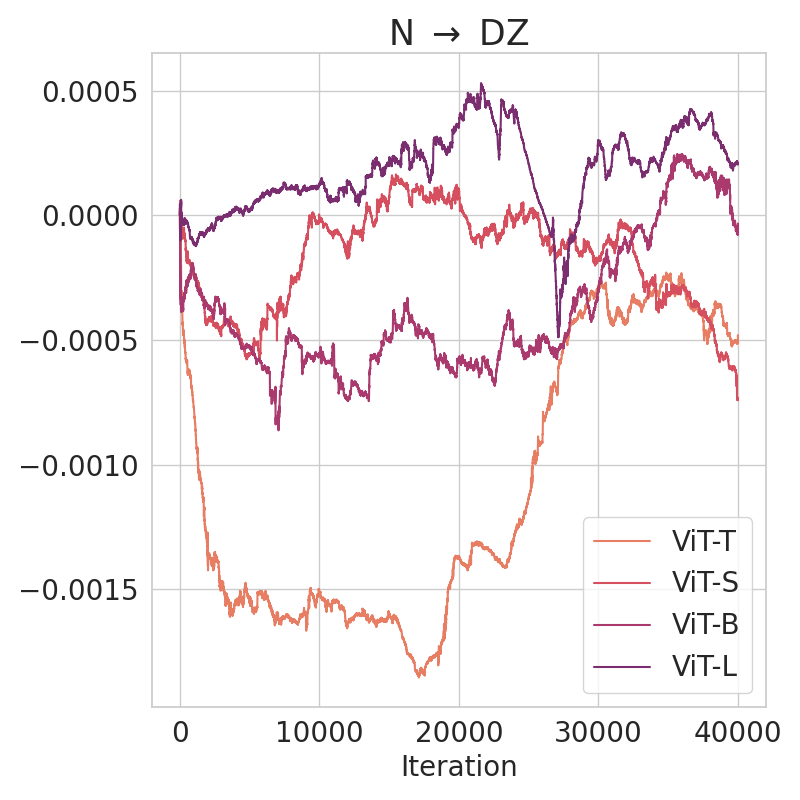}
\end{subfigure}
\begin{subfigure}{\figlength\textwidth}
\includegraphics[width=0.99\textwidth]{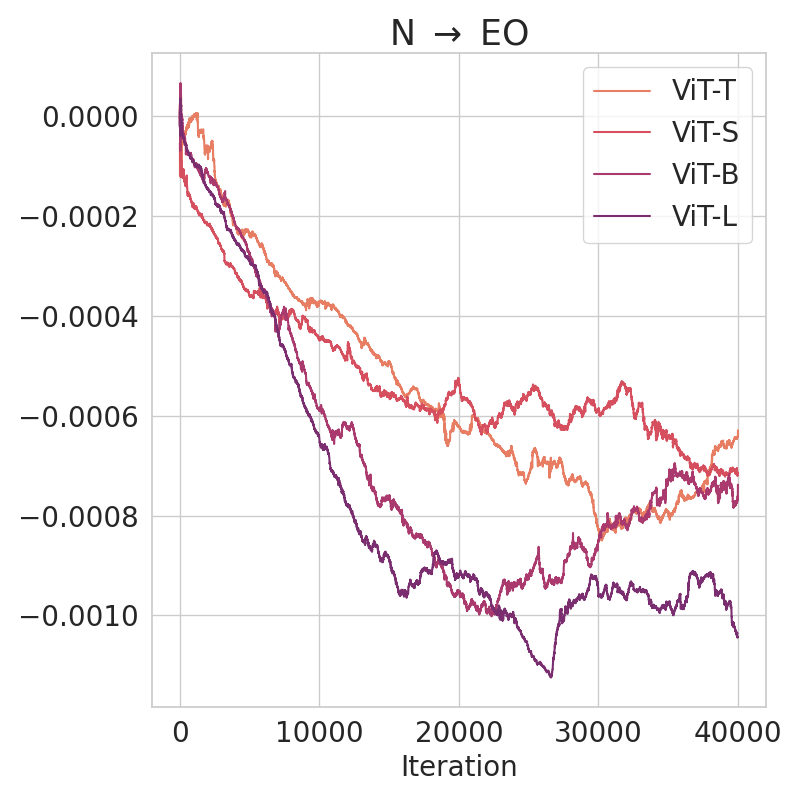}
\end{subfigure}
\begin{subfigure}{\figlength\textwidth}
\includegraphics[width=0.99\textwidth]{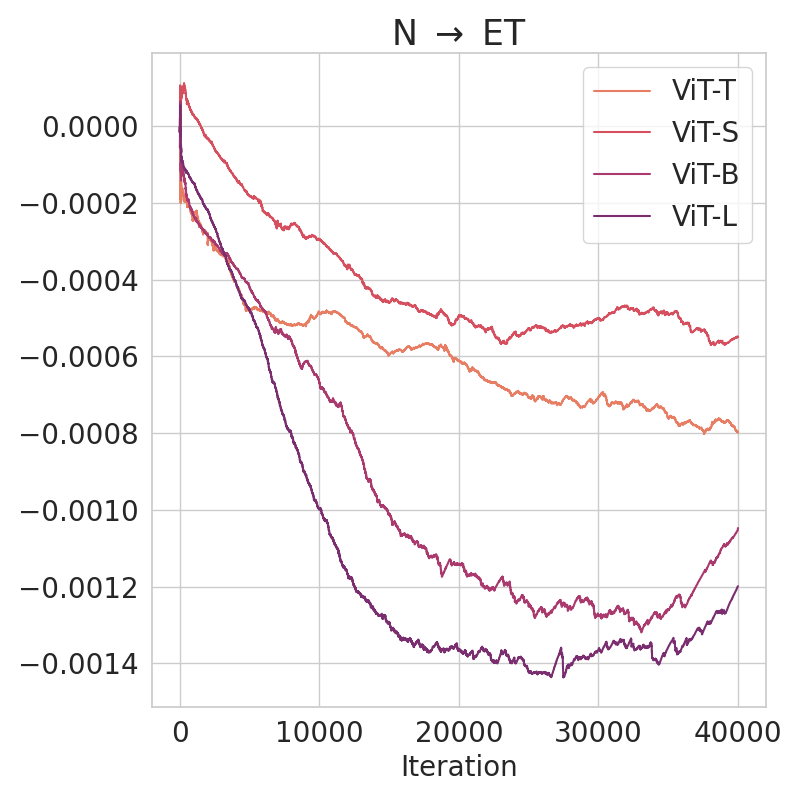}
\end{subfigure}
\begin{subfigure}{\figlength\textwidth}
\includegraphics[width=0.99\textwidth]{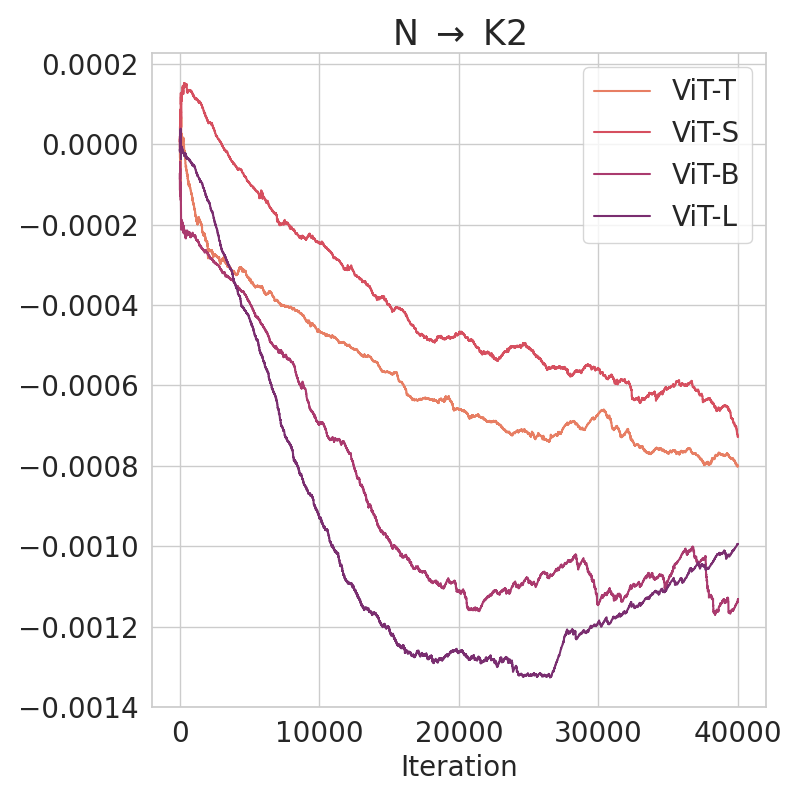}
\end{subfigure}
\begin{subfigure}{\figlength\textwidth}
\includegraphics[width=0.99\textwidth]{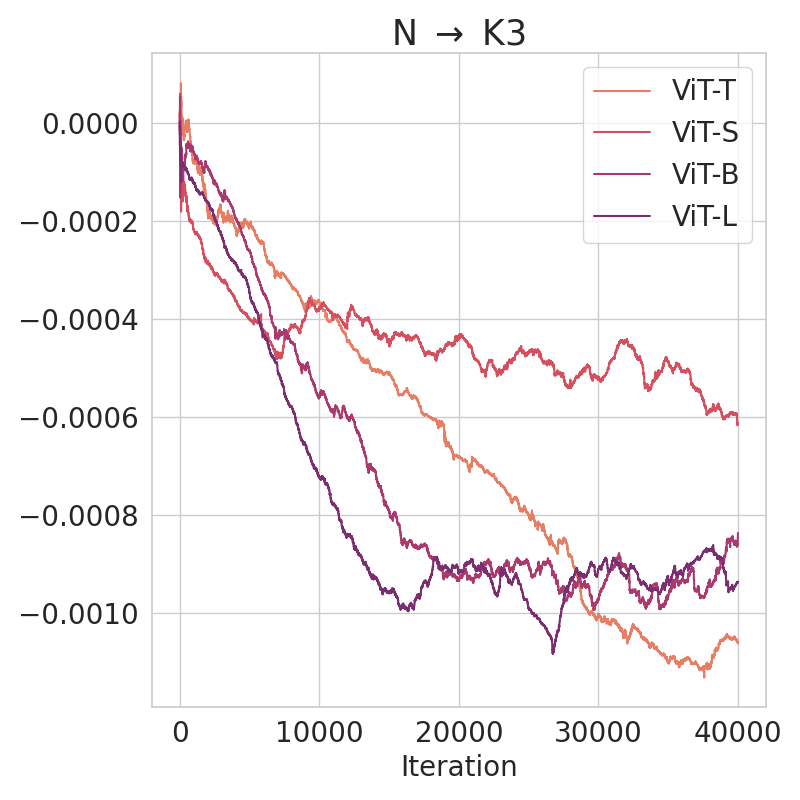}
\end{subfigure}
\caption{Changes in proximal inter-task affinity during the optimization process of ViT-L with Taskonomy benchmark.}
\end{figure}

\begin{figure}[h]\ContinuedFloat
\centering
\begin{subfigure}{\figlength\textwidth}
\includegraphics[width=0.99\textwidth]{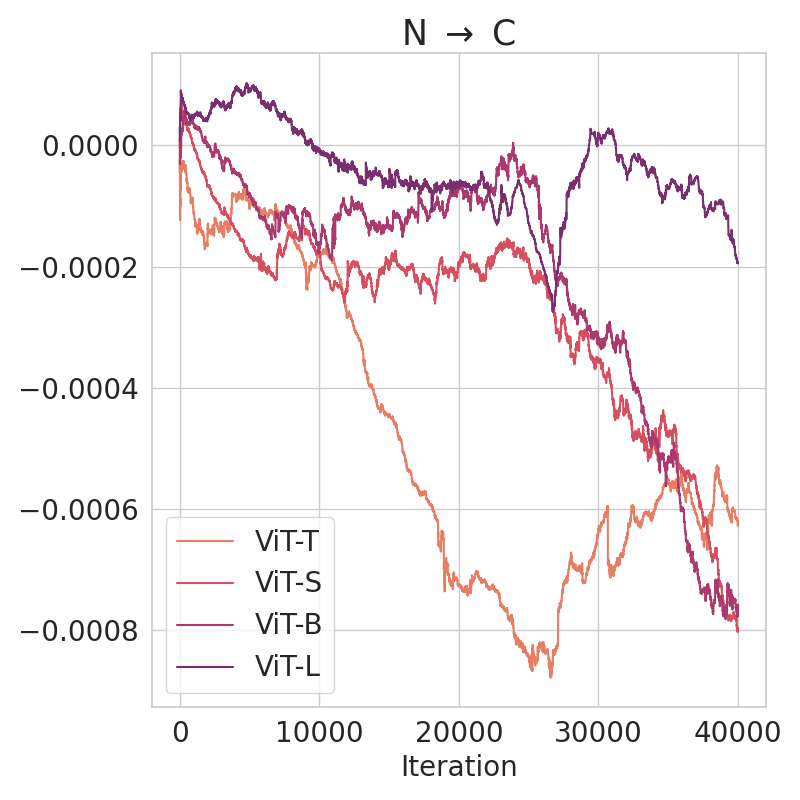}
\end{subfigure}
\begin{subfigure}{\figlength\textwidth}
\includegraphics[width=0.99\textwidth]{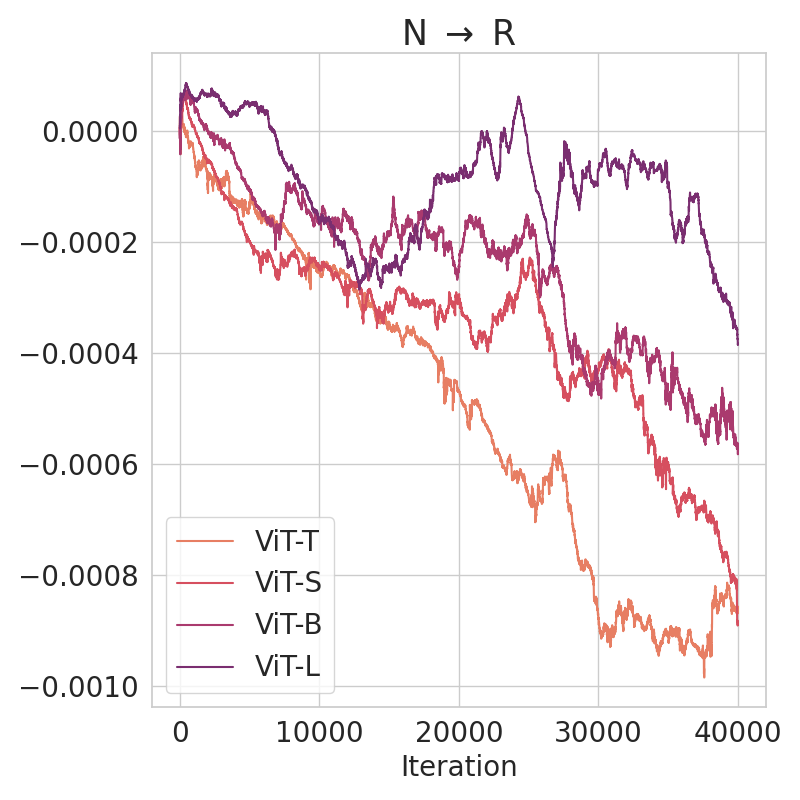}
\end{subfigure}
\begin{subfigure}{\figlength\textwidth}
\includegraphics[width=0.99\textwidth]{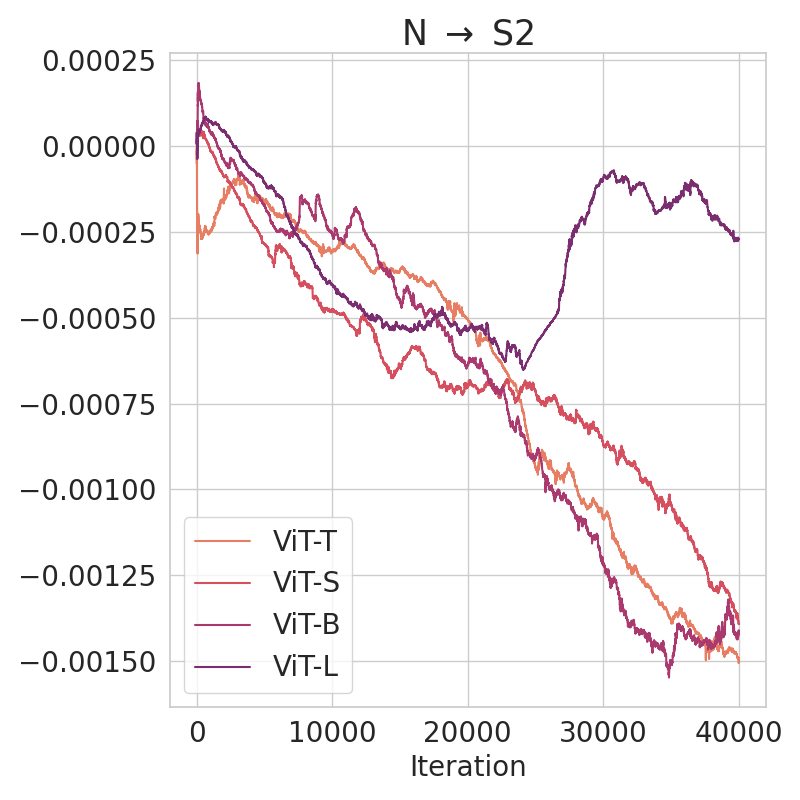}
\end{subfigure}
\begin{subfigure}{\figlength\textwidth}
\includegraphics[width=0.99\textwidth]{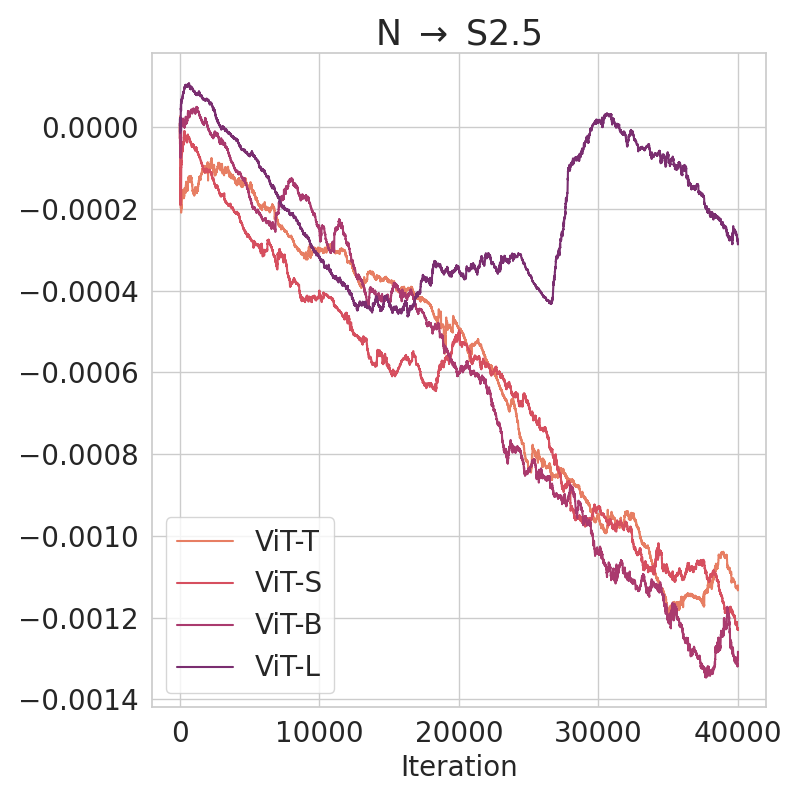}
\end{subfigure}
\begin{subfigure}{\figlength\textwidth}
\includegraphics[width=0.99\textwidth]{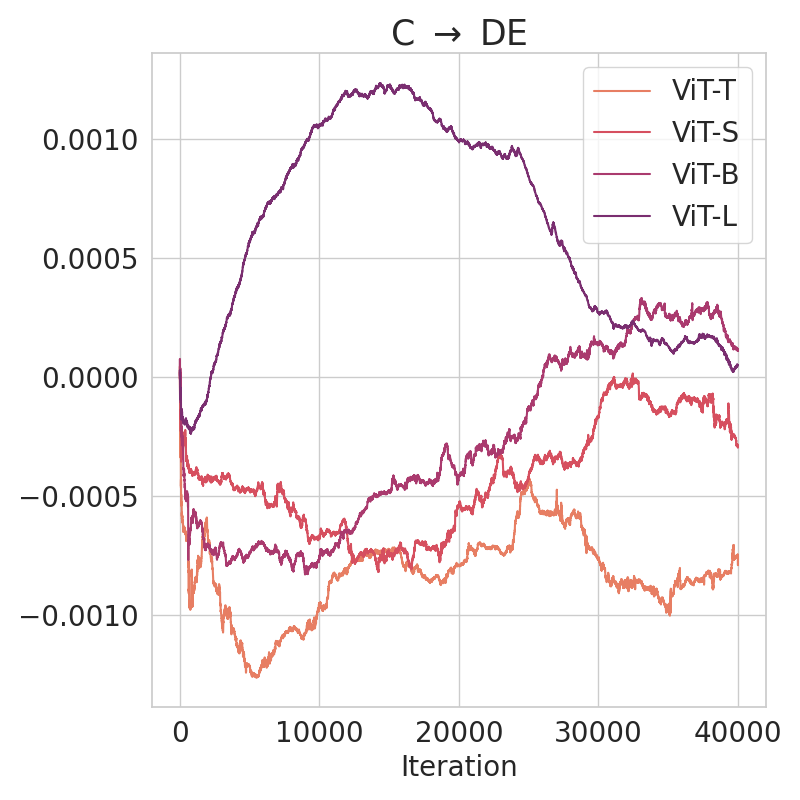}
\end{subfigure}
\begin{subfigure}{\figlength\textwidth}
\includegraphics[width=0.99\textwidth]{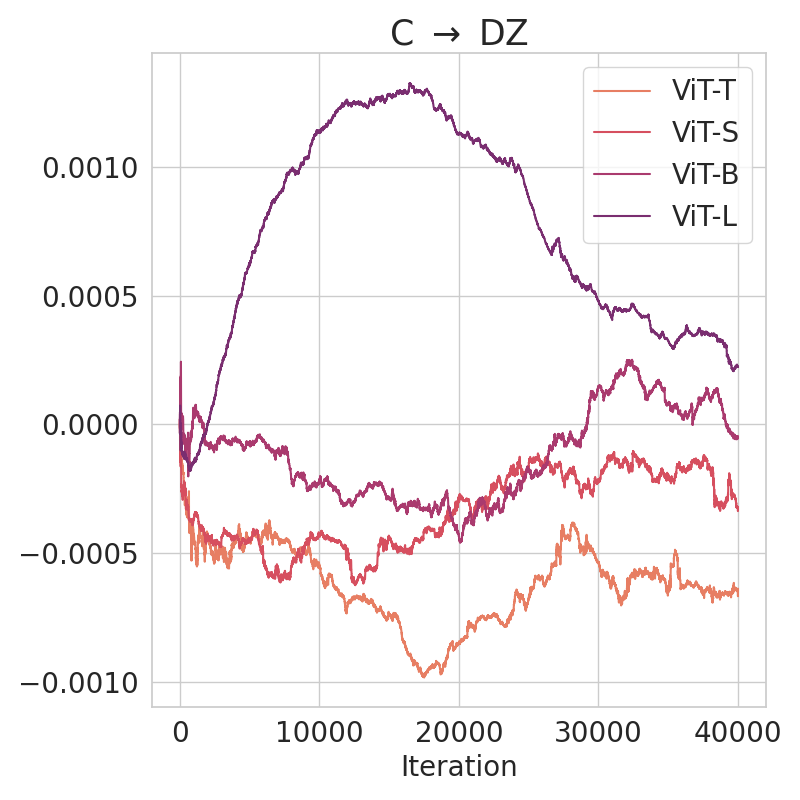}
\end{subfigure}
\begin{subfigure}{\figlength\textwidth}
\includegraphics[width=0.99\textwidth]{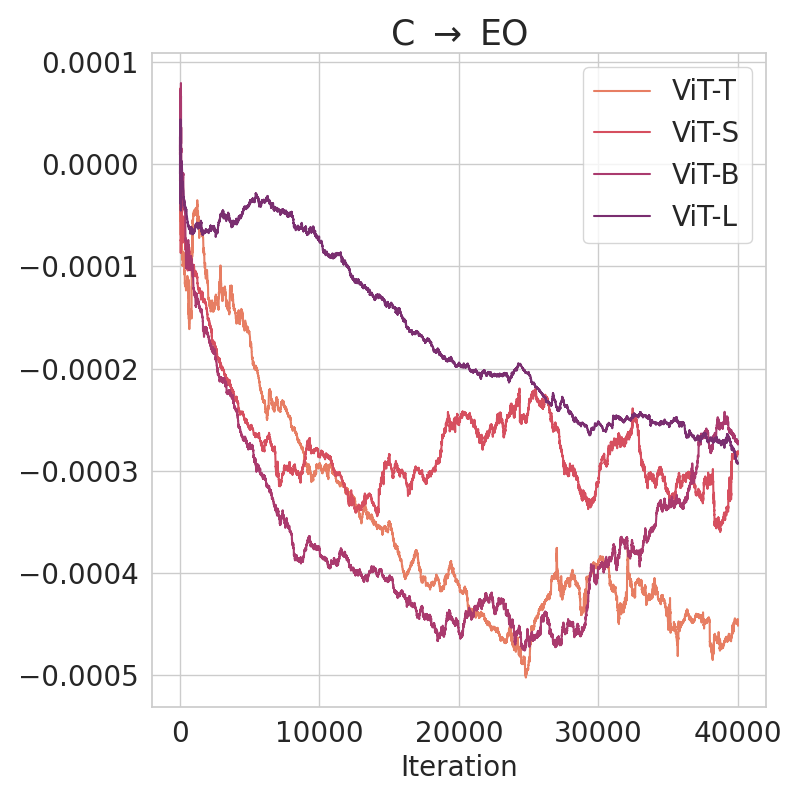}
\end{subfigure}
\begin{subfigure}{\figlength\textwidth}
\includegraphics[width=0.99\textwidth]{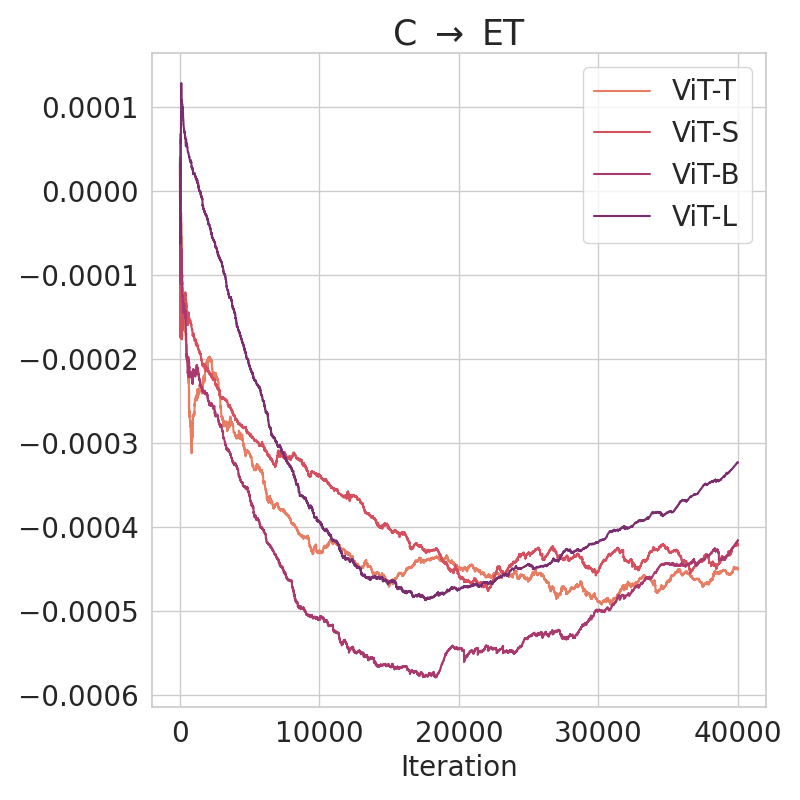}
\end{subfigure}
\begin{subfigure}{\figlength\textwidth}
\includegraphics[width=0.99\textwidth]{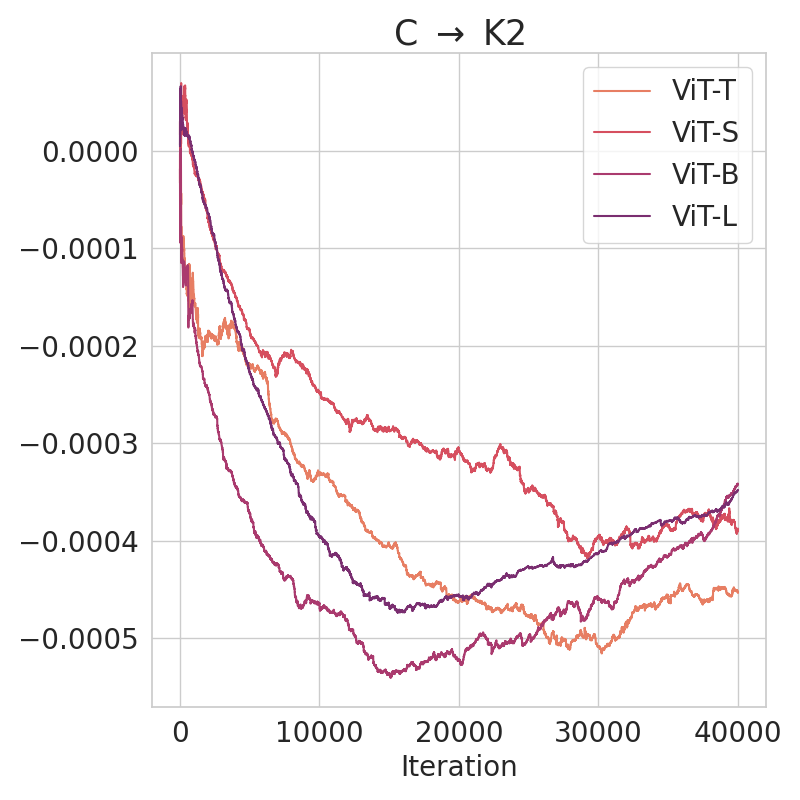}
\end{subfigure}
\begin{subfigure}{\figlength\textwidth}
\includegraphics[width=0.99\textwidth]{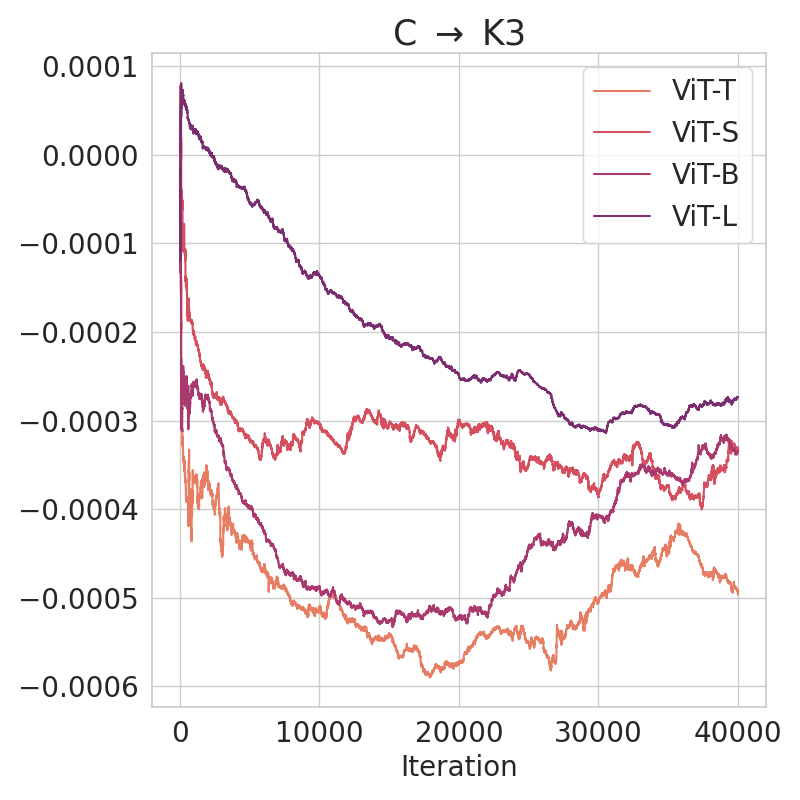}
\end{subfigure}
\begin{subfigure}{\figlength\textwidth}
\includegraphics[width=0.99\textwidth]{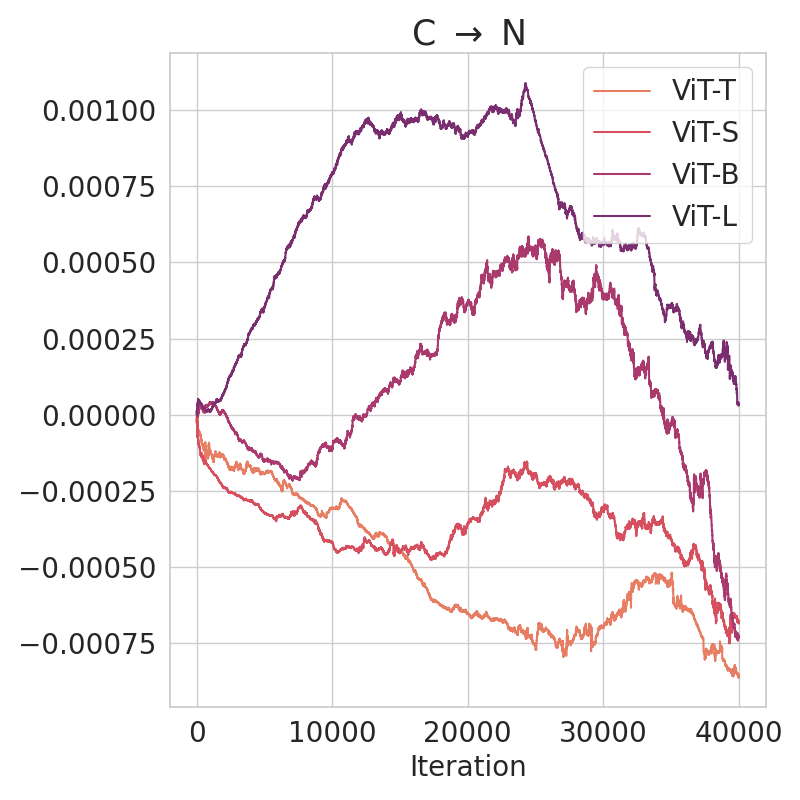}
\end{subfigure}
\begin{subfigure}{\figlength\textwidth}
\includegraphics[width=0.99\textwidth]{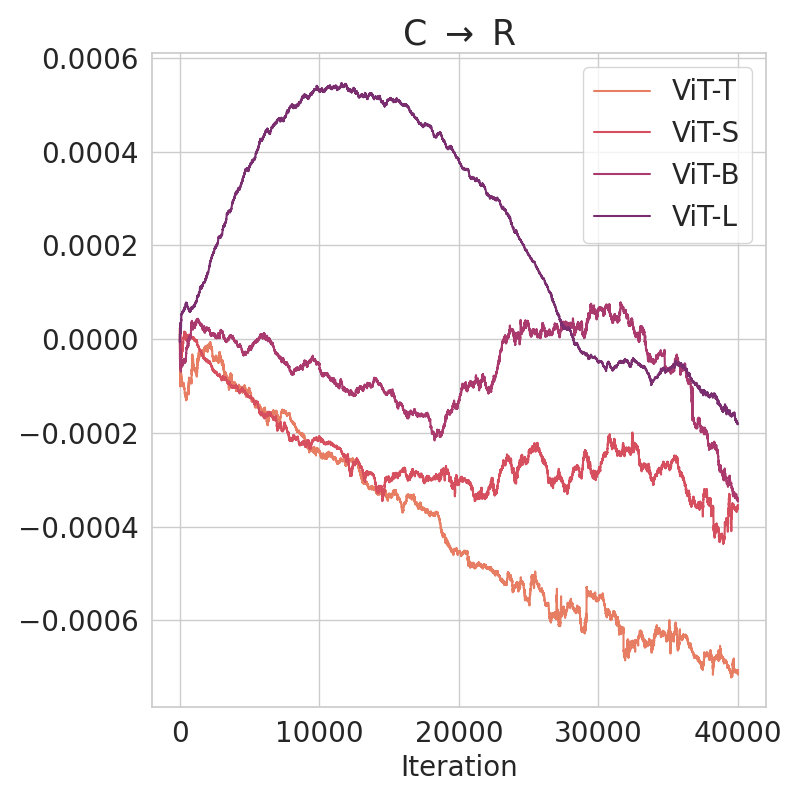}
\end{subfigure}
\begin{subfigure}{\figlength\textwidth}
\includegraphics[width=0.99\textwidth]{figure/vit_taskonomy/C_to_S2.png}
\end{subfigure}
\begin{subfigure}{\figlength\textwidth}
\includegraphics[width=0.99\textwidth]{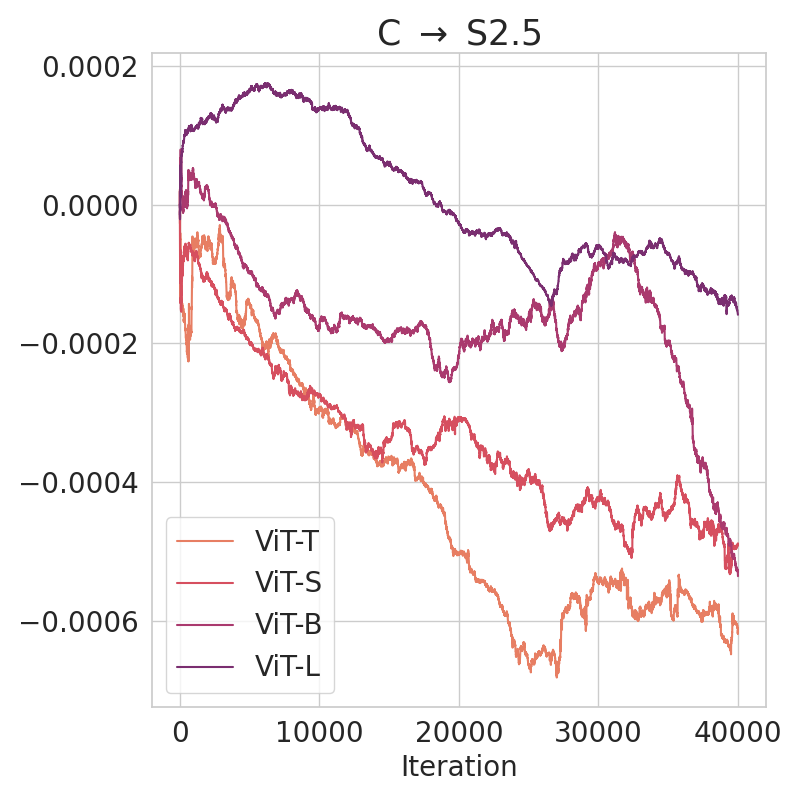}
\end{subfigure}
\begin{subfigure}{\figlength\textwidth}
\includegraphics[width=0.99\textwidth]{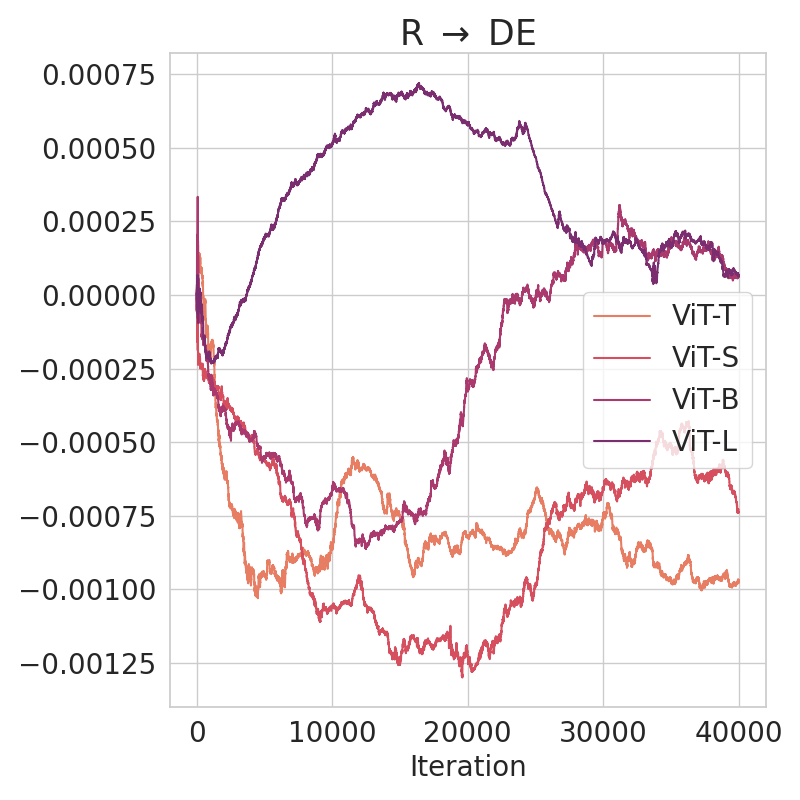}
\end{subfigure}
\begin{subfigure}{\figlength\textwidth}
\includegraphics[width=0.99\textwidth]{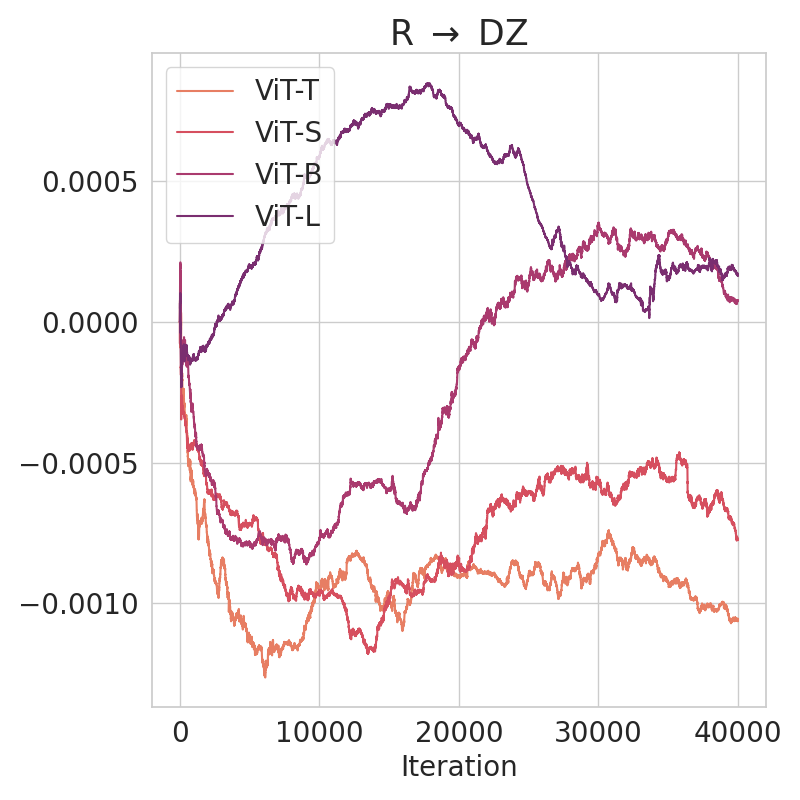}
\end{subfigure}
\begin{subfigure}{\figlength\textwidth}
\includegraphics[width=0.99\textwidth]{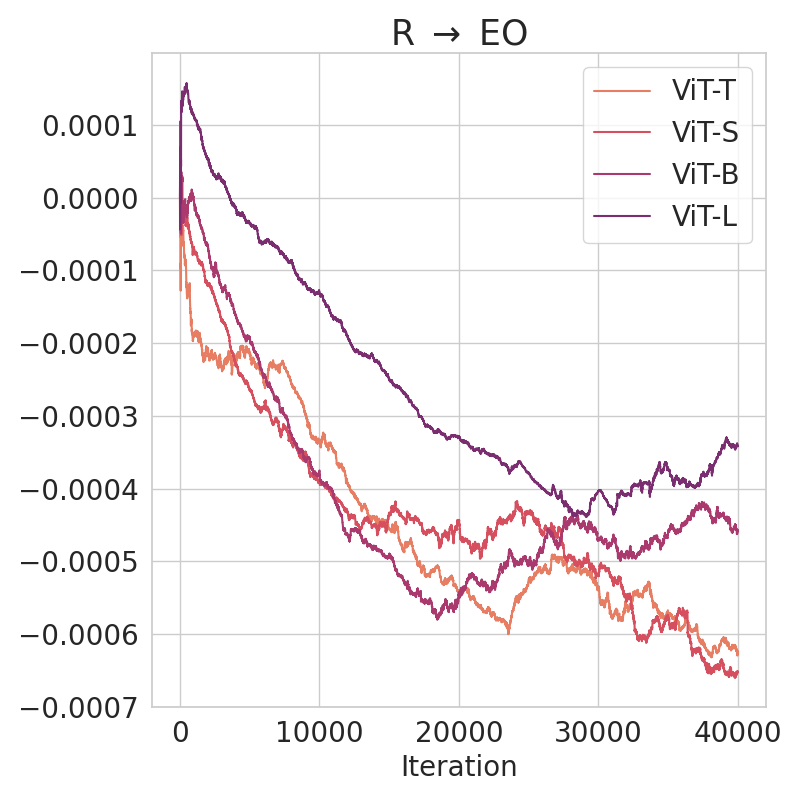}
\end{subfigure}
\begin{subfigure}{\figlength\textwidth}
\includegraphics[width=0.99\textwidth]{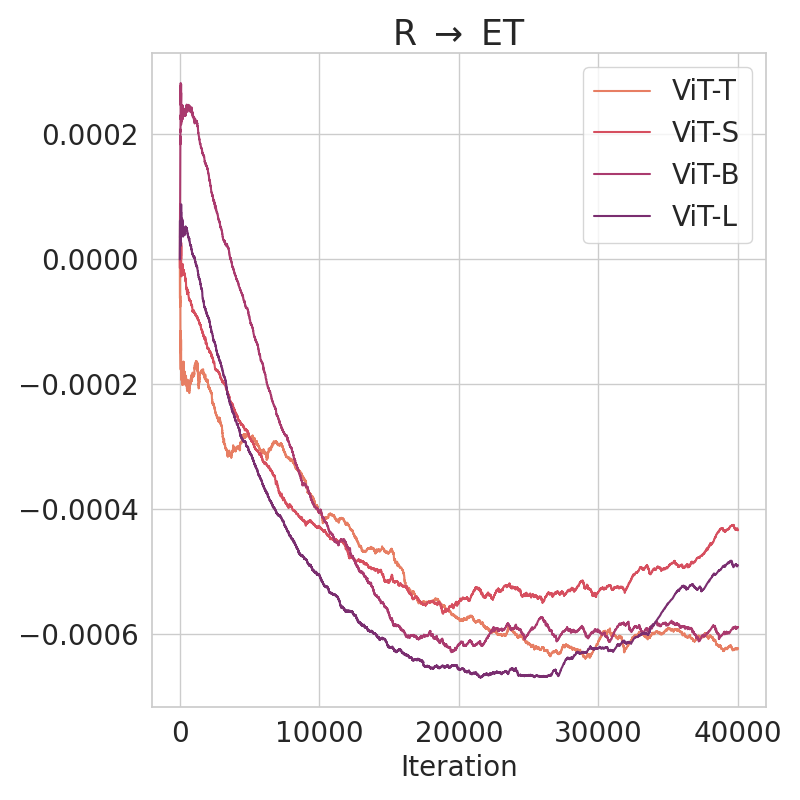}
\end{subfigure}
\begin{subfigure}{\figlength\textwidth}
\includegraphics[width=0.99\textwidth]{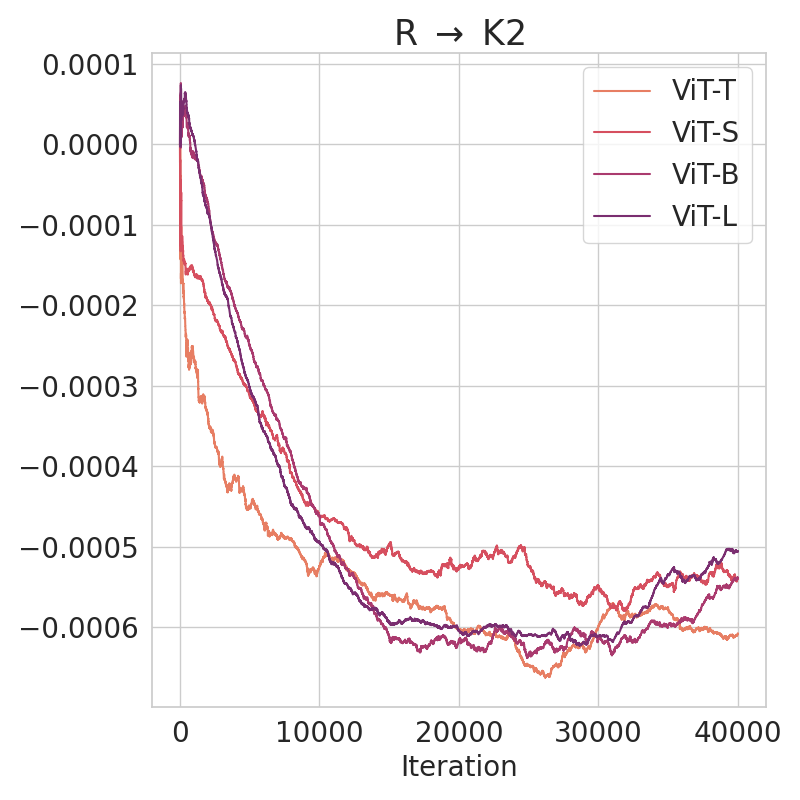}
\end{subfigure}
\begin{subfigure}{\figlength\textwidth}
\includegraphics[width=0.99\textwidth]{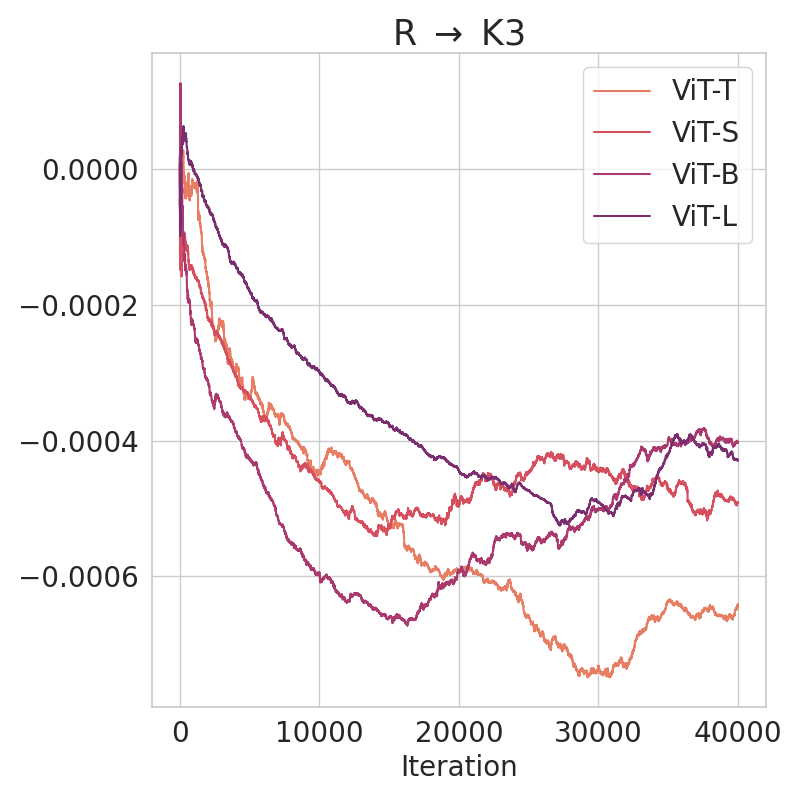}
\end{subfigure}
\begin{subfigure}{\figlength\textwidth}
\includegraphics[width=0.99\textwidth]{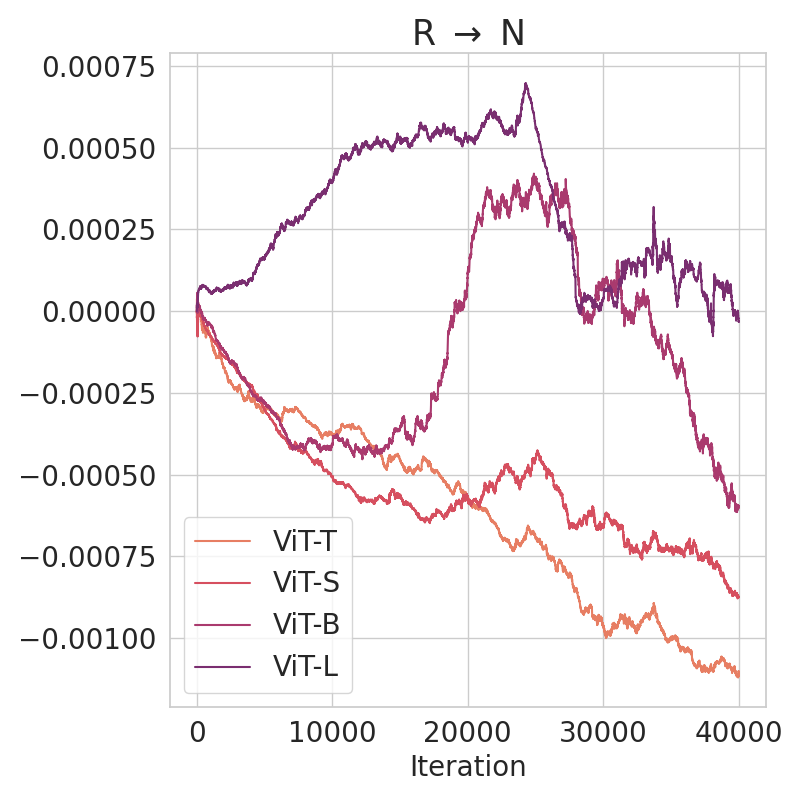}
\end{subfigure}
\begin{subfigure}{\figlength\textwidth}
\includegraphics[width=0.99\textwidth]{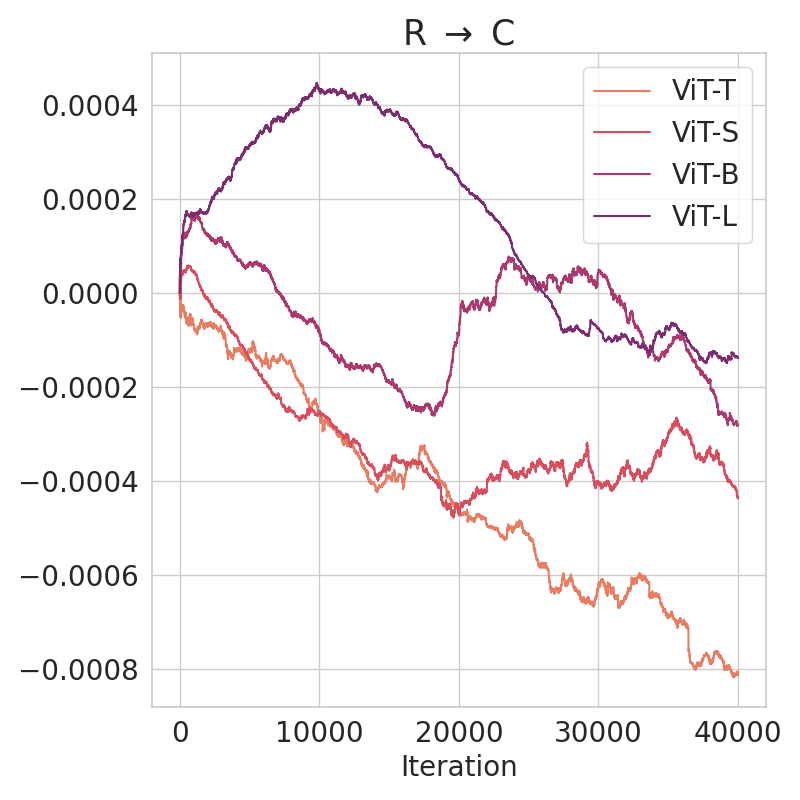}
\end{subfigure}
\begin{subfigure}{\figlength\textwidth}
\includegraphics[width=0.99\textwidth]{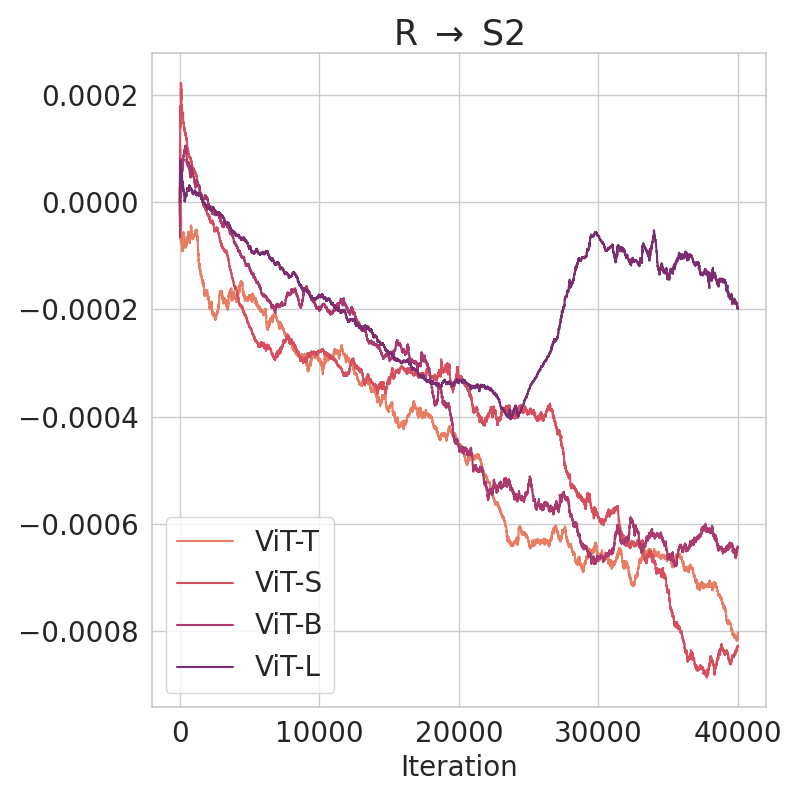}
\end{subfigure}
\begin{subfigure}{\figlength\textwidth}
\includegraphics[width=0.99\textwidth]{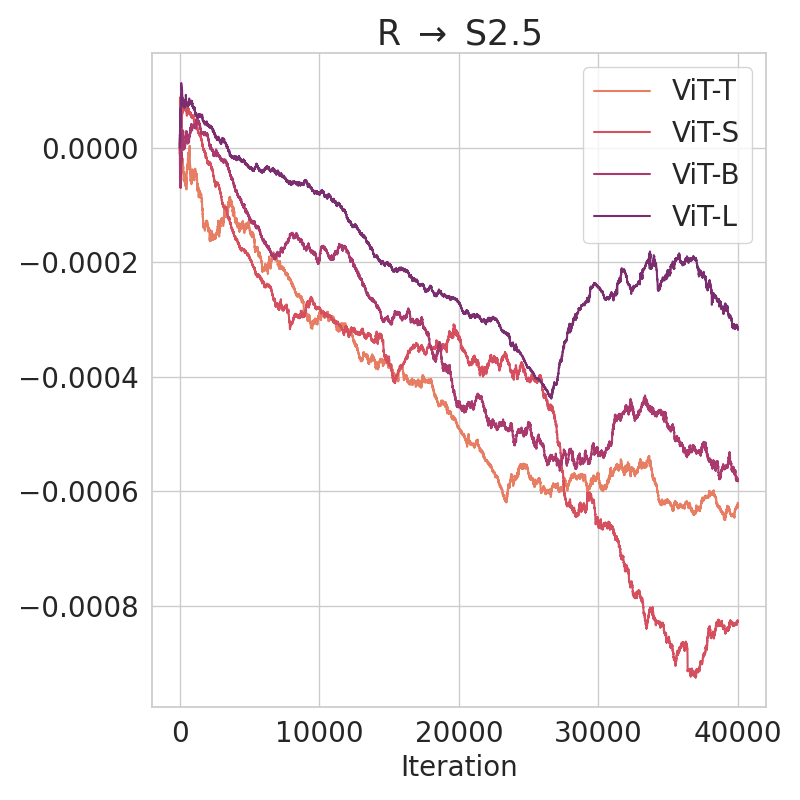}
\end{subfigure}
\begin{subfigure}{\figlength\textwidth}
\includegraphics[width=0.99\textwidth]{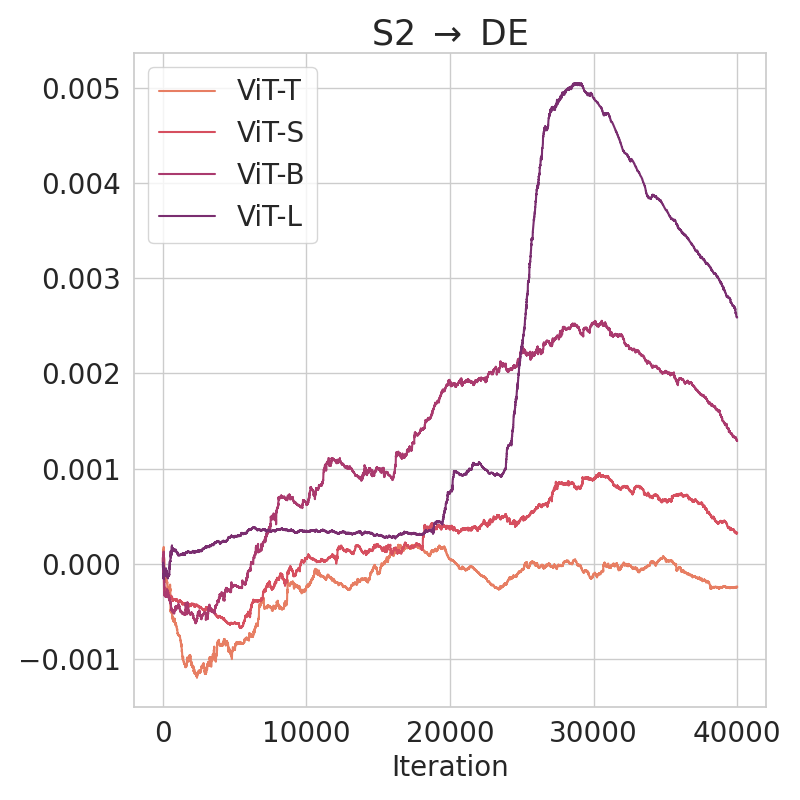}
\end{subfigure}
\begin{subfigure}{\figlength\textwidth}
\includegraphics[width=0.99\textwidth]{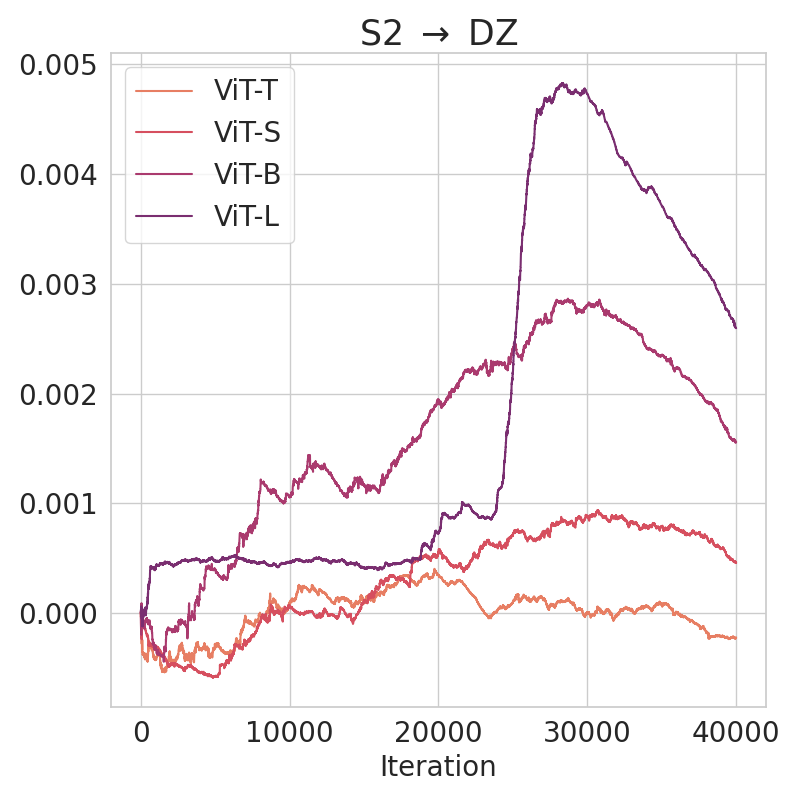}
\end{subfigure}
\begin{subfigure}{\figlength\textwidth}
\includegraphics[width=0.99\textwidth]{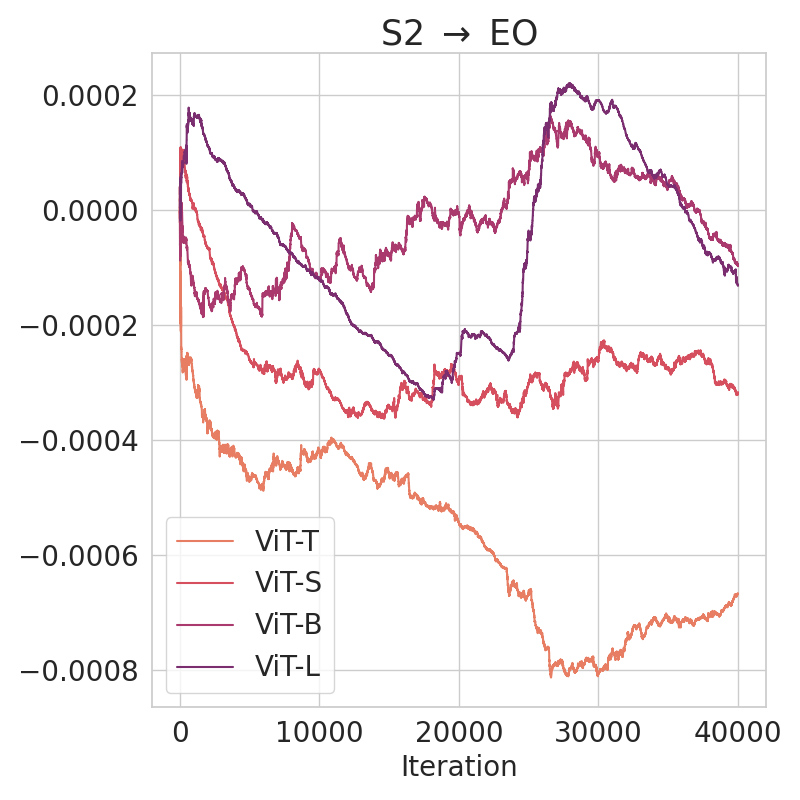}
\end{subfigure}
\begin{subfigure}{\figlength\textwidth}
\includegraphics[width=0.99\textwidth]{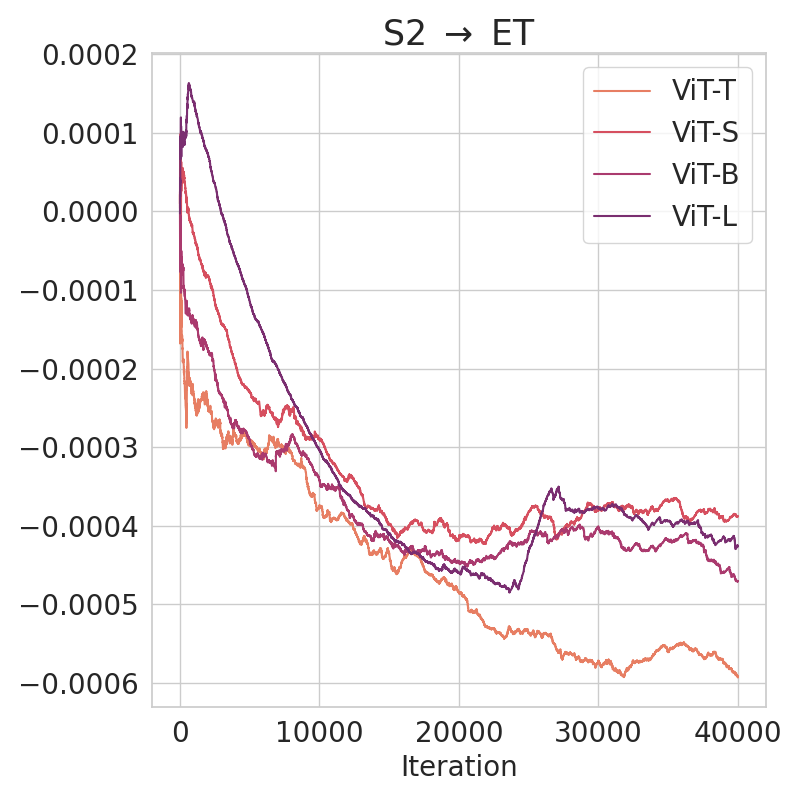}
\end{subfigure}
\begin{subfigure}{\figlength\textwidth}
\includegraphics[width=0.99\textwidth]{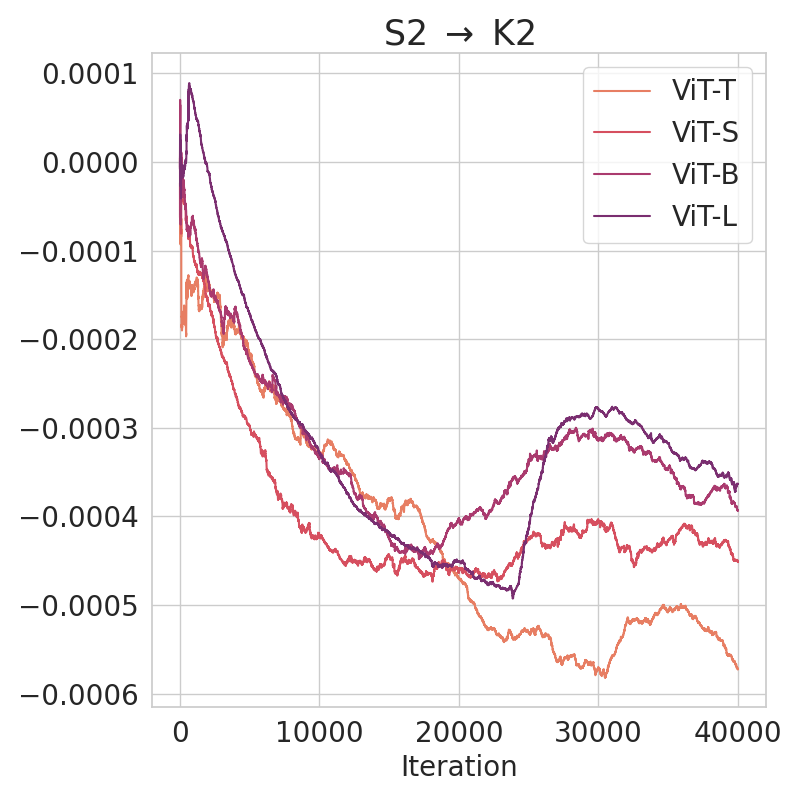}
\end{subfigure}
\begin{subfigure}{\figlength\textwidth}
\includegraphics[width=0.99\textwidth]{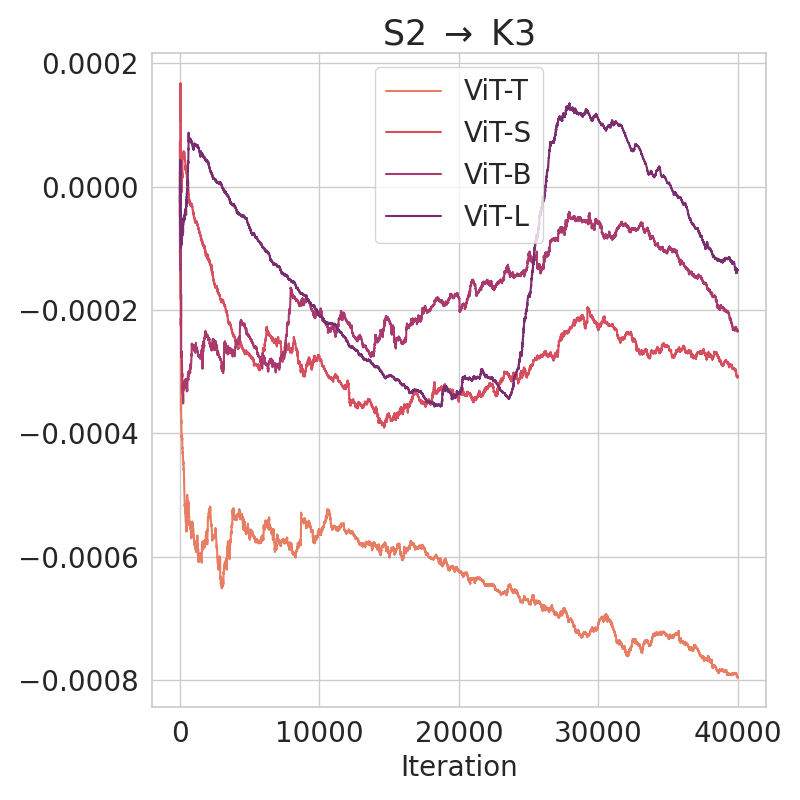}
\end{subfigure}
\begin{subfigure}{\figlength\textwidth}
\includegraphics[width=0.99\textwidth]{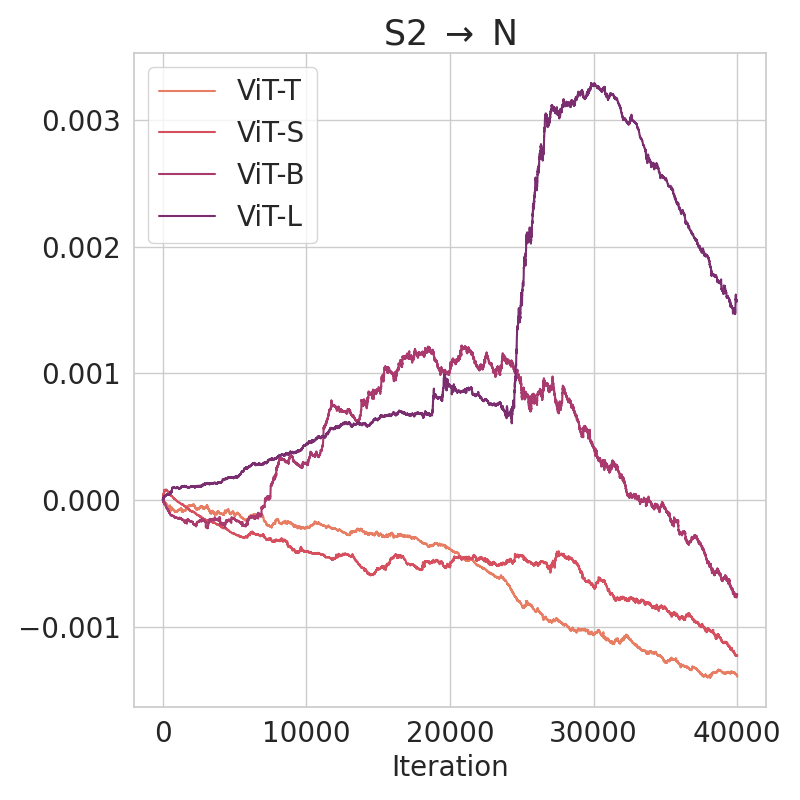}
\end{subfigure}
\begin{subfigure}{\figlength\textwidth}
\includegraphics[width=0.99\textwidth]{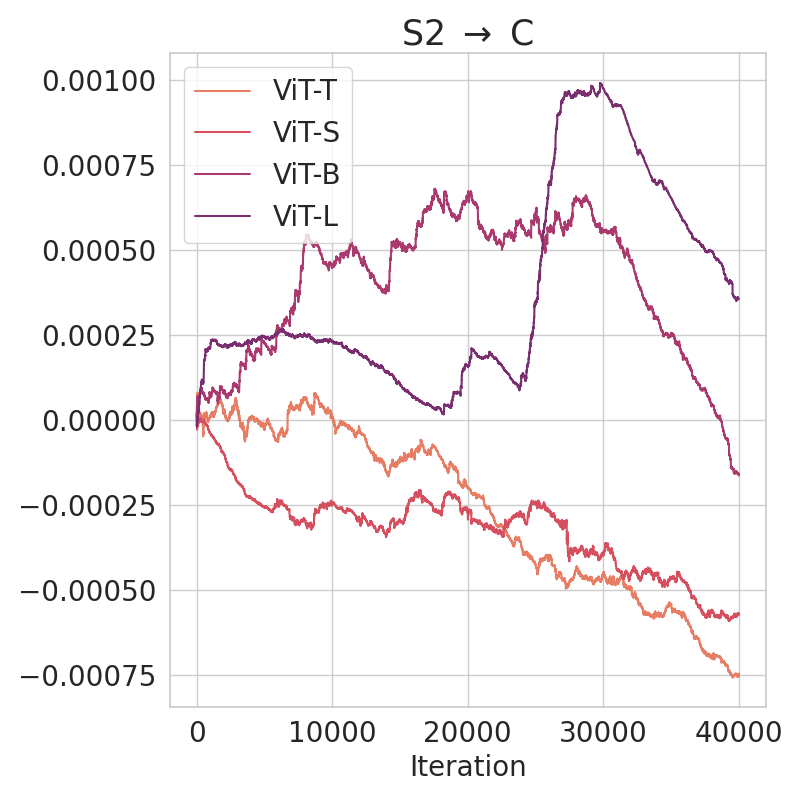}
\end{subfigure}
\begin{subfigure}{\figlength\textwidth}
\includegraphics[width=0.99\textwidth]{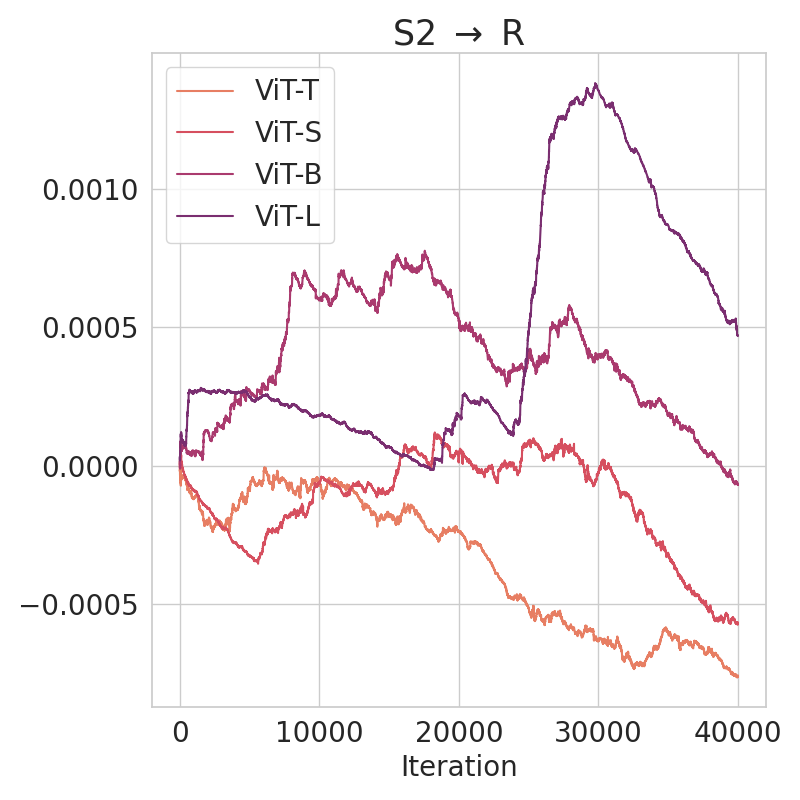}
\end{subfigure}
\begin{subfigure}{\figlength\textwidth}
\includegraphics[width=0.99\textwidth]{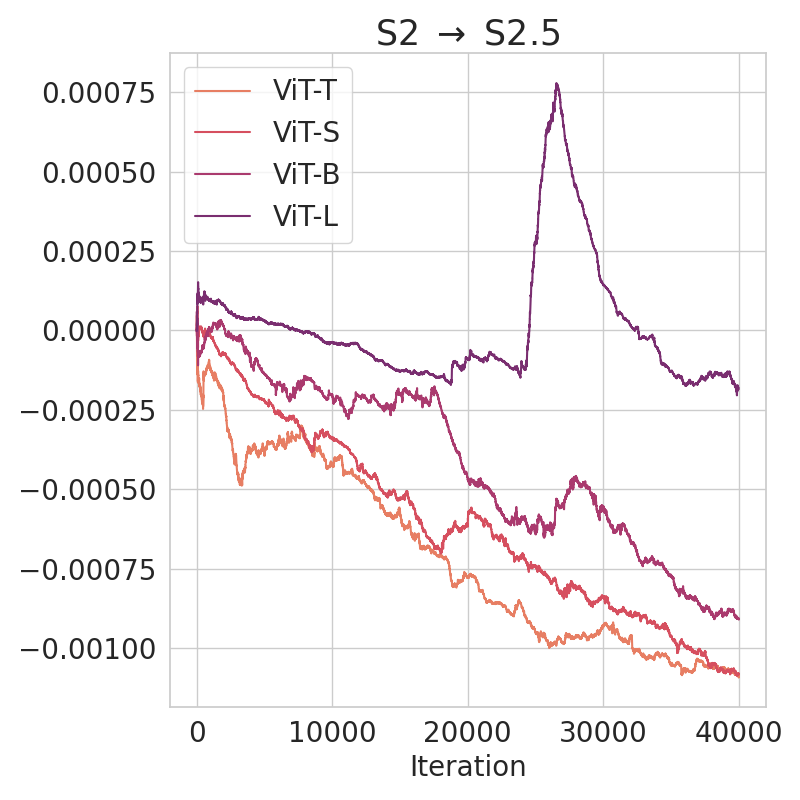}
\end{subfigure}
\begin{subfigure}{\figlength\textwidth}
\includegraphics[width=0.99\textwidth]{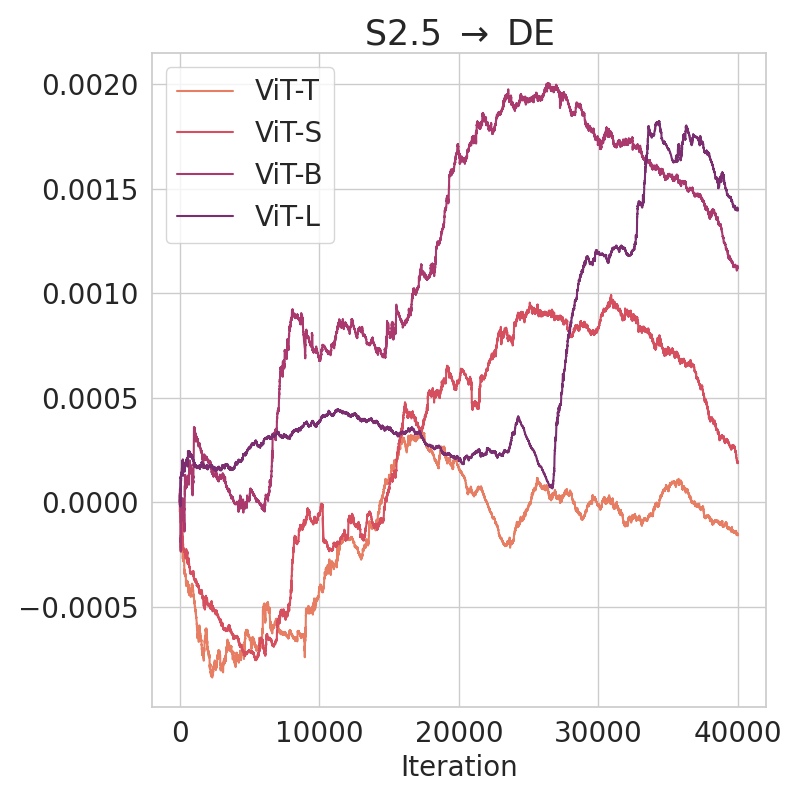}
\end{subfigure}
\begin{subfigure}{\figlength\textwidth}
\includegraphics[width=0.99\textwidth]{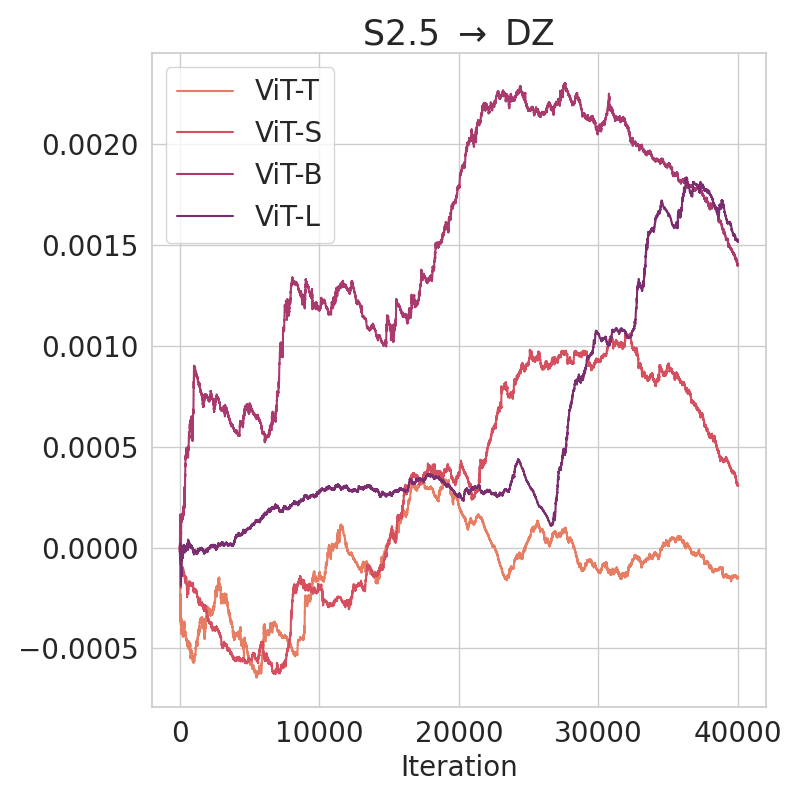}
\end{subfigure}
\begin{subfigure}{\figlength\textwidth}
\includegraphics[width=0.99\textwidth]{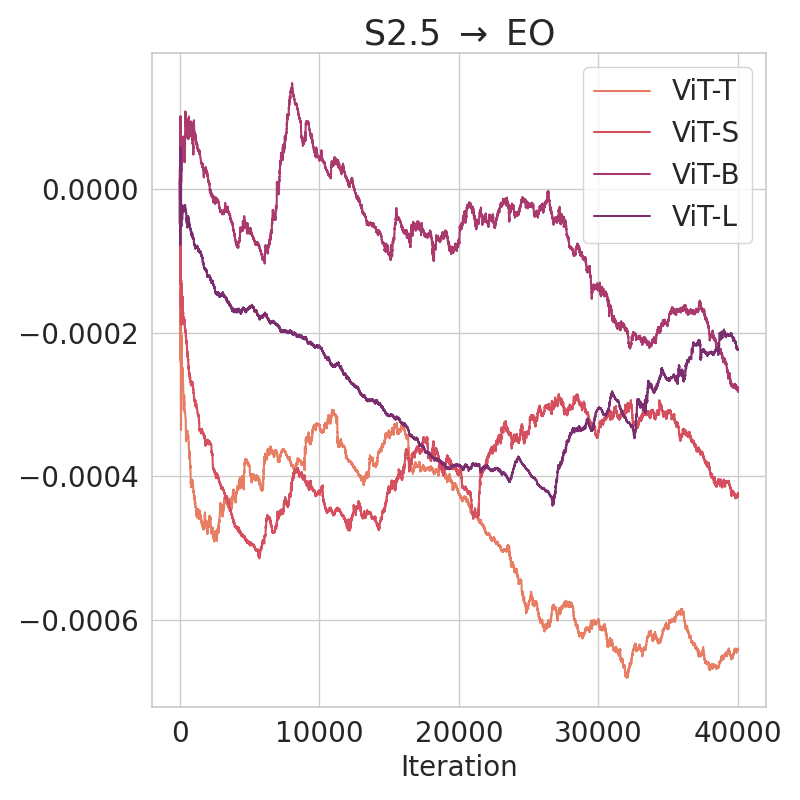}
\end{subfigure}
\begin{subfigure}{\figlength\textwidth}
\includegraphics[width=0.99\textwidth]{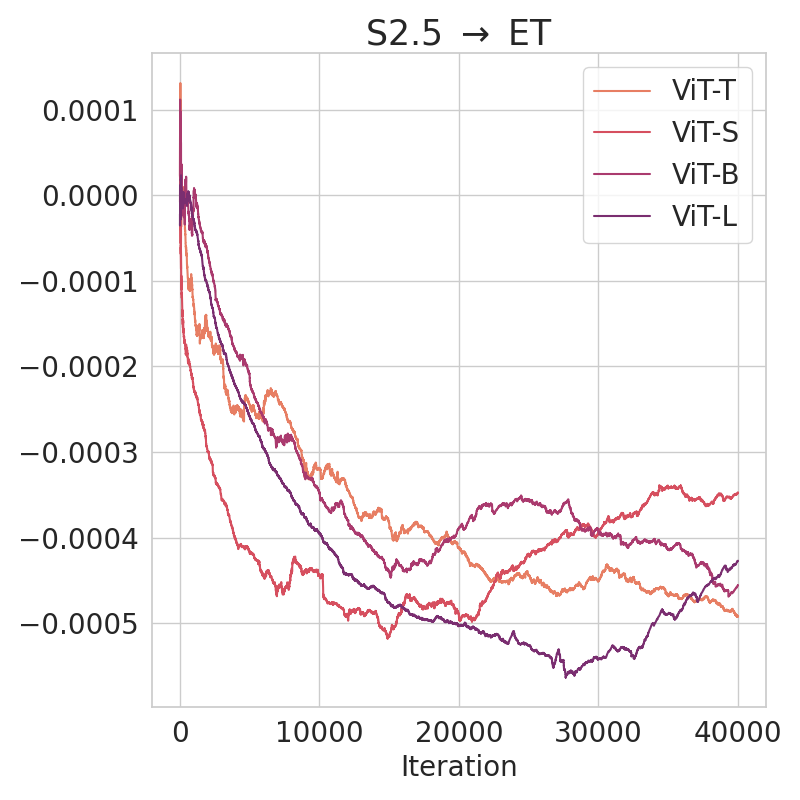}
\end{subfigure}
\begin{subfigure}{\figlength\textwidth}
\includegraphics[width=0.99\textwidth]{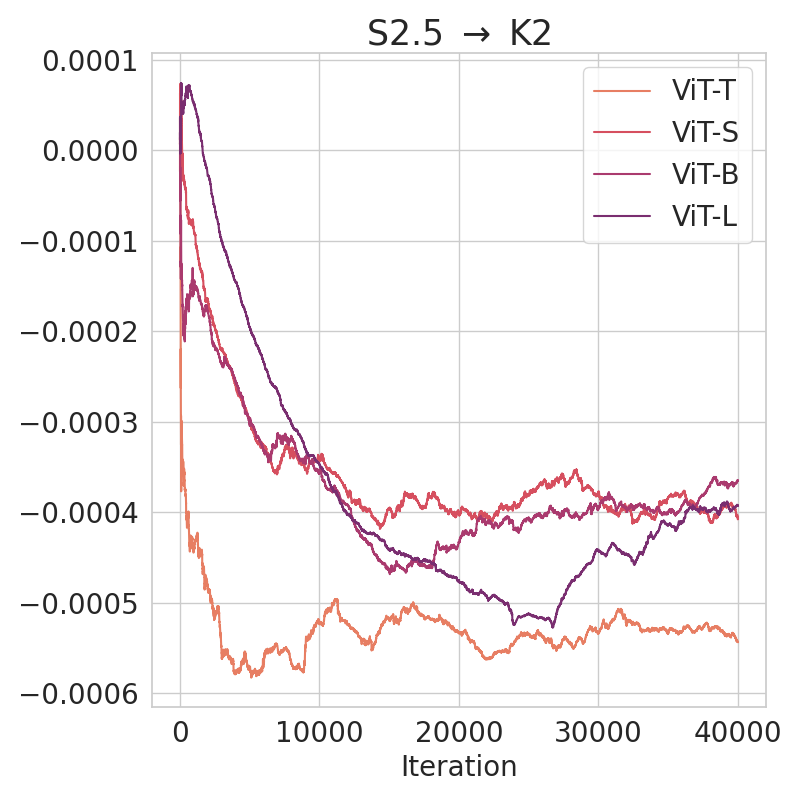}
\end{subfigure}
\begin{subfigure}{\figlength\textwidth}
\includegraphics[width=0.99\textwidth]{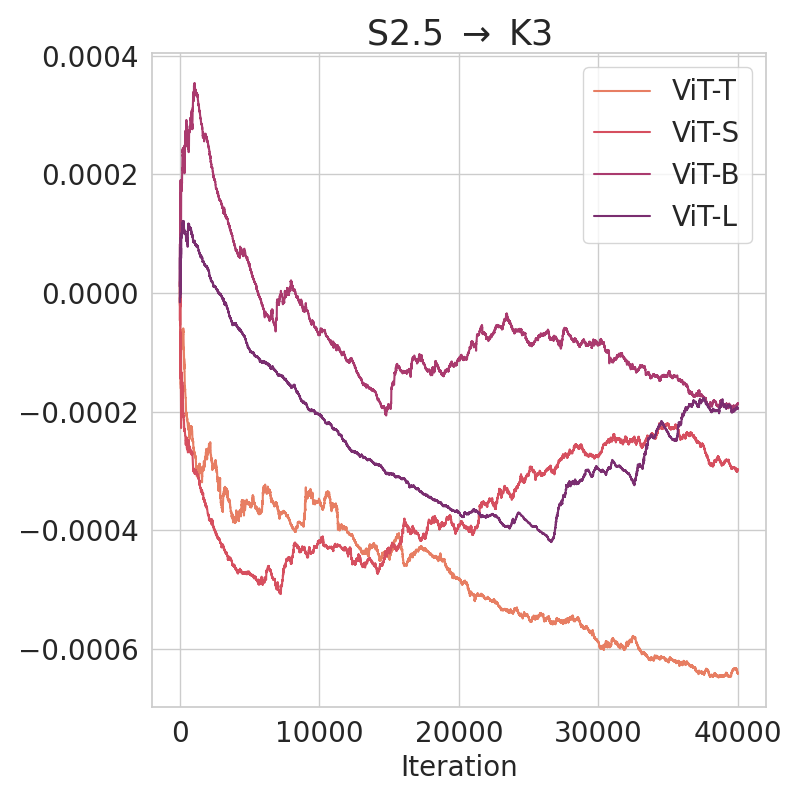}
\end{subfigure}
\begin{subfigure}{\figlength\textwidth}
\includegraphics[width=0.99\textwidth]{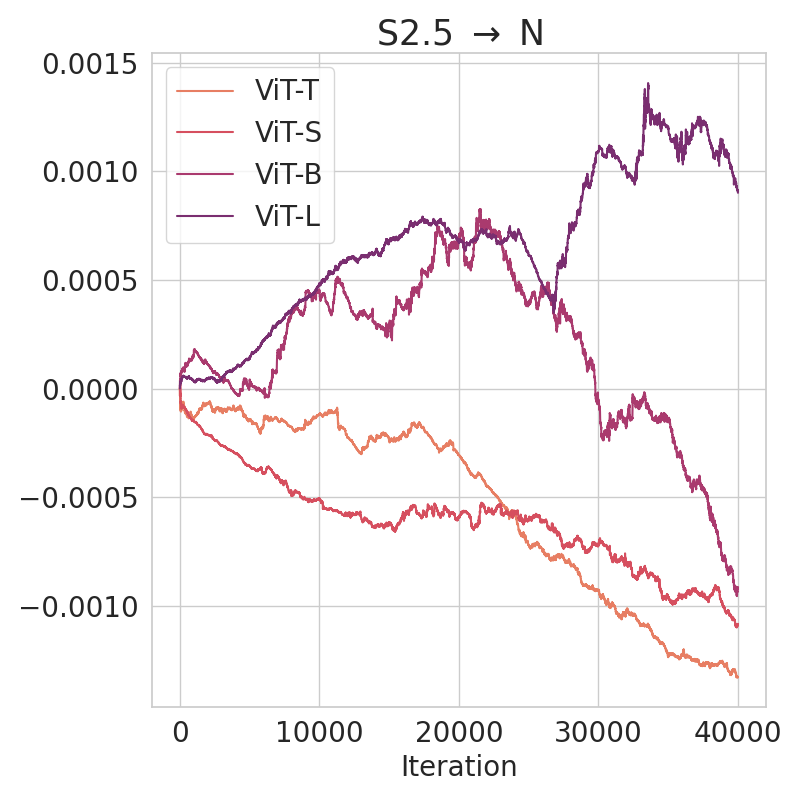}
\end{subfigure}
\begin{subfigure}{\figlength\textwidth}
\includegraphics[width=0.99\textwidth]{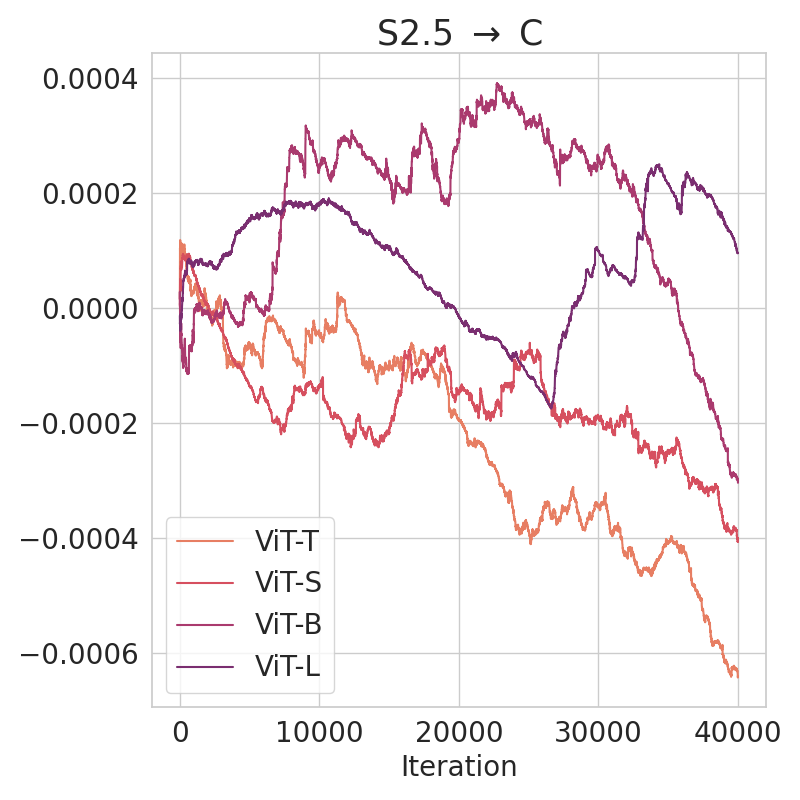}
\end{subfigure}
\begin{subfigure}{\figlength\textwidth}
\includegraphics[width=0.99\textwidth]{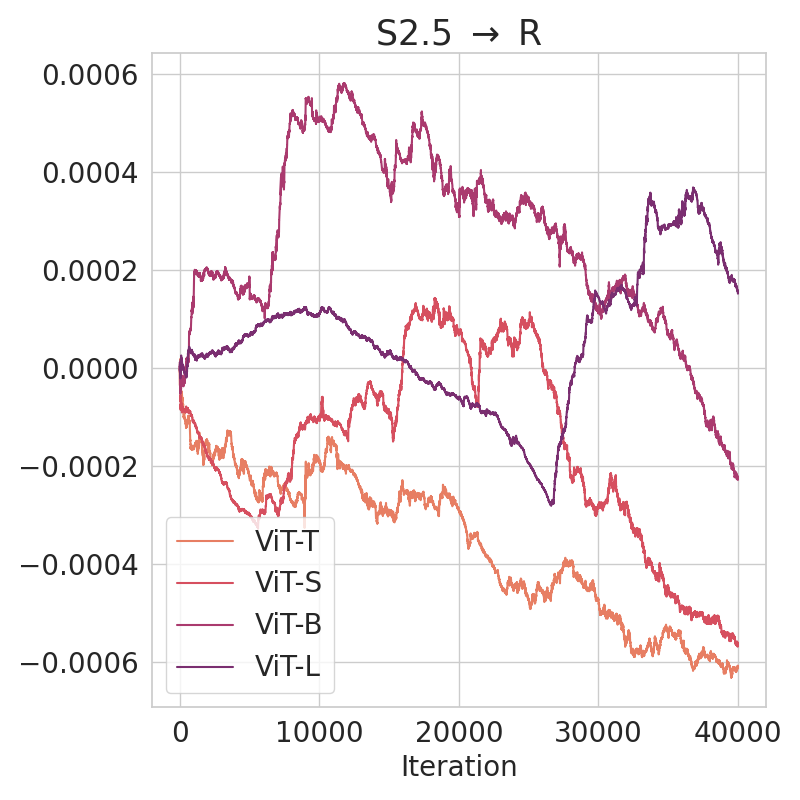}
\end{subfigure}
\begin{subfigure}{\figlength\textwidth}
\includegraphics[width=0.99\textwidth]{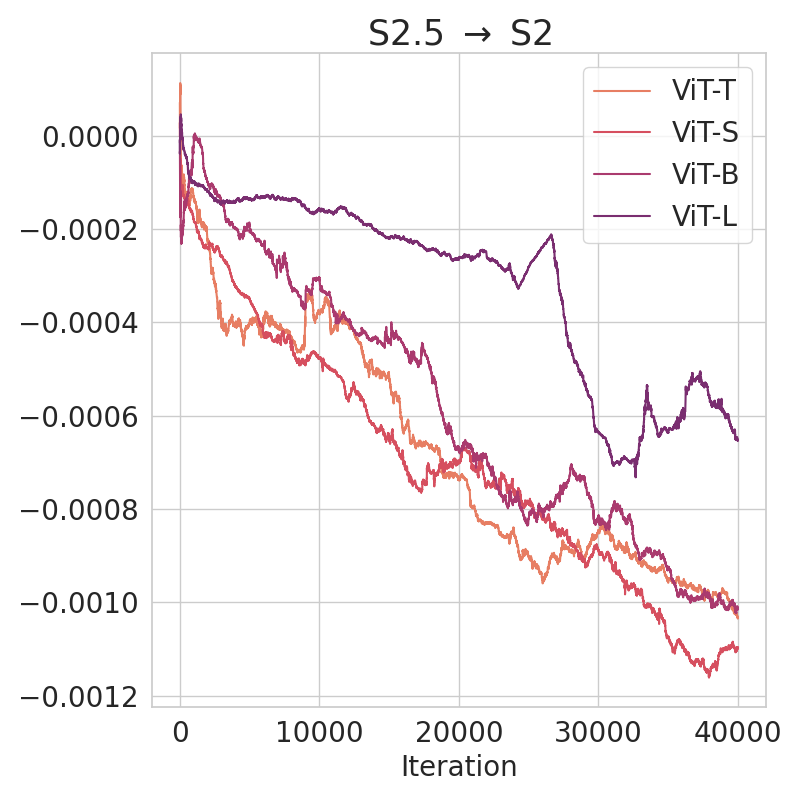}
\end{subfigure}
\caption{Changes in proximal inter-task affinity during the optimization process with Taskonomy benchmark.}
\label{fig:proximal_vit_taskonomy}
\end{figure}
\clearpage

\begin{figure}[h]
    \centering
    \begin{subfigure}{0.24\textwidth}
        \includegraphics[width=0.99\textwidth]{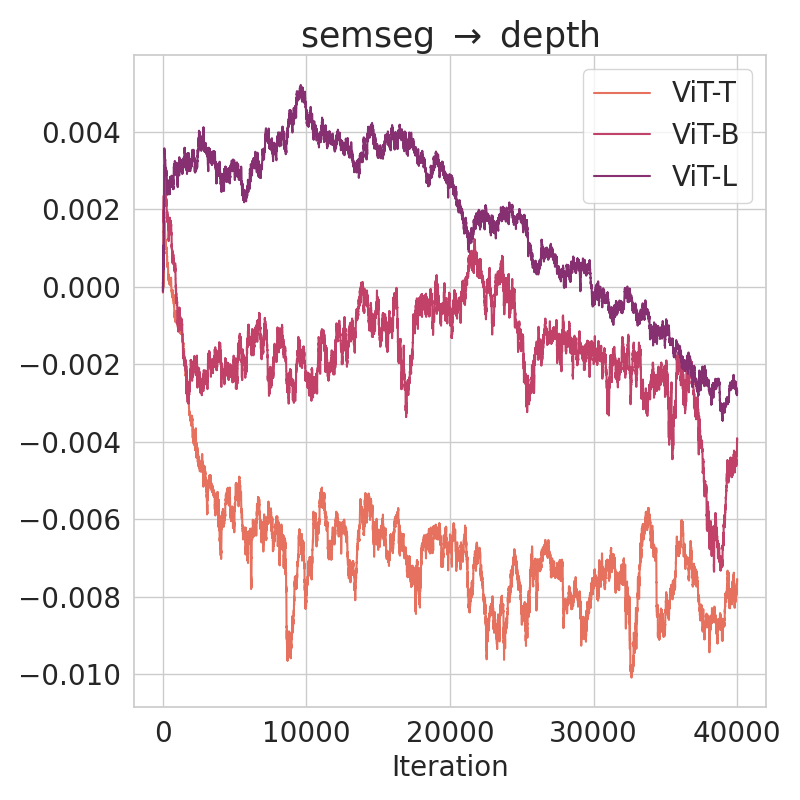}
    \end{subfigure}
    \begin{subfigure}{0.24\textwidth}
        \includegraphics[width=0.99\textwidth]{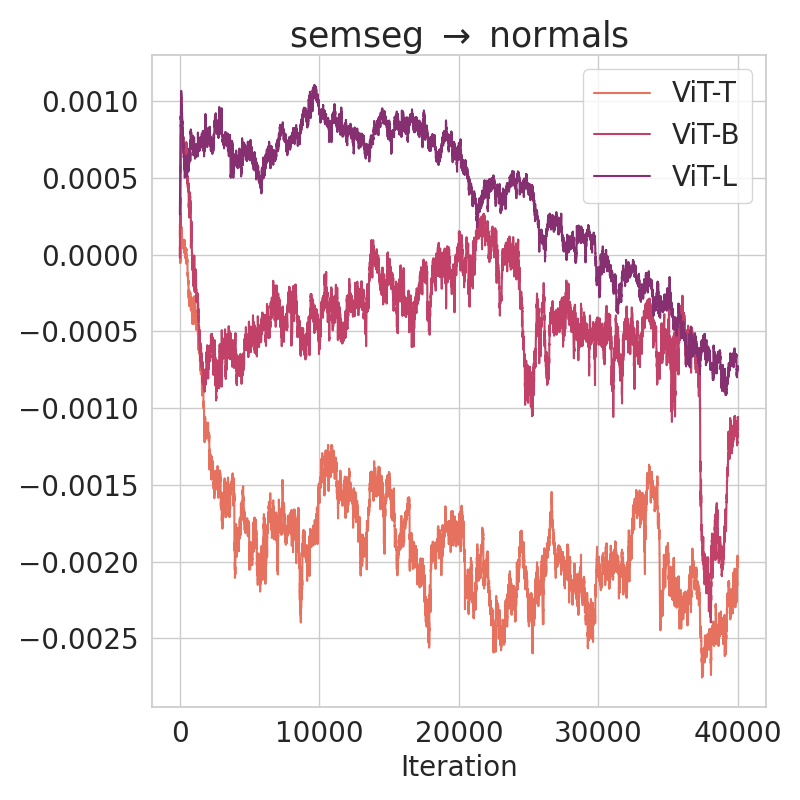}
    \end{subfigure}
    \begin{subfigure}{0.24\textwidth}
        \includegraphics[width=0.99\textwidth]{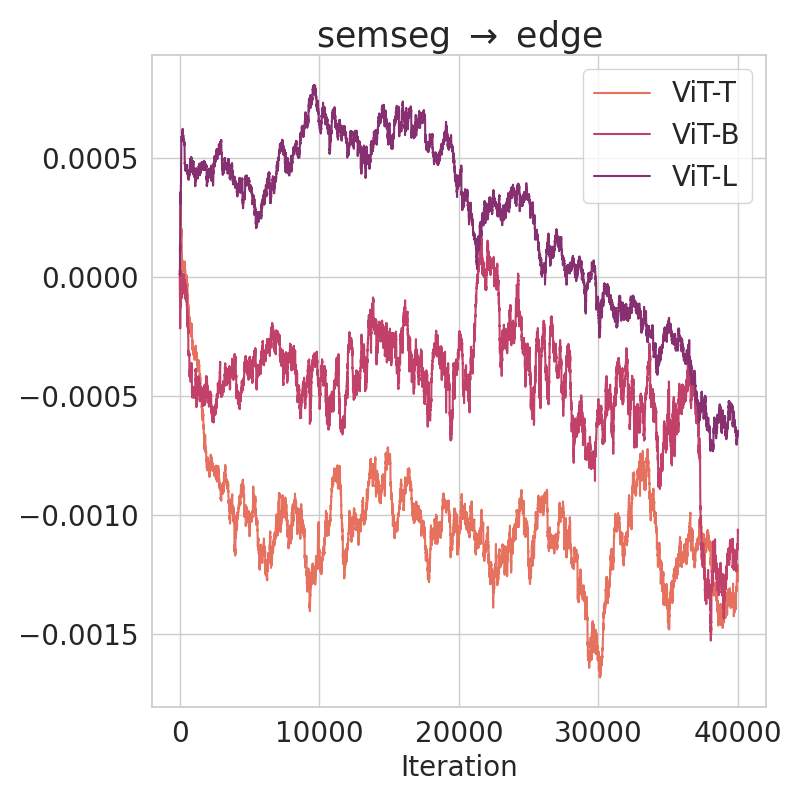}
    \end{subfigure}
    \begin{subfigure}{0.24\textwidth}
        \includegraphics[width=0.99\textwidth]{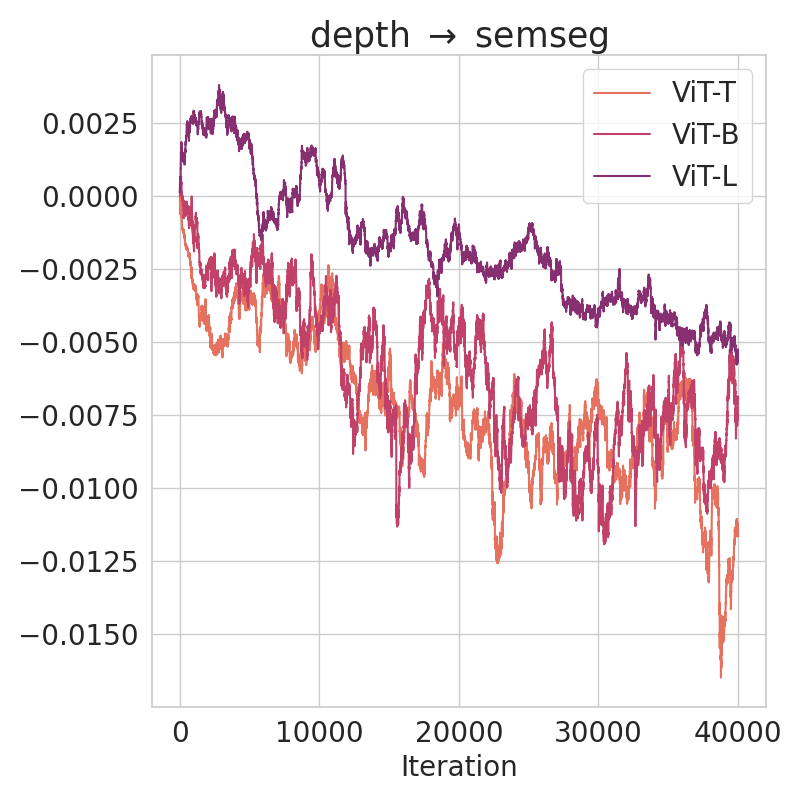}
    \end{subfigure}
    \hfill
    \begin{subfigure}{0.24\textwidth}
        \includegraphics[width=0.99\textwidth]{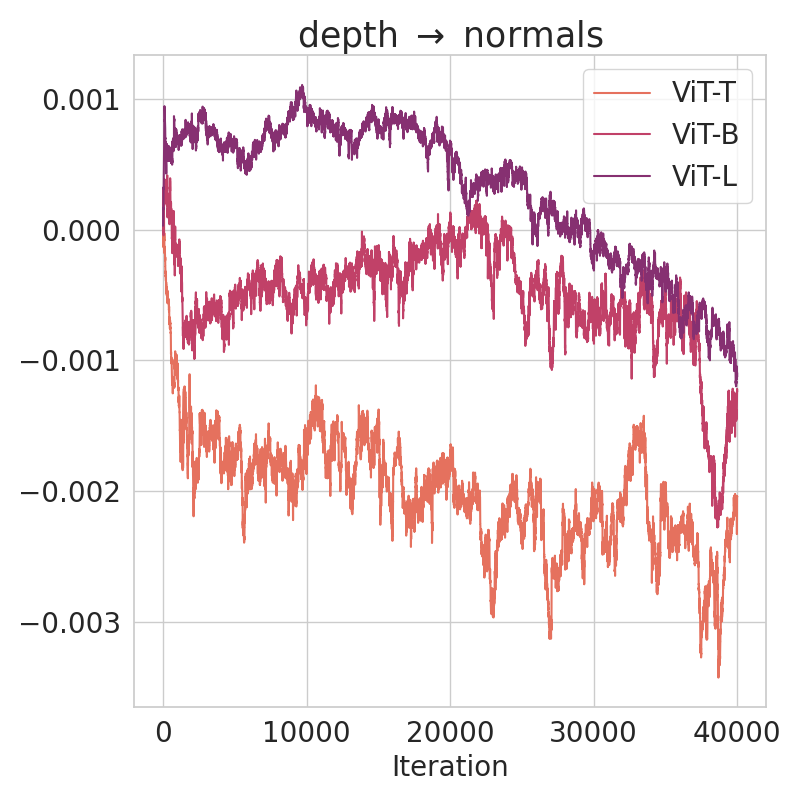}
    \end{subfigure}
    \begin{subfigure}{0.24\textwidth}
        \includegraphics[width=0.99\textwidth]{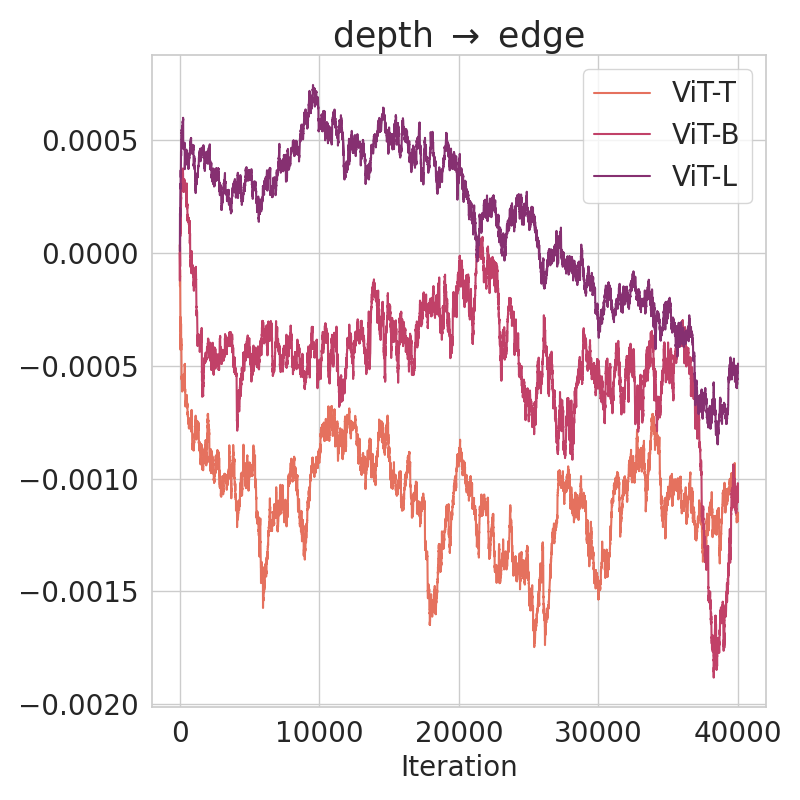}
    \end{subfigure}
    \begin{subfigure}{0.24\textwidth}
        \includegraphics[width=0.99\textwidth]{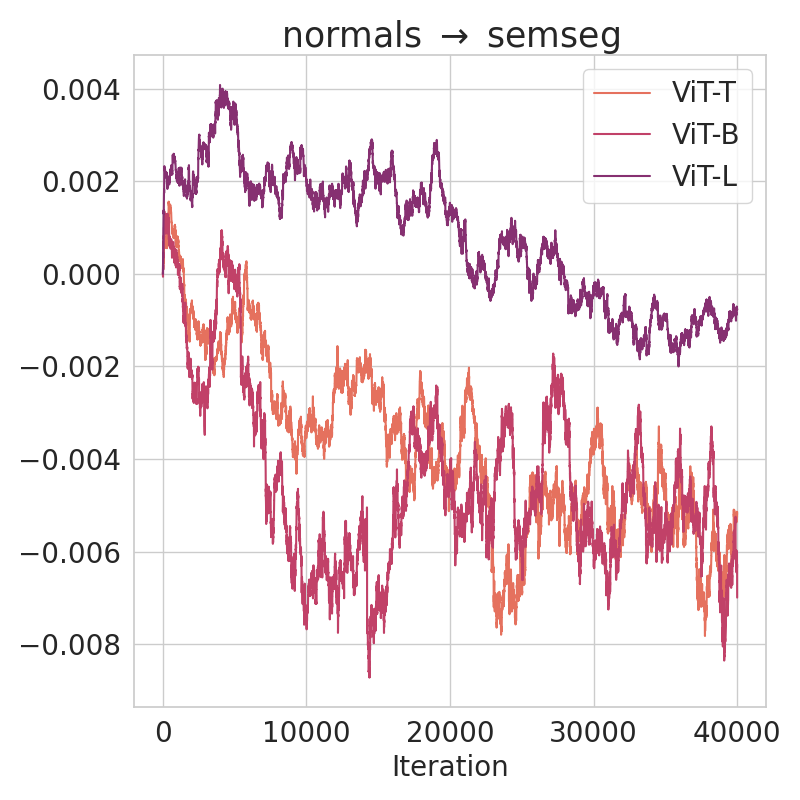}
    \end{subfigure}
    \begin{subfigure}{0.24\textwidth}
        \includegraphics[width=0.99\textwidth]{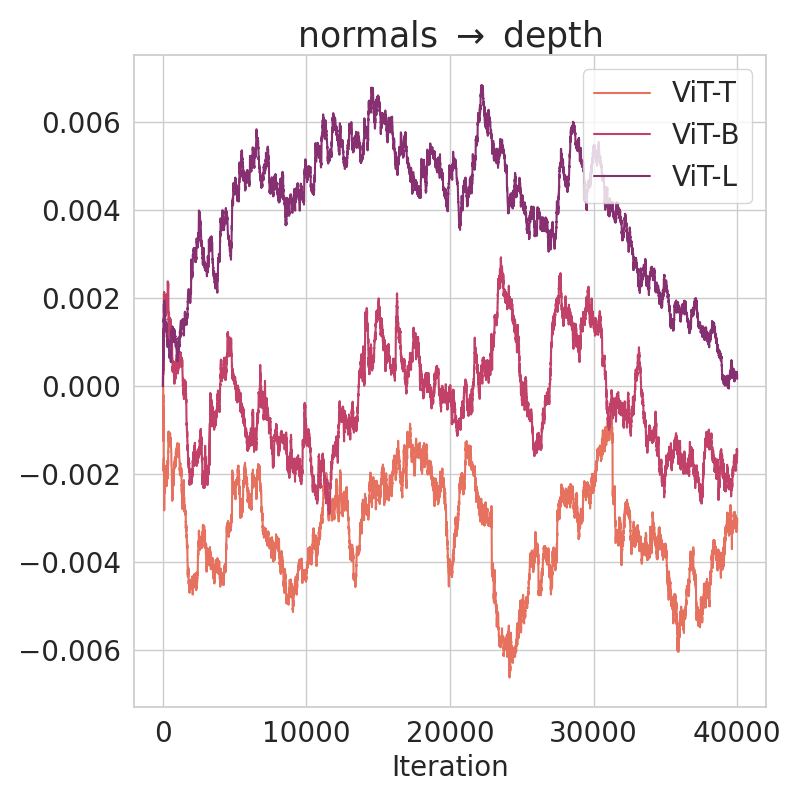}
    \end{subfigure}
    \hfill
    \begin{subfigure}{0.24\textwidth}
        \includegraphics[width=0.99\textwidth]{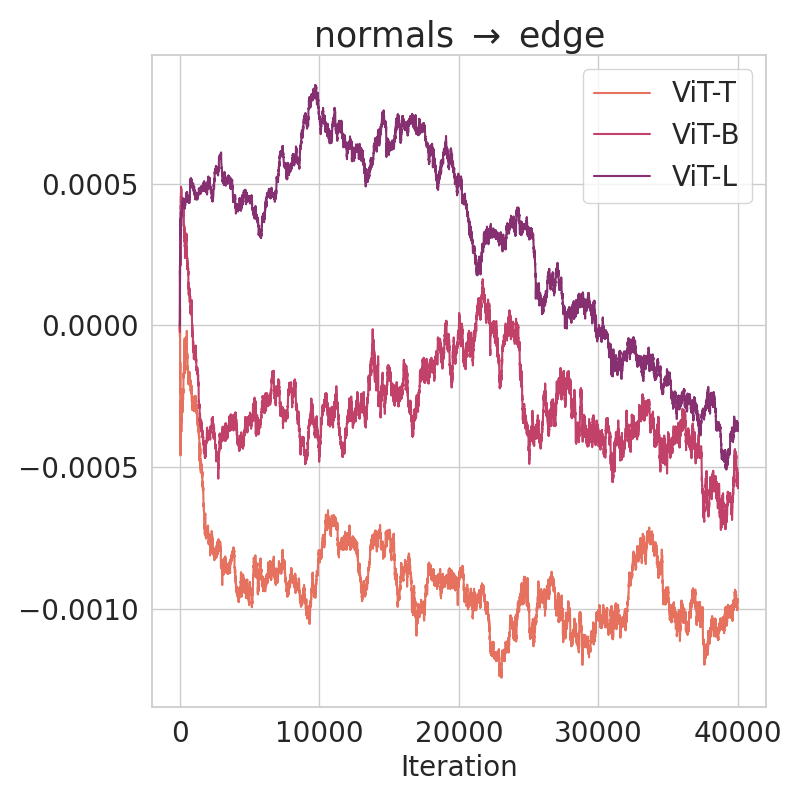}
    \end{subfigure}
    \begin{subfigure}{0.24\textwidth}
        \includegraphics[width=0.99\textwidth]{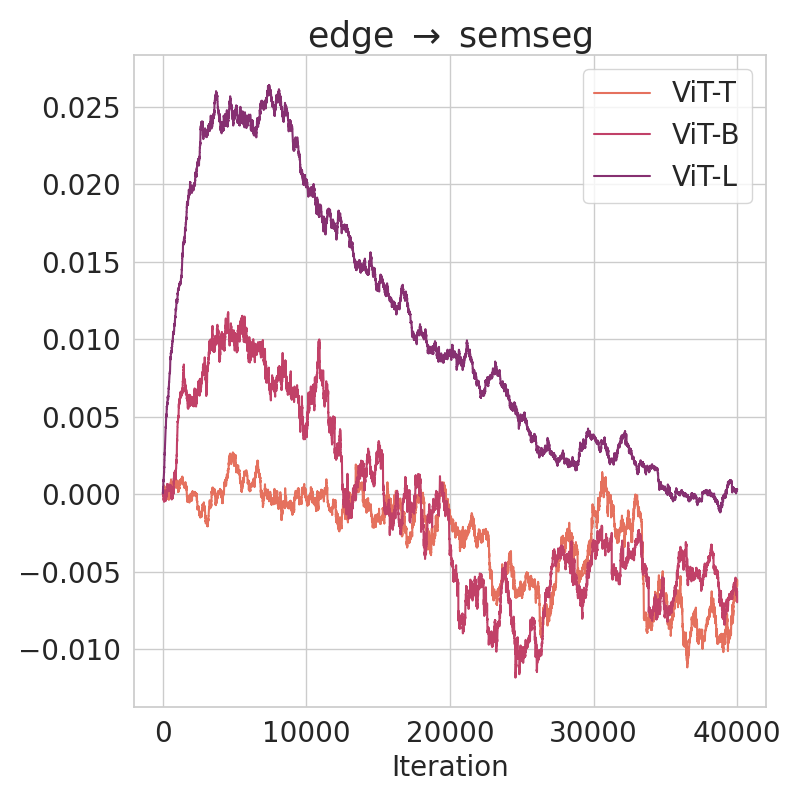}
    \end{subfigure}
    \begin{subfigure}{0.24\textwidth}
        \includegraphics[width=0.99\textwidth]{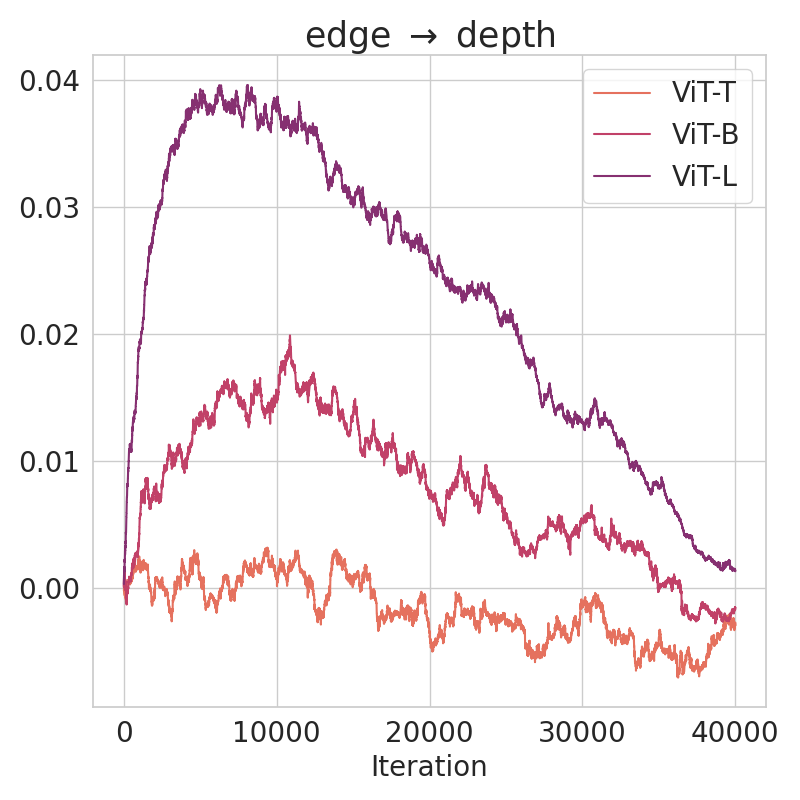}
    \end{subfigure}
    \begin{subfigure}{0.24\textwidth}
        \includegraphics[width=0.99\textwidth]{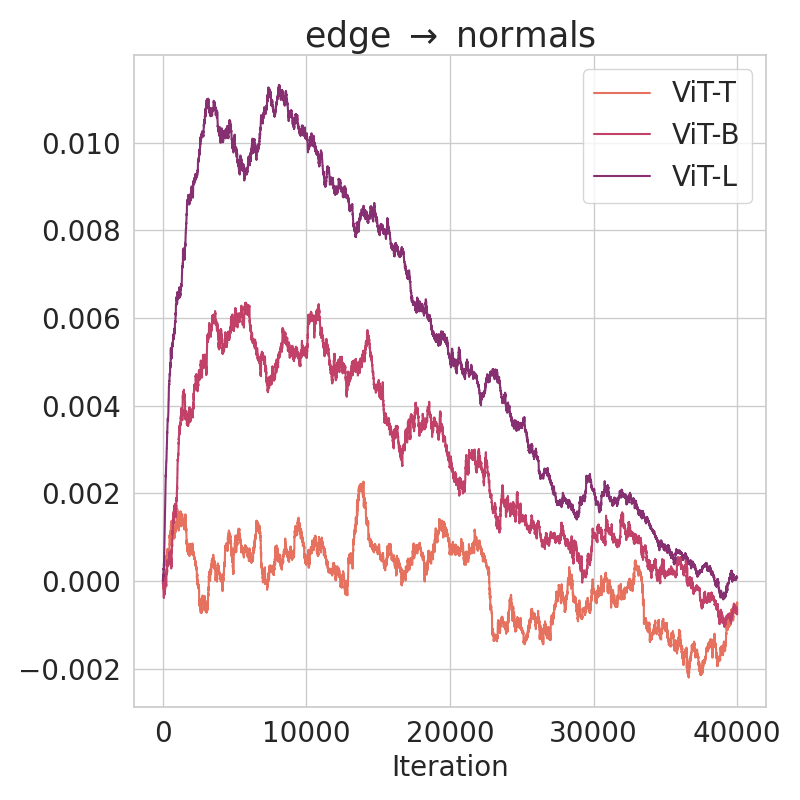}
    \end{subfigure}
    \caption{Changes in the proximal inter-task affinity during the optimization process of different sizes of ViT with NYUD-v2.}
    \label{fig:proximal_vit_nyud}
\end{figure}
\begin{figure}[h]
    \centering
    \begin{subfigure}{0.24\textwidth}
        \includegraphics[width=0.99\textwidth]{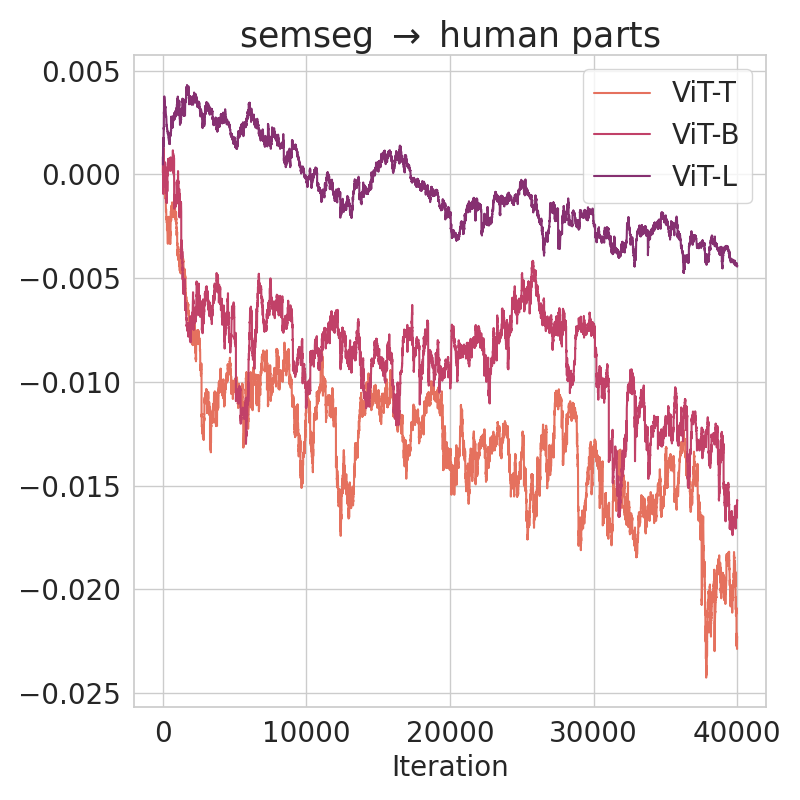}
    \end{subfigure}
    \begin{subfigure}{0.24\textwidth}
        \includegraphics[width=0.99\textwidth]{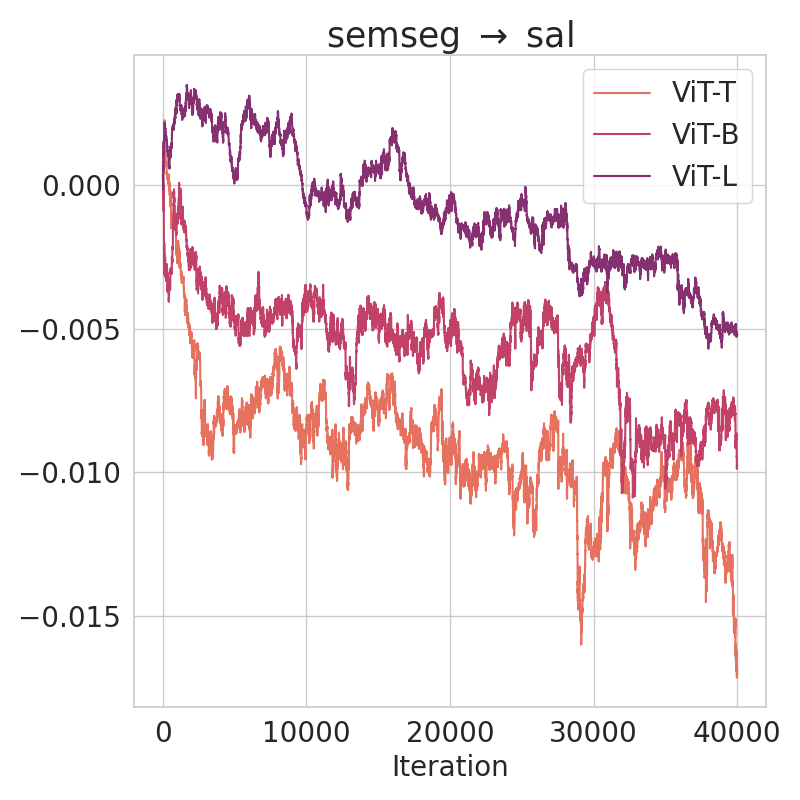}
    \end{subfigure}
    \begin{subfigure}{0.24\textwidth}
        \includegraphics[width=0.99\textwidth]{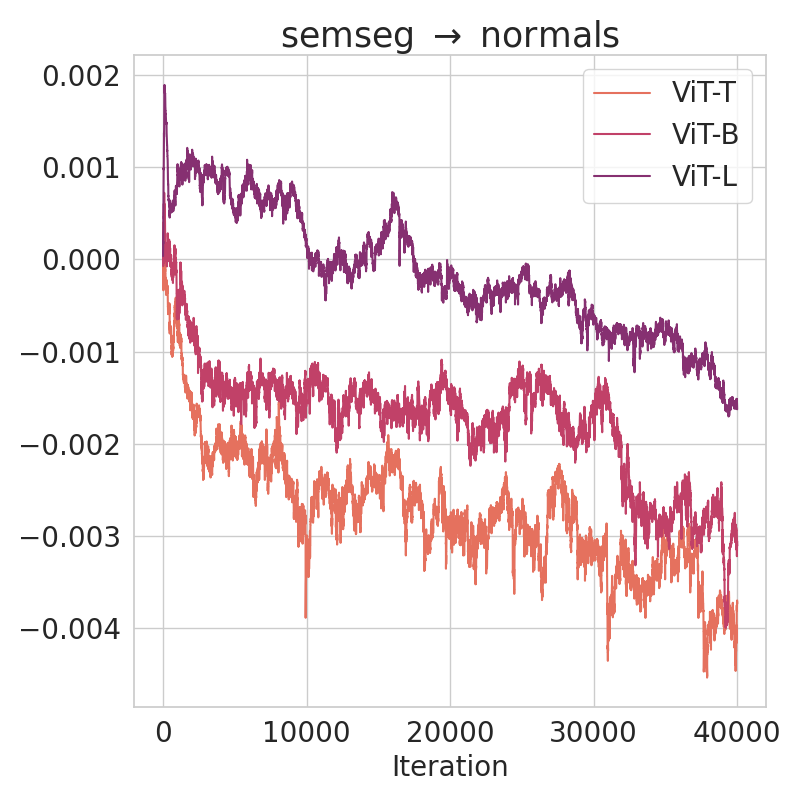}
    \end{subfigure}
    \begin{subfigure}{0.24\textwidth}
        \includegraphics[width=0.99\textwidth]{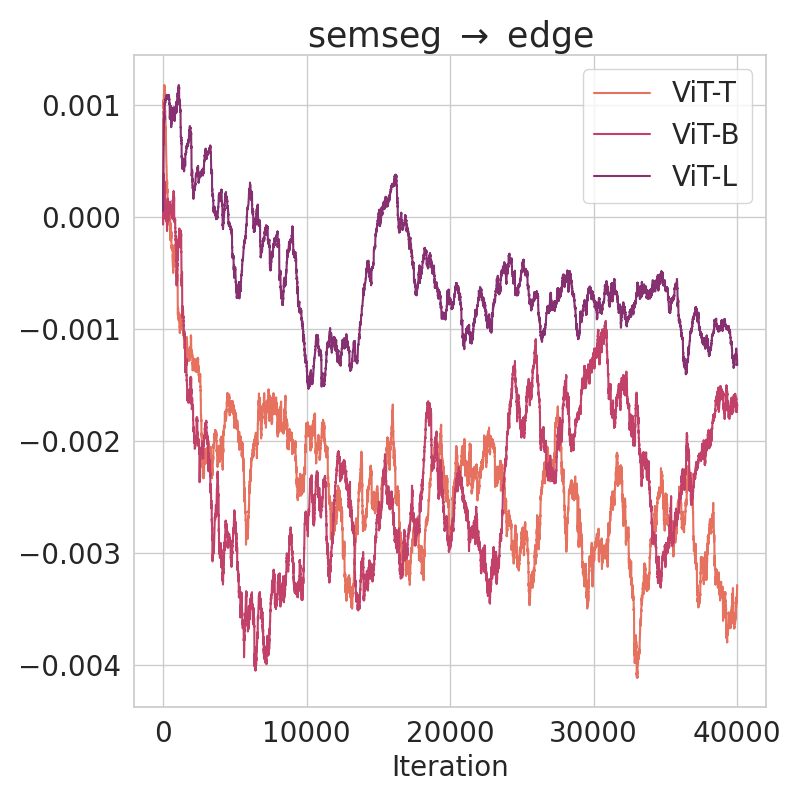}
    \end{subfigure}
    \hfill
    \begin{subfigure}{0.24\textwidth}
        \includegraphics[width=0.99\textwidth]{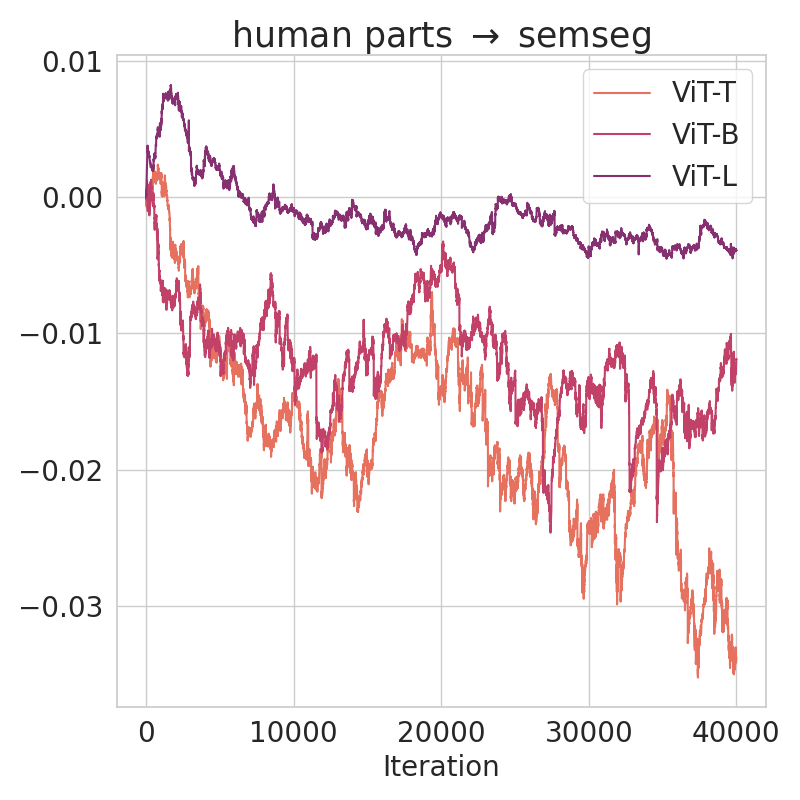}
    \end{subfigure}
    \begin{subfigure}{0.24\textwidth}
        \includegraphics[width=0.99\textwidth]{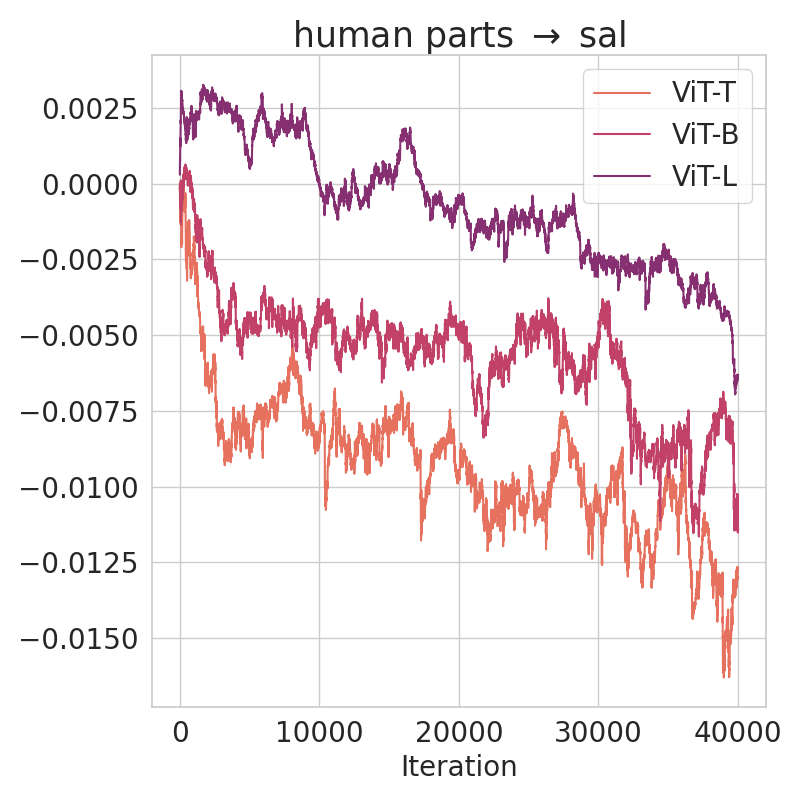}
    \end{subfigure}
    \begin{subfigure}{0.24\textwidth}
        \includegraphics[width=0.99\textwidth]{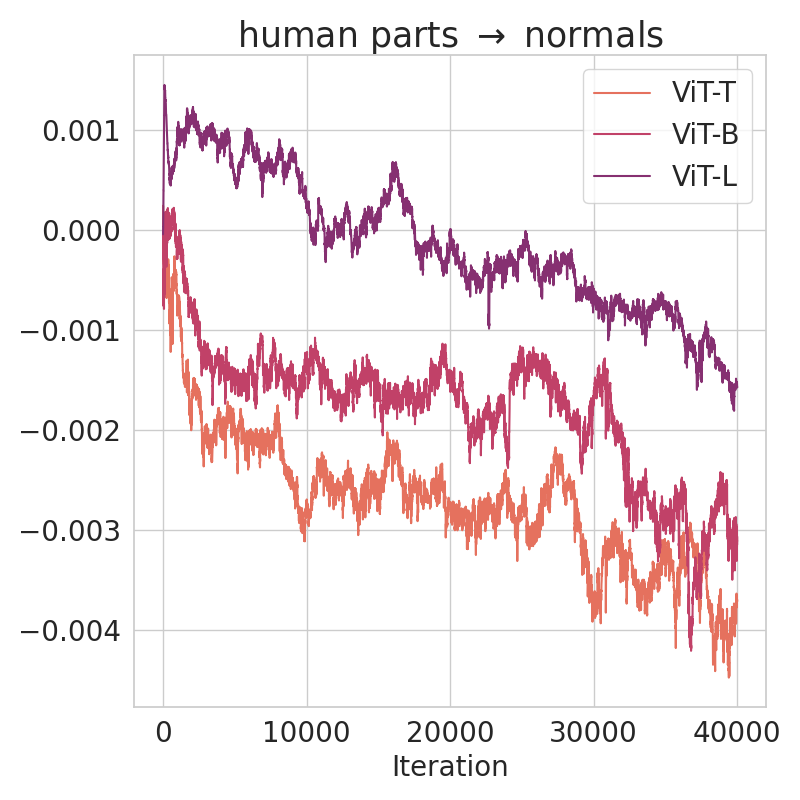}
    \end{subfigure}
    \begin{subfigure}{0.24\textwidth}
        \includegraphics[width=0.99\textwidth]{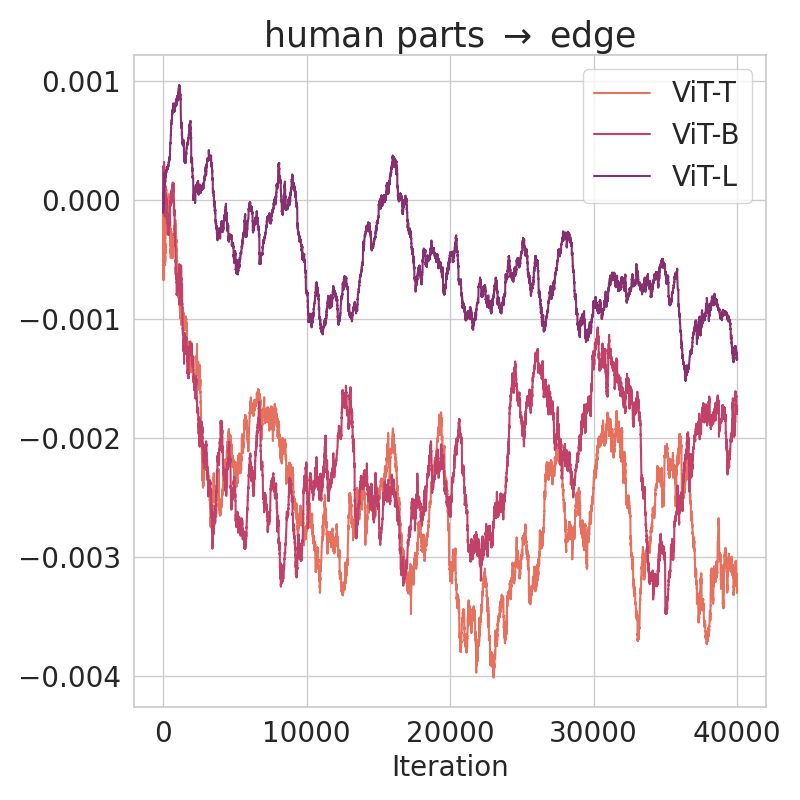}
    \end{subfigure}
    \hfill
    \begin{subfigure}{0.24\textwidth}
        \includegraphics[width=0.99\textwidth]{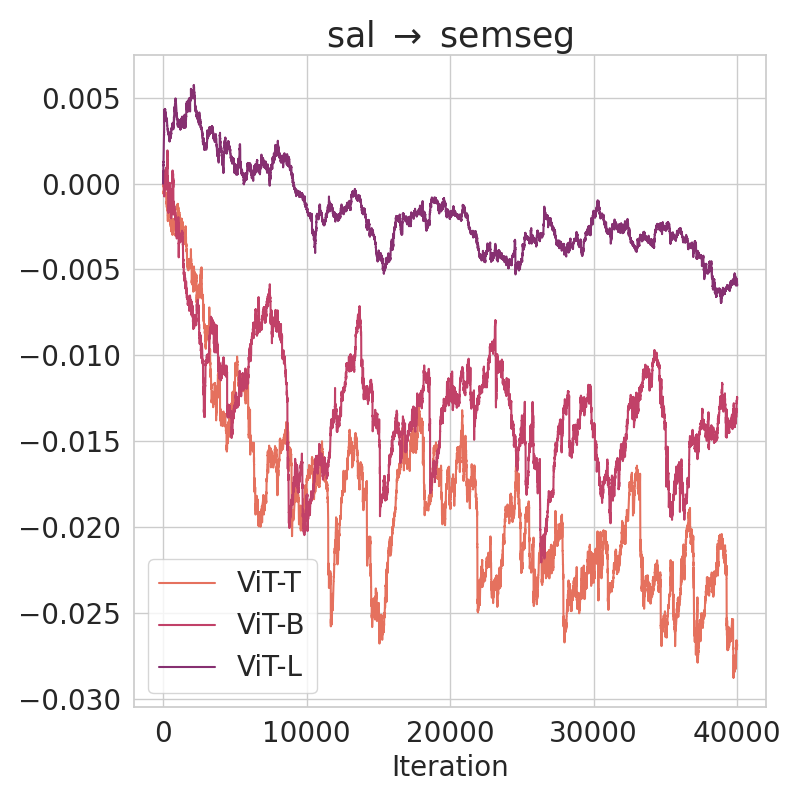}
    \end{subfigure}
    \begin{subfigure}{0.24\textwidth}
        \includegraphics[width=0.99\textwidth]{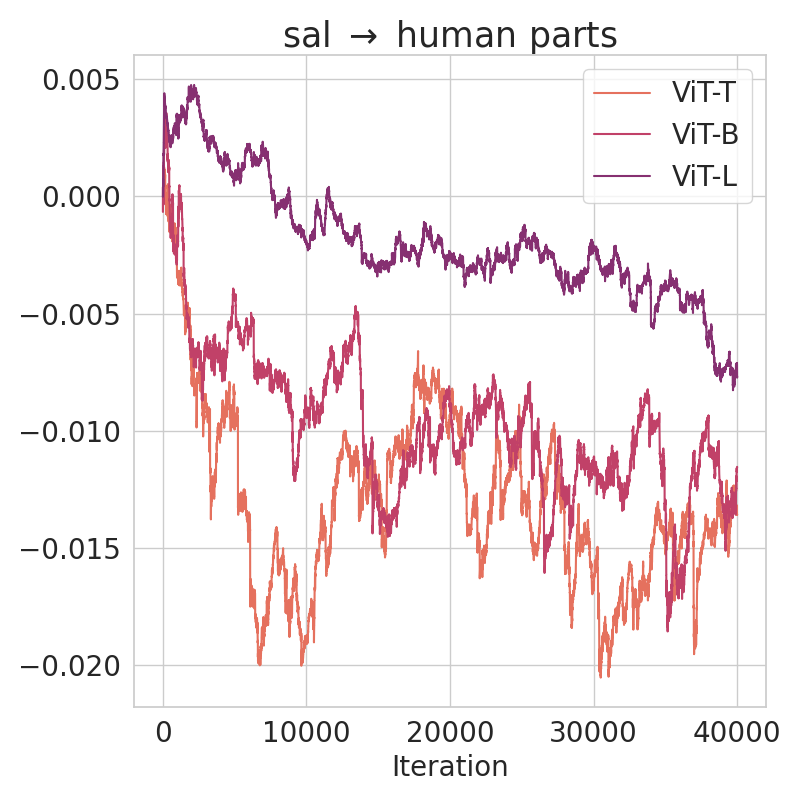}
    \end{subfigure}
    \begin{subfigure}{0.24\textwidth}
        \includegraphics[width=0.99\textwidth]{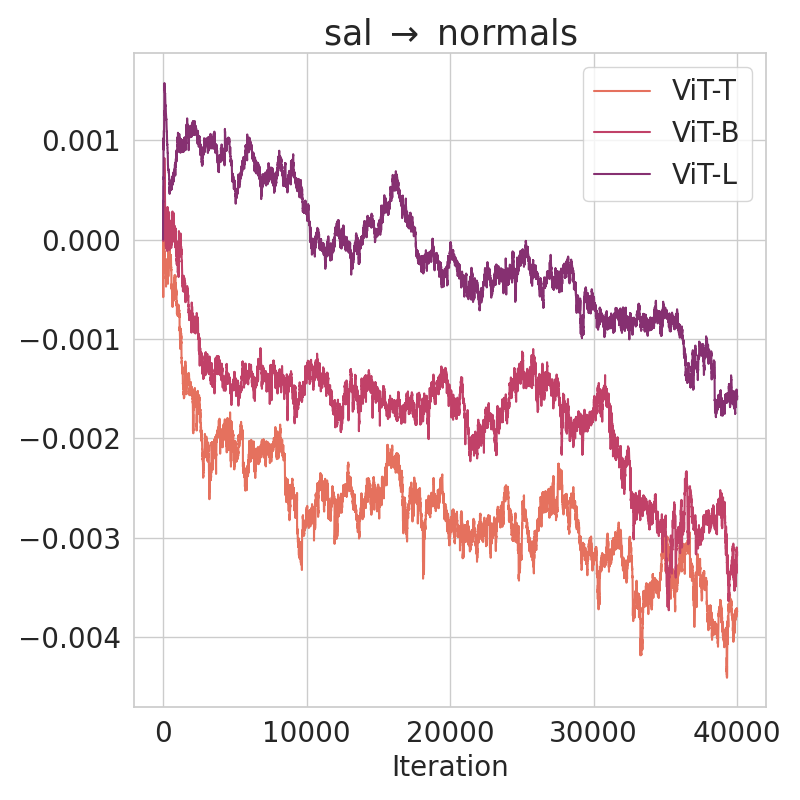}
    \end{subfigure}
    \begin{subfigure}{0.24\textwidth}
        \includegraphics[width=0.99\textwidth]{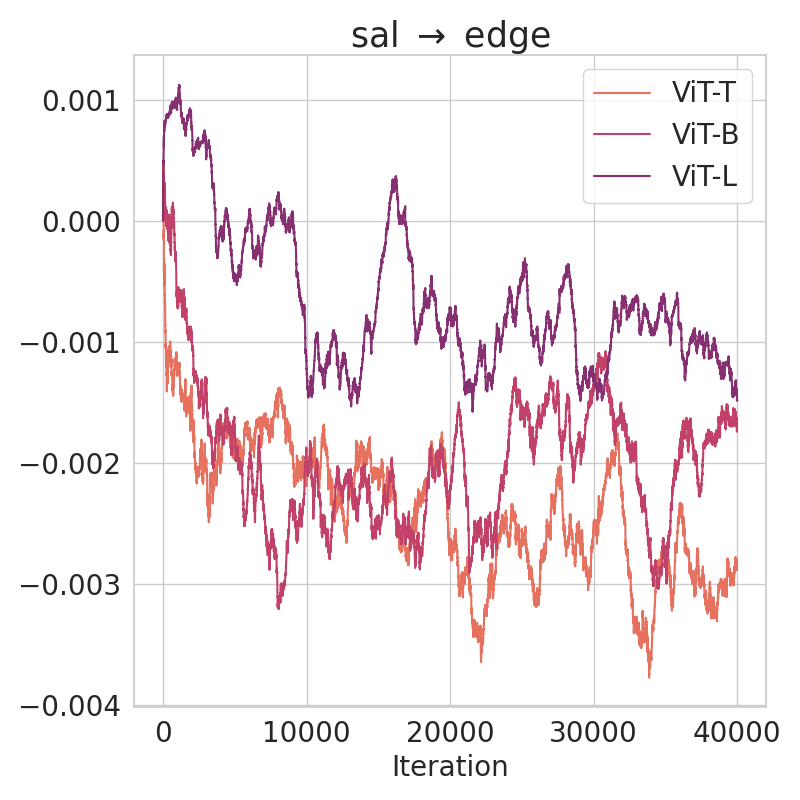}
    \end{subfigure}
        \hfill
    \begin{subfigure}{0.24\textwidth}
        \includegraphics[width=0.99\textwidth]{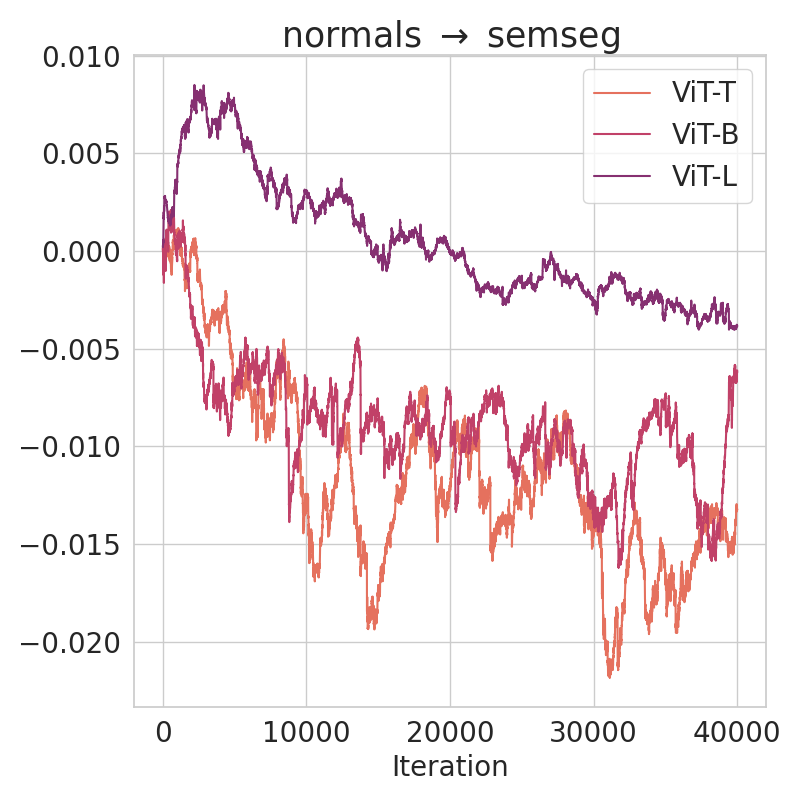}
    \end{subfigure}
    \begin{subfigure}{0.24\textwidth}
        \includegraphics[width=0.99\textwidth]{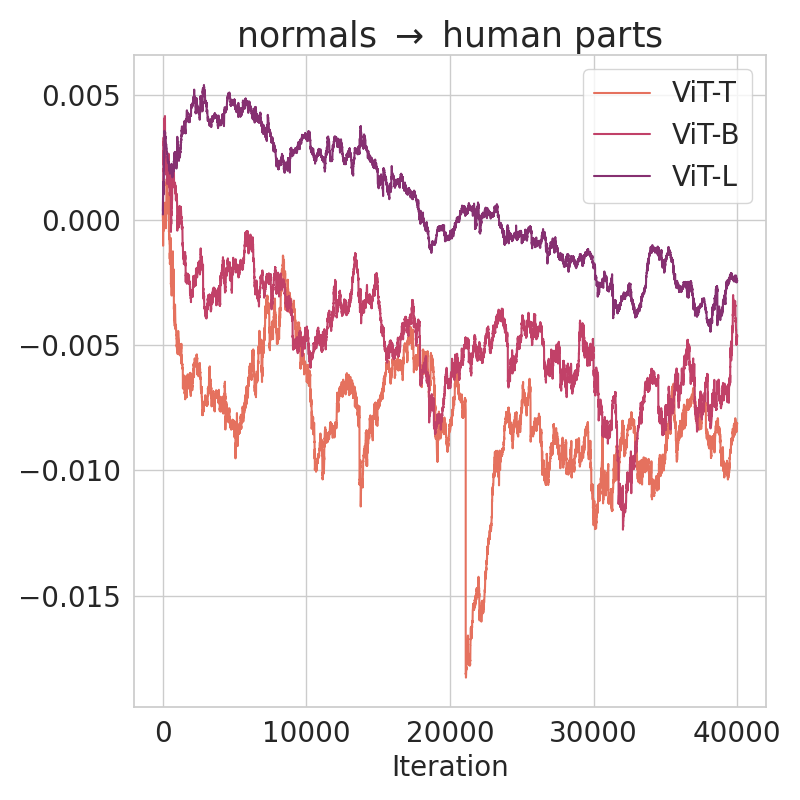}
    \end{subfigure}
    \begin{subfigure}{0.24\textwidth}
        \includegraphics[width=0.99\textwidth]{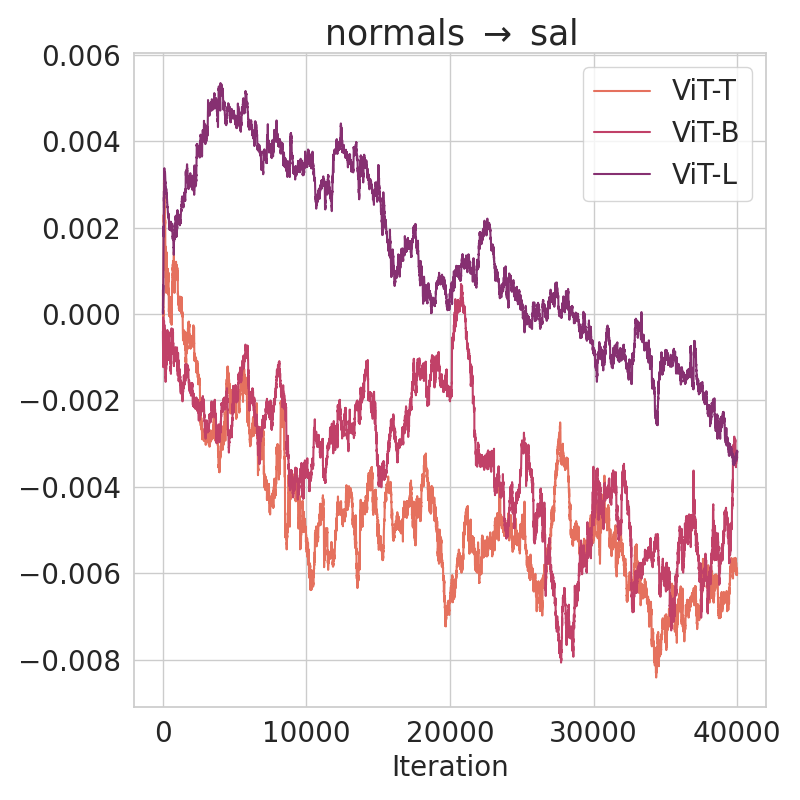}
    \end{subfigure}
    \begin{subfigure}{0.24\textwidth}
        \includegraphics[width=0.99\textwidth]{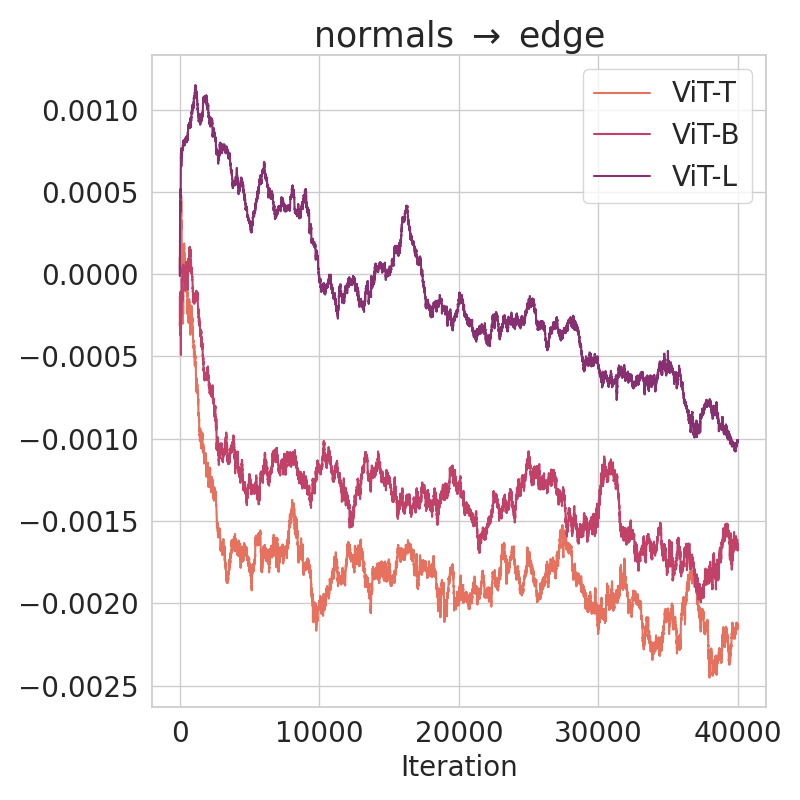}
    \end{subfigure}
        \hfill
    \begin{subfigure}{0.24\textwidth}
        \includegraphics[width=0.99\textwidth]{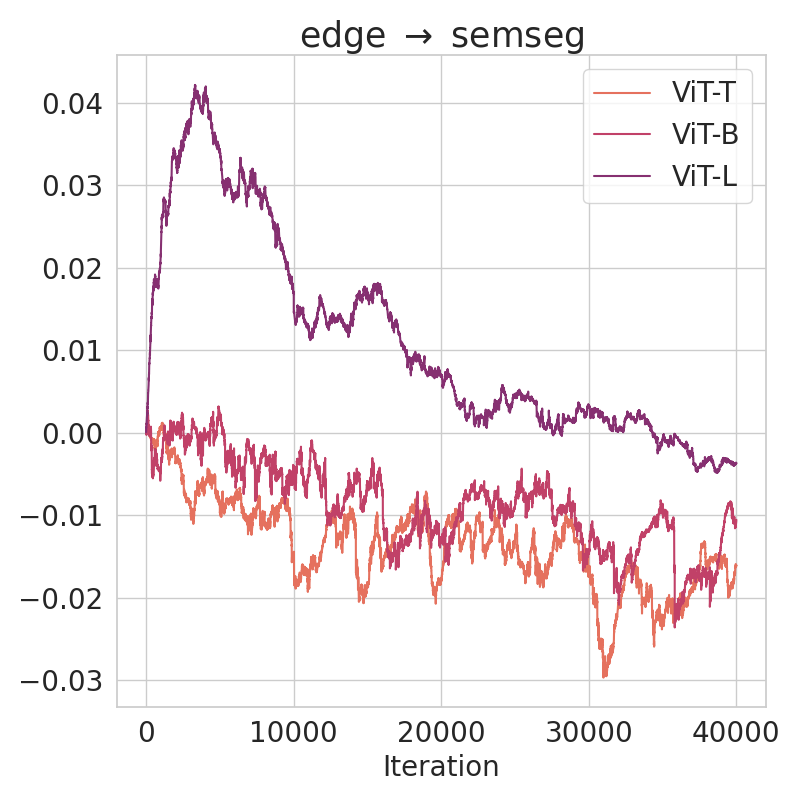}
    \end{subfigure}
    \begin{subfigure}{0.24\textwidth}
        \includegraphics[width=0.99\textwidth]{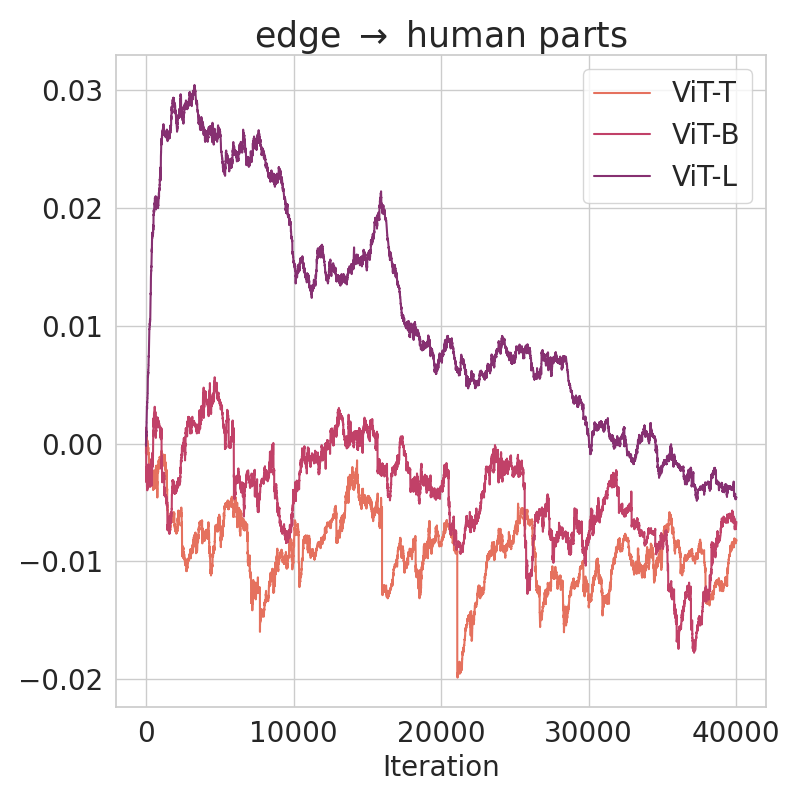}
    \end{subfigure}
    \begin{subfigure}{0.24\textwidth}
        \includegraphics[width=0.99\textwidth]{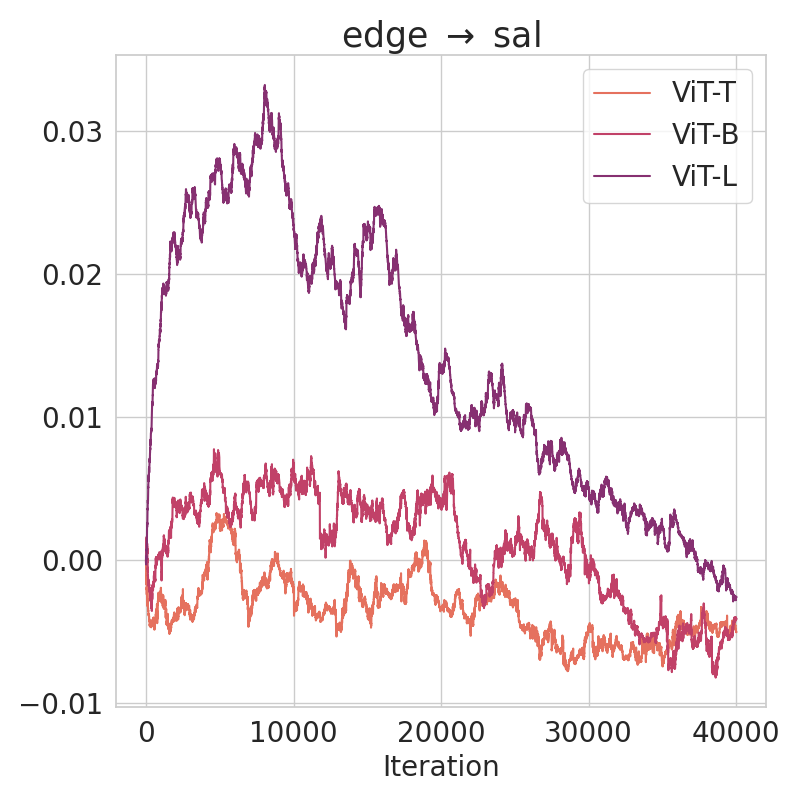}
    \end{subfigure}
    \begin{subfigure}{0.24\textwidth}
        \includegraphics[width=0.99\textwidth]{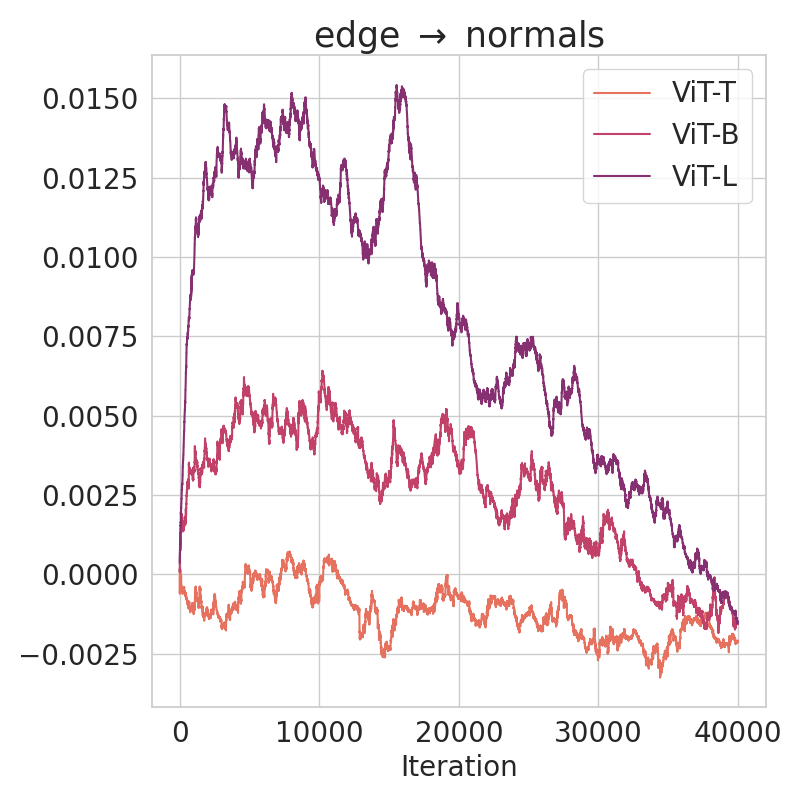}
    \end{subfigure}
    \caption{Changes in proximal inter-task affinity during the optimization process of different sizes of ViT with PASCAL-Context.}
    \label{fig:proximal_vit_pascal}
\end{figure}

\begin{figure}[h]
    \centering
    \begin{subfigure}{0.24\textwidth}
        \includegraphics[width=0.99\textwidth]{figure/vit_beta/semseg_to_depth.png}
    \end{subfigure}
    \begin{subfigure}{0.24\textwidth}
        \includegraphics[width=0.99\textwidth]{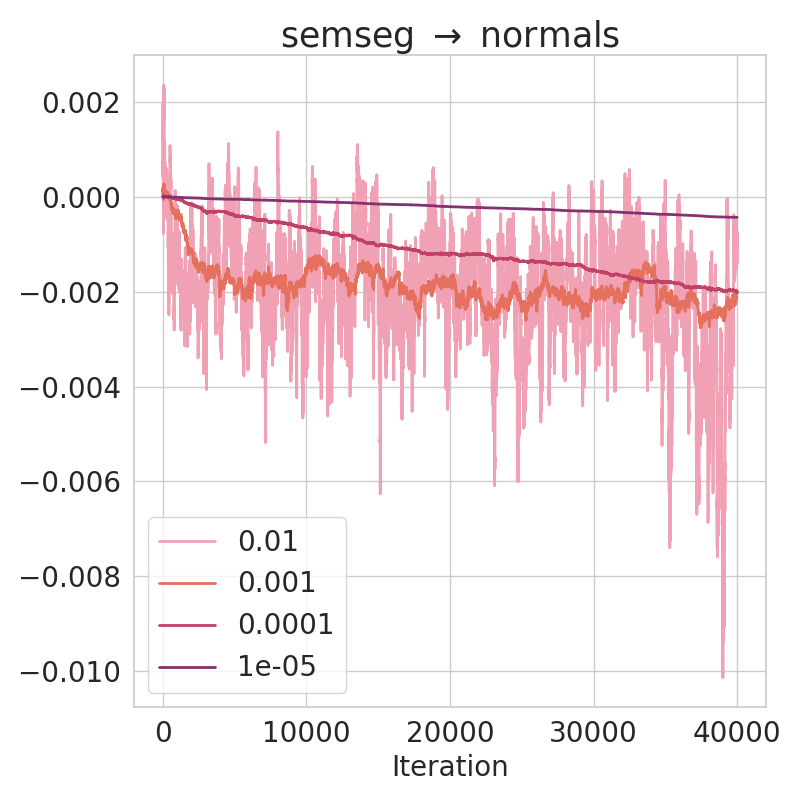}
    \end{subfigure}
    \begin{subfigure}{0.24\textwidth}
        \includegraphics[width=0.99\textwidth]{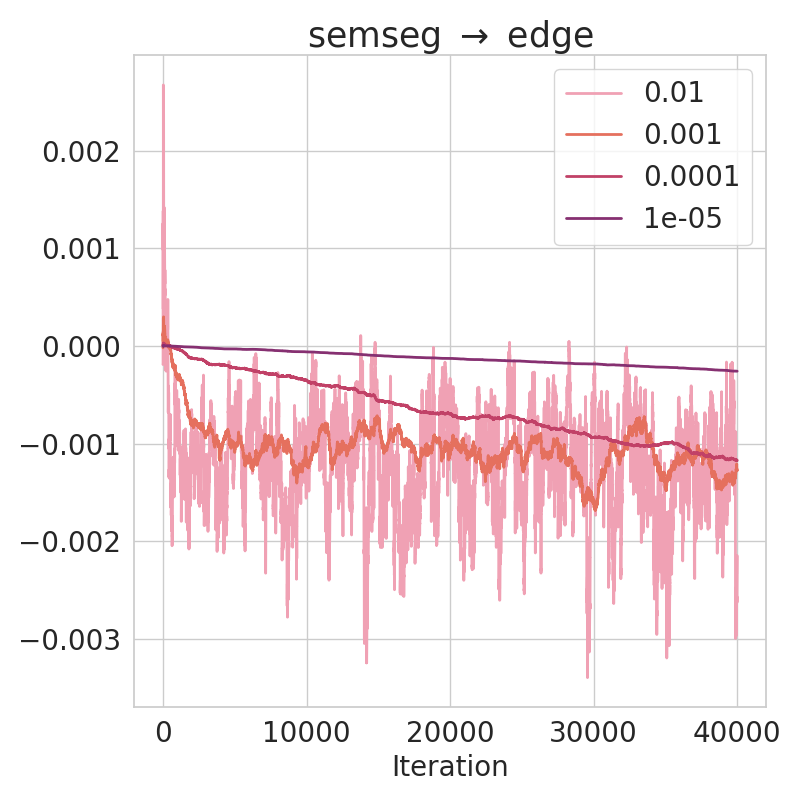}
    \end{subfigure}
    \begin{subfigure}{0.24\textwidth}
        \includegraphics[width=0.99\textwidth]{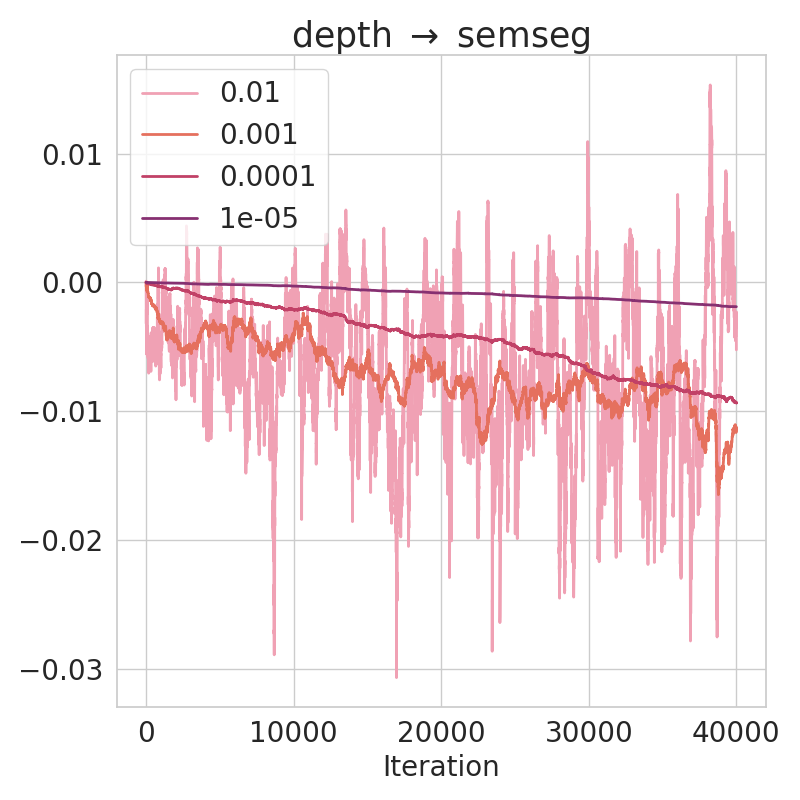}
    \end{subfigure}
    \hfill
    \begin{subfigure}{0.24\textwidth}
        \includegraphics[width=0.99\textwidth]{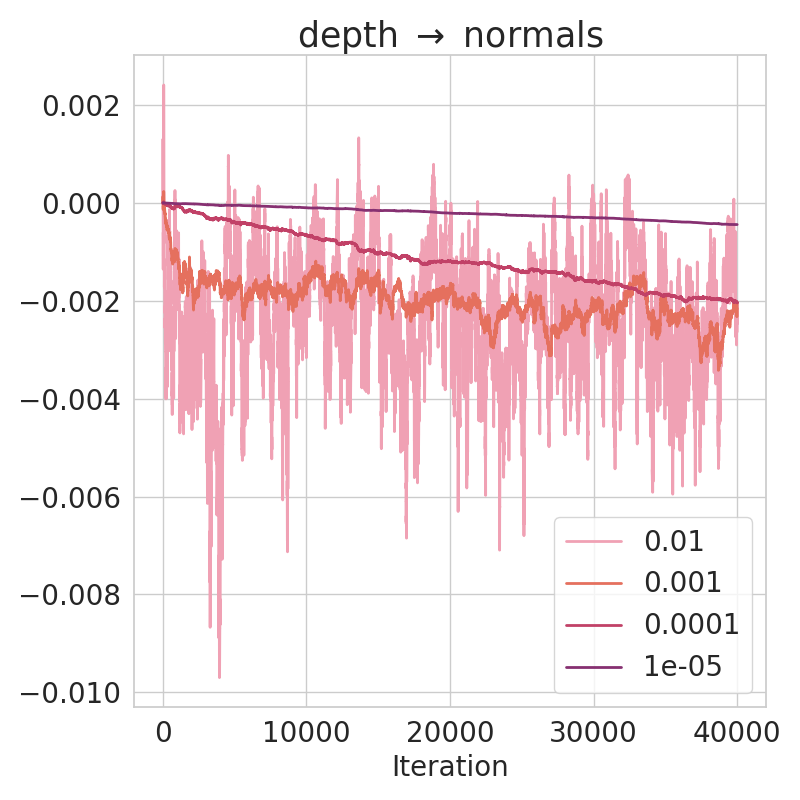}
    \end{subfigure}
    \begin{subfigure}{0.24\textwidth}
        \includegraphics[width=0.99\textwidth]{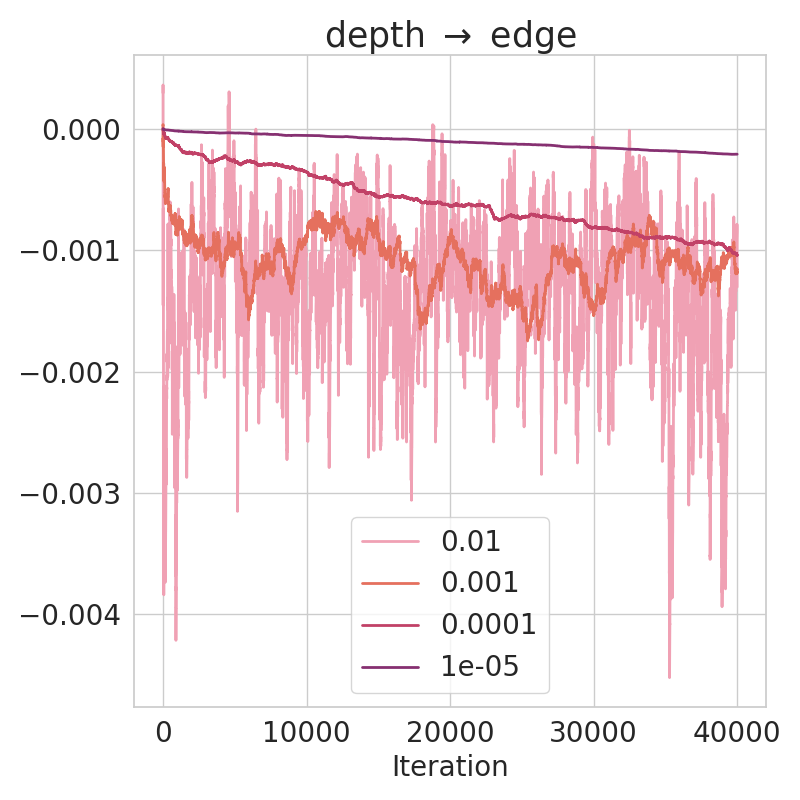}
    \end{subfigure}
    \begin{subfigure}{0.24\textwidth}
        \includegraphics[width=0.99\textwidth]{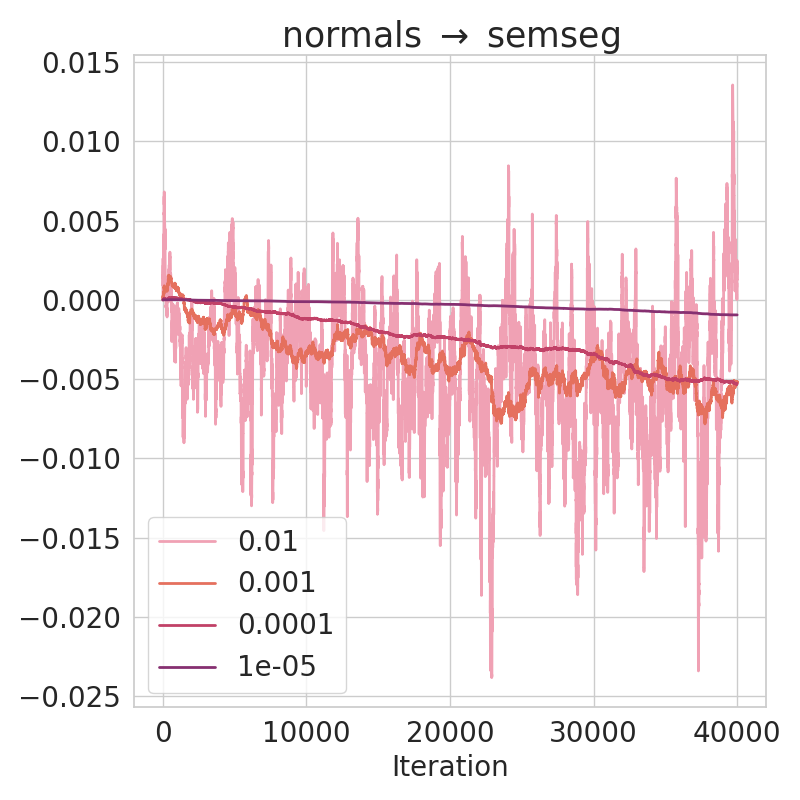}
    \end{subfigure}
    \begin{subfigure}{0.24\textwidth}
        \includegraphics[width=0.99\textwidth]{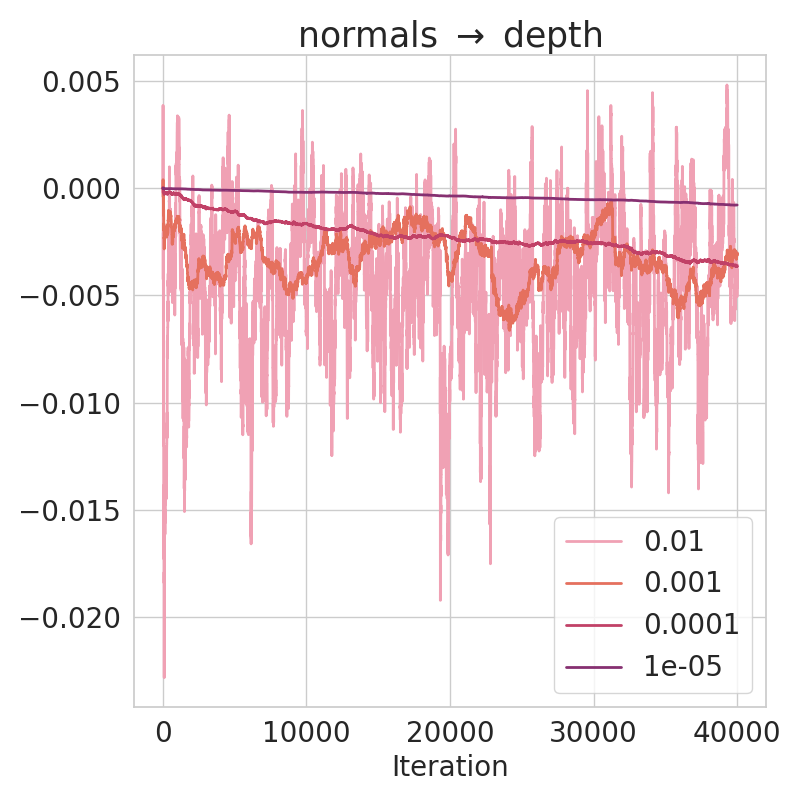}
    \end{subfigure}
    \hfill
    \begin{subfigure}{0.24\textwidth}
        \includegraphics[width=0.99\textwidth]{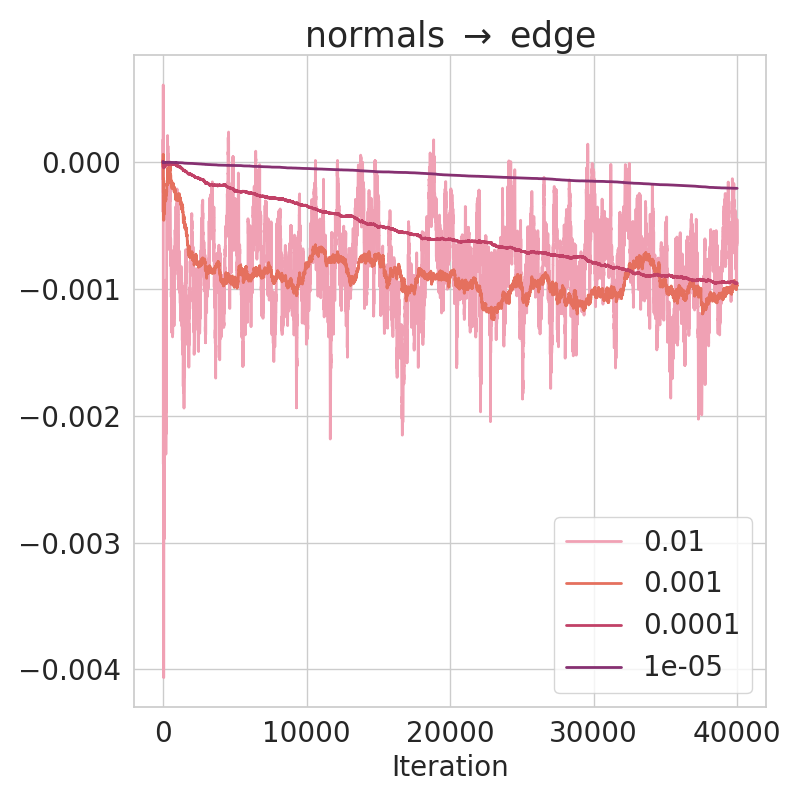}
    \end{subfigure}
    \begin{subfigure}{0.24\textwidth}
        \includegraphics[width=0.99\textwidth]{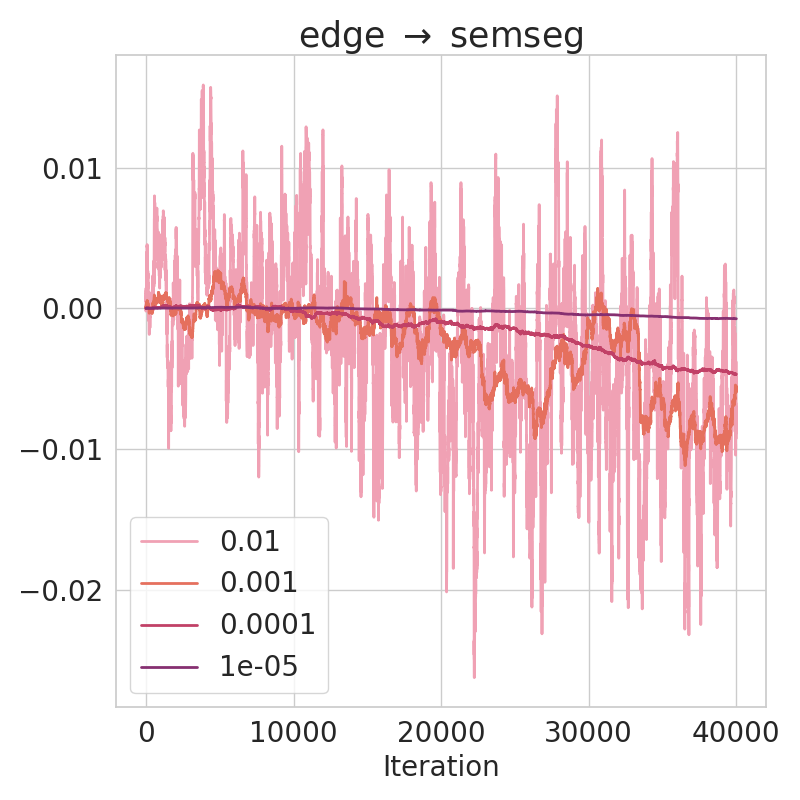}
    \end{subfigure}
    \begin{subfigure}{0.24\textwidth}
        \includegraphics[width=0.99\textwidth]{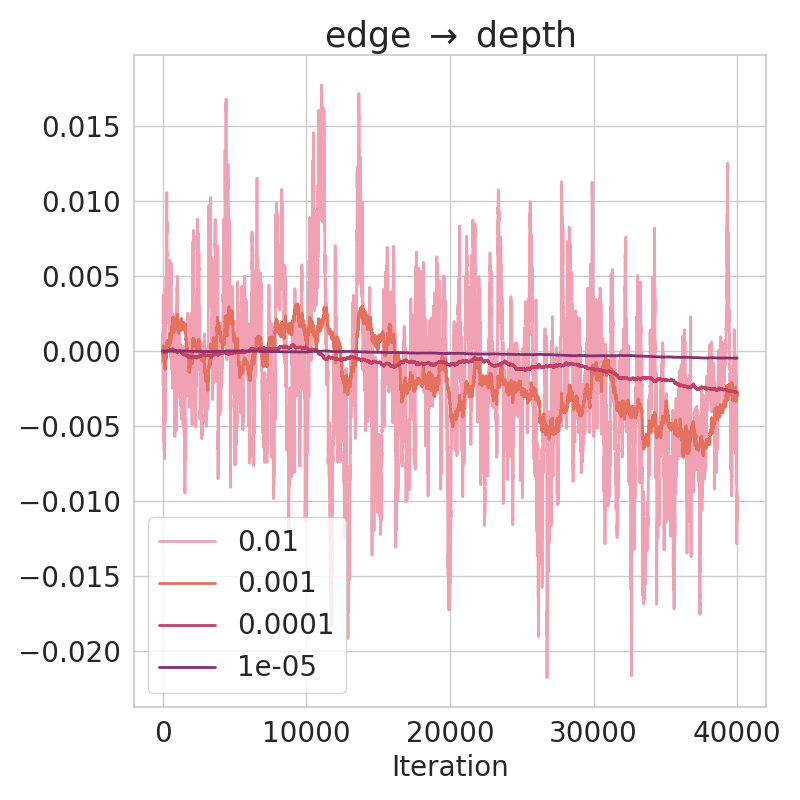}
    \end{subfigure}
    \begin{subfigure}{0.24\textwidth}
        \includegraphics[width=0.99\textwidth]{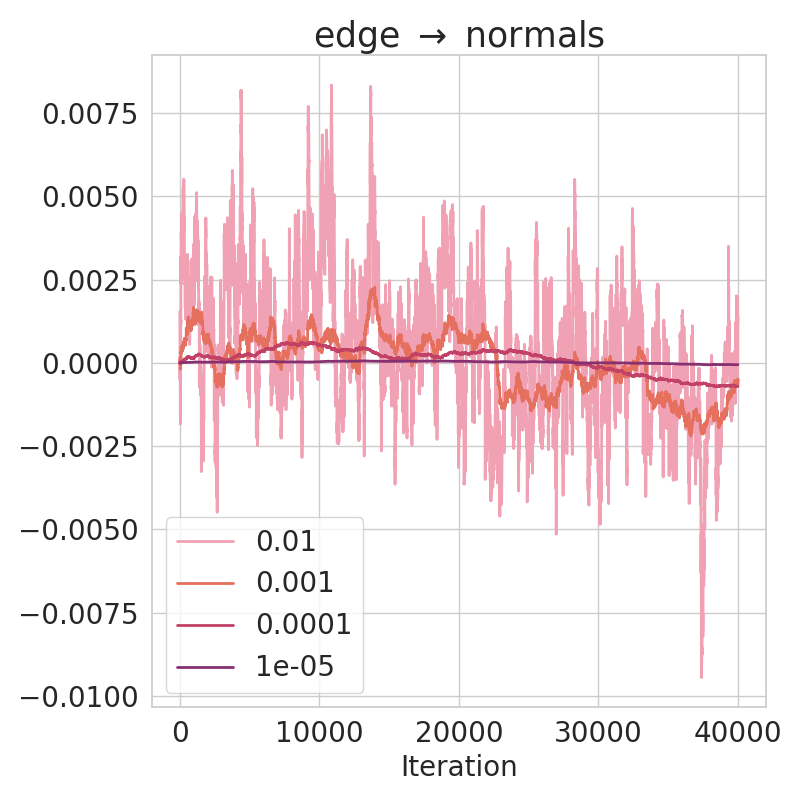}
    \end{subfigure}
    \caption{Changes in proximal inter-task affinity during the optimization process with different decay rates, $\beta$.}
    \label{fig:proximal_vit_beta}
\end{figure}

\begin{table*}[h]
\caption{Results on Taskonomy with different affinity decay rates $\beta$.}
\vspace{-5pt}
\centering
\renewcommand\arraystretch{1.00}
\resizebox{0.99\textwidth}{!}{
\begin{tabular}{l|ccccccccccc|c}
\midrule[1.0pt]
 & DE & DZ & EO & ET & K2  & K3 & N   & C & R & S2  & S2.5 &  \\ \cmidrule[0.5pt]{2-12}
\multirow{-2}{*}{Task} & L1 Dist. $\downarrow$  & L1 Dist. $\downarrow$ & L1 Dist. $\downarrow$ & L1 Dist. $\downarrow$ & L1 Dist. $\downarrow$ & L1 Dist. $\downarrow$ & L1 Dist. $\downarrow$ & RMSE $\downarrow$    & L1 Dist. $\downarrow$ & L1 Dist. $\downarrow$ & L1 Dist. $\downarrow$  & \multirow{-2}{*}{$\triangle_m$ ($\uparrow$)} \\ \midrule[1.0pt]
Single Task     &0.0183&0.0186&0.1089&0.1713&0.1630&0.0863&0.2953&0.7522&0.1504&0.1738&0.1530&-         \\\midrule[0.5pt]
GD              &0.0188&0.0197&0.1283&0.1745&0.1718&0.0933&0.2599&0.7911&0.1799&0.1885&0.1631&-6.35     \\
$\beta$=0.0001  &0.0165&0.0168&0.1224&0.1739&0.1693&0.0907&0.2304&0.7581&0.1683&0.1831&0.1571&-0.18     \\
$\beta$=0.001   &0.0167&0.0169&0.1228&0.1739&0.1695&0.0910&0.2344&0.7600&0.1691&0.1836&0.1571&-0.64     \\
$\beta$=0.01    &0.0167&0.0171&0.1232&0.1739&0.1698&0.0912&0.2362&0.7623&0.1705&0.1834&0.1576&-1.01     \\
$\beta$=0.1     &0.0167&0.0171&0.1231&0.1739&0.1695&0.0912&0.2355&0.7631&0.1697&0.1831&0.1575&-0.87     \\\midrule[1.0pt]
\end{tabular}}
\label{tab:tab_exp_beta_perf}
\end{table*}

\begin{table*}[h]
\caption{Comparison of different grouping strategies on the Taskonomy benchmark.}
\vspace{-5pt}
\centering
\renewcommand\arraystretch{1.00}
\resizebox{0.99\textwidth}{!}{
\begin{tabular}{l|ccccccccccc|c}
\midrule[1.0pt]
 & DE & DZ & EO & ET & K2  & K3 & N   & C & R & S2  & S2.5 &  \\ \cmidrule[0.5pt]{2-12}
\multirow{-2}{*}{Task} & L1 Dist. $\downarrow$  & L1 Dist. $\downarrow$ & L1 Dist. $\downarrow$ & L1 Dist. $\downarrow$ & L1 Dist. $\downarrow$ & L1 Dist. $\downarrow$ & L1 Dist. $\downarrow$ & RMSE $\downarrow$    & L1 Dist. $\downarrow$ & L1 Dist. $\downarrow$ & L1 Dist. $\downarrow$  & \multirow{-2}{*}{$\triangle_m$ ($\uparrow$)} \\ \midrule[1.0pt]
Heterogeneous               &0.0172&0.0176&0.1252&0.1741&0.1700&0.0920&0.2475&0.7781&0.1743&0.1849&0.1660&-3.10 \\
Random ($N(\mathcal{M})$=2) &0.0177&0.0180&0.1259&0.1741&0.1707&0.0923&0.2662&0.7807&0.1757&0.1871&0.1617&-4.24 \\
Random ($N(\mathcal{M})$=3) &0.0172&0.0177&0.1250&0.1741&0.1703&0.0920&0.2619&0.7754&0.1749&0.1866&0.1607&-3.35 \\
Random ($N(\mathcal{M})$=4) &0.0183&0.0187&0.1277&0.1746&0.1706&0.0936&0.2812&0.7841&0.1804&0.1882&0.1636&-6.12 \\
Random ($N(\mathcal{M})$=5) &0.0186&0.0184&0.1274&0.1747&0.1708&0.0935&0.3150&0.7842&0.1800&0.1888&0.1640&-7.17 \\
Random ($N(\mathcal{M})$=6) &0.0208&0.0209&0.1349&0.1750&0.1721&0.0961&0.3334&0.8222&0.1976&0.1935&0.1703&-13.20\\
Ours                        &0.0167&0.0169&0.1228&0.1739&0.1695&0.0910&0.2344&0.7600&0.1691&0.1836&0.1571&-0.64 \\\midrule[1.0pt]
\end{tabular}}
\label{tab:tab_exp_grouping_strategy}
\end{table*}

\begin{table*}[h]
\caption{Results on Taskonomy with varying batch sizes using ViT-B (batch sizes in brackets).}
\vspace{-5pt}
\centering
\renewcommand\arraystretch{1.00}
\resizebox{0.99\textwidth}{!}{
\begin{tabular}{l|ccccccccccc|c}
\midrule[1.0pt]
 & DE & DZ & EO & ET & K2  & K3 & N   & C & R & S2  & S2.5 &  \\ \cmidrule[0.5pt]{2-12}
\multirow{-2}{*}{Task} & L1 Dist. $\downarrow$  & L1 Dist. $\downarrow$ & L1 Dist. $\downarrow$ & L1 Dist. $\downarrow$ & L1 Dist. $\downarrow$ & L1 Dist. $\downarrow$ & L1 Dist. $\downarrow$ & RMSE $\downarrow$    & L1 Dist. $\downarrow$ & L1 Dist. $\downarrow$ & L1 Dist. $\downarrow$  & \multirow{-2}{*}{$\triangle_m$ ($\uparrow$)} \\ \midrule[1.0pt]
Single Task                 &0.0183&0.0186&0.1089&0.1713&0.1630&0.0863&0.2953&0.7522&0.1504&0.1738&0.1530&-         \\\midrule[0.5pt]
GD(4)                       &0.0208&0.0214&0.1323&0.1747&0.1723&0.0952&0.2768&0.8214&0.1936&0.1921&0.1677&-10.88    \\ \rowcolor[HTML]{E0E0E0}
Ours(4)                     &0.0185&0.0190&0.1273&0.1741&0.1709&0.0928&0.2739&0.7957&0.1809&0.1888&0.1632&-6.19     \\
GD(8)                       &0.0188&0.0197&0.1283&0.1745&0.1718&0.0933&0.2599&0.7911&0.1799&0.1885&0.1631&-6.35     \\ \rowcolor[HTML]{E0E0E0}
Ours(8)                     &0.0167&0.0169&0.1228&0.1739&0.1695&0.0910&0.2344&0.7600&0.1691&0.1836&0.1571&-0.64     \\
GD(16)                      &0.0172&0.0180&0.1248&0.1742&0.1711&0.0920&0.2280&0.7641&0.1706&0.1848&0.1589&-1.94     \\ \rowcolor[HTML]{E0E0E0}
Ours(16)                    &0.0153&0.0154&0.1186&0.1737&0.1682&0.0893&0.1967&0.7334&0.1581&0.1780&0.1516&+4.19     \\\midrule[1.0pt]
\end{tabular}}
\label{tab:tab_exp_batch}
\end{table*}

\section{Algorithm Complexity and Computational Load}
\setcounter{table}{0}
\setcounter{figure}{0}
We also provide a detailed time comparison of previous multi-task optimization methods on Taskonomy. As shown in \Cref{tab:training_time_taskonomy}, our approach effectively optimizes multiple tasks with more efficient training times. Our method converges faster than gradient-based approaches, as the primary bottleneck in optimization lies in backpropagation and gradient manipulation.

\begin{table*}[h]
\caption{Comparison of the average time required by each optimization process to handle a single
batch for 11 tasks on Taskonomy.}
\vspace{-5pt}
\centering
\renewcommand\arraystretch{1.00}
\resizebox{\textwidth}{!}{
\scriptsize
\begin{tabular}{l|ccccc|c}
\hline
Process (sec)     & Forward Pass & Backpropagation & Gradient Manipulation & Optimizer Step & Clustering + Affinity Update & Total \\ \hline
GD&0.030(9.04\%)&0.198(59.26\%)&-&0.106(31.69\%)&-&0.33 \\ 
UW&0.030(8.82\%)&0.198(58.24\%)&-&0.112(32.94\%)&-&0.34 \\
DTP&0.030(8.70\%)&0.199(57.68\%)&-&0.116(33.62\%)&-&0.34 \\
DWA&0.031(9.01\%)&0.198(57.56\%)&-&0.115(33.43\%)&-&0.34 \\
GradDrop&0.030(1.13\%)&2.05(80.54\%)&0.411(16.02\%)&0.059(2.30\%)&-&2.57 \\    
MGDA&0.033(0.086\%)&2.06(5.36\%)&36.29(94.47\%)&0.031(0.081\%)&-&38.42 \\
PCGrad&0.030(0.63\%)&2.07(44.09\%)&2.57(54.62\%)&0.031(0.66\%)&-&4.70 \\
CAGrad&0.030(0.57\%)&2.06(39.39\%)&3.11(59.44\%)&0.031(0.59\%)&-&5.23 \\
Aligned-MTL&0.027(0.86\%)&2.07(64.99\%)&1.06(33.20\%)&0.030(0.95\%)&-&3.19 \\
FAMO&0.030 (8.72\%)&0.198(57.56\%)&-&0.116(33.72\%)&-&0.34 \\
Ours&0.072 (7.13\%)&0.576(56.72\%)&-&0.323(31.82\%)&0.044(4.33\%)&1.02 \\ \hline
\end{tabular}}
\label{tab:training_time_taskonomy}
\end{table*}

\end{document}